\documentclass[12pt]{article}

\usepackage[utf8]{inputenc}
\usepackage[english]{babel}
\usepackage[margin=1in]{geometry}

\usepackage{blindtext}
\usepackage{appendix}
\usepackage{amssymb}
\usepackage{mathrsfs}
\usepackage{hyperref}
\hypersetup{colorlinks=true,
            linkcolor=blue,
            anchorcolor=blue,
            citecolor=blue}            
\usepackage{enumerate}
\usepackage[round]{natbib}
\usepackage{amsthm}
\usepackage{amsmath}
\usepackage{float}
\usepackage{booktabs}
\usepackage{multirow}

\usepackage{algorithm, algorithmic}
\newtheorem{theorem}{Theorem}[section]

\newtheorem{lemma}[theorem]{Lemma}
\newtheorem{assumption}{Assumption}[section]
\theoremstyle{remark}
\newtheorem{remark}{Remark}[section]

\theoremstyle{definition}
\newtheorem{definition}{Definition}[section]

\def\m0{\mathbf{0}}
\def \mb {\mathbf{b}}
\def \mh {\mathbf{h}}

\def \mw {\mathbf{w}}
\def \mx {\mathbf{x}}
\def \my {\mathbf{y}}
\def \mz {\mathbf{z}}
\def \mI {\mathbf{I}}

\def \mX {\mathbf{X}}
\def \mY {\mathbf{Y}}

\def \mrd {\mathrm{d}}
\def \mrN {\mathrm{NN}}

\def \mcB {\mathcal{B}}
\def \mcC {\mathcal{C}}
\def \mcD {\mathcal{D}}

\def \mcH {\mathcal{H}}
\def \mcL {\mathcal{L}}
\def \mcM {\mathcal{M}}
\def \mcN {\mathcal{N}}
\def \mcO {\mathcal{O}}
\def \mcT {\mathcal{T}}
\def \mcW {\mathcal{W}}
\def \mcX {\mathcal{X}}
\def \mcY {\mathcal{Y}}
\def \mcZ {\mathcal{Z}}

\def\Ebb{\mathbb{E}}
\def \Rbb{\mathbb{R}}
\def \Pbb{\mathbb{P}}

\def\wh{\widehat}
\def\wt{\widetilde}
\def\ov{\overline}

\title{\textbf{
Deep Bootstrap 
}}

\author{
Jinyuan Chang
\thanks{
Joint Laboratory of Data Science and Business Intelligence, Southwestern University of Finance and Economics, Chengdu, Sichuan 611130, China. Email: changjinyuan@swufe.edu.cn
}
\thanks{
State Key Laboratory of Mathematical Sciences, Academy of Mathematics and Systems Science, Chinese Academy of Sciences, Beijing 100190, China.}
\and
Yuling Jiao
\thanks{
School of Artificial Intelligence, 
Hubei Key Laboratory of Computational Science, 
Wuhan University, Wuhan 430072, China. Email: yulingjiaomath@whu.edu.cn
}
\and
Lican Kang
\thanks{
Institute for Math and AI, 
Hubei Key Laboratory of Computational Science, School of 
Artificial Intelligence,
Wuhan University, Wuhan 430072, China.
Email: kanglican@whu.edu.cn
}
\and
Junjie Shi
\thanks{
School of Mathematics and Statistics, Wuhan University, Wuhan 430072, China.
Email: shijunjie@whu.edu.cn
}
}

\date{}

\begin{document}

\maketitle
\begin{abstract}
In this work, we propose a novel deep bootstrap framework for nonparametric regression based on conditional diffusion models. Specifically, we construct a conditional diffusion model to learn the distribution of the response variable given the covariates. This model is then used to generate bootstrap samples by pairing the original covariates with newly synthesized responses. We reformulate nonparametric regression as conditional sample mean estimation, which is implemented directly via the learned  conditional diffusion model. Unlike traditional bootstrap methods that decouple the estimation of the conditional distribution, sampling, and nonparametric regression, our approach integrates these components into a unified generative framework.  With the expressive capacity of diffusion models, our method facilitates both efficient sampling from high-dimensional or multimodal distributions and accurate nonparametric estimation. 
We establish rigorous theoretical guarantees for the proposed method. In particular, we derive optimal end-to-end convergence rates in the Wasserstein distance between the learned and target conditional distributions. Building on this foundation, we further establish the   convergence guarantees of the resulting bootstrap procedure. Numerical studies demonstrate the effectiveness and scalability of our approach for complex regression tasks.

\vspace{0.5cm} \noindent{\bf KEY WORDS}:
Bootstrap, Statistical inference,
Conditional   diffusion model, End-to-end convergence rate. 
\end{abstract}

\section{Introduction}
Across many modern scientific domains, inferential goals have extended far beyond point estimation to the construction of confidence intervals and other measures of uncertainty, which are indispensable for assessing estimator variability and supporting reliable scientific and operational decisions.
Such needs arise in areas ranging from biomedical research \citep{pencina2004overall,benjamini2005false} and genomics \citep{efron2001empirical,storey2003statistical}, to environmental science \citep{tebaldi2007use,north2011correlation}, econometrics \citep{stock2002testing,paparoditis2003residual}, and  computer-model calibration \citep{kennedy2001bayesian}. By providing a confidence range for parameter estimates, interval estimation mitigates the risks of real-world model deployment, facilitates the practical application of data science to interdisciplinary problems, and fosters robust decision-making \citep{kirch2025challenges}. As a result, the demands for interval estimation, and consequently for its validity and precision, have experienced a sustained increase over time and are reflected in a number of recent studies. For example, in proteomics, confidence intervals are employed to assess the association between post-translational modifications and intrinsically disordered regions of proteins, validating hypotheses derived from predictive models and facilitating large-scale functional analyses \citep{tunyasuvunakool2021highly,bludau2022structural}. In genomic research, confidence intervals are leveraged to characterize the distribution of gene expression levels, enabling robust inferences about promoter sequence effects and genetic variability \citep{vaishnav2022evolution}. In the realm of environmental science, interval estimation can be used to monitor deforestation rates of forests, yielding uncertainty-aware insights critical for climate policy formulation \citep{bullock2020satellite}. As for social sciences, confidence intervals are utilized to evaluate relationships between socioeconomic factors, bolstering the robustness of conclusions drawn from census data \citep{ding2021retiring}. 
In these contexts, analytic characterization of sampling distributions is often infeasible, underscoring the enduring importance of bootstrap methodology \citep{efron1979bootstrap}, whose model-agnostic resampling framework provides a versatile and practically reliable foundation for uncertainty quantification in contemporary statistical applications.

The Bootstrap method, originally proposed by \cite{efron1979bootstrap}, has become a foundational tool in modern statistical inference. It is widely used for estimating standard errors, constructing confidence intervals, correcting bias, and evaluating model performance.
The core idea of the Bootstrap is to approximate the sampling distribution of a statistic by repeatedly resampling, with replacement, from the observed data to generate multiple resampled datasets. This resampling-based approach is entirely data-driven and nonparametric, requiring no strong assumptions about the underlying population distribution. As such, it bypasses the derivation of asymptotic distributions, which are often difficult or infeasible to obtain in complex or nonstandard settings, and thus provides substantial flexibility and broad applicability. 
In particular, Bootstrap methods have found extensive applications in nonparametric regression analysis  
\citep{freedman1981bootstrapping,silverman1987bootstrap,hardle1988bootstrapping,hardle1991bootstrap,rutherford1991error,hall1992bootstrap,hardle1993comparing}. 
Beyond estimating confidence intervals for regression coefficients in parametric models, 
these methods have been successfully extended to nonparametric regression models, enabling rigorous uncertainty quantification for the regression function itself. 
For a comprehensive introduction  of bootstrap theory and methods, see the related books \citep{efron1994introduction, shao2012jackknife, hall2013bootstrap}.
We now briefly review the application of bootstrap methods in nonparametric regression. Consider the nonparametric regression model:
$$
\mY=f_0(\mX)+\epsilon,
$$
where $\mY \in \mathbb{R}$ is the response variable, $\mX \in \mathbb{R}^{d_\mcX}$ is the covariate vector, and $\epsilon$ is a random error term satisfying $\mathbb{E}[\epsilon | \mX] = 0$ and $\operatorname{Var}(\epsilon) < \infty$. The regression function is defined as $f_0(\mx) := \mathbb{E}[\mY | \mX = \mx]$, $\mx \in \mathbb{R}^{d_\mcX}$.
We assume access to an independent and 
identically distributed (i.i.d) dataset
$\mathcal{D}:=\{(\mX_i,\mY_i)\}_{i=1}^n$  drawn from the joint distribution $P_{\mX,\mY} = P_\mX \times P_{\mY|\mX}$ of the 
covariate-response pair $(\mX,\mY)$.
Here, $P_\mX$ denotes 
the marginal distribution of the covariate $\mX$, 
and $P_{\mY|\mX}$ denotes the conditional distribution of $\mY$ given $\mX$.
In bootstrap methods for nonparametric regression, we first obtain an estimator $\widehat{f}$ of the underlying regression $f_0$ using the original dataset $\mathcal{D}$. This estimator can be constructed using a variety of classical nonparametric methods, such as kernel estimation, local polynomial, splines, wavelets, or nearest-neighbor regression \citep{gyorfi2002distribution,wasserman2006all,biau2015lectures}, as well as modern approaches based on deep neural networks
(DNNs)
\citep{bauer2019deep, schmidt2020nonparametric, kohler2021rate, farrell2021deep, jiao2023deep, bhattacharya2024deep}.
We note that while classical nonparametric methods often face performance limitations in high-dimensional settings, DNN–based methods have demonstrated superior adaptability and accuracy in such complex scenarios.
Next, we construct a bootstrap dataset $\mathcal{D}^* := \{(\mX_i^*, \mY_i^*)\}_{i=1}^n \sim P^*_{\mX,\mY} := P^*_\mX \times P^*_{\mY|\mX}$. 
Here, $\mathcal{D}^*$  consists of independent replicates of the bootstrap covariate-response pair 
$(\mX^*, \mY^*)$.  The distribution 
$P^*_{\mX,\mY}$ denotes the joint distribution of   
$(\mX^*, \mY^*)$ and is   an approximation to the true 
data-generating distribution $P_{\mX,\mY}$. Specifically, $P^*_\mX$ represents the marginal distribution of the bootstrap covariate $\mX^*$, while $P^*_{\mY|\mX}$ corresponds to the conditional distribution of the bootstrap response $\mY^*$ given $\mX^*$. 
Various techniques have been proposed to  construct such bootstrap datasets, including the residual Bootstrap \citep{freedman1981bootstrapping}, the wild Bootstrap \citep{hardle1991bootstrap, hardle1993comparing}, the paired (or naive resampling) Bootstrap,  the smooth Bootstrap \citep{silverman1987bootstrap},
and the conditional Bootstrap \citep{rutherford1991error}. 
A common approach, for instance, is to fix the covariates as $\mX_i^* = \mX_i$, and sample $\mY_i^* \sim \widehat{P}_{\mY|\mX=\mX_i}$, where $\widehat{P}_{\mY|\mX}$ is an estimator of the conditional distribution of $\mY$ given $\mX$, such as one obtained via conditional kernel density estimation.
Applying the same estimation procedure used for $\widehat{f}$ to the bootstrap dataset $\mathcal{D}^*$, we obtain a bootstrap estimator $\widehat{f}^*$.
This conditional bootstrap framework allows for the derivation  of the conditional distribution   $\widehat{P}_{\mY|\mX}$, and thus facilitates the construction of confidence intervals  for $f_0(\mx)$. However, generating samples from $\widehat{P}_{\mY|\mX}$ can be computationally challenging, particularly when the conditional distribution is high-dimensional or   multi-modal   \citep{dunson2020hastings}. 

Existing bootstrap methods for nonparametric regression encompass a variety of resampling techniques and nonparametric estimation approaches. However, these methods face significant challenges, particularly in high-dimensional settings where traditional nonparametric estimators often exhibit degraded performance. Specifically, smooth bootstrap and conditional bootstrap methods require the estimation and sampling from (conditional) distributions, processes that can be hindered by the complexities associated with high-dimensionality and multimodality. As a result, current bootstrap methods for nonparametric regression lack a unified framework that can simultaneously address both resampling and nonparametric estimation in high-dimensional   regression settings.
To address these challenges, we propose a deep bootstrap method that unifies nonparametric estimation and conditional distribution learning through a conditional diffusion model \citep{song2020score}. By integrating this deep generative modeling framework into the bootstrap procedure for nonparametric regression, our method ensures both efficient sampling and accurate nonparametric estimation, even in high-dimensional and complex settings.
Our approach proceeds in two main stages. First, we construct a conditional diffusion model  based on the original dataset $\mathcal{D}$ to learn the conditional distribution $\widehat{P}_{\mY|\mX}$, which is the estimator of the true conditional distribution $P_{\mY|\mX}$.
Under certain conditions, we derive a sharp end-to-end convergence rate in Wasserstein distance between 
$\widehat{P}_{\mY|\mX}$ and $P_{\mY|\mX}$.
Using this estimated distribution, we can obtain the nonparametric regression estimator as 
$
\widehat{f}(\mx) = \frac{1}{n} \sum_{j=1}^n \widehat{\mY}_{\mx}^{(j)},
$
where 
$\{\widehat{\mY}_{\mx}^{(j)}\}_{j=1}^n$ are i.i.d. samples drawn from $\widehat{P}_{\mY|\mX=\mx}$ by running the sampling dynamics of the constructed conditional diffusion model.
Here, we use the law of large numbers and the fact that  the underlying regression function can be expressed as $f_0(\mx):=\mathbb{E}(\mY_{\mx})$, where $\mY_{\mx} \sim P_{\mY|\mX=\mx}$. 
Next, we generate a bootstrap dataset $\mathcal{D}^* = \{(\mX_i^*, \mY_i^*)\}_{i=1}^n$, where each response $\mY_i^*$ is sampled from the learned conditional distribution $\widehat{P}_{\mY|\mX = \mX_i}$ via the conditional diffusion model, while covariates $\mX_i^*$ are taken from the original dataset. Based on $\mathcal{D}^*$, we compute the bootstrap estimator $\widehat{f}^*(\mx)$ using the same nonparametric technique employed to obtain $\widehat{f}(\mx)$.
By repeating this bootstrap procedure multiple times, we can construct empirical confidence intervals for the target regression function $f_0(\mx)$. Under certain conditions, we establish the convergence property of our deep bootstrap method and provide coverage guarantees for the resulting confidence interval.
\subsection{Contributions}
Our contributions are shown as follows:
\begin{itemize}
\item  
We propose a novel deep bootstrap method for nonparametric regression by introducing conditional diffusion models. Unlike conventional bootstrap approaches, which separately address resampling and nonparametric estimation, our method uses conditional diffusion models to jointly generate samples via learned stochastic dynamics while simultaneously conducting nonparametric estimation. This integrated approach effectively mitigates the challenges inherent in distribution estimation and sampling, particularly in high-dimensional or multimodal settings, thus significantly enhancing the applicability of bootstrap methods. Extensive numerical experiments validate the accuracy, efficiency, and scalability of our method across diverse regression scenarios.
\item 
We establish rigorous theoretical guarantees for the proposed method. In particular, we derive a sharp  end-to-end convergence rate for the conditional diffusion model in Wasserstein distance.
This result not only affirms the validity of our method but also offers broader contributions to the theoretical understanding of diffusion  models. 
Building on this  result, we further establish convergence guarantees for the resulting bootstrap procedure, ensuring the robustness and reliability of the inference.
\end{itemize}

\subsection{Related Work}
In this section, we review related works with a primary focus on bootstrap and diffusion models.

~~\\
\noindent
\textbf{Bootstrap.}
In statistics, the Bootstrap is a fundamental tool for assessing uncertainty and conducting inference in a data-driven manner, and it has been widely adopted across various domains.
Bootstrap methods provide important statistical inference techniques for nonparametric regression \citep{efron1994introduction,shao2012jackknife,hall2013bootstrap}. Next, we review several key Bootstrap approaches in this context.
In   nonparametric regression, the objective is to estimate the conditional mean function $f_0(\mx) = \mathbb{E}[\mY|\mX = \mx]$. Estimators such as kernel,
splines, wavelets, 
local polynomial regressions,  nearest-neighbor regression models  \citep{gyorfi2002distribution,wasserman2006all,biau2015lectures}
and DNNs \citep{bauer2019deep, schmidt2020nonparametric, kohler2021rate, farrell2021deep, jiao2023deep, bhattacharya2024deep} offer  flexibility for modeling nonlinear relationships.
Besides the nonparametric estimator, to address the challenges of statistical inference, a variety of Bootstrap methods have been developed for nonparametric regression, resulting in the emergence of diverse resampling techniques. 
One of the earliest approaches is the  residual Bootstrap \citep{freedman1981bootstrapping}, which involves computing residuals from a fitted nonparametric estimator $\widehat{f}(\mx)$, resampling these residuals ($\hat{\epsilon}_i:=\mY_i-\widehat{f}(\mX_i)$), and adding them back to the fitted values: 
$$
\mY_i^* = \widehat{f}(\mX_i) + \epsilon_i^*, \quad \epsilon_i^* \sim \{\hat{\epsilon}_1, \dots, \hat{\epsilon}_n\}.
$$
In the wild Bootstrap \citep{hardle1993comparing,hardle1991bootstrap}, it modifies the residuals with random weights to preserve conditional variance: 
$$
\mY_i^* = \widehat{f}(\mX_i) + \hat{\epsilon}_i \cdot \xi_i,
$$
where $\xi_i$ are i.i.d. random variables with mean zero and unit variance (e.g., Rademacher or standard normal). The wild Bootstrap is more adaptable in the presence of non-constant variance but still assumes a reliable estimate of $\widehat{f}(\mx)$ and its residuals.
The  paired Bootstrap  takes a different route by naive resampling original observations $\{(\mX_i, \mY_i)\}_{i=1}^n$. This procedure implicitly retains the joint distribution of $(\mX, \mY)$ and avoids estimating residuals altogether. However, it may perform poorly when the marginal distribution of $\mX$ is not of primary inferential interest, or when the structure of $P_{\mY|\mX}$ is more nuanced. 
To better approximate the covariate distribution and cover sparse regions in $\mX$, the  smooth Bootstrap \citep{silverman1987bootstrap} was developed. This method augments the paired Bootstrap by perturbing the covariates using a smooth kernel density estimate, generating  $\mX_i^*$ by sampling from the estimated density.
Beyond these   strategies, conditional Bootstrap methods are  proposed for settings where a reliable estimate of the conditional distribution $P_{\mY|\mX}$ is obtainable. In this framework, one first  obtain an estimator $\widehat{P}_{\mY|\mX}$ of   $P_{\mY|\mX}$  using a suitable statistical or machine learning model such as kernel density estimators, then generates new responses $\mY_i^* \sim \widehat{P}_{\mY|\mX = \mX_i}$ \citep{rutherford1991error}. This technique accommodates heteroscedasticity and complex conditional relationships, provided that $\widehat{P}_{\mY|\mX}$ is accurate. 
As data becomes 
higher-dimensional and more complex, traditional bootstrap methods encounter practical limitations, including inefficiencies in sparse settings, inability to capture multimodal behavior, and degraded performance in high dimensions.  To address these issues, we propose a deep bootstrap method by incorporating conditional diffusion models to directly model  $P_{\mY|\mX}$, synthesize new response variables, and construct the nonparametric estimator of the underlying regression function.  Conditional diffusion models flexibly capture complex relationships between covariates and responses, including multimodal and high-dimensional distributions, making them particularly effective in challenging high-dimensional regression settings.

~~\\
\noindent
\textbf{Diffusion Models.}
Diffusion models  \citep{sohl2015deep, song2019generative, ho2020denoising, song2020improved, song2020score, nichol2021improved, yang2023diffusion, chen2024overview}  have recently emerged as a powerful and versatile class of generative models, establishing themselves as a central component of modern generative artificial intelligence. These models have demonstrated state-of-the-art performance across a broad spectrum of applications, including image synthesis, text-to-image generation, and scientific domains \citep{dhariwal2021diffusion, ho2022cascaded, rombach2022high, saharia2022photorealistic, zhang2023adding, han2022card, li2022diffusion}. Their ability to generate high-fidelity and diverse samples with strong theoretical guarantees has positioned diffusion models as a leading framework in the landscape of deep generative modeling. 
Diffusion models are fundamentally grounded in  stochastic differential equations (SDEs). This mathematical formulation enables them to model complex,  high-dimensional data distributions in a  probabilistic framework.
Generally, diffusion models consist of two key stages: the  forward process and the  backward  process. 
The forward process defines a  SDE  that progressively perturbs the input data by injecting Gaussian noise over a continuous time horizon. This transformation gradually maps the   data distribution into a simple and tractable prior, typically a standard  Gaussian.
Conversely, the backward process is characterized by a reverse-time SDE that aims to denoise samples drawn from the prior distribution. By reversing the diffusion trajectory, this process reconstructs samples that approximate the original data distribution.

Formally, the  forward process   is described by an It\^{o} SDE: 
\begin{align}\label{sdef}
    d\mathbf{x}^F_t = f(t, \mathbf{x}^F_t)\,dt + g(t)\,d\mathbf{w}_t, \quad \mathbf{x}^F_0 \sim p_{\text{data}}(\mathbf{x}),
\end{align}
where $\mathbf{x}^F_t \in \mathbb{R}^{d_\mcX}$ denotes the state at time $t \in [0, T]$,   $\mathbf{w}_t$ is a standard $d_\mcX$-dimensional Brown motion,
and $p_{\text{data}}(\mathbf{x})$ is the target distribution. In the context of conditional diffusion models, this target corresponds to a conditional distribution.
The drift term $f: [0, T] \times \mathbb{R}^{d_\mcX} \rightarrow \mathbb{R}^{d_\mcX}$ and diffusion coefficient $g: [0, T] \rightarrow \mathbb{R}$ govern the deterministic and stochastic components of the system, respectively.
This forward process progressively corrupts the original data by adding Gaussian noise, eventually mapping the data distribution $p_{\text{data}}$ to a tractable prior distribution (e.g., a standard Gaussian) as $t \rightarrow \infty$.
As studied in \cite{anderson1982reverse,haussmann1986time}, the corresponding reverse-time process is also governed by an SDE. Specifically, the reverse-time SDE takes the form:
\begin{align}\label{sdeb}
    d\mathbf{x}^R_t = \left[f(t, \mathbf{x}^R_t) - g(t)^2 \nabla_{\mathbf{x}^R_t} \log p_t(\mathbf{x}^R_t)\right]dt + g(t)\,d 
    \mathbf{w}^R_t,
\end{align}
where $\mathbf{w}^R_t$ is a standard Brown motion run backward in time, 
$p_t$ denotes the   marginal distribution   of $\mathbf{x}^R_t$.
and $\nabla_{\mathbf{x}^R_t} \log p_t(\mathbf{x}^R_t)$ refers to the score function of   $p_t$.
Let  $t\rightarrow T-t$,  
we can reformulate the reverse-time SDE \eqref{sdeb} as a 
forward-time equation:
\begin{align}\label{sde:bw}
d\mathbf{y}_t=
\left[g^2(T-t)\nabla \log p_{T-t}(\mathbf{y}_t)-f(T-t,\mathbf{y}_t)\right]dt
+g(T-t)d\mathbf{w}_t,~ t \in [0,T],
\end{align}
where $\mathbf{y}_t=\mathbf{x}_{T-t}$.
We note that the SDEs in  \eqref{sdef}–\eqref{sde:bw} are defined over an infinite time horizon as $T \to \infty$,  which poses practical challenges for computation when applying numerical solvers.
To address this, one can apply  an exponential time reparameterization $t \mapsto \exp(-t)$ \citep{albergo2023stochastic}, which effectively maps the infinite horizon onto a compact interval $[0, 1]$. 
In this work, we adopt 
variance-preserving (VP) conditional diffusion models that are defined directly over the unit interval $t \in [0, 1]$, in contrast to the original formulation in \citep{song2020score}, which operates over an infinite time horizon.
Furthermore, in diffusion models, the  score function  $\nabla_{\mx} \log p_t(\mx)$  is typically approximated using DNNs. This approximation enables practical sampling by numerically solving the corresponding SDE with the learned score network. 
This score estimation is not only critical for generating high-quality samples but also plays a foundational role in the theoretical analysis of diffusion models. Existing theoretical work can be broadly classified into two categories based on how the score estimation error is handled. 
The first category assumes a well-controlled  approximation of the score function and derives guarantees under this assumption \citep{chen2023improved, conforti2023score, lee2022convergence, lee2023convergence, benton2023linear, li2023towards, gao2023wasserstein}. The second line of research avoids assuming access to accurate scores and instead conducts end-to-end error analysis, directly bounding the discrepancy between the generated and true data distributions without requiring precise control of the score error \citep{oko2023diffusion, chen2023score, jiao2024latent,fu2024unveil,jiao2025model}. 
Among these works, \cite{chen2023score, jiao2025model} assume that the score function is Lipschitz continuous and design 
Lipschitz-continuous neural networks to approximate it. However, this technique leads to suboptimal convergence rates.
To address this limitation, \cite{oko2023diffusion, jiao2024latent, fu2024unveil} consider a more general setting where the score function is H\"older continuous. They develop a more refined approximation technique by separately approximating the numerator and denominator of the score expression, ultimately deriving sharp convergence rates.

\subsection{Preliminary}\label{sec:prel}

\noindent
\textbf{Notations.}
We introduce the notations used throughout this paper.
Let $[\mcM]:=\{0,1,\cdots,\mcM\}$ represent the set of integers ranging from 0 to $\mcM$. 
Let $\mathbb{N}^+$ denote the set of positive integers. $d_\mcX, d_\mcY$ denote the dimension of $\mx,\my$ or $\mX,\mY$.
For matrices $A, B \in \mathbb{R}^{d\times d}$, we assert $A\preccurlyeq B$ when the matrix $B - A$ is positive semi-definite. The identity matrix in $\mathbb{R}^{d\times d}$ is denoted as $\mathbf{I}_d$.
The $\ell^2$-norm of a vector $\mathbf{x}=\{x_1,\ldots,x_d\}^{\top}\in\mathbb{R}^d$ is defined by $\Vert\mathbf{x}\Vert:=\sqrt{\sum_{i=1}^{d}x_i^2}$. 
The $L^{\infty}(K)$-norm, denoted as  
$\Vert{f}\Vert_{L^{\infty}(K)}:=\sup_{\mathbf{x}\in K}|f(\mathbf{x})|$,
captures the supremum of the absolute values of a function over a  set $K \subset \mathbb{R}^d$. For a vector function $\mathbf{v}:\mathbb{R}^{d}\rightarrow\mathbb{R}^{d}$, the $L^{\infty}(K)$-norm is defined as $\Vert \mathbf{v} \Vert_{L^{\infty}(K)} := \sup_{\mathbf{x}\in K}\Vert \mathbf{v}(\mathbf{x})\Vert$. The asymptotic notation $f(\mathbf{x}) = \mathcal{O}\left(g(\mathbf{x})\right)$ is employed to signify that $f(\mathbf{x})\leq Cg(\mathbf{x})$ for some constant $C > 0$. Additionally, the notation $\widetilde{\mathcal{O}}(\cdot)$ is utilized to discount logarithmic factors in the asymptotic analysis.

\begin{definition}[ReLU DNNs]\label{def: relufnns}
A class of deep neural networks NN$(L,M,J,\kappa)$ with depth $L$, width $M$,  
sparsity level $J$,  
weight bound $\kappa$,
  is defined as
    \begin{equation*}
\begin{aligned}
            &~~~~{\rm{NN}}(L,M,J,\kappa) \\
            &= \Big\{\mathbf{b}(t,\my,\mx):= (\mathbf{W}_L{\rm{ReLU}}(\cdot) + \mathbf{b}_L)\circ\cdots\circ(\mathbf{W}_2{\rm{ReLU}}(\cdot) + \mathbf{b}_2)\circ(\mathbf{W}_1(\cdot) + \mathbf{b}_1)(
            [t, \my^{\top},\mx^{\top}]^{\top}):\\
&
 ~~~~~~~~~~~~~~\mathbf{W}_{i+1} \in \mathbb{R}^{d_{i+1}\times d_i},  \mathbf{b}_{i+1} \in \mathbb{R}^{d_{i+1}}, i=0,1,\ldots,L-1,
M:=\max\{d_0,\ldots,d_L\},
\\
&~~~~~~~~~~~~~~\mathop{\max}_{1\leq{i}\leq{L}}\{\Vert{\mathbf{b}}_i\Vert_{\infty}, \Vert{\mathbf{W}_i}\Vert_{\infty}\}\leq{\kappa},
~ \sum_{i=1}^L\left(\Vert{\mathbf{W}_i}\Vert_0 + \Vert{\mathbf{b}_i}\Vert_0\right)\leq J\Big\}.
\end{aligned}
\end{equation*}
\end{definition}

\begin{definition}[Wasserstein distance]
Let $\mu$ and $\nu$ be two  probability measures defined on $\mathbb{R}^d$ with finite second moments, the 
second-order Wasserstein distance  is defined as:
\begin{equation*}
\mcW_2(\mu,\nu) := 
\left(\inf_{\gamma \in 
\mathcal D(\mu,\nu)} 
\int_{\mathbb R^d} \int_{\mathbb R^d} \|\mx-\my\|^2 \, \gamma(\mathrm{d}\mx, \mathrm{d}\my)  \right)^{1/2},
\end{equation*}
where $\mathcal{D}(\mu, \nu)$ denotes the set of probability measures $\gamma$ on $\mathbb{R}^{2d}$ such that their respective marginal distributions are $\mu$ and $\nu$.
\end{definition}

\begin{definition}[Covering number]
    Let $\rho$ be a pseudo-metric on $\mathcal{U}$ and $S\subseteq\mathcal{U}$. For any $\delta > 0$, a set $A\subseteq\mathcal{U}$ is called a $\delta$-covering  of $S$ if for any $\mathbf{x}\in S$ there exists $\mathbf{y}\in A$ such that $\rho(\mathbf{x},\mathbf{y})\leq\delta$. The $\delta$-covering number of $S$, denoted by $\mathcal{N}(\delta,S,\rho)$, is the minimum cardinality of any $\delta$-covering of $S$.
\end{definition}

\begin{definition}[($\beta$, $R$)-H{\"o}lder Class] Let $\beta = r + s > 0$ be a degree of smoothness, where 
$r$ is an integer and $s\in (0, 1]$. For a function $f: \Omega\rightarrow \mathbb{R}$, its H{\"o}lder norm is defined as
$$
\Vert f \Vert_{\mathcal{H}^{\beta}} := \max_{\Vert\boldsymbol{\alpha}\Vert_1\leq r}\Vert \partial^{\boldsymbol{\alpha}}f \Vert_{\infty} + \max_{\Vert\boldsymbol{\alpha}\Vert_1 = r}\sup_{\mathbf{x}\neq\mathbf{y}}\frac{|\partial^{\boldsymbol{\alpha}} f(\mathbf{x}) - \partial^{\boldsymbol{\alpha}}f(\mathbf{y})|}{\Vert \mathbf{x} - \mathbf{y} \Vert^s},
$$
where $\boldsymbol{\alpha}$ is a multi-index. We say a function $f$ is $\beta$-H{\"o}lder, if and only if $\Vert f \Vert_{\mathcal{H}^{\beta}} < \infty$. The ($\beta$, $R$)-H{\"o}lder class for some constant $R > 0$ is defined as
$$
\mathcal{H}^{\beta}(\Omega, R) := \Big\{f: \Omega\rightarrow\mathbb{R} \Big| \Vert f \Vert_{\mathcal{H}^{\beta}} \leq R
 \Big\}.
$$
\end{definition}


\subsection{Outlines}
The remainder of this paper is organized as follows.
In Section \ref{sec: method}, we present our proposed deep bootstrap method. 
The   theoretical analysis of our proposed method is presented in Section \ref{sec:th}.
In Section 
\ref{sec:ne}, we give the numerical experiments.
We conclude in Section \ref{sec:con}. 
In the Appendix, we provide detailed proofs for all lemmas and theorems.

\section{Method}\label{sec: method}
In this section, we introduce our proposed deep bootstrap method for nonparametric regression. The first step, presented in Section \ref{sec:cdm}, involves constructing a conditional diffusion model.
Using this model, we can estimate the target conditional distribution, and further perform nonparametric regression by computing the sample mean of the generated data. Subsequently, we generate bootstrap samples through the conditional diffusion model to formulate a bootstrap procedure for statistical inference, as detailed in Section \ref{sec:bcdm}.

\subsection{Conditional Diffusion Model}\label{sec:cdm}
In this  section, we formulate the VP conditional diffusion model over the unit time interval 
$t \in [0,1]$ to learn the   conditional distribution.
It is important to emphasize that the model is developed for a general setting in which the variable  $\my \in \mathbb{R}^{d_\mcY}$ is a 
$d_\mcY$-dimensional vector.

\noindent \textbf{Diffusion Process.} The forward diffusion process is defined as:
\begin{equation} \label{eq: forward_sde}
\mrd \my_t^F = -\frac{\my^F_t}{1-t} \mrd t + \sqrt{\frac{2}{1-t}}\mrd\mw_t, ~~~ \my_0^F\sim p_0(\my|\mx).
\end{equation}
It determines a diffusion process that starts with conditional distribution $p_0(\my|\mx)$ at time $t = 0$ and evolves toward the standard Gaussian distribution $p_1(\my|\mx) = \mcN(\mathbf{0},\mI_{d_\mcY})$ at time $t = 1$. Conditioning on some observation $\mx\in\mcX$, the transition probability distribution from $\my_0^F$ to $\my_t^F$ is given by $\my_t^F|\my_0^F \sim \mcN(m_t\my_0^F, \sigma_t^2\mI_{d_\mcY})$, where $m_t:=1-t$, $\sigma_t:=\sqrt{t(2-t)}$. The process describes how data are transformed into noises during unit-time
interval.

The SDE above can be reversed if we know the score of the
distribution at each intermediate time step, $\nabla\log p_t(\my|\mx)$. The reverse SDE reads
\begin{equation} \label{eq: reversed_sde}
\mrd \my_t^R = \left[-\frac{\my_t^R}{1-t} - \frac{2}{1-t}\nabla\log p_t(\my_t^R|\mx)\right] \mrd t + \sqrt{\frac{2}{1-t}} \mrd \mw_t^R,
\end{equation}
where $t$ flows from 1 to 0 and $\mw_t^R$ is a time reverse Brownian motion. 
For convenience, we take transformation $t\rightarrow 1 - t$, and then we can rewrite \eqref{eq: reversed_sde} as a time forward version:
\begin{equation}\label{eq: forward_sde2}
    \mrd \my_t = \left[\frac{\my_t}{t} + \frac{2}{t}\nabla\log p_{1-t}(\my_t|\mx)\right] \mrd t + \sqrt{\frac{2}{t}} \mrd \mw_t, ~~ \my_0\sim\mcN(0,\mI_{d_\mcY}), ~~ \my_1 \sim p_0(\my|\mx).
\end{equation}
It is obvious that $\my_t\sim p_{1-t}(\my|\mx)$.

~~\\
\noindent\textbf{Score Matching.}
We apply  score matching techniques \citep{hyvarinen2005estimation,vincent2011connection} to  estimate the target conditional score function of SDE \eqref{eq: forward_sde2}, defined as 
$\mb^*(t,\my,\mx):=\nabla\log{p}_{t}(\my|\mx)$.
It can be verified that  $\mathbf{b}^*$ minimizes the loss function $\mcL(\mb)$ over all measurable functions, where 
\begin{equation*}
        \mcL(\mb) := \frac{1}{1-2T}\int_{T}^{1-T}\frac{1}{1-t}\cdot\Ebb_{(\mx,\my_t^F)}\Vert{\mb(t, \my_t^F,\mx) - \nabla\log{p_t(\my_t^F|\mx)}}\Vert^2 \mrd t,
    \end{equation*}
with $0<T<1$.
We notice that the deliberate selection of  $T$ is motivated by the necessity to preclude the score function from exhibiting a blow-up at  $T=0$, concomitantly facilitating the stabilization of the training process for the model. 
This time truncation strategy is not unique to our work; it has also been used in both the training and theoretical analysis of diffusion models
\citep{song2020improved,vahdat2021score,chen2023score,chen2023improved,oko2023diffusion}.
By  \cite{hyvarinen2005estimation, vincent2011connection}, we alternatively adopt a denoising score matching objective. With a slight abuse of notation, we denote this objective as
\begin{equation*} 
        \begin{aligned}
            \mcL(\mb) &= \frac{1}{1-2T}\int_{T}^{1-T}\frac{1}{1-t}\left(\Ebb_{(\mx,\my_0^F)}\Ebb_{\my_t^F\mid(\my_0^F,\mx)}\left\Vert\mb(t, \my_t^F,\mx) + \frac{\my_t^F - m_t\my_0^F}{\sigma_t^2}\right\Vert^2\right)\mrd t\\
            &= \frac{1}{1-2T}\int_{T}^{1-T}\frac{1}{1-t}\left(\Ebb_{(\mx,\my_0^F)}\Ebb_{\mcN(\mz;\mathbf{0},\mathbf{I}_{d_\mcY})}\left\Vert\mb(t, m_t\my_0^F + \sigma_t\mz,\mx) + \frac{\mz}{\sigma_t}\right\Vert^2\right)\mrd t.
        \end{aligned}
    \end{equation*}
Given $n$ i.i.d.  samples 
$\{(\mx_i,\my_{0,i}^F)\}_{i=1}^{n}$ from $p(\mx)p_0(\my|\mx)$, 
and $m$ i.i.d. samples $\{(t_j, \mz_j)\}_{j=1}^{m}$  from $\mathrm{U}[T,1-T]$ and $\mcN(\mz;\mathbf{0},\mathbf{I}_{d_\mcY})$, we can employ the empirical risk minimizer (ERM) to estimate the conditional score function $\mb^*$. This ERM, denoted by $\wh{\mb}$, is obtained by
\begin{align}\label{eq: escore}
\wh{\mb}\in{\mathop{\rm{argmin}}_{\mb\in\mathrm{NN}}}
~\wh{\mcL}(\mb):= \frac{1}{mn}\sum_{j=1}^{m}\sum_{i=1}^{n}\frac{1}{1-t_j}\left\Vert\mb(t_j, m_{t_j}\my_{0,i}^F + \sigma_{t_j}\mz_{j},\mx_i) + \frac{\mz_j}{\sigma_{t_j}}\right\Vert^2,
\end{align}
where $\text{NN}$ refers to ReLU DNNs defined in 
Definition \ref{def: relufnns}.

~~\\
\noindent
\textbf{Exponential Integrator   Discretization.} 
Given the estimated score function $\wh{\mb}$, as defined in \eqref{eq: escore}, we can formulate an SDE  initializing from the prior distribution: 
\begin{equation}\label{eq: sampling_SDE}
\mrd\wh{\my}_t = \left[\frac{\wh{\my}_t}{t} + \frac{2\wh{\mb}(1-t,\wh{\my}_t,\mx)}{t} \right] \mrd t + \sqrt{\frac{2}{t}}\mrd\mw_t, ~ \wh{\my}_0=\my_0\sim\mcN(0,\mathbf{I}_{d_\mcY}). 
\end{equation}
Now, we employ a discrete-time approximation for the sampling dynamics \eqref{eq: sampling_SDE}.  
Let 
$$T=t_0<t_1<\cdots<t_K=1-T,~ K \in \mathbb{N}^+,$$ 
be the discretization points on $[T,1-T]$ and satisfy $\frac{t_{i+1}}{t_i} \leq 2$, $0 \leq i \leq K-1$. We consider the explicit  exponential integrator scheme:
\begin{equation}\label{eq: EI_scheme}
\mrd \wt{\my}_t = \left[\frac{\wt{\my}_{t_i}}{t_i} + \frac{2\wh{\mb}(1-t_i,\wt{\my}_{t_i},\mx)}{t_i}\right] \mrd t + \sqrt{\frac{2}{t}}\mrd\mw_t,~ \wt{\my}_{t_0} = \my_0 \sim\mcN(\mathbf{0},\mathbf{I}_{d_\mcY}), ~ t\in[t_i, t_{i+1}),
\end{equation}
for $i=0,1,\cdots,K-1$. 
Subsequently, we can utilize dynamics \eqref{eq: EI_scheme} to generate new samples.
The proposed conditional diffusion model is summarized in the following algorithm.
\begin{algorithm}[H]
\caption{
The Proposed Conditional Diffusion Model
}
\label{sampling_algorithm}
\begin{algorithmic}
\STATE 
1. {\bf Input:}
$T$, $K$, $
\{(\mx_i,\my_{0,i}^F)\}_{i=1}^{n} \sim p(\mx)p_0(\my|\mx)$,  
$\{(t_j, \mz_j)\}_{j=1}^{m} \sim \mathrm{U}[T,1-T]$ and $\mcN(\mz;\mathbf{0},\mathbf{I}_{d_\mcY})$
\STATE 
2. {\bf Score estimation:} 
Obtain  $\widehat{\mathbf{b}}$ by  \eqref{eq: escore}.
\STATE 
3. {\bf Sampling procedure:} 
\STATE 
Sample 
$\widetilde{\mathbf{y}}_0 \sim {\mathcal{N}}(\mathbf{0},\mathbf{I}_{d_\mcY})$;
        \FORALL{$i=0,1,\ldots,K-1$}{
            \STATE Sample $\mathbf{\epsilon}_{t_i}\sim{\mathcal{N}}(\mathbf{0},\mathbf{I}_{d_\mcY})$;
            \STATE 
            $\mathbf{b}(\widetilde{\mathbf{y}}_{t_i}) =  \frac{\wt{\my}_{t_i}}{t_i} + \frac{2\wh{\mb}(1-t_i,\wt{\my}_{t_i},\mx)}{t_i} $;
            \STATE $\widetilde{\mathbf{y}}_{t_{i+1}} = \widetilde{\mathbf{y}}_{t_i} + \frac{1-2T}{K}\mathbf{b}(\widetilde{\mathbf{y}}_{t_i}) + \sqrt{2\ln (t_{i+1}/t_i)}\mathbf{\epsilon}_{t_i}$;
        }
        \ENDFOR
\STATE
4. {\bf Output:}
$  \widetilde{\mathbf{y}}_{t_K} $.        
    \end{algorithmic}
\end{algorithm}

\subsection{Bootstrap via Conditional Diffusion Model}\label{sec:bcdm}
In this  section, we  utilize the conditional diffusion model proposed in the previous  section to construct our deep bootstrap method. Throughout this development, we focus exclusively on the univariate case where $d_\mcY=1$. However, for notational consistency and to facilitate potential generalization, we may retain the use of $d_\mcY$ in contexts where no ambiguity arises. Given an i.i.d. dataset $\mathcal{D} = \{(\mX_i, \mY_i)\}_{i=1}^n$, we formulate a conditional diffusion model as described in Section \ref{sec:cdm} to learn the target conditional distribution $P_{\mY|\mX}$, and denote the resulting distribution as $\widehat{P}_{\mY|\mX}$. Using this model, we generate a synthetic dataset 
$\{\widehat{\mY}_{\mx}^{(j)}\}_{j=1}^n \sim \widehat{P}_{\mY|\mX=\mx}$ by running sampling dynamic 
\eqref{eq: EI_scheme} $n$ times, where the samples are approximately distributed according to $P_{\mY|\mX=\mx}$.
Recall that $\mY_\mx \sim P_{\mY|\mX=\mx}$ and  the underlying regression function is  $f_0(\mx)=\mathbb{E}(\mY_\mx)$.
We estimate the underlying regression function using the sample mean of the generated data:
\begin{align}\label{mean}
\widehat{f}(\mx) := 
\frac{1}{n}\sum_{j=1}^n \widehat{\mY}_{\mx}^{(j)}.
\end{align}
Furthermore, given the original covariates $\{\mX_i\}_{i=1}^n$,  we can construct a  bootstrap dataset $\mcD^*:=\{\mX_i,\wh{Y}_{\mX_i}\}_{i=1}^n$ by generating responses from the conditional diffusion model.
Based on this bootstrap dataset $\mathcal{D}^*$, we re-train a conditional diffusion model to learn an updated distribution $\wh{P}^*_{\mY|\mX}$, which is approximated to
$\widehat{P}_{\mY|\mX}$. Using $\wh{P}^*_{\mY|\mX}$, we generate dataset
$ 
\{\wh{\mY}^{*,(j)}_{\mx}\}_{j=1}^n \sim  \wh{P}^*_{\mY|\mX=\mx}
$, and define  the corresponding bootstrap estimator as
$
\wh{f}^*(\mx):= \frac{1}{n}\sum_{j=1}^n\wh{\mY}^{*,(j)}_{\mx}
$.
The procedure can be summarized as follows:
\begin{enumerate}[(I)]
\item \label{step1}
 Obtain $\wh{f}(\mx)$:\\ $\mathcal{D}=\{\mX_i,\mY_i\}_{i=1}^n\rightarrow 
\mbox{
Conditional diffusion model (Algorithm \ref{sampling_algorithm})
}
\rightarrow
\{\wh{\mY}_{\mx}^{(j)}\}_{j=1}^n \sim  \wh{P}_{\mY|\mX=\mx}
\rightarrow
\wh{f}(\mx):=
\frac{1}{n}\sum_{j=1}^n\wh{\mY}_{\mx}^{(j)};
$ 
\item \label{step2}
Obtain $\wh{f}^*(\mx)$:\\
$\mcD^*:=\{\mX_i,\wh{\mY}_{\mX_i}\}_{i=1}^n \rightarrow
\mbox{
Conditional diffusion model (Algorithm \ref{sampling_algorithm})
}
\rightarrow
\{\wh{\mY}^{*,(j)}_{\mx}\}_{j=1}^n \sim  \wh{P}^*_{\mY|\mX=\mx}
\rightarrow
\wh{f}^*(\mx):=
\frac{1}{n}\sum_{j=1}^n\wh{\mY}^{*,(j)}_{\mx}
$.
\end{enumerate} 
Consequently, by repeating the above Step \eqref{step2} $\mathcal{B}$  ($\mcB>1$)  times, we can construct a confidence interval for the underlying regression function $f_0(\mx)$.
Let $\widehat{f}_b^*(\mx)$ denote the $b$-th bootstrap estimator of $\widehat{f}(\mx)$, for $b \in \{1, \ldots, \mathcal{B}\}$. We define   the centered statistic as
$
\widehat{R}_b^*(\mx) := \widehat{f}_b^*(\mx) - \widehat{f}(\mx),
$
and let $\widehat{H}^*(r)$, $r \in \mathbb{R}$, denote its empirical cumulative distribution function (CDF):
$$
\widehat{H}^*(r) := \frac{1}{\mathcal{B}} \sum_{b=1}^{\mathcal{B}} \mI\left( \widehat{R}_b^*(\mx) \leq r \right),
$$
where $\mI(\cdot)$ denotes the indicator function.
Then, the $(1 - \alpha)$ ($\alpha \in (0,1)$)
confidence interval for $f_0(\mx)$ is given by
\begin{align}\label{interval}
\left[ \widehat{f}(\mx) -  (\widehat{H}^* )^{-1}(1 - \alpha/2), \; \widehat{f}(\mx) -  (\widehat{H}^* )^{-1}(\alpha/2) \right],
\end{align}
where $ (\widehat{H}^* )^{-1}(\cdot)$ denotes the quantile function (i.e., the inverse CDF) of $\widehat{H}^*(\cdot)$.

In summary, our proposed deep bootstrap method is outlined in the following algorithm.
\begin{algorithm}[H]
\caption{
Bootstrap  via Conditional Diffusion Models
}
\label{boot_algorithm}
\begin{algorithmic}
\STATE 
1. {\bf Input:}
$\mathcal{B}$, $T$, $K$, $
\{(\mx_i,\my_{i})\}_{i=1}^{n} \sim 
P_\mX \times P_{\mY|\mX}$,  
$\{(t_j, \mz_j)\}_{j=1}^{m} \sim \mathrm{U}[T,1-T]$ and $\mcN(\mz;\mathbf{0},\mathbf{I}_{d_\mcY})$
\STATE
2. 
Obtain the nonparametric estimator $\wh{f}(\mx)$ by   \eqref{step1}.
\STATE 
3. 
Obtain Bootstrap estimators 
$\{\wh{f}_b^*(\mx)\}_{b=1}^{\mathcal{B}}$:
\FOR{$b=1,\ldots,\mathcal{B}$}{
            \STATE  Compute $\wh{f}_b^*(\mx)$ by using Step \eqref{step2}
        }
        \ENDFOR
\STATE
 
4. {\bf Output:} The asymptotic $1-\alpha$ confidence interval \eqref{interval}.       
    \end{algorithmic}
\end{algorithm}

\section{Theory}\label{sec:th}
In this section, we establish the theoretical foundations of the proposed method. Section \ref{sec:thcdm} analyzes the conditional diffusion model presented in Algorithm  \ref{sampling_algorithm} and derives a sharp convergence rate in Wasserstein distance. Building on this result, Section  \ref{sec:thboot} provides a convergence analysis of the deep bootstrap procedure described in Algorithm \ref{boot_algorithm}.

\subsection{Convergence of Conditional Diffusion Model}\label{sec:thcdm}
In this section, we obtain the convergence rate  for the  conditional   diffusion model. We denote $ p_{1-t}(\cdot|\mx)$, $\wh{p}_{1-t}(\cdot|\mx)$,  and $\wt{p}_{1-t}(\cdot|\mx)$ as the distributions of $\my_t$, $\wh{\my}_t$, and $\wt{\my}_t$ correspondingly.  
Since the diffusion process \eqref{eq: forward_sde2} converges to $p_0(\cdot|\mx)$ as $t \rightarrow 1$ and the domain is compact, we define a truncated estimator $p_T^B(\cdot|\mx)$ for $p_0(\cdot|\mx)$ as the distribution of $\my_{1-T}\cdot\mathbf{I}_{\{\Vert\my_{1-T}\Vert_\infty \leq B\}}$, for a large constant $B > 0$ such that $p_T^B(\cdot|\mx) \approx p_0(\cdot|\mx)$. 
Similarly, we define $\wh{p}_T^B(\cdot|\mx)$ and  $\wt{p}_T^B(\cdot|\mx)$ as the distributions of $\wh{\my}_{1-T}\cdot\mathbf{I}_{\{\Vert\wh{\my}_{1-T}\Vert_\infty \leq B\}}$ and $\wt{\my}_{1-T}\cdot\mathbf{I}_{\{\Vert\wt{\my}_{1-T}\Vert_\infty \leq B\}}$, respectively. Here, we truncate the supports of the distributions $p_T(\cdot|\mx)$,  $\wh{p}_T(\cdot|\mx)$,
and $\wt{p}_T(\cdot|\mx)$ to  ensure a bounded support for the distribution estimators, which is done for technical convenience in the theoretical analysis.
 
Let $\mcD:=\{(\mx_i,\my_{0,i}^F)\}_{i=1}^n$,  $\mcT:=\{t_j\}_{j=1}^{m}$, and $\mcZ := \{\mz_j\}_{j=1}^{m}$. We define
\begin{align*}
\ov{\mcL}_{\mcD}(\mb):=\frac{1}{n}\sum_{i=1}^{n}\ell_{\mb}(\mx_i,\my_{0,i}^F), 
~
\text{and}~
\wh{\mcL}_{\mcD,\mcT, \mcZ}(\mb):= \frac{1}{n}\sum_{i=1}^{n}\wh{\ell}_{\mb}(\mx_i,\my_{0,i}^F),
\end{align*}
where
$$
\ell_{\mb}(\mx, \my_0^F) :=\frac{1}{1-2T}\int_{T}^{1-T}\frac{1}{1-t}\cdot\Ebb_{\mz}\left\Vert\mb(t, m_t\my_0^F + \sigma_t\mz,\mx) + \frac{\mz}{\sigma_t}\right\Vert^2 \mrd t,
$$
and
$$
\wh{\ell}_{\mb}(\mx,\my_0^F):= \frac{1}{m}\sum_{j=1}^{m}\frac{1}{1-t_j}\left\Vert \mb(t_j, m_{t_j}\my_0^F + \sigma_{t_j}\mz_j, \mx) + \frac{\mz_j}{\sigma_{t_j}}\right\Vert^2.
$$
Then,  for any $\mb\in\mathrm{NN}$,  it yields that
\begin{equation*}
\begin{aligned}
&~~~~\mcL(\wh{\mb}) - \mcL(\mb^*) \\
&= \mcL(\wh{\mb}) - 2\ov{\mcL}_{\mcD}(\wh{\mb}) + \mcL(\mb^*) + 2\left(\ov{\mcL}_{\mcD}(\wh{\mb}) - \mcL(\mb^*)\right)\\
&=\mcL(\wh{\mb}) - 2\ov{\mcL}_{\mcD}(\wh{\mb}) + \mcL(\mb^*) + 2\left(\ov{\mcL}_{\mcD}(\wh{\mb}) - \wh{\mcL}_{\mcD,\mcT,\mcZ}(\wh{\mb})\right) + 2\left(\wh{\mcL}_{\mcD,\mcT,\mcZ}(\wh{\mb}) - \mcL(\mb^*)\right)\\
&\leq \mcL(\wh{\mb}) - 2\ov{\mcL}_{\mcD}(\wh{\mb}) + \mcL(\mb^*) + 2\left(\ov{\mcL}_{\mcD}(\wh{\mb}) - \wh{\mcL}_{\mcD,\mcT,\mcZ}(\wh{\mb})\right) + 2\left(\wh{\mcL}_{\mcD,\mcT,\mcZ}(\mb) - \mcL(\mb^*)\right).
\end{aligned}
\end{equation*}
Taking expectations, followed by taking the  infimum over $\mb \in \mathrm{NN}$ on both sides of the above inequality,  it holds that
\begin{equation*}
\begin{aligned}
&~~~~\Ebb_{\mcD,\mcT,\mcZ}\left(\frac{1}{1-2T}\int_{T}^{1-T}\frac{1}{1-t}\cdot\Ebb_{(\mx,\my_t^F)}\left\Vert{\wh{\mb}(t, \my_t^F,\mx) - \nabla\log{p_t(\my_t^F|\mx)}}\right\Vert^2 \mrd t\right)\\
&=\Ebb_{\mcD,\mcT,\mcZ}\mcL(\wh{\mb}) - \mcL(\mb^*)\\
& \leq \Ebb_{\mcD,\mcT,\mcZ}\left(\mcL(\wh{\mb}) - 2\ov{\mcL}_{\mcD}(\wh{\mb}) + \mcL(\mb^*)\right) + 2\Ebb_{\mcD,\mcT,\mcZ}\left(\ov{\mcL}_{\mcD}(\wh{\mb}) - \wh{\mcL}_{\mcD,\mcT,\mcZ}(\wh{\mb})\right)\\
&~~~~+ 2\mathop{\mathrm{inf}}_{\mb\in{\mathrm{NN}}}(\mcL(\mb) - \mcL(\mb^*)).
\end{aligned}
\end{equation*}
In the above inequality, 
the terms $$
\Ebb_{\mcD,\mcT,\mcZ}\left(\mcL(\wh{\mb}) - 2\ov{\mcL}_{\mcD}(\wh{\mb}) + \mcL(\mb^*)\right) + 2\Ebb_{\mcD,\mcT,\mcZ}\left(\ov{\mcL}_{\mcD}(\wh{\mb}) - \wh{\mcL}_{\mcD,\mcT,\mcZ}(\wh{\mb})\right)
$$
and 
$$
\mathop{\mathrm{inf}}_{\mb\in{\mathrm{NN}}}(\mcL(\mb) - \mcL(\mb^*))
$$  
denote the statistical error and approximation error, respectively. 

Using tools from empirical process theory and deep approximation theory, we derive upper bounds for the associated errors. To proceed, we first introduce the following necessary assumptions.

\begin{assumption}[Bounded Conditional Density]\label{ass: bounded_density}
    Suppose that the target conditional density $p_0(\my|\mx)$ is supported on $[-1, 1]^{d_{\mcY}} \times [-1, 1]^{d_{\mcX}}$ and is bounded above and below by two constants $C_u$ and $C_l$, respectively.
\end{assumption}

\begin{assumption}[H{\"o}lder Continuity]\label{ass: Holder}
The conditional density $p_0(\my|\mx) \in \mcH^{\beta}([-1, 1]^{d_{\mcY}} \times [-1, 1]^{d_{\mcX}}, R)$ for a H{\"o}lder index $\beta > 0$ and a constant $R > 0$.
\end{assumption}

\begin{assumption}[Boundary Smoothness]\label{ass: boundary_smoothness}
    There exists a constant $0<a<1$ such that $p_0(\my|\mx)\in\mcC^{\infty}([-1, 1]^{d_{\mcY}} \backslash [-1 + a, 1 - a]^{d_{\mcY}}\times [-1, 1]^{d_{\mcX}})$.
\end{assumption}

\begin{assumption}[Bounded derivative]\label{ass: bounded_derivative}
    For any $\mathbf{u}\in\mathbb{N}^{d_{\mcX}}$, the partial derivative of $p_0(\my|\mx)$ with respect to $\mx$, $\partial_{\mx}^{\mathbf{u}}p_0(\my|\mx)$, is bounded by a constant $C_\mcX>0$ on $[-1,1]^{d_\mcY}\times[-1,1]^{d_\mcX}$, i.e., $\left|\partial_{\mx}^{\mathbf{u}}p_0(\my|\mx)\right| \leq C_\mcX$ on $[-1,1]^{d_\mcY}\times[-1,1]^{d_\mcX}$.
\end{assumption}

\begin{remark}
These foundational assumptions are critical to advancing the theoretical framework of diffusion models. In particular, the bounded support assumption has been extensively documented in prior studies, as exemplified by the works of \cite{lee2023convergence, li2023towards, oko2023diffusion}. Similarly, within the field of nonparametric regression, imposing boundedness assumptions on the response variable is a common practice in the literature \cite{bauer2019deep, gyorfi2002distribution, kohler2021rate, farrell2021deep}. This assumption carries substantial technical significance, and it holds potential for further generalization to unbounded scenarios through the inclusion of exponential tail properties. Assumption \ref{ass: Holder} enforces the H{\"o}lder continuity of the conditional density, a fundamental condition in nonparametric estimation that mirrors Assumption 3.1 in \cite{fu2024unveil}. Furthermore, Assumption \ref{ass: boundary_smoothness} and Assumption \ref{ass: bounded_derivative} establish two regularity conditions for the conditional density $p_0(\my|\mx)$, a technical assumption that also appears in \cite{oko2023diffusion} to support similar analytical goals.
\end{remark}

Under Assumptions \ref{ass: bounded_density}-\ref{ass: bounded_derivative}, 
 we provide the following 
 lemma,  which bounds the approximation error. 

\begin{lemma}[Approximation Error]\label{lem: approximation_error} Suppose that Assumptions \ref{ass: bounded_density}-\ref{ass: bounded_derivative} hold. Let $\mcM \gg 1, C_T > 0$, and $T = \mcM^{-C_T}$. Then we can choose a \rm{ReLU} neural network $\mb\in\mathrm{NN}(L,M,J,\kappa)$ that satisfies
$$
\mb(t,\my,\mx) = \mb(t, \mb_{\mathrm{clip}}(\my, -C_0, C_0), \mx),
$$
for a constant $C_0 = \mathcal{O}(\sqrt{\log\mcM})$, and 
$$
\|\mb(t,\cdot,\mx) \|_{\infty} \lesssim \frac{\sqrt{\log \mcM}}{\sigma_t},
$$
and has the following structure:
$$
L = \mathcal{O}(\log^4 \mcM), M = \mathcal{O}(\mcM^{d_\mcX + d_\mcY}\log^7 \mcM), J = \mathcal{O}(\mcM^{d_\mcX + d_\mcY}\log^9 \mcM), \kappa = \exp\left(\mathcal{O}(\log^4 \mcM)\right).
$$
Moreover, for any $t\in[\mcM^{-C_T}, 1 - \mcM^{-C_T}]$ and $\mx\in[-1,1]^{d_{\mcX}}$, it holds that
$$
\int_{\Rbb^{d_\mcY}} \|\mb(t,\my,\mx) - \mb^*(t,\my,\mx)\|^2 p_t(\my|\mx)\mrd\my \lesssim \frac{\mcM^{-2\beta}\log \mcM}{\sigma_t^2},
$$
where $\sigma_t = \sqrt{t(2-t)}$. 
\end{lemma}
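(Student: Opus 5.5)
The plan is to adapt the score-approximation construction of \cite{oko2023diffusion} (and its conditional version in \cite{fu2024unveil}) to the present VP parametrization with $m_t=1-t$ and $\sigma_t=\sqrt{t(2-t)}$. First write the score through the Gaussian-convolution formula
\begin{equation*}
p_t(\my|\mx)=\int \frac{1}{(2\pi\sigma_t^2)^{d_\mcY/2}}\exp\Bigl(-\frac{\|\my-m_t\my_0\|^2}{2\sigma_t^2}\Bigr)p_0(\my_0|\mx)\,\mrd\my_0 ,
\qquad
\mb^*(t,\my,\mx)=\frac{\nabla_{\my}p_t(\my|\mx)}{p_t(\my|\mx)} .
\end{equation*}
Numerator and denominator are approximated separately. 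Then two a priori facts are recorded:
\begin{itemize}
\item \emph{Sub-Gaussian tail.} Because $p_0$ is supported in $[-1,1]^{d_\mcY}$, the law $p_t(\cdot|\mx)$ puts mass $\lesssim \mcM^{-2\beta}$ outside the box $\|\my\|_\infty\le C_0$ once $C_0\asymp\sqrt{\log\mcM}$.
\item \emph{Score bound.} On that box, $\|\mb^*(t,\my,\mx)\|\lesssim \sqrt{\log\mcM}/\sigma_t$, using Assumption \ref{ass: bounded_density} (lower bound $C_l$) to control the denominator.
\end{itemize}
These two facts justify the clipping $\mb(t,\my,\mx)=\mb(t,\mb_{\mathrm{clip}}(\my,-C_0,C_0),\mx)$ and a final output clipping at level $\sqrt{\log\mcM}/\sigma_t$. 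The error contributed by the region outside the box is then bounded by $(\log\mcM/\sigma_t^2)\cdot\mcM^{-2\beta}$, which is of the required order.

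Inside the box, split time into a small-noise regime $t\in[\mcM^{-C_T},\,\mcM^{-(2-\delta)/ d}]$ (roughly, where $\sigma_t$ is comparable to the grid width $\mcM^{-1}$) and a large-noise regime covering the rest.

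\textbf{Small-noise regime.} Take a local Taylor polynomial of $p_0(\cdot|\mx)$ of degree $\lfloor\beta\rfloor$ on a grid of mesh $\mcM^{-1}$ in $(\my,\mx)$, with $\mcM^{d_\mcX+d_\mcY}$ cells. Assumption \ref{ass: Holder} gives a sup error of order $\mcM^{-\beta}$, and Assumption \ref{ass: bounded_derivative} handles the $\mx$-direction. Integrate each monomial against the Gaussian kernel. After truncating the Gaussian to a $\sqrt{\log\mcM}$ neighborhood, these integrals factor into one-dimensional integrals of polynomials times Gaussians. Each such integral is expressible through $\exp$ and the error function, and these can be approximated by ReLU networks (via Taylor expansion together with ReLU approximations of products and powers) of depth $\mathrm{polylog}(\mcM)$ and error $\mcM^{-\beta}$.

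Near the boundary of $[-1,1]^{d_\mcY}$, the Hölder-only approximation would lose accuracy. There Assumption \ref{ass: boundary_smoothness} ($C^\infty$ in the strip of width $a$) lets the same construction use arbitrarily high polynomial degree, so the boundary strip costs no extra rate.

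\textbf{Large-noise regime.} For larger $t$ the densities $p_t$ are smooth at scale $\sigma_t$, so a coarser grid suffices. I would reuse the diffused-B-spline/Taylor construction with grid size adapted to $\sigma_t$, and note that the error bound only needs to hold on the order $\mcM^{-\beta}/\sigma_t$.

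Both regimes give networks approximating $p_t$ and $\nabla p_t$ with error $\epsilon\approx \mcM^{-\beta}$. The two regimes are glued together with ReLU approximations of time-dependent partition-of-unity switch functions in $t$. The factors $m_t$ and $\sigma_t$ are produced by small subnetworks; $m_t$ is exact, and $\sigma_t$ is obtained by approximating the square root on $[\mcM^{-C_T},1]$. Summing the sizes of all subnetworks gives the stated $L,M,J,\kappa$.

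\textbf{Main obstacle: the division.} The quotient $\nabla p_t/p_t$ must be formed, and $p_t$ can be small. I would proceed as follows:
\begin{itemize}
\item Lower-bound $p_t(\my|\mx)\gtrsim \exp(-C\|\my\|^2)$ on the box, using $C_l$.
\item Replace the approximate denominator $\hat p_t$ by $\max(\hat p_t,\epsilon_{\mathrm{low}})$.
\item Approximate $1/x$ on $[\epsilon_{\mathrm{low}},C]$ by a network of depth $\mathrm{polylog}(\mcM)$.
\item Bound the quotient error by $|\nabla p_t-\widehat{\nabla p_t}|/p_t+\|\mb^*\|\,|p_t-\hat p_t|/p_t$.
\end{itemize}
Integrating the square of this against $p_t$, the $1/p_t$ factors are partly cancelled. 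The region where $p_t<\epsilon_{\mathrm{low}}$ has $p_t$-mass at most $\mcM^{-2\beta}$ and is absorbed by the output clipping.

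This gives $\int\|\mb-\mb^*\|^2p_t\,\mrd\my\lesssim \mcM^{-2\beta}\log\mcM/\sigma_t^2$. The $\sigma_t^{-2}$ factor comes from the gradient kernel $(\my-m_t\my_0)/\sigma_t^2$, and the $\log\mcM$ from the Gaussian truncation radius.
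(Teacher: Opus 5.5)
The genuine gap is in the quotient step, which is the whole difficulty of this lemma. You approximate $p_t$ and $\sigma_t\nabla p_t$ to \emph{absolute} accuracy $\mcM^{-\beta}$, and you bound the quotient error by (absolute error)$/p_t$. Squaring and integrating against $p_t$ removes only one power of $p_t$, leaving roughly $\sigma_t^{-2}\mcM^{-2\beta}\log\mcM\int_{\{p_t\ge\epsilon_{\mathrm{low}}\}}p_t^{-1}\,\mrd\my$. Outside $[-m_t,m_t]^{d_\mcY}$ the density has a Gaussian tail in $(\|\my\|_\infty-m_t)/\sigma_t$, from above and below since $C_l\le p_0\le C_u$. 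So this integral is of order $\epsilon_{\mathrm{low}}^{-1}$, while the discarded set $\{p_t<\epsilon_{\mathrm{low}}\}$ costs of order $\epsilon_{\mathrm{low}}$ (both up to powers of $\sigma_t$ and logarithms). Optimizing $\epsilon_{\mathrm{low}}$ gives about $\mcM^{-\beta}$ instead of $\mcM^{-2\beta}$. ``Partly cancelled'' is exactly the step that does not go through.

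The pointwise facts you rely on are also misplaced. On the fixed box $\|\my\|_\infty\le C_0$, neither $p_t\gtrsim e^{-C\|\my\|^2}$ nor $\|\mb^*\|\lesssim\sqrt{\log\mcM}/\sigma_t$ holds for small $t$. Lemma~\ref{lem: bound_for_density} gives only $p_t\gtrsim\exp(-d_\mcY(\|\my\|_\infty-m_t)_+^2/\sigma_t^2)$, which is super-polynomially small at distance of order one from the cube when $t=\mcM^{-C_T}$; there the score can be of order $\sigma_t^{-2}$. Both facts hold only on the $t$-dependent box $\|\my\|_\infty\le m_t+C\sigma_t\sqrt{\log\mcM}$, which is where the paper runs its analysis. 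Clipping the network input at $C_0$ is still fine.

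What you need is a mechanism that beats the $1/p_t$ amplification. The paper proceeds as follows.
\begin{itemize}
\item For $\|\my\|_\infty\le m_t$ one has $p_t\gtrsim1$, so accuracy $\mcM^{-\beta}$ suffices.
\item In the layer $m_t<\|\my\|_\infty\le m_t+C\sigma_t\sqrt{\log\mcM}$, $p_t$ can be as small as $\mcM^{-(2\beta+1)}$, so the squared error is amplified by $\mcM^{4\beta+2}$. Here Assumption~\ref{ass: boundary_smoothness} is used to approximate $p_0$ on the strip $1-a<\|\my\|_\infty\le1$ to accuracy $\mcM^{-(3\beta+2+C_Td_\mcY/2)}$ (Lemma~\ref{lem: approximate_pdata2}). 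That assumption therefore supplies super-accuracy to absorb the division; it does not repair a boundary loss of the H\"older approximation, which is already uniform on the closed cube (Lemma~\ref{lem: approximate_pdata1}).
\item The previous step needs the $\sigma_t\sqrt{\log\mcM}$-neighbourhood of the layer to lie inside the strip. This is what fixes the small-time regime $t\le8\mcM^{-\min\{C_T,1\}}$.
\item For $t\ge6\mcM^{-\min\{C_T,1\}}$ the paper restarts from $p_{t_0}$, $t_0=2\mcM^{-\min\{C_T,1\}}$, via the Markov property. It uses $|\partial^k p_{t_0}|\lesssim\sigma_{t_0}^{-k}$ (with Assumption~\ref{ass: bounded_derivative} for the $\mx$-derivatives) to build high-degree local polynomials with $L^2$ error $\mcM^{-(6\beta+2)}T^{d_\mcY}$, again sized against $\mcM^{4\beta+2}$. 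Your large-noise paragraph does not engage with this.
\end{itemize}

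A repair closer to your own sketch is to let $C_l$ act through a relative error rather than a pointwise lower bound on $p_t$. Since $p_\mcM-p_0$ is supported in $[-1,1]^{d_\mcY}$, where $p_0\ge C_l$, the exact Gaussian convolutions $g_1,\mathbf{h}_1$ of the local polynomial satisfy $|g_1-p_t|\lesssim\mcM^{-\beta}p_t$ and $\|\mathbf{h}_1-\sigma_t\nabla p_t\|\lesssim\mcM^{-\beta}p_t\,\Ebb[\|\mz\|\mid\my_t^F=\my]$. The factor $1/p_t$ then cancels exactly, and $\int p_t\|\mathbf{h}_1/(\sigma_tg_1)-\nabla\log p_t\|^2\,\mrd\my\lesssim\mcM^{-2\beta}/\sigma_t^2$ for every $t$. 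The remaining truncation, kernel-Taylor and network errors are absolute. However, they can be pushed to $\mcM^{-K}$ for any fixed $K$ at only polylogarithmic cost in $L$, $M$, $J$ and $\log\kappa$.
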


Now, we restrict  the ReLU neural network class $\mathrm{NN}(L,M,J,\kappa)$ to 
$$
\begin{aligned}
\mathcal{C}:= \Big\{\mb\in &\mathrm{NN}(L,M,J,\kappa) \Big| \Vert\mb(t,\cdot,\mx)\Vert_\infty \lesssim \frac{\sqrt{\log \mcM}}{\sigma_t}, \\
& \mb(t,\my,\mx) = \mb(t, \mb_{\mathrm{clip}}(\my, -C_0, C_0) ~\text{ for }~ \Vert\my\Vert_\infty > C_0, ~\text{where}~ C_0 = \mathcal{O}(\sqrt{\log \mcM}), \\
& L = \mathcal{O}(\log^4 \mcM), M = \mathcal{O}(\mcM^{d_\mcX + d_\mcY}\log^7 \mcM), J = \mathcal{O}(\mcM^{d_\mcX + d_\mcY}\log^9 \mcM), \kappa = \exp\left(\mathcal{O}(\log^4 \mcM)\right)
\Big\}.
\end{aligned}
$$

\begin{remark}
The method for deriving the upper bound on the approximation error in Lemma \ref{lem: approximation_error} is inspired by \cite{oko2023diffusion,fu2024unveil}. 
To elaborate, we express the conditional score function  $\nabla\log p_t(\my|\mx)$  as 
$$
\nabla\log p_t(\my|\mx) = \frac{\nabla p_t(\my|\mx)}{p_t(\my|\mx)}.
$$
To obtain the approximation, we treat the numerator and denominator separately.  Since the approximation techniques for both terms are analogous, we focus on approximating  $p_t(\my|\mx)$ as a representative example.   
The first step involves truncating the integral domain in $p_t(\my|\mx)$. We denote 
$m_t:=1-t$, $\sigma_t:= \sqrt{t(2-t)}.$ Recall that
$$
p_t(\my|\mx) = \int_{\Rbb^{d_\mcY}}p_0(\my_1|\mx)\mathbf{I}_{\{\Vert\my_1\Vert_{\infty}\leq 1\}}\cdot
\underbrace{\frac{1}{\sigma_t^{d_\mcY}(2\pi)^{d_\mcY/2}}\exp\left(-\frac{\Vert\my - m_t\my_1\Vert^2}{2\sigma_t^2}\right)}_{\text{Transition Kernel}} \mrd\my_1.
$$
There exists a constant 
$C>0$ such that for any $\my\in\Rbb^{d_\mcY}$,
$$
\left|p_t(\my|\mx) - \int_{A_{\my}}p_0(\my_1|\mx)\mathbf{I}_{\{\Vert\my_1\Vert_{\infty}\leq 1\}}\cdot
\frac{1}{\sigma_t^{d_\mcY}(2\pi)^{d_\mcY/2}}\exp\left(-\frac{\Vert\my - m_t\my_1\Vert^2}{2\sigma_t^2}\right) \mrd\my_1\right| \lesssim \epsilon,
$$
where $A_{\my} = \prod_{i=1}^{d_\mcY}a_{i,\my}$ with $a_{i,\my} = \left[\frac{y_i - C\sigma_t\sqrt{\log\epsilon^{-1}}}{m_t}, \frac{y_i + C\sigma_t\sqrt{\log\epsilon^{-1}}}{m_t}\right]$. 
Next, we approximate the integral over  $A_{\my}$.
To do so, we introduce a Taylor polynomial  $f_{\rm{Taylor}}(\my_1,\mx)$ to  approximate $p_0(\my_1|\mx)$.
As a result, $p_t(\my|\mx)$  is approximated  as  
$$
\int_{A_{\my}}f_{\rm{Taylor}}(\my_1,\mx)\mathbf{I}_{\{\Vert\my_1\Vert_{\infty}\leq 1\}}\cdot\frac{1}{\sigma_t^{d_\mcY}(2\pi)^{d_\mcY/2}}\exp
\left(-\frac{\Vert\my - m_t\my_1\Vert^2}{2\sigma_t^2}\right) \mrd\my_1.
$$
In the third step, we approximate the exponential transition kernel
$\frac{1}{\sigma_t^{d_\mcY}(2\pi)^{d_\mcY/2}}\exp\left(-\frac{\Vert\my - m_t\my_1\Vert^2}{2\sigma_t^2}\right) 
$ using a Taylor polynomial denoted as  $f_{\rm{Taylor}}^{\rm{kernel}}(t,\my, \my_1)$. 
This yields the approximation
\begin{equation*}
\int_{A_{\my}} f_{\rm{Taylor}}(\my_1,\mx) f_{\rm{Taylor}}^{\rm{kernel}}(t,\my,\my_1)\mathbf{I}_{\{\Vert\my_1\Vert_{\infty}\leq 1\}} \mrd\my_1.
\end{equation*}
Since the  product $f_{\rm{Taylor}}(\my_1,\mx)f_{\rm{Taylor}}^{\rm{kernel}}(t,\my,\my_1)$ is  a polynomial,   its integration can be evaluated explicitly.
In the fourth step, we utilize a DNN to approximate the resulting integral.
Integrating these analyses, we can ultimately construct a ReLU neural network that approximates the conditional score function.
\end{remark}

\begin{lemma}[Statistical Error]\label{lem: statistical_error}
Suppose that Assumptions \ref{ass: bounded_density}-\ref{ass: boundary_smoothness} hold. Let $\mcM \gg 1$, $C_T > 0$,
and $T = \mcM^{-C_T}$. 
The score estimator $\widehat{\mb}$ defined in \eqref{eq: escore} with the neural network structures belonging to class $\mathcal{C}$, satisfies

\begin{equation*}
\mathbb{E}_{\mcD,\mcT,\mcZ}\left(\mcL(\widehat{\mb}) - 2\overline{\mcL}_{\mcD}(\widehat{\mb}) + \mcL(\mb^{*})\right) \lesssim
\frac{\mcM^{d_\mcX + d_\mcY}\log^{15} \mcM\left(\log^4 \mcM + \log n\right)}{n}
\end{equation*}
and
\begin{equation*}
\mathbb{E}_{\mcD,\mcT,\mcZ}\left(\overline{\mcL}_{\mcD}(\widehat{\mb}) - \widehat{\mcL}_{\mcD,\mcT,\mcZ}(\widehat{\mb})\right) \lesssim 
 \mcM^{C_T}(\log m + \log \mcM) \cdot \frac{\mcM^{\frac{d_\mcX + d_\mcY}{2}}\log^{\frac{13}{2}}\mcM(\log^2 \mcM + \sqrt{\log m})}{\sqrt{m}}.
\end{equation*}
\end{lemma}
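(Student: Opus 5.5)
The two bounds in Lemma \ref{lem: statistical_error} call for different tools. For the first, we use a ratio-type (offset) empirical process bound, as in \cite{oko2023diffusion}. For the second, we use a standard symmetrization/Rademacher bound over the Monte Carlo variables $(t_j,\mz_j)$, conditional on $\mcD$.

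\textbf{First bound.} Set $g_{\mb}(\mx,\my_0):=\ell_{\mb}(\mx,\my_0)-\ell_{\mb^*}(\mx,\my_0)$. Then $\mcL(\mb)-\mcL(\mb^*)=\Ebb g_{\mb}$, and
$$\mcL(\wh{\mb})-2\ov{\mcL}_{\mcD}(\wh{\mb})+\mcL(\mb^*)=2\Ebb g_{\wh{\mb}}-2\cdot\tfrac1n\textstyle\sum_i g_{\wh{\mb}}(\mx_i,\my_{0,i})-\Ebb g_{\wh{\mb}}.$$
This has the form $\Ebb_{\mcD}\sup_{\mb\in\mcC}\{(P-P_n)(2g_{\mb})-Pg_{\mb}\}$. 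The steps are as follows.

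(i) Bound $|\ell_{\mb}|$ uniformly. Use $\|\mb(t,\cdot,\mx)\|_\infty\lesssim\sqrt{\log\mcM}/\sigma_t$, the Gaussian tails of $\mz/\sigma_t$, and the facts that $\int_T^{1-T}\frac{1}{(1-t)\sigma_t^2}dt\lesssim\log(1/T)\lesssim\log\mcM$ and the same for $\mb^*$. Since $\|\nabla\log p_t\|\lesssim\sqrt{\log\mcM}/\sigma_t$ on the relevant region under Assumption \ref{ass: bounded_density}, we get $|g_{\mb}|\le G\lesssim \log^2\mcM$; any residual polylog can be absorbed. We truncate $\my$ at $C_0$, and the tail beyond is negligible.

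(ii) Establish a variance condition $\mathrm{Var}(g_{\mb})\lesssim G\,\Ebb g_{\mb}$. This follows from $g_{\mb}=\ell_{\mb}-\ell_{\mb^*}$, since the cross term vanishes by the score-matching identity and $\ell$ is a squared loss.

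(iii) Cover $\mcC$ in sup norm. The network class with the parameters of Lemma \ref{lem: approximation_error} satisfies $\log\mcN(\delta,\mcC,\|\cdot\|_\infty)\lesssim JL\log(LM\kappa/\delta)\lesssim \mcM^{d_\mcX+d_\mcY}\log^{9}\mcM\cdot\log^4\mcM\,(\log^4\mcM+\log(1/\delta))$. The map $\mb\mapsto\ell_{\mb}$ is Lipschitz in sup norm, with constant polylog$(\mcM)$ on the clipped domain.

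(iv) Apply Bernstein's inequality plus a union bound over the $\delta$-net with $\delta=1/n$. This gives the expectation bound $\lesssim G\log\mcN/n$, which yields $\mcM^{d_\mcX+d_\mcY}\log^{15}\mcM(\log^4\mcM+\log n)/n$ after collecting the logarithms.

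\textbf{Second bound.} Condition on $\mcD$. Then $\wh{\mcL}_{\mcD,\mcT,\mcZ}(\mb)$ is an average of $m$ i.i.d. terms $h_{\mb}(t_j,\mz_j):=\frac1n\sum_i\frac{1}{1-t_j}\|\mb(t_j,m_{t_j}\my_{0,i}+\sigma_{t_j}\mz_j,\mx_i)+\mz_j/\sigma_{t_j}\|^2$, whose mean is $\ov{\mcL}_{\mcD}(\mb)$. So we bound $\Ebb\sup_{\mb\in\mcC}|(P-P_m)h_{\mb}|$.

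The difficulty is that $h_{\mb}$ is unbounded: the factor $1/\sigma_t^2$ gives a size of up to $T^{-1}=\mcM^{C_T}$, and $\|\mz\|^2$ is unbounded. Truncate $\|\mz\|_\infty\le\sqrt{C\log m}$; the complement contributes $O(1/m)$. On this event, $\|h_{\mb}\|_\infty\lesssim \mcM^{C_T}(\log m+\log\mcM)$. We then apply Dudley/Hoeffding with the same covering number from (iii), which gives
$$\|h\|_\infty\sqrt{\log\mcN/m}\lesssim \mcM^{C_T}(\log m+\log\mcM)\,\mcM^{\frac{d_\mcX+d_\mcY}{2}}\log^{13/2}\mcM(\log^2\mcM+\sqrt{\log m})/\sqrt m.$$
Here the net resolution is $\delta=1/m$, and the Lipschitz constant of $\mb\mapsto h_{\mb}$ carries the same factor. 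Taking the expectation over $\mcD$ preserves the bound.

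\textbf{Main obstacle.} The hardest step is (ii), together with controlling $\ell_{\mb^*}$. The true score is unbounded near $\|\my\|\to\infty$ and near $t\to0$, so proving the variance-to-mean relation with only polylog constants needs the Gaussian tail bounds on $\nabla\log p_t$ implied by Assumptions \ref{ass: bounded_density}--\ref{ass: boundary_smoothness}. We would follow the corresponding lemma in \cite{oko2023diffusion}: bound $\|\nabla\log p_t(\my|\mx)\|\lesssim(\|\my\|+1)/\sigma_t^2$ and split on $\|\my\|\le C_0$.
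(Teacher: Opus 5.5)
Your proposal is correct and follows the paper's proof essentially step for step. For the first bound, the paper uses the uniform bound $|\ell_{\mb}-\ell_{\mb^*}|\lesssim\log^2\mcM$, the Lipschitz estimate with constant $\log^{3/2}\mcM$, the covering number $\mcM^{d_\mcX+d_\mcY}\log^{13}\mcM(\log^4\mcM+\log\delta^{-1})$, a variance condition, and Bernstein plus a union bound with $\delta=1/n$, phrased via a ghost sample rather than an offset process. For the second bound, it truncates at $\|\mz\|_\infty\le 2\sqrt{\log m}$, uses the envelope $\mcM^{C_T}(r^2+\log\mcM)$, and applies Hoeffding plus a union bound over a $1/m$-net; it does this for each data point separately rather than for your average over $i$, which is immaterial.

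The one point to tighten is the variance condition. The cross term in $\ell_{\mb}-\ell_{\mb^*}$ vanishes only after averaging over $(\mx,\my_0)$, so you need Cauchy--Schwarz together with $\Ebb_{\mz}\|\mb^*(t,m_t\my_0+\sigma_t\mz,\mx)+\mz/\sigma_t\|^2\lesssim\sigma_t^{-2}$. This follows directly, with no splitting, from the paper's bound $\|\nabla\log p_t(\my|\mx)\|\lesssim\sigma_t^{-1}\bigl((\|\my\|_\infty-m_t)_+/\sigma_t\vee 1\bigr)$. The cruder bound $(\|\my\|+1)/\sigma_t^2$ from your last paragraph only gives $\sigma_t^{-4}$, and since $\int_T^{1-T}\frac{\mrd t}{(1-t)\sigma_t^4}\asymp T^{-1}=\mcM^{C_T}$, it would replace the polylogarithmic constant by a power of $\mcM$ and ruin the rate.
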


By combining the approximation and statistical errors presented in Lemmas \ref{lem: approximation_error}-\ref{lem: statistical_error}, we
can now derive the upper bound for the conditional score estimation, as detailed in the following theorem.
\begin{theorem}[Error Bound for Conditional Score Estimation]
\label{thm: generalization}
Suppose that Assumptions \ref{ass: bounded_density}-\ref{ass: bounded_derivative} hold. 
By choosing $\mcM = \lfloor n^{\frac{1}{d_\mcX + d_\mcY + 2\beta}} \rfloor + 1 \lesssim n^{\frac{1}{d_\mcX + d_\mcY + 2\beta}}$, $m = n^{\frac{d_\mcX + d_\mcY + 8\beta}{d_\mcX + d_\mcY + 2\beta}}$,
and $C_T = 2\beta$ in 
Lemmas \ref{lem: approximation_error}-\ref{lem: statistical_error}, the score estimator $\widehat{\mb}$ defined in \eqref{eq: escore}
with the neural network structure belonging to class $\mathcal{C}$, satisfies 
$$
\Ebb_{\mcD,\mcT,\mcZ}\left(\frac{1}{1-2T}\int_{T}^{1-T}\frac{1}{t}\cdot\Ebb_{\my_t,\mx}\Vert\widehat{\mb}(1-t,\my_t,\mx) - \nabla\log p_{1-t}(\my_t|\mx)\Vert^2 \mrd t\right) \lesssim n^{-\frac{2\beta}{d_\mcX + d_\mcY + 2\beta}} \log^{19}n.
$$
\end{theorem}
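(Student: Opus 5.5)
The plan is to start from the excess-risk decomposition derived just before Assumption \ref{ass: bounded_density} and bound its three pieces with Lemmas \ref{lem: approximation_error} and \ref{lem: statistical_error}. First, I rewrite the left-hand side of the theorem in the paper's original parametrization. Substitute $s = 1-t$, so that $\frac{1}{t}$ becomes $\frac{1}{1-s}$, the interval $[T,1-T]$ is mapped to itself, and $\my_t \sim p_{1-t}(\cdot|\mx)$ has the same law as $\my_s^F$ given $\mx$. The quantity to bound is then exactly
$$\Ebb_{\mcD,\mcT,\mcZ}\mcL(\wh{\mb})-\mcL(\mb^*).$$
This is the standard identity between denoising and explicit score matching: the two losses differ by a constant independent of $\mb$, and $\mcL(\mb^*)$ absorbs that constant. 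By the displayed inequality, it suffices to bound the following three terms:
\begin{align*}
&\Ebb\bigl(\mcL(\wh{\mb})-2\ov{\mcL}_{\mcD}(\wh{\mb})+\mcL(\mb^*)\bigr), \\
&2\,\Ebb\bigl(\ov{\mcL}_{\mcD}(\wh{\mb})-\wh{\mcL}_{\mcD,\mcT,\mcZ}(\wh{\mb})\bigr), \\
&2\inf_{\mb\in\mcC}\bigl(\mcL(\mb)-\mcL(\mb^*)\bigr).
\end{align*}

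\textbf{Approximation term.} Take the network $\mb$ of Lemma \ref{lem: approximation_error}; it lies in $\mcC$ by construction. Then
$$\mcL(\mb)-\mcL(\mb^*)=\frac{1}{1-2T}\int_T^{1-T}\frac{1}{1-t}\,\Ebb_{\mx}\!\int\|\mb-\mb^*\|^2p_t(\my|\mx)\,\mrd\my\,\mrd t \lesssim \mcM^{-2\beta}\log\mcM\int_T^{1-T}\frac{\mrd t}{(1-t)\,t(2-t)}.$$
The integral is $\lesssim \log(1/T)=C_T\log\mcM$, because the integrand behaves like $1/t$ near $0$ and like $1/(1-t)$ near $1$, and $(1-2T)^{-1}\le 2$. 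This gives $\mcM^{-2\beta}\log^2\mcM$. With $\mcM\asymp n^{1/(d_\mcX+d_\mcY+2\beta)}$, this is $n^{-2\beta/(d_\mcX+d_\mcY+2\beta)}\log^2 n$.

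\textbf{Statistical terms.} For the first statistical term, Lemma \ref{lem: statistical_error} gives
$$\frac{\mcM^{d_\mcX+d_\mcY}\log^{15}\mcM\,(\log^4\mcM+\log n)}{n}.$$
With the chosen $\mcM$, we have $\mcM^{d_\mcX+d_\mcY}/n\asymp n^{-2\beta/(d_\mcX+d_\mcY+2\beta)}$, and the logarithms combine into at most $\log^{19}n$; this is the source of the exponent $19$. For the Monte Carlo term, I plug $C_T=2\beta$ and $m=n^{(d_\mcX+d_\mcY+8\beta)/(d_\mcX+d_\mcY+2\beta)}$ into Lemma \ref{lem: statistical_error}. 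Writing $D=d_\mcX+d_\mcY$, the polynomial part is
$$\mcM^{2\beta+D/2}m^{-1/2}=n^{\frac{2\beta+D/2-(D+8\beta)/2}{D+2\beta}}=n^{-\frac{2\beta}{D+2\beta}}.$$
The log factors are $(\log m+\log\mcM)\log^{13/2}\mcM(\log^2\mcM+\sqrt{\log m})\lesssim\log^{19/2}n$, which is dominated by $\log^{19}n$.

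Summing the three bounds yields the claim. The only delicate point is the exponent bookkeeping for $m$ and $C_T$: the factor $\mcM^{C_T}$ in the Monte Carlo bound must be exactly cancelled by the choice of $m$. I also need to confirm that $T=\mcM^{-2\beta}$ keeps $\log(1/T)=O(\log n)$, so that the time integral in the approximation term only costs a logarithmic factor. Everything else is direct substitution into the two lemmas.
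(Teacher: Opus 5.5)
Your proposal is correct and follows essentially the same route as the paper's proof. The paper also uses the excess-risk decomposition, integrates Lemma~\ref{lem: approximation_error} against the weight $1/((1-t)\sigma_t^2)$ to get $\mcM^{-2\beta}\log^2\mcM$, evaluates the two bounds of Lemma~\ref{lem: statistical_error} at the prescribed $\mcM$, $m$ and $C_T=2\beta$ (giving the $\log^{19}n$ and $\log^{19/2}n$ factors), and uses the substitution $t\mapsto 1-t$ to identify the left-hand side with $\Ebb_{\mcD,\mcT,\mcZ}\mcL(\wh{\mb})-\mcL(\mb^*)$.
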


Now we consider to bound $\Ebb_{\mcD,\mcT,\mcZ}\Ebb_{\mx}[\mcW_2(\wt{p}_T^B(\cdot|\mx),p_0(\cdot|\mx))]$. We systematically decompose this
error term into two distinct components, as outlined by the following inequality:
\begin{equation*}
\Ebb_{\mcD,\mcT,\mcZ}\Ebb_{\mx}[\mcW_2(\wt{p}_T^B(\cdot|\mx),p_0(\cdot|\mx))] \leq \Ebb_{\mcD,\mcT,\mcZ}\Ebb_{\mx}[\mcW_2(\wt{p}_T^B(\cdot|\mx),p_T^B(\cdot|\mx))] + \Ebb_{\mx}[\mcW_2(p_T^B(\cdot|\mx),p_0(\cdot|\mx))]. 
\end{equation*}

\subsubsection{Bound $\Ebb_{\mcD,\mcT,\mcZ}\Ebb_{\mx}[\mcW_2(\wt{p}_T^B(\cdot|\mx),p_T^B(\cdot|\mx))]$}\label{subsec: bound1}
In this subsection, our aim is to establish an upper bound for $\Ebb_{\mcD,\mcT,\mcZ}\Ebb_{\mx}[\mcW_2(\wt{p}_T^B(\cdot|\mx),p_T^B(\cdot|\mx))]$. With Theorem \ref{thm: generalization}, for any $\epsilon > 0$, there exists $\Delta_\epsilon > 0$ such that if $\max_{0\leq i \leq K - 1}
(t_{i+1}-t_i) \leq \Delta_\epsilon$,
then
$$
\begin{aligned}
&\Ebb_{\mcD,\mcT,\mcZ} \left(\sum_{i=0}^{K-1}\frac{t_{i+1}-t_i}{t_i}\cdot \Ebb_{\my_{t_i},\mx} \Vert \wh{\mb}(1-t_i, \my_{t_i},\mx) - \nabla\log p_{1-t_i}(\my_{t_i}|\mx)\Vert^2   \right) \\
& \leq (1-2T)\cdot \Ebb_{\mcD,\mcT,\mcZ} \left[\frac{1}{1-2T}\int_{T}^{1-T}\frac{1}{t}\cdot\Ebb_{\my_t,\mx}\Vert\wh{\mb}(1-t,\my_t,\mx) - \nabla\log p_{1-t}(\my_t|\mx)\Vert^2 \mathrm{d}t\right] + \epsilon \\ &\lesssim n^{-\frac{2\beta}{d_\mcX + d_\mcY + 2\beta}}\log^{19}n + \epsilon.
\end{aligned}
$$
Specially, taking $\epsilon = n^{-\frac{2\beta}{d_\mcX + d_\mcY + 2\beta}}\log^{19}n$, there exists $\Delta_\epsilon = \Delta_n$ such that if $\max_{0\leq i \leq K-1}(t_{i+1}-t_i) \leq \Delta_n$, then we have
\begin{equation} \label{eq: discrete_generalization}
\begin{aligned}
&~~~\Ebb_{\mcD,\mcT,\mcZ} \left(\sum_{i=0}^{K-1}\frac{t_{i+1}-t_i}{t_i}\cdot \Ebb_{\my_{t_i},\mx} \Vert \wh{\mb}(1-t_i, \my_{t_i},\mx) - \nabla\log p_{1-t_i}(\my_{t_i}|\mx)\Vert^2   \right) \\
&\lesssim n^{-\frac{2\beta}{d_\mcX + d_\mcY + 2\beta}}\log^{19}n.
\end{aligned}
\end{equation}
Using \eqref{eq: discrete_generalization}, we can bound $\Ebb_{\mcD,\mcT,\mcZ}\Ebb_{\mx}[\mcW_2(\wt{p}_T^B(\cdot|\mx),p_T^B(\cdot|\mx))]$ as the following theorem.
\begin{theorem}\label{thm: sampling_error} 
Suppose that assumptions of Theorem \ref{thm: generalization} hold. By choosing $\max_{0\leq i \leq K-1}
(t_{i+1}-t_i)= \mcO\left(\min\{\Delta_n, n^{-\frac{3\beta}{d_\mcX + d_\mcY + 2\beta}}\}\right)$, then we have
\begin{equation*}
\Ebb_{\mcD,\mcT,\mcZ}\Ebb_{\mx}[\mcW_2(\wt{p}_T^B(\cdot|\mx),p_T^B(\cdot|\mx))] \lesssim n^{-\frac{\beta}{d_\mcX + d_\mcY + 2\beta}}\log^{\frac{19}{2}}n.
\end{equation*}
\end{theorem}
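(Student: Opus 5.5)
The plan is to go through the KL divergence between path measures via Girsanov's theorem. The truncation at level $B$ then turns a divergence bound into a Wasserstein bound. For fixed $\mx$, let $\mathbb{P}_{\mx}$ be the law of the true backward process $(\my_t)_{t\in[T,1-T]}$ from \eqref{eq: forward_sde2}, started from $\my_T\sim p_{1-T}(\cdot|\mx)$. Let $\wt{\mathbb{P}}_{\mx}$ be the law of the exponential-integrator process \eqref{eq: EI_scheme}, started from $\mcN(\mathbf{0},\mI_{d_\mcY})$. The two processes share the diffusion coefficient $\sqrt{2/t}$, so Girsanov together with the chain rule for KL gives
\begin{equation*}
\mathrm{KL}\big(p_T(\cdot|\mx)\,\|\,\wt p_T(\cdot|\mx)\big)\le \mathrm{KL}\big(p_{1-T}(\cdot|\mx)\,\|\,\mcN(\mathbf{0},\mI_{d_\mcY})\big)+\sum_{i=0}^{K-1}\int_{t_i}^{t_{i+1}}\frac{t}{4}\,\Ebb\Big\Vert \frac{\my_t-\my_{t_i}}{t\,t_i}\cdot t+\frac{2\wh{\mb}(1-t_i,\my_{t_i},\mx)}{t_i}-\frac{2\nabla\log p_{1-t}(\my_t|\mx)}{t}\Big\Vert^2\mrd t,
\end{equation*}
up to the exact bookkeeping of the drift difference.

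I would bound the three resulting contributions separately.

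\textbf{(i) Initialization.} Since $m_{1-T}=T$, the forward marginal $p_{1-T}$ is $\mcN(\mathbf{0},\mI)$ up to a perturbation of size $T$. Its KL to the standard Gaussian is therefore $\mcO(T^2)$ (at worst $\mcO(T)$). With $T=\mcM^{-2\beta}$ this is negligible.

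\textbf{(ii) Score estimation.} Replacing $\nabla\log p_{1-t}(\my_t|\mx)$ by $\nabla\log p_{1-t_i}(\my_{t_i}|\mx)$ and using $t_{i+1}/t_i\le 2$, the remaining factor is exactly the weighted sum on the left of \eqref{eq: discrete_generalization}. After taking $\Ebb_{\mcD,\mcT,\mcZ}\Ebb_{\mx}$, that sum is $\lesssim n^{-\frac{2\beta}{d_\mcX+d_\mcY+2\beta}}\log^{19}n$.

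\textbf{(iii) Discretization.} This consists of the increments $\my_t-\my_{t_i}$ and $\nabla\log p_{1-t}(\my_t|\mx)-\nabla\log p_{1-t_i}(\my_{t_i}|\mx)$ on each subinterval. I would control them with the standard estimates from the time-regularity of the score of a Gaussian smoothing of a bounded-support density, as in \cite{chen2023improved,benton2023linear}. These give a bound of order $\mathrm{poly}(T^{-1},d_\mcY)\cdot\max_i(t_{i+1}-t_i)$. The step-size requirement $\max_i(t_{i+1}-t_i)=\mcO(n^{-\frac{3\beta}{d_\mcX+d_\mcY+2\beta}})$, together with $T^{-1}=\mcM^{2\beta}\lesssim n^{\frac{2\beta}{d_\mcX+d_\mcY+2\beta}}$, is designed to make this term also $\lesssim n^{-\frac{2\beta}{d_\mcX+d_\mcY+2\beta}}$.

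Combining (i)–(iii) and applying the data-processing inequality to the map $\my\mapsto \my\mathbf{I}_{\{\Vert\my\Vert_\infty\le B\}}$ gives
\begin{equation*}
\Ebb_{\mcD,\mcT,\mcZ}\Ebb_{\mx}\,\mathrm{KL}\big(p_T^B(\cdot|\mx)\,\|\,\wt p_T^B(\cdot|\mx)\big)\lesssim \varepsilon_n:=n^{-\frac{2\beta}{d_\mcX+d_\mcY+2\beta}}\log^{19}n .
\end{equation*}

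The final step converts this to $\mcW_2$. Both truncated laws live on $[-B,B]^{d_\mcY}$, so coupling them maximally gives $\mcW_2^2\le d_\mcY(2B)^2\,\mathrm{TV}$. Combining this with Pinsker's inequality and then Jensen's inequality in $\mx$ and in the data produces the rate $\varepsilon_n^{1/4}$, not $\varepsilon_n^{1/2}$. This conversion is where I expect the main obstacle.

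To obtain the stated $\varepsilon_n^{1/2}=n^{-\frac{\beta}{d_\mcX+d_\mcY+2\beta}}\log^{19/2}n$, I would first try a transport–entropy inequality $\mcW_2^2\le C\,\mathrm{KL}$ with a constant not growing in $n$. The natural candidate uses the fact that $p_T(\cdot|\mx)$ is a Gaussian smoothing of a density bounded above and below on $[-1,1]^{d_\mcY}$ (Assumption \ref{ass: bounded_density}). Such a measure satisfies a log-Sobolev inequality, and hence Talagrand's $T_2$, but the constant has to be checked carefully because the smoothing variance $\sigma_T^2$ is small. If that constant degenerates, the fallback is to absorb the loss into logarithmic factors via the truncation level $B=\mcO(\sqrt{\log n})$ and a weighted-TV bound of the form $\mcW_2^2\lesssim \int\Vert\my\Vert^2\,|p-\wt p|$.
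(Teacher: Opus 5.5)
There is a genuine gap. Your skeleton (Girsanov on path space, the KL chain rule for the initialization, the split into score error and discretization error, data processing through the truncation) is essentially the paper's. The paper instead inserts an auxiliary process started from the true $p_{1-T}$ and handles the initialization by a triangle inequality plus data processing. Steps (i) and (ii) are fine, but the proposal does not reach the stated rate.

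\textbf{The $\mcW_2$ conversion.} The obstacle you flag is real, and neither repair removes it. Talagrand's inequality for $p_T(\cdot|\mx)$ bounds $\mcW_2^2(\nu,p_T)$ by $\mathrm{KL}(\nu\,\|\,p_T)$, so you would need $\mathrm{KL}(\wt p_T\,\|\,p_T)$. Girsanov gives that divergence only with the drift error averaged under the law of the \emph{learned} process. There Theorem~\ref{thm: generalization} gives no control, because the score error is controlled only in $L^2(p_t)$. The divergence you do control, $\mathrm{KL}(p_T\,\|\,\wt p_T)$, would need a transport inequality for $\wt p_T$, which is not available. The constant you worried about is harmless: $p_0$ is bounded above and below on a cube, hence satisfies a log-Sobolev inequality, and convolving with $\mcN(\mathbf{0},\sigma_T^2\mI_{d_\mcY})$ keeps the constant $\mcO(1)$. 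The direction of the divergence is the real problem. Your fallback, $\mcW_2^2\lesssim\int\Vert\my\Vert^2|p-\wt p|\le 2d_\mcY B^2\,\mathrm{TV}$, is the same square-root loss; it is polynomial in $n$ and cannot be absorbed into logarithms. For comparison, the paper closes this step by asserting $\mcW_2\lesssim\mathrm{TV}$ for laws supported on $[-B,B]^{d_\mcY}$, and that is false. For instance, $\delta_{\mathbf{0}}$ versus $(1-\eta)\delta_{\mathbf{0}}+\eta\,\delta_{B\mathbf{e}_1}$ gives $\mcW_2=B\sqrt{\eta}$ but $\mathrm{TV}=\eta$. The valid bound is $\mcW_1\le 2\sqrt{d_\mcY}B\,\mathrm{TV}$. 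So from a KL bound $\varepsilon$ this route yields $\varepsilon^{1/2}$ for $\mcW_1$ but only $\varepsilon^{1/4}$ for $\mcW_2$.

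\textbf{The discretization term.} Step (iii) also does not close under the stated mesh. Write $D:=d_\mcX+d_\mcY+2\beta$ and $\Delta:=\max_i(t_{i+1}-t_i)$. Since $T^{-1}\asymp n^{2\beta/D}$ and $\Delta\asymp n^{-3\beta/D}$, any bound of the form $\mathrm{poly}(T^{-1})\cdot\Delta$ is at least $\Delta/T\asymp n^{-\beta/D}$, well above $n^{-2\beta/D}$. The prescribed mesh is tuned to an error of order $\Delta^2/T^2$, which is what the paper claims. Linear dependence on $\Delta$ is, however, unavoidable in this Girsanov integrand:
\begin{itemize}
\item Near $t=T$, $\nabla\log p_{1-t}(\my)=-\my+\mcO(t(1+\Vert\my\Vert))$, so the true drift is $-\my/t+\mcO(1)$ and its increment over $[t_i,t]$ is dominated by $-(\my_t-\my_{t_i})/t$.
\item That increment contains the Brownian term $\int_{t_i}^t\sqrt{2/u}\,\mrd\mw_u$. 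Its second moment $2d_\mcY\ln(t/t_i)$ is linear in $t-t_i$ and dominates the drift part when $\Delta\ll T$.
\item It is independent of $\my_{t_i}$, so it cannot cancel against the $\my_{t_i}$-measurable score-error term.
\end{itemize}
Hence $\tfrac{t}{4}\Ebb\Vert\cdot\Vert^2\gtrsim d_\mcY(t-t_i)/t^2$. Summing gives a contribution of order $d_\mcY\sum_i(t_{i+1}-t_i)^2/t_i^2\asymp d_\mcY\Delta/T$ for uniform steps. The paper's $\Delta^2$ comes from writing $\my_t$ and $\my_{t_i}$ as $t\my_0^F+\sqrt{1-t^2}\mz$ and $t_i\my_0^F+\sqrt{1-t_i^2}\mz$ with the \emph{same} $\mz$, which is not their joint law. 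To make this term $\lesssim n^{-2\beta/D}$ you need a finer mesh, e.g.\ $\Delta\lesssim T\,n^{-2\beta/D}$. Alternatively, take steps proportional to $t_i$ near $T$ with a small constant, and proportional to $1-t_i$ near $1-T$, where the bound $\Vert\partial_\my\nabla\log p_{1-t}\Vert\lesssim(1-t)^{-1}$ yields the same order.
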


\subsubsection{Bound $\Ebb_{\mx}[\mcW_2(p_T^B(\cdot|\mx), p_0(\cdot|\mx))]$}
\label{subsec: bound2}
In this subsection, we present the upper bound for $\Ebb_{\mx}[\mcW_2(p_T^B(\cdot|\mx), p_0(\cdot|\mx))]$. This error originates from the process of early stopping. We have the following lemma.
\begin{lemma}\label{lem: early_stopping}
Let $\mcM = \lfloor n^{\frac{1}{d_\mcX + d_\mcY + 2\beta}} \rfloor + 1 \lesssim n^{\frac{1}{d_\mcX + d_\mcY + 2\beta}}$, $T = \mcM^{-C_T}$,
and $C_T = 2\beta$, we have
\begin{equation*}
    \Ebb_{\mx}[\mcW_2(p_T^B(\cdot|\mx), p_0(\cdot|\mx))]\lesssim n^{-\frac{\beta}{d_\mcX + d_\mcY + 2\beta}}.
\end{equation*}
\end{lemma}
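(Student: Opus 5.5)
We bound $\mcW_2(p_T^B(\cdot|\mx),p_0(\cdot|\mx))$ uniformly in $\mx\in[-1,1]^{d_\mcX}$ by an explicit coupling and then take $\Ebb_{\mx}$. The coupling comes from the forward process \eqref{eq: forward_sde}. Since $\my_t\sim p_{1-t}(\cdot|\mx)$, the variable $\my_{1-T}$ has law $p_T(\cdot|\mx)$, which is the law of $m_T\my_0^F+\sigma_T\mz$ with $\my_0^F\sim p_0(\cdot|\mx)$ and $\mz\sim\mcN(\mathbf{0},\mI_{d_\mcY})$ independent. Accordingly, we couple $\my_0^F$ with
$$\my':=(m_T\my_0^F+\sigma_T\mz)\,\mathbf{I}_{\{\Vert m_T\my_0^F+\sigma_T\mz\Vert_\infty\le B\}},$$
whose law is exactly $p_T^B(\cdot|\mx)$.

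By the definition of $\mcW_2$,
$$\mcW_2^2(p_T^B(\cdot|\mx),p_0(\cdot|\mx))\le \Ebb\Vert \my'-\my_0^F\Vert^2.$$
Split this expectation over the event $E:=\{\Vert m_T\my_0^F+\sigma_T\mz\Vert_\infty\le B\}$ and its complement.

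On $E$ we have $\my'-\my_0^F=-T\my_0^F+\sigma_T\mz$. By Assumption \ref{ass: bounded_density}, $\Vert\my_0^F\Vert\le\sqrt{d_\mcY}$, so this part contributes at most $2T^2d_\mcY+2\sigma_T^2d_\mcY\lesssim T$, because $\sigma_T^2=T(2-T)\le 2T$.

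On $E^c$ we have $\my'=\mathbf{0}$, so the contribution is at most $d_\mcY\,\Pbb(E^c)$. Fix $B\ge 2$. Since $\Vert m_T\my_0^F\Vert_\infty\le 1$, the event $E^c$ forces $\sigma_T\Vert\mz\Vert_\infty> B-1\ge1$. A Gaussian tail bound then gives $\Pbb(E^c)\le 2d_\mcY\exp(-1/(2\sigma_T^2))\le 2d_\mcY\exp(-1/(4T))$, which is far smaller than $T$. If $B$ is allowed to grow with $n$, this term only gets smaller.

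Combining the two parts, $\mcW_2(p_T^B(\cdot|\mx),p_0(\cdot|\mx))\lesssim \sqrt T$ uniformly in $\mx$, and hence also in expectation over $\mx$. Substituting $T=\mcM^{-C_T}=\mcM^{-2\beta}$ gives $\sqrt T=\mcM^{-\beta}$. Since $\mcM=\lfloor n^{1/(d_\mcX+d_\mcY+2\beta)}\rfloor+1\ge n^{1/(d_\mcX+d_\mcY+2\beta)}$, we get $\mcM^{-\beta}\le n^{-\beta/(d_\mcX+d_\mcY+2\beta)}$, which is the claimed bound.

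The only delicate step is the truncation term. It is not a real obstacle, because the initial law is supported in $[-1,1]^{d_\mcY}$ and the noise level $\sigma_T$ vanishes, so the Gaussian tail beyond the constant $B$ is superpolynomially small in $T$. The rest is elementary: the rate is governed by the $\sigma_T\asymp\sqrt T$ scale of the added noise, which is exactly why $C_T=2\beta$ is chosen to match the statistical rate.
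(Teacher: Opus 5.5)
Your proof is correct and is essentially the paper's argument. The paper splits $\mcW_2(p_T^B,p_0)\le\mcW_2(p_T^B,p_T)+\mcW_2(p_T,p_0)$ and bounds each piece with the same forward-process coupling $\my_{1-T}\overset{d}{=}m_T\my_0^F+\sigma_T\mz$, using Cauchy--Schwarz with a fourth moment for the truncation piece; you instead compose the two couplings into one and bound the truncation contribution directly by $d_\mcY\Pbb(E^c)$, which is cleaner and exponentially small in $1/T$. In the last step you also correctly use $\mcM\ge n^{1/(d_\mcX+d_\mcY+2\beta)}$, which is the direction actually needed (the paper only records $\mcM\lesssim n^{1/(d_\mcX+d_\mcY+2\beta)}$).
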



\subsubsection{Bound $\Ebb_{\mcD,\mcT,\mcZ}\Ebb_{\mx}[\mcW_2(\wt{p}_T^B(\cdot|\mx),p_0(\cdot|\mx))]$}
\begin{theorem}[Convergence Rate of Conditional Diffusion Model] \label{thm: convergence_rate}
Suppose that Assumptions \ref{ass: bounded_density}-\ref{ass: bounded_derivative} hold, 
and the conditional score estimator $\mb$ is structured as introduced in Theorem 
\ref{thm: generalization}.
By choosing $\mcM$, $m$, $C_T$,  introduced in Theorem \ref{thm: generalization} and $\max_{0\leq i \leq K-1}
(t_{i+1}-t_i)= \mcO\left(\min\{\Delta_n, n^{-\frac{3\beta}{d_\mcX + d_\mcY + 2\beta}}\}\right)$,
we have
\begin{equation*}
\Ebb_{\mcD,\mcT,\mcZ}\Ebb_{\mx}[\mcW_2(\wt{p}_T^B(\cdot|\mx),p_0(\cdot|\mx))] \lesssim n^{-\frac{\beta}{d_\mcX + d_\mcY + 2\beta}} \log^{\frac{19}{2}}n.
\end{equation*}
\end{theorem}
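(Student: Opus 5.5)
The proof is a direct combination of the two bounds already established in Sections \ref{subsec: bound1} and \ref{subsec: bound2}. The first step is the triangle inequality for $\mcW_2$, applied pointwise in $\mx$:
\[
\mcW_2(\wt{p}_T^B(\cdot|\mx),p_0(\cdot|\mx)) \leq \mcW_2(\wt{p}_T^B(\cdot|\mx),p_T^B(\cdot|\mx)) + \mcW_2(p_T^B(\cdot|\mx),p_0(\cdot|\mx)).
\]
This is valid because all three measures have finite second moments. The truncated laws $\wt{p}_T^B$ and $p_T^B$ are supported in $[-B,B]^{d_\mcY}$, and $p_0$ is supported in $[-1,1]^{d_\mcY}$ by Assumption \ref{ass: bounded_density}. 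The triangle inequality itself follows from the gluing lemma. Next I take $\Ebb_{\mx}$ and then $\Ebb_{\mcD,\mcT,\mcZ}$ of both sides. The second term does not depend on the training data $(\mcD,\mcT,\mcZ)$, since $p_T^B$ is the truncation of the exact backward process. Its outer expectation therefore drops out, which yields exactly the decomposition displayed before Section \ref{subsec: bound1}.

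The second step is to check that the hypotheses of the earlier results are met under the stated parameter choices. I take $\mcM = \lfloor n^{1/(d_\mcX+d_\mcY+2\beta)}\rfloor+1$, $m = n^{(d_\mcX+d_\mcY+8\beta)/(d_\mcX+d_\mcY+2\beta)}$, $C_T=2\beta$, the network class $\mcC$, and mesh $\max_i(t_{i+1}-t_i)=\mcO(\min\{\Delta_n, n^{-3\beta/(d_\mcX+d_\mcY+2\beta)}\})$. These are precisely the assumptions of Theorem \ref{thm: generalization}, and hence of Theorem \ref{thm: sampling_error}. Theorem \ref{thm: sampling_error} then gives
\[
\Ebb_{\mcD,\mcT,\mcZ}\Ebb_{\mx}[\mcW_2(\wt{p}_T^B,p_T^B)] \lesssim n^{-\beta/(d_\mcX+d_\mcY+2\beta)}\log^{19/2}n .
\]
The same $\mcM$, $T=\mcM^{-2\beta}$ and $C_T$ are used in Lemma \ref{lem: early_stopping}, which gives
\[
\Ebb_{\mx}[\mcW_2(p_T^B,p_0)] \lesssim n^{-\beta/(d_\mcX+d_\mcY+2\beta)} .
\]

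Finally I add the two bounds. The early-stopping term carries no logarithmic factor and is dominated by the first term, so the sum is $\lesssim n^{-\beta/(d_\mcX+d_\mcY+2\beta)}\log^{19/2}n$, as claimed.

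The only step that needs care is making sure the parameter choices in the two cited results are compatible. In particular, the same $T$ must appear in the truncated forward/backward process of Lemma \ref{lem: early_stopping} and in the discretization grid $T=t_0<\dots<t_K=1-T$ of Theorem \ref{thm: sampling_error}. Once that is checked, no new estimates are required.
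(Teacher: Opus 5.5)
Your proposal is correct and follows the paper's proof exactly. The paper also uses the triangle inequality for $\mcW_2$ to split off the early-stopping term, then adds the bound of Theorem \ref{thm: sampling_error} to that of Lemma \ref{lem: early_stopping}, with the logarithm-free term \eqref{eq: sub_bound1} absorbed into the $\log^{19/2}n$ rate. Your remarks on finite second moments and on using the same $T=\mcM^{-2\beta}$ in both cited results are sound bookkeeping that the paper leaves implicit.
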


\begin{remark}
Theorem \ref{thm: convergence_rate} establishes a sharp 
end-to-end convergence rate for conditional diffusion models. This advances the literature by improving upon earlier analyses that do not provide end-to-end guarantees, including \cite{chen2023improved, conforti2023score, lee2022convergence, lee2023convergence, benton2023linear, li2023towards, gao2023wasserstein}. In comparison to existing end-to-end results, our convergence bound represents a significant refinement over works such as \cite{chen2023score, jiao2025model}, which relies on 
Lipschitz-continuous score networks and  consequently yield suboptimal rates. Our theoretical guarantees are comparable to those in \cite{oko2023diffusion,jiao2024latent,fu2024unveil}; however, it is important to note that \cite{fu2024unveil} does not incorporate the numerical discretization error of the reverse process, which is explicitly addressed in our analysis.

\end{remark}

\subsection{Convergence of   Bootstrap}\label{sec:thboot}
In this section, we present the main theoretical result on the convergence of our deep bootstrap method.   First, we establish the convergence rate in Wasserstein distance between the centered estimator $\widehat{R}(\mx):= \widehat{f}(\mx)-f_0(\mx)$ and its bootstrap counterpart  
$\widehat{R}^*(\mx):=\widehat{f}^*(\mx)- \widehat{f}(\mx)$, as shown in Theorem \ref{thm: bootstrap_consistency}. Building on this result, we further establish the   coverage guarantee for the confidence interval constructed using the deep bootstrap procedure.
\begin{theorem}[Convergence Rate of Bootstrap]\label{thm: bootstrap_consistency}
Assume that the conditions of 
Theorem \ref{thm: convergence_rate} hold.
We denote by
$\wh{R}(\mx):=\wh{f}(\mx)-f_0(\mx)$ and
$\wh{R}^*(\mx):=\wh{f}^*(\mx)-\wh{f}(\mx)$. Then,
\begin{align*}
\Ebb_{\mcD,\mcT,\mcZ,\mcD^*,\mcT^*,\mcZ^*}
\Ebb_{\mx}\left[\mcW_1\left(\wh{R}(\mx),\wh{R}^*(\mx)\right)\right]
\lesssim n^{-\frac{\beta}{d_\mcX + d_\mcY + 2\beta}} \log^{\frac{19}{2}}n.
\end{align*}

\end{theorem}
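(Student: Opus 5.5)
We plan to bound the $\mcW_1$ distance between the laws of $\wh{R}(\mx)$ and $\wh{R}^*(\mx)$ by constructing an explicit coupling and controlling the expected absolute difference. The natural decomposition is through the two means. Write $\wh{\mu}(\mx):=\Ebb_{\wh{P}_{\mY|\mX=\mx}}[\mY]$ and $\wh{\mu}^*(\mx):=\Ebb_{\wh{P}^*_{\mY|\mX=\mx}}[\mY]$ for the conditional means of the two learned distributions, with $\wt{p}_T^B$ used as the output law. Then
\[
\wh{R}(\mx)=\bigl(\wh{f}(\mx)-\wh{\mu}(\mx)\bigr)+\bigl(\wh{\mu}(\mx)-f_0(\mx)\bigr),\qquad
\wh{R}^*(\mx)=\bigl(\wh{f}^*(\mx)-\wh{\mu}^*(\mx)\bigr)+\bigl(\wh{\mu}^*(\mx)-\wh{f}(\mx)\bigr).
\]
By the triangle inequality for $\mcW_1$ and the bound $\mcW_1(U,V)\le \Ebb|U-V|$ under any coupling, it suffices to control three kinds of terms. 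The first is the bias terms $|\wh{\mu}(\mx)-f_0(\mx)|$ and $|\wh{\mu}^*(\mx)-\wh{\mu}(\mx)|$. The second is the Monte Carlo fluctuation terms $\wh{f}-\wh{\mu}$ and $\wh{f}^*-\wh{\mu}^*$. The third is the centering by $\wh{f}$ rather than $\wh{\mu}$ in $\wh{R}^*$, which contributes $|\wh{f}(\mx)-\wh{\mu}(\mx)|$ once more.

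\textbf{Bias terms.} Since $y\mapsto y$ is $1$-Lipschitz, we have $|\wh{\mu}(\mx)-f_0(\mx)|\le \mcW_1(\wt{p}_T^B(\cdot|\mx),p_0(\cdot|\mx))\le \mcW_2(\wt{p}_T^B(\cdot|\mx),p_0(\cdot|\mx))$. Taking $\Ebb_{\mcD,\mcT,\mcZ}\Ebb_{\mx}$ and applying Theorem \ref{thm: convergence_rate} gives the rate $n^{-\beta/(d_\mcX+d_\mcY+2\beta)}\log^{19/2}n$.

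For $|\wh{\mu}^*-\wh{\mu}|$ we apply Theorem \ref{thm: convergence_rate} conditionally on $(\mcD,\mcT,\mcZ)$, with $\wt{p}_T^B(\cdot|\mx)$ now playing the role of the target $p_0$ and $\mcD^*$ as training data. This is the main obstacle. The learned law $\wt{p}_T^B$ must satisfy Assumptions \ref{ass: bounded_density}--\ref{ass: bounded_derivative}: support in a box, density bounded above and below, H\"older-$\beta$ smoothness, boundary smoothness and bounded $\mx$-derivatives, and all with constants uniform over the first-stage randomness.

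The first attempt will be to argue that the first-stage output, as a Gaussian-smoothed law produced by a bounded ReLU drift and then truncated to $[-B,B]$ and rescaled, admits a smooth density bounded above and below on the truncated support, with constants depending only on $B$, $T$ and the network bounds in class $\mcC$. If those constants grow polylogarithmically in $n$, the extra factors should be absorbable into the $\log^{19/2}n$. Failing that, we will state the result as holding under the standing assumption that the bootstrap world satisfies the same regularity conditions, which is the usual route in bootstrap consistency proofs.

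Once this is granted, Theorem \ref{thm: convergence_rate} yields $\Ebb_{\mcD^*,\mcT^*,\mcZ^*}\Ebb_{\mx}\mcW_2(\wt{p}^{*B}_T,\wt{p}_T^B)\lesssim n^{-\beta/(d_\mcX+d_\mcY+2\beta)}\log^{19/2}n$. Integrating over the first stage gives the same rate for the second bias term. One subtlety is that $\mcD^*$ uses the fixed design $\{\mX_i\}$ rather than fresh covariates. Since the $\mX_i$ are still i.i.d. from $P_\mX$ and the responses are drawn conditionally, the pairs in $\mcD^*$ are i.i.d. from $P_\mX\times\wh{P}_{\mY|\mX}$, so the theorem applies.

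\textbf{Fluctuation terms.} Conditional on the fitted models, $\wh{f}(\mx)-\wh{\mu}(\mx)$ is an average of $n$ i.i.d. centered variables bounded by $B$. Hence $\Ebb|\wh{f}-\wh{\mu}|\le B/\sqrt{n}$, and likewise for the starred version. Because $\beta/(d_\mcX+d_\mcY+2\beta)<1/2$, these $n^{-1/2}$ contributions are of lower order. In fact we can couple the two fluctuations through a common Gaussian limit, but the crude triangle bound already suffices.

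Summing the two bias bounds and three $O(n^{-1/2})$ fluctuation bounds, and taking expectation over all randomness $\mcD,\mcT,\mcZ,\mcD^*,\mcT^*,\mcZ^*$ and $\mx$, yields the claimed rate.
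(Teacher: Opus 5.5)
Your handling of the first-stage error differs from the paper's, and yours is the sound version. The paper bounds $|\wh{R}(\mx)-\wh{R}^*(\mx)|$ by $\frac{1}{n}\sum_j|\wh{Y}^{(j)}_{\mx}-f_0(\mx)|+\frac{1}{n}\sum_j|\wh{Y}^{*,(j)}_{\mx}-\wh{Y}^{(j)}_{\mx}|$. It then uses $\Ebb|\wh{Y}^{(1)}_{\mx}-f_0(\mx)|\le\Ebb|\wh{Y}^{(1)}_{\mx}-Y_{\mx}|$, which holds for the independent coupling but not for an optimal one, and replaces the right-hand side by $\mcW_2$. But $\frac{1}{n}\sum_j|\wh{Y}^{(j)}_{\mx}-f_0(\mx)|$ tends to $\Ebb|Y_{\mx}-f_0(\mx)|$, which is bounded away from zero under Assumption \ref{ass: bounded_density}. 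So that chain cannot give a vanishing bound, and the same holds for the second sum, whose summands are differences of independently generated draws. Your approach keeps the absolute value outside the average, uses $|\wh{\mu}-f_0|\le\mcW_1\le\mcW_2$ for the $1$-Lipschitz identity, and pays $O(B/\sqrt{n})$ for the Monte Carlo fluctuations; this is the right way to get the rate. It also shows how weak the statement is: $\mcW_1$ between two laws is at most the sum of their first absolute moments, so the theorem only asserts that $\wh{R}(\mx)$ and $\wh{R}^*(\mx)$ are both small in $L^1$.

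The genuine gap is the second bias term $|\wh{\mu}^*(\mx)-\wh{\mu}(\mx)|$. The paper handles it exactly as you propose, citing Theorem \ref{thm: convergence_rate} for $\Ebb\,\mcW_2(\wt{p}_T^B,p_T^{B,*})$ without checking its hypotheses. So you are no worse off than the paper here, but the step is unjustified, and your first attempt to justify it cannot work.

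\begin{itemize}
\item The target $\wt{p}_T^B(\cdot|\mx)$ is the law of $\wt{\my}_{1-T}\mathbf{I}_{\{\|\wt{\my}_{1-T}\|_\infty\le B\}}$. The scheme adds nondegenerate Gaussian noise, so $\Pbb(\|\wt{\my}_{1-T}\|_\infty>B)>0$. Hence this law has an atom at $\mathbf{0}$ and no density at all, and Assumption \ref{ass: bounded_density} fails outright.
\item Even setting the atom aside, nothing bounds its density below on $[-B,B]^{d_{\mcY}}$. The law it approximates, $p_T$, has density exponentially small in $1/T$ near the faces of that box.
\item Its dependence on $\mx$ passes through the piecewise-linear network $\wh{\mb}$, so in general it is only piecewise smooth in $\mx$. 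Assumptions \ref{ass: boundary_smoothness}--\ref{ass: bounded_derivative} require $C^\infty$ regularity and bounded $\mx$-derivatives of every order.
\item The only smoothing in $\my$ one can rely on a priori is the final Gaussian increment, whose variance is of the order of the last step size. This gives H\"older constants that grow polynomially in $n$.
\item Theorem \ref{thm: convergence_rate} does not track how its constant depends on $(C_u,C_l,R,a,C_{\mcX})$. So even polylogarithmic growth of these constants could not be absorbed into the fixed power $\log^{19/2}n$.
\end{itemize}

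You are therefore left with your fallback. That is an extra hypothesis on a random, data-dependent object, not a proof of the stated theorem.

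Separately, your justification for applying the theorem to $\mcD^*$ is incorrect. $\wh{P}_{\mY|\mX}$ is built from the same covariates $\mX_1,\dots,\mX_n$. To treat $\wt{p}_T^B$ as a fixed target you must condition on $(\mcD,\mcT,\mcZ)$, and then the design is deterministic rather than an i.i.d.\ sample from $P_{\mX}$. Unconditionally, the pairs in $\mcD^*$ are not even independent, since all responses share the random $\wh{P}_{\mY|\mX}$. The ghost-sample and Bernstein argument of Lemma \ref{lem: statistical_error}, suitably modified, controls the score error only under the empirical design measure $\frac{1}{n}\sum_i\delta_{\mX_i}$. Passing to $\Ebb_{\mx}$ over a fresh $\mx\sim P_{\mX}$ needs an extra uniform-deviation argument over both the first-stage class (which indexes the target) and the bootstrap class, or else sample splitting. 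The paper's proof overlooks this as well.
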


With Theorem \ref{thm: bootstrap_consistency}, we can conclude that   $\wh{R}^*(\mx)$ is $\mcW_1$-consistency of $\wh{R}(\mx)$; see Chapter 3 of  \cite{shao2012jackknife}.
Now, we can establish the coverage guarantee for the confidence interval defined in \eqref{interval}, as given in the following theorem.
\begin{theorem}[Coverage Guarantee of Confidence Interval]\label{thm: bootstrap_confidence_interval}
Assume that the conditions of 
Theorem \ref{thm: convergence_rate} hold. We have
$$
\Pbb\left(
f_0(\mx)\in \left[\wh{f}(\mx)-(\wh{H}^*)^{-1}(1-\alpha/2),\wh{f}(\mx)-(\wh{H}^*)^{-1}(\alpha/2)\right]
\right)\longrightarrow 1-\alpha.
$$
\end{theorem}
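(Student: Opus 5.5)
The plan is to derive the coverage statement from the $\mcW_1$ bootstrap consistency of Theorem \ref{thm: bootstrap_consistency}, combined with the standard quantile argument for basic (pivotal) bootstrap intervals. Let $H_n(r):=\Pbb(\wh{R}(\mx)\le r)$ denote the CDF of the root $\wh{R}(\mx)=\wh{f}(\mx)-f_0(\mx)$. Let $H_n^*(r):=\Pbb^*(\wh{R}^*(\mx)\le r)$ denote the conditional CDF of the bootstrap root given the original data and training randomness. The event in the statement can be rewritten exactly as
$$(\wh{H}^*)^{-1}(\alpha/2)\le \wh{R}(\mx)\le (\wh{H}^*)^{-1}(1-\alpha/2).$$
So it suffices to show that the empirical bootstrap quantiles approximate the quantiles of $H_n$, after suitable normalization.

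\textbf{Step 1 (from $\mcW_1$ to Kolmogorov distance).} Both roots are averages of $n$ generated samples. Conditionally, $\wh{R}^*(\mx)$ is a centered mean of i.i.d. bounded variables; boundedness comes from the truncation at $B$. Its variance $\wh{v}^2$ is close to the conditional variance $v^2$ of $\mY_\mx$: Theorem \ref{thm: convergence_rate} gives $\mcW_2$ closeness of the laws, and on a compact support this transfers to second moments. By Berry--Esseen, $\sqrt n\wh{R}^*/\wh v$ is therefore uniformly close to $\mcN(0,1)$. For $\wh{R}(\mx)$ I would use the same argument: its randomness, given the trained model, is again an i.i.d. average from $\wt p_T^B(\cdot|\mx)$, plus a bias $\Ebb_{\wt p}[\mY]-f_0(\mx)$. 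That bias is bounded by $\mcW_1\le\mcW_2$, which is $O_p(n^{-\beta/(d_\mcX+d_\mcY+2\beta)}\log^{19/2}n)$ by Theorem \ref{thm: convergence_rate}.

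Since the limit law is continuous, Theorem \ref{thm: bootstrap_consistency} via Markov's inequality gives $\mcW_1$ convergence in probability for fixed (or $P_\mX$-almost every) $\mx$. Because $\mcW_1$ metrizes weak convergence together with first moments, this yields $\sup_r|H_n^*(r)-H_n(r)|\to0$ in probability by Pólya's theorem.

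\textbf{Step 2 (the main obstacle: scaling).} The rate in Theorem \ref{thm: bootstrap_consistency} is for the unscaled roots. Both roots are $O_p(n^{-1/2})$, so the naive bound is only informative after multiplying by $\sqrt n$, and $\sqrt n\cdot n^{-\beta/(d_\mcX+d_\mcY+2\beta)}$ need not vanish. My first attempt is to avoid the rate altogether. I would argue at the level of distributions: the model-sampling part of both roots has the same Gaussian shape with variances $v^2/n$ and $\wh v^2/n$, and $\wh v\to v$ in probability. The bias term would then have to be $o_p(n^{-1/2})$, or else absorbed by requiring the limit to be approached in Kolmogorov distance after normalizing by the common scale. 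If the bias cannot be shown to be $o(n^{-1/2})$, I would state coverage under the stated rate conditions in the "$\to$" sense interpreted by the paper. That is, use continuity of the limit distribution, so that Kolmogorov closeness of $H_n^*$ and $H_n$ transfers to quantile closeness. I expect the bias is precisely where care is needed.

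\textbf{Step 3 (Monte Carlo and conclusion).} By the DKW inequality, $\sup_r|\wh H^*(r)-H_n^*(r)|\to0$ as $\mcB\to\infty$. Combining this with Step 1 gives $\sup_r|\wh H^*(r)-H_n(r)|\to0$ in probability. A standard lemma then applies (e.g. Lemma 1.2.1 type arguments in \cite{shao2012jackknife}): if $\sup_r|G_n-H_n|\to_p0$ and $H_n$ converges to a continuous limit, then $\Pbb(\wh{R}\le G_n^{-1}(\gamma))\to\gamma$. Applying it at $\gamma=1-\alpha/2$ and $\gamma=\alpha/2$ and subtracting gives coverage $\to1-\alpha$.
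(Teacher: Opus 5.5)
Your outline is the paper's proof made explicit: Theorem~\ref{thm: bootstrap_consistency} gives $\wh H\to H$, then the Monte Carlo step, then convergence of quantiles, then coverage. The obstacle you flag in Step~2 is a genuine gap, and neither your proposal nor the paper's one-line argument closes it.

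Step~1 does not go through. Unscaled, both roots converge to $\delta_0$, whose CDF jumps at $0$. That is exactly where $(\wh H^*)^{-1}(\alpha/2)$ and $(\wh H^*)^{-1}(1-\alpha/2)$ accumulate, so P\'olya's theorem gives nothing. Normalizing does not help either. Write $\epsilon_n:=n^{-\beta/(d_\mcX+d_\mcY+2\beta)}\log^{19/2}n$. Then $\Ebb|\wh R(\mx)|\lesssim n^{-1/2}+\epsilon_n$, so the bound $\lesssim\epsilon_n$ is no stronger than the trivial $\mcW_1(\wh R,\wh R^*)\le\Ebb|\wh R|+\Ebb|\wh R^*|$; this triangle inequality is exactly how the paper proves Theorem~\ref{thm: bootstrap_consistency}. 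Such a bound is blind to the shape of either law at the scale on which it fluctuates. For instance, $\wh R\equiv\epsilon_n/2$ and $\wh R^*\sim\mcN(0,1/n)$ satisfy it, yet the interval's coverage tends to $0$ because $\sqrt n\,\epsilon_n\to\infty$. Hence no argument that uses Theorem~\ref{thm: bootstrap_consistency} as a black box can yield the statement.

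The repair sketched in Step~2 also fails, because $\wh R^*(\mx)$ is not a centered mean.
\begin{itemize}
\item Setup: let $\wh\mu(\mx)$ and $\wh\mu^*(\mx)$ be the means of $\wh P_{\mY|\mX=\mx}$ and $\wh P^*_{\mY|\mX=\mx}$, and write $\wh f(\mx)=\wh\mu(\mx)+\xi/\sqrt n$ with $\xi\approx\mcN(0,v^2)$. Given the first stage, $\wh R^*=\wh f^*-\wh f$ is centered at $\wh\mu^*(\mx)-\wh\mu(\mx)-\xi/\sqrt n$. The Monte Carlo error of $\wh f$, which stands in for $\wh\mu(\mx)$ as the bootstrap-world parameter, has the same order as the spread and is never re-randomized.
\item Favourable case: suppose $\wh\mu(\mx)-f_0(\mx)$ and $\wh\mu^*(\mx)-\wh\mu(\mx)$ are both $o_p(n^{-1/2})$, exactly as your Step~2 hopes. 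As $\mcB\to\infty$ the conditional quantiles are asymptotically $(-\xi+vz_\gamma)/\sqrt n$, with $z_\gamma$ the standard normal $\gamma$-quantile. The interval is then $f_0(\mx)+(2\xi\pm vz_{1-\alpha/2})/\sqrt n$. Its coverage tends to $2\Phi(z_{1-\alpha/2}/2)-1$, about $0.67$ for $\alpha=0.05$, not $1-\alpha$.
\item Opposite regime: here the estimation error $\wh\mu(\mx)-f_0(\mx)$, known only to be $O_p(\epsilon_n)$, dominates. Coverage then depends on whether $\wh\mu^*(\mx)-\wh\mu(\mx)$, conditionally on the data, reproduces the law of $\wh\mu(\mx)-f_0(\mx)$ at its own scale. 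That is a distributional limit theorem for the diffusion-model estimator, and no rate bound in the paper can supply it.
\end{itemize}

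What is missing is a nondegenerate normalized limit law for $\wh R(\mx)$, together with the matching conditional limit for $\wh R^*(\mx)$. Only with these in hand do your Berry--Esseen, DKW and quantile-lemma steps become usable. Separately, the $\Ebb_\mx$-averaged bounds give convergence in $P_\mX$-measure only, not for a fixed or $P_\mX$-almost every $\mx$.
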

 
\section{Numerical Experiments}\label{sec:ne} 

In this section, we validate the efficiency of the proposed deep bootstrap method through a series of numerical experiments.
Throughout the experiments, we consider the 
nonparametric regression model with a standard normal noise:
\begin{equation*}
    \mY=f_0(\mX)+\epsilon,\quad \epsilon\sim \mcN(0,1),   
\end{equation*}
where the covariate vector is multi-dimensional and the response variable is of 1 dimension. To comprehensively assess the regression capability of our deep bootstrap method across different scenarios, we examine bounded and unbounded domains for the covariate vector respectively. Notably, unlike the assumptions made in the theoretical analysis, we do not impose a bounded domain on the response variable in our experimental setup. This choice enables us to evaluate the robustness and generalizability of the proposed method under more realistic and diverse conditions.

Several metrics are adopted to evaluate the performance of the experimental results. Firstly,  we employ the coverage probability (CP). Note that the CP we use in the experiments are defined to be
\begin{equation*}
    \text{CP}:=\frac{\#\{\mx\in \text{Test}:f_0(\mx)\in \left[\alpha_1(\mx),\alpha_2(\mx)\right] \}}{N_\text{Test}},
\end{equation*}
where $N_\text{Test}$ denotes the size of the test set, and $\alpha_1(\mx),\alpha_2(\mx)$ represent the endpoints of the confidence interval defined in (\ref{interval}). According to Theorem \ref{thm: bootstrap_confidence_interval}, for a fixed confidence level $1-\alpha$, the CP converges to $1-\alpha$ as the sample size grows. Secondly, two mean square errors (MSE) are introduced, respectively denoted by $\text{MSE}_\text{org}$ (representing the original MSE of the training step) and $\text{MSE}_\text{b}$ (representing MSE of the bootstrap step). The two measure the accuracy of the base-model training step and the bootstrap step. The third one is  the average interval length, which quantifies the length of the confidence interval averaged over the test set. All the aforementioned metrics are specified and summarized in Table \ref{tab:metric defs}. 
\begin{table}[H]
\centering
\renewcommand{\arraystretch}{2.3}
\tabcolsep=0.5cm
\caption{Several metrics}
\begin{tabular}{cc}
\toprule
Metrics          & Mathematical formulation \\ \midrule
CP               & $\displaystyle\frac{1}{N_\text{Test}}\sum\limits_{\mx\in \text{Test}}\mI\Big( f_0(\mx)\in[\alpha_1(\mx),\alpha_2(\mx)] \Big)$     \\
$\text{MSE}_\text{org}$       & $\displaystyle\frac{1}{N_\text{Test}}\sum\limits_{\mx\in \text{Test}}\left| \wh{f}(\mx)-f_0(\mx) \right|^2$     \\
$\text{MSE}_\text{b}$ & $\displaystyle\frac{1}{N_\text{Test}}\sum\limits_{\mx\in \text{Test}}\frac{1}{\mcB}\sum\limits_{b=1}^\mcB\left| \wh{f}^*_b(\mx)-\wh{f}(\mx) \right|^2$     \\
Interval Length & $\displaystyle\frac{1}{N_\text{Test}}\sum\limits_{\mx\in \text{Test}}\left( \alpha_2(\mx)-\alpha_1(\mx) \right)$     \\[10pt] \bottomrule
\end{tabular}
\label{tab:metric defs}
\end{table}

During the implementation, we fix the confidence level $1-\alpha$ to be 0.95 (i.e. $\alpha=0.05$). The test sample size is held constant as 2500, with the total number of train and test samples varied only through the proportion of the test sample size. The number of epochs for training the base model is determined adaptively based on the observed decay of training error. This epoch count is then maintained throughout the bootstrap model-training procedure. For the bootstrap ensemble size $\mcB$, which serves as a hyperparameter, we would normally adjust it flexibly according to model performance. However, to facilitate comparative analysis across experiments, we fix $\mcB$ at an empirically determined value of 200 throughout all the experiments. To see all the details of experiment settings, please refer to Table \ref{tab:specified experiment config}.

Experiments are carried out with dimension of the covariate vector chosen to be 5 and 10. In the 5-dimensional scenario, the following two targets of bounded and unbounded domains are considered. Both targets are designed to be maximally complex and representative.
\begin{enumerate}[(I)]
    \item $\mX \sim U([0,1]^{d_\mcX}),\ \epsilon \sim 
    \mcN(0,1)$ with $f_0$ defined by
    \begin{equation*}
        f_0(\mx)=(2x_1-x_2+1)^2+|x_3-5|+\exp(x_4+\frac{x_5}{2}).
    \end{equation*}
    \item $\mX \sim \mcN(\mathbf{0},\textbf{I}_{d_\mcX}),\ \epsilon \sim \mcN(0,1)$ with $f_0$ defined by
    \begin{equation*}
        f_0(\mx)=(2x_1-1)^2-x_2^3+\exp(\frac{1}{10}(x_3+x_4+x_5)).
    \end{equation*}
\end{enumerate}

\begin{table}[H]
\centering
\renewcommand{\arraystretch}{1.5}
\tabcolsep=0.7cm
\caption{Numerical results of $d_\mcX=5$
}
\begin{tabular}{ccccc}
\toprule
          & CP     & $\text{MSE}_\text{org}$ & $\text{MSE}_\text{b}$ & Interval Length \\ \midrule
(I) & 0.9480 & 0.0007     & 0.0011           & 0.1166           \\
(II) & 0.9452 & 0.0137     & 0.0542           & 0.2895           \\ \bottomrule
\end{tabular}
\label{tab:5-dim result}
\end{table}

In the 10-dimensional scenario, we first consider a simple linear target, followed by two relatively complex instances defined on bounded and unbounded domains respectively. The latter two functions serve to demonstrate the applicability of our method to generic complex functions as well as  functions exhibiting strong oscillatory behavior.
\begin{enumerate}[(I)]
    \item $\mX \sim \mcN(0,\textbf{I}_{d_\mcX}),\ \epsilon \sim \mcN(0,1)$ with linear target $f_0(\mx)=W\mx+b$, whose weight $W$ and bias $b$ are randomly drawn from the uniform distribution over [-1,1].
    \item $\mX \sim U([0,1]^{d_\mcX}),\ \epsilon \sim \mcN(0,1)$ with $f_0$ defined by
    \begin{equation*}
        f_0(\mx)=3x_1+4\left(x_2-\frac{1}{2}\right)^2-x_3^2+2\sin \left(\pi(x_4+2x_5)\right)-5\left|x_6-\frac{1}{2}\right|+\exp\left(\frac{1}{10}(x_7+x_8+x_9+x_{10})\right).
    \end{equation*}
    \item $\mX \sim \mcN(0,\textbf{I}_{d_\mcX}),\ \epsilon \sim \mcN(0,1)$ with $f_0$ defined by
    \begin{equation*}
        f_0(\mx)=\frac{1}{2}\sum\limits_{i=0}^1\left[ \sin(2x_{5i+1}+x_{5i+2})+\frac{1}{2}\left( \cos(x_{5i+3})+x_{5i+4}^2 \right)x_{5i+5} \right].
    \end{equation*}
\end{enumerate}

\begin{table}[H]
\centering
\renewcommand{\arraystretch}{1.5}
\tabcolsep=0.7cm
\caption{Numerical results of $d_\mcX=10$
}
\begin{tabular}{ccccc}
\hline
      & CP     & $\text{MSE}_\text{org}$    & $\text{MSE}_\text{b}$    & Interval Length \\ \hline
(I)   & 0.9464 & 0.0006 & 0.0011 & 0.1185          \\
(II)  & 0.9380 & 0.0015 & 0.0019 & 0.1581          \\
(III) & 0.9424 & 0.0054 & 0.0092 & 0.2777          \\ \hline
\end{tabular}
\label{tab:10-dim result}
\end{table}

Based on the experimental results presented in Table \ref{tab:5-dim result} and \ref{tab:10-dim result}, we observe that the key index CP consistently approaches the ideal value of 0.95 across various target function configurations, achieving low MSEs simultaneously. This empirically validates both the theoretical results of Theorem \ref{thm: bootstrap_consistency} and the feasibility of the deep bootstrap method. While our training procedure and hyperparameter settings may not achieve the global optimality, we empirically claim that the learning difficulty increases with higher dimensionality of the target covariate vector as well as greater functional complexity. These factors correlate with reduced precision in the CP. Furthermore, our results indicate that increasing the training sample size $n$ and expanding the bootstrap ensemble size $\mcB$
both contribute to enhanced accuracy of the CP. We therefore recommend practitioners to conduct adaptive hyperparameter exploration based on the dimension, specific functional forms and computational cost to achieve an optimal calculation.

\begin{table}[H]
\centering
\renewcommand{\arraystretch}{1.5}
\tabcolsep=0.35cm
\caption{Specified configuration}
\begin{tabular}{cccccc}
\toprule
                              & \multicolumn{2}{c}{$d_\mcX=5$} & \multicolumn{3}{c}{$d_\mcX=10$}              \\ \cline{2-6} 
                              & (I)         & (II)        & (I)         & (II)        & (III)       \\ \midrule
dimension                     & 5           & 5           & 10          & 10          & 10          \\
$\alpha$                      & 0.05        & 0.05        & 0.05        & 0.05        & 0.05        \\
test set size                 & 2500        & 2500        & 2500        & 2500        & 2500        \\
test set ratio                & 0.02        & 0.02        & 0.025       & 0.0125       & 0.0125        \\
hidden layers                 & {[}48,48{]} & {[}56,56{]} & {[}48,48{]} & {[}56,56{]} & {[}64,64{]} \\
$\mcB$                             & 200         & 200         & 200         & 200         & 200         \\
\bottomrule
\end{tabular}
\label{tab:specified experiment config}
\end{table}

\section{Conclusion}\label{sec:con}
This work presents a deep bootstrap method for statistical inference in nonparametric regression, built upon conditional diffusion models. By harnessing the expressive power of diffusion models, our approach goes beyond traditional bootstrap methods by unifying nonparametric estimation and resampling, achieving efficiency in  
high-dimensional and potentially multimodal settings.
We provide rigorous theoretical guarantees, including sharp convergence rates for both the conditional diffusion model and the deep bootstrap procedure, as well as consistency of the resulting confidence intervals. Together, these results offer a strong theoretical basis for integrating generative modeling into bootstrap-based inference frameworks.
Future research may explore extending the deep bootstrap framework to more general settings, such as time series analysis and broader classes of nonparametric models. Additional directions include applying the method to diverse domains within statistics and machine learning, and incorporating recent advances in diffusion models to further improve empirical performance and robustness.

\section*{Appendix}\label{sec:ap}
\appendix
In this appendix, we provide detailed proofs for the theoretical results presented in the paper. Section  \ref{sec:ae} establishes an approximation error bound for the score network, while Section  \ref{sec:se} derives the corresponding statistical error bound. In Section \ref{sec:bb}, we bound  $\Ebb_{\mcD, \mcT,\mcZ}\Ebb_{\mx}\left[\mcW_2(\wt{p}_T^B(\cdot|\mx), p_0(\cdot|\mx))\right]$.  Section \ref{sec:prbc} presents the convergence analysis of the proposed deep bootstrap procedure. Finally, auxiliary lemmas used throughout the proofs are deferred to Section \ref{sec:al}.

\section{Approximation Error}\label{sec:ae}
Building on the preliminary lemmas in Sections \ref{sec: shb}-\ref{sec: app}, we now proceed to the proof of Lemma  \ref{lem: approximation_error} by following \cite{oko2023diffusion,fu2024unveil}. 
Recall that the conditional score function $\nabla\log p_t(\my|\mx)$ can be rewritten as 
$$
\nabla\log p_t(\my|\mx) = \frac{\nabla p_t(\my|\mx)}{p_t(\my|\mx)}.
$$
We approximate the numerator and denominator separately. The construction of the approximations to the numerator and denominator is similar. In the following, we focus on the approximation of $p_t(\my|\mx)$. The procedure for approximating $p_t(\my|\mx)$ is outlined as follows:
\paragraph{Clipping the integral interval.} 
Recall that
$$
p_t(\my|\mx) = \int_{\Rbb^{d_\mcY}}p_0(\my_1|\mx)\mathbf{I}_{\{\Vert\my_1\Vert_{\infty}\leq 1\}}\cdot
\underbrace{\frac{1}{\sigma_t^{d_\mcY}(2\pi)^{d_\mcY/2}}\exp\left(-\frac{\Vert\my - m_t\my_1\Vert^2}{2\sigma_t^2}\right)}_{\text{Transition Kernel}} \mrd \my_1.
$$
According to Lemma \ref{lem: integral_clipping}, there exists a constant $C > 0$ such that for any $\my\in\Rbb^{d_\mcY}$, $\mx\in [-1,1]^{d_\mcX}$,
$$
\left|p_t(\my|\mx) - \int_{A_{\my}}p_0(\my_1|\mx)\mathbf{I}_{\{\Vert\my_1\Vert_{\infty}\leq 1\}}\cdot
\frac{1}{\sigma_t^{d_\mcY}(2\pi)^{d_\mcY/2}}\exp\left(-\frac{\Vert\my - m_t\my_1\Vert^2}{2\sigma_t^2}\right) \mrd \my_1\right| \lesssim \frac{\epsilon}{t^{d_\mcY}},
$$
where $A_{\my} = \prod_{i=1}^{d_\mcY}a_{i,\my}$ with $a_{i,\my} = \left[\frac{y_i - C\sigma_t\sqrt{\log\epsilon^{-1}}}{m_t}, \frac{y_i + C\sigma_t\sqrt{\log\epsilon^{-1}}}{m_t}\right]$. 
This implies that we only need to approximate the integral over $A_{\my}$ sufficiently.

\paragraph{Approximating $p_0(\my_1|\mx)$.} 
Recall that the target density function $p_0(\my_1|\mx)$ is assumed to be H\"older continuous, as stated in Assumption \ref{ass: Holder}.
Therefore, we can utilize the Taylor polynomial  $f_{\rm{Taylor}}(\my_1,\mx)$ to approximate $p_0(\my_1|\mx)$.
Consequently, to approximate $p_t(\my|\mx)$,  
we derive an approximation in the form of
$$
\int_{A_{\my}}f_{\rm{Taylor}}(\my_1,\mx)\mathbf{I}_{\{\Vert\my_1\Vert_{\infty}\leq 1\}}\cdot\frac{1}{\sigma_t^{d_\mcY}(2\pi)^{d_\mcY/2}}\exp\left(-\frac{\Vert\my - m_t\my_1\Vert^2}{2\sigma_t^2}\right) \mrd \my_1.
$$
This analysis is detailed in Section \ref{subsec: c1}.

\paragraph{Approximating the transition kernel.} 
Although the Taylor polynomial $f_{\rm{Taylor}}(\my_1,\mx)$ can be implemented using a neural network, integrating over  $\my_1$   remains challenging. To address this, we introduce the Taylor polynomial  $f_{\rm{Taylor}}^{\rm{kernel}}(t,\my, \my_1)$ to approximate the exponential transition kernel, resulting in the following approximation:
\begin{equation}\label{eq: diffused_poly}
\int_{A_{\my}} f_{\rm{Taylor}}(\my_1,\mx) f_{\rm{Taylor}}^{\rm{kernel}}(t,\my,\my_1)\mathbf{I}_{\{\Vert\my_1\Vert_{\infty}\leq 1\}} \mrd \my_1.
\end{equation}
See Section \ref{subsec: c2} for more details. 
A similar approximation scheme using local polynomials can also be applied to $\nabla p_t(\my|\mx)$.

\paragraph{Approximating the integral via ReLU neural networks.} 
In \eqref{eq: diffused_poly}, the product $f_{\rm{Taylor}}(\my_1,\mx)$ $\cdot f_{\rm{Taylor}}^{\rm{kernel}}(t,\my,\my_1)$ is a polynomial,  thus its integration can be computed explicitly. Consequently, we use ReLU  neural networks to approximate the local polynomials, as outlined in  Section \ref{subsec: c3}.

Finally, combining the above discussions, we derive the error bound for approximating  $\nabla\log p_t(\my|\mx)$ using a ReLU neural network, as detailed in Section \ref{subsec: c4}.

\subsection{Approximating $p_0$ via local polynomials}\label{subsec: c1}

In this section, we approximate $p_0$ via local polynomials, which is a key step for approximating the true conditional score function $\nabla\log p_t(\my|\mx)$. We give the following lemma.

\begin{lemma}[Approximating $p_0$ via local polynomials]\label{lem: approximate_pdata1}
Suppose Assumptions \ref{ass: bounded_density}-\ref{ass: Holder} hold. Let $\mcM \gg 1$, there exists a function $p_{\mcM}(\my, \mx)$ such that
$$
\left|p_0(\my|\mx) - p_\mcM(\my,\mx)\right| \lesssim \mcM^{-\beta},~ \my\in [-1,1]^{d_\mcY}, \mx \in [-1,1]^{d_\mcX}.
$$
\end{lemma}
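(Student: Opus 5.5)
We will build $p_\mcM$ as a piecewise Taylor polynomial on a uniform grid of $[-1,1]^{d_\mcY+d_\mcX}$ with mesh $1/\mcM$. The construction is standard local-polynomial approximation of H\"older functions, as in \cite{oko2023diffusion,fu2024unveil}. Write $\mathbf{u}=(\my,\mx)\in[-1,1]^{d}$ with $d=d_\mcY+d_\mcX$, and $\beta=r+s$ with $r$ an integer and $s\in(0,1]$.

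\textbf{Step 1: partition.} Partition $[-1,1]^d$ into $(2\mcM)^d$ closed cubes $Q_{\mathbf{k}}$ of side length $1/\mcM$, indexed by $\mathbf{k}$, and let $\mathbf{u}_{\mathbf{k}}$ denote the center of $Q_{\mathbf{k}}$.

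\textbf{Step 2: local Taylor polynomials.} On $Q_{\mathbf{k}}$, define the Taylor polynomial of $p_0$ of order $r$ around $\mathbf{u}_{\mathbf{k}}$:
$$
P_{\mathbf{k}}(\mathbf{u}) := \sum_{\Vert\boldsymbol{\alpha}\Vert_1\le r}\frac{\partial^{\boldsymbol{\alpha}}p_0(\mathbf{u}_{\mathbf{k}})}{\boldsymbol{\alpha}!}(\mathbf{u}-\mathbf{u}_{\mathbf{k}})^{\boldsymbol{\alpha}}.
$$
These derivatives exist and are bounded by $R$ by Assumption \ref{ass: Holder}.

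\textbf{Step 3: local error.} By Taylor's theorem with Lagrange remainder of order $r$, for $\mathbf{u}\in Q_{\mathbf{k}}$ there is a point $\xi$ on the segment $[\mathbf{u}_{\mathbf{k}},\mathbf{u}]$ such that
$$
p_0(\mathbf{u})-P_{\mathbf{k}}(\mathbf{u})=\sum_{\Vert\boldsymbol{\alpha}\Vert_1=r}\frac{\partial^{\boldsymbol{\alpha}}p_0(\xi)-\partial^{\boldsymbol{\alpha}}p_0(\mathbf{u}_{\mathbf{k}})}{\boldsymbol{\alpha}!}(\mathbf{u}-\mathbf{u}_{\mathbf{k}})^{\boldsymbol{\alpha}}.
$$
The H\"older seminorm bounds each numerator by $R\Vert\xi-\mathbf{u}_{\mathbf{k}}\Vert^{s}$. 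Since $\Vert\mathbf{u}-\mathbf{u}_{\mathbf{k}}\Vert\le\sqrt d/(2\mcM)$, we obtain
$$
|p_0(\mathbf{u})-P_{\mathbf{k}}(\mathbf{u})|\le C(d,r)\,R\,\mcM^{-(r+s)}=C(d,r)\,R\,\mcM^{-\beta}.
$$
The case $r=0$ is just the H\"older bound itself. Assumption \ref{ass: bounded_density} only guarantees that $p_0$ is defined on the closed cube, so that the Taylor expansion stays inside the domain. Each cube is convex, so this holds up to the boundary.

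\textbf{Step 4: glue.} Set $p_\mcM(\my,\mx):=P_{\mathbf{k}(\mathbf{u})}(\mathbf{u})$, where $\mathbf{k}(\mathbf{u})$ is any fixed choice of a cube containing $\mathbf{u}$ (ties broken lexicographically). The bound from Step 3 holds pointwise, uniformly over $\mathbf{u}$, so the lemma follows.

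A smoother choice is $p_\mcM=\sum_{\mathbf{k}}\phi_{\mathbf{k}}P_{\mathbf{k}}$ with a partition of unity $\{\phi_{\mathbf{k}}\}$ of trapezoid (piecewise-linear) bump functions. This may be convenient later for ReLU implementation. The error bound survives because only boundedly many overlapping cubes, each of diameter $\lesssim1/\mcM$, contribute at any point. For that version, expand each $P_{\mathbf{k}}$ around the point itself, or simply reuse the Step 3 bound on a slightly enlarged cube.

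\textbf{Main obstacle.} The estimate itself is routine. The only delicate point is choosing the form of $p_\mcM$ so that it is compatible with the later steps: integrating against the Taylor-expanded kernel, and implementing it as a network. I would take the partition-of-unity version, with the constants depending only on $d,\beta,R$.
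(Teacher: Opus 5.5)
Your Steps 1--4 are correct and are essentially the paper's proof: the paper rescales to $[0,1]^{d_\mcY+d_\mcX}$ and glues with the indicator partition $\psi_{\mathbf{m},\mathbf{n}}$ of half-open cells of side $1/\mcM$. It expands to order $r$ at each cell's upper corner $(\mathbf{m}/\mcM,\mathbf{n}/\mcM)$ instead of its center, and bounds the Lagrange remainder by the H\"older seminorm of the $r$-th derivatives exactly as you do, so only constants differ. Your preferred partition-of-unity variant also gives the bound, since the error becomes a convex combination of local errors on enlarged cubes. The trade-off runs differently in the two variables. In $\my$, the indicator form is what lets Section~\ref{subsec: c2} integrate each cell against the Taylor-expanded Gaussian kernel over a single clipped interval. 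In $\mx$, where there is no Gaussian smoothing, continuous hat-function gluing is genuinely better: the $\mx$-factor of $\psi_{\mathbf{m},\mathbf{n}}$ is discontinuous, so no ReLU network can produce it, and it is in fact dropped without comment in passing to \eqref{eq: diffused_local_poly1}.
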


\begin{proof}
We denote
$$
f(\my,\mx) := p_0(2\my - 1|2\mx - 1), ~ \my \in [0, 1]^{d_\mcY}, \mx \in [0, 1]^{d_\mcX}.
$$
By Assumption \ref{ass: Holder}, we know that $\Vert f \Vert_{\mathcal{H}^{\beta}([0,1]^{d_\mcY}\times[0,1]^{d_\mcX})} \leq 2^rR$.
For any  $\mathbf{m} = (m_1,m_2,\cdots,m_{d_\mcY})^{\top} \in [\mcM]^{d_\mcY}:= \{0,1,\cdots,\mcM\}^{d_\mcY}$ and $\mathbf{n} = (n_1,n_2,\cdots,n_{d_\mcX})^{\top} \in [\mcM]^{d_\mcX} := \{0,1,\cdots,\mcM\}^{d_\mcX}$, we define
$$
\begin{aligned}
\psi_{\mathbf{m}, \mathbf{n}}(\my, \mx):&= \mathbf{I}_{\left\{\my\in\left(\frac{\mathbf{m}-1}{\mcM},\frac{\mathbf{m}}{\mcM}\right],~ \mx\in\left(\frac{\mathbf{n}-1}{\mcM},\frac{\mathbf{n}}{\mcM}\right]\right\}} \\ 
& = \prod_{i=1}^{d_\mcY}\mathbf{I}_{\left\{y_i\in\left(\frac{m_i-1}{\mcM},\frac{m_i}{\mcM}\right]\right\}} \cdot \prod_{i=1}^{d_\mcX}\mathbf{I}_{\left\{x_i\in\left(\frac{n_i-1}{\mcM},\frac{n_i}{\mcM}\right]\right\}}.
\end{aligned}
$$
Note that the functions $\{\psi_{\mathbf{m}, \mathbf{n}}\}_{\mathbf{m}\in [\mcM]^{d_\mcY}, \mathbf{n} \in [\mcM]^{d_\mcM}}$ form a partition of unity of the domain $[0,1]^{d_\mcY}\times[0,1]^{d_\mcX}$, i.e.,
$$
\sum_{\mathbf{m}\in[\mcM]^{d_\mcY}, \mathbf{n}\in[\mcM]^{d_\mcX}}\psi_{\mathbf{m}, \mathbf{n}}(\my, \mx) \equiv 1, ~ \my\in [0,1]^{d_\mcY}, \mx \in [0,1]^{d_\mcX}.
$$
Denote 
$$
p_{\mathbf{m}, \mathbf{n},\boldsymbol{\alpha}, \boldsymbol{\gamma}}(\my, \mx):= \psi_{\mathbf{m}, \mathbf{n}}(\my, \mx)\left(\my-\frac{\mathbf{m}}{\mcM}\right)^{\boldsymbol{\alpha}}\left(\mx - \frac{\mathbf{n}}{\mcM}\right)^{\boldsymbol{\gamma}}
$$ 
and 
$$
c_{\mathbf{m}, \mathbf{n},\boldsymbol{\alpha}, \boldsymbol{\gamma}}:=\frac{1}{\boldsymbol{\alpha}!\boldsymbol{\gamma}!} \frac{\partial^{\boldsymbol{\alpha} + \boldsymbol{\gamma}}f}{\partial\my^{\boldsymbol{\alpha}}\partial\mx^{\boldsymbol{\gamma}}} \Bigg|_{\my = \frac{\mathbf{m}}{\mcM}, \mx = \frac{\mathbf{n}}{\mcM}}.
$$ 
Then $p_{\mathbf{m}, \mathbf{n},\boldsymbol{\alpha}, \boldsymbol{\gamma}}(\my, \mx)$ is supported on $\left\{\my\in[0,1]^{d_\mcY}, \mx\in[0,1]^{d_\mcX}: \my\in\left(\frac{\mathbf{m}-1}{\mcM },\frac{\mathbf{m}}{\mcM}\right], \mx\in\left(\frac{\mathbf{n}-1}{\mcM },\frac{\mathbf{n}}{\mcM}\right]
\right\}$. 
We define
$$
p(\my,\mx):=
\sum_{\mathbf{m}\in[\mcM]^{d_\mcY}, \mathbf{n}\in[\mcM]^{d_\mcX}}\sum_{\Vert\boldsymbol{\alpha}\Vert_1 + \Vert\boldsymbol{\gamma}\Vert_1\leq r}c_{\mathbf{m}, \mathbf{n},\boldsymbol{\alpha}, \boldsymbol{\gamma}} p_{\mathbf{m}, \mathbf{n}, \boldsymbol{\alpha},\boldsymbol{\gamma}}(\my,\mx), ~ \my\in[0,1]^{d_\mcY}, \mx\in[0,1]^{d_\mcX}.
$$
Then, we have
$$
\begin{aligned}
& ~~~~~ |f(\my,\mx) - p(\my,\mx)| \\
&= \Bigg|\sum_{\mathbf{m}\in[\mcM]^{d_\mcY}, \mathbf{n}\in[\mcM]^{d_\mcX}}\psi_{\mathbf{m}, \mathbf{n}}(\my,\mx)f(\my,\mx) \\
& ~~~~~~~ - \sum_{\mathbf{m}\in[\mcM]^{d_\mcY}, \mathbf{n}\in[\mcM]^{d_\mcX}}\psi_{\mathbf{m}, \mathbf{n}}(\my, \mx)\sum_{\Vert\boldsymbol{\alpha}\Vert_1 + \Vert \boldsymbol{\gamma} \Vert_1 \leq r} c_{\mathbf{m}, \mathbf{n},\boldsymbol{\alpha}, \boldsymbol{\gamma}}\left(\my - \frac{\mathbf{m}}{\mcM}\right)^{\boldsymbol{\alpha}} \left(\mx - \frac{\mathbf{n}}{\mcM}\right)^{\boldsymbol{\gamma}}
\Bigg| \\
&\leq \sum_{\mathbf{m}\in[\mcM]^{d_\mcY}, \mathbf{n}\in[\mcM]^{d_\mcX}}\psi_{\mathbf{m}, \mathbf{n}}(\my, \mx) \Bigg|f(\my,\mx) - \sum_{\Vert\boldsymbol{\alpha}\Vert_1 + \Vert\boldsymbol{\gamma}\Vert_1 \leq r} c_{\mathbf{m}, \mathbf{n},\boldsymbol{\alpha}, \boldsymbol{\gamma}}\left(\my - \frac{\mathbf{m}}{\mcM}\right)^{\boldsymbol{\alpha}} \left(\mx - \frac{\mathbf{n}}{\mcM}\right)^{\boldsymbol{\gamma}}
\Bigg|.
\end{aligned}
$$
Using Taylor expansion, there exist $\theta_1\in[0,1]$ and $\theta_2 \in [0,1]$ such that
$$
\begin{aligned}
f(\my, \mx) &= \sum_{\Vert\boldsymbol{\alpha}\Vert_1 + \Vert\boldsymbol{\gamma}\Vert_1 < r}c_{\mathbf{m},\mathbf{n}, \boldsymbol{\alpha}, \boldsymbol{\gamma}}\left(\my - \frac{\mathbf{m}}{\mcM}\right)^{\boldsymbol{\alpha}} \left(\mx - \frac{\mathbf{n}}{\mcM}\right)^{\boldsymbol{\gamma}} \\
& ~~~~~~ + \sum_{\Vert\boldsymbol{\alpha}\Vert_1 + \Vert\boldsymbol{\gamma}\Vert_1 = r} \frac{\partial^{\boldsymbol{\alpha} + \boldsymbol{\gamma}}f\left((1-\theta_1)\frac{\mathbf{m}}{\mcM} + \theta_1\my, (1 - \theta_2)\frac{\mathbf{n}}{\mcM} + \theta_2\mx\right)}{\boldsymbol{\alpha}!\boldsymbol{\gamma}!} \left(\my - \frac{\mathbf{m}}{\mcM}\right)^{\boldsymbol{\alpha}} \left(\mx - \frac{\mathbf{n}}{\mcM}\right)^{\boldsymbol{\gamma}}.
\end{aligned}
$$
Thus, we have
$$
\begin{aligned}
& ~~~~ \psi_{\mathbf{m}, \mathbf{n}}(\my, \mx) \Bigg|f(\my,\mx) - \sum_{\Vert\boldsymbol{\alpha}\Vert_1 + \Vert\boldsymbol{\gamma}\Vert_1 \leq r} c_{\mathbf{m}, \mathbf{n},\boldsymbol{\alpha}, \boldsymbol{\gamma}}\left(\my - \frac{\mathbf{m}}{\mcM}\right)^{\boldsymbol{\alpha}} \left(\mx - \frac{\mathbf{n}}{\mcM}\right)^{\boldsymbol{\gamma}}
\Bigg| \\
&\leq 2^rR\psi_{\mathbf{m}, \mathbf{n}}(\my,\mx) \sum_{\Vert\boldsymbol{\alpha}\Vert_1 + \Vert \boldsymbol{\gamma} \Vert_1 = r}\frac{1}{\boldsymbol{\alpha}!\boldsymbol{\gamma}!} \left(\my - \frac{\mathbf{m}}{\mcM}\right)^{\boldsymbol{\alpha}} \left(\mx - \frac{\mathbf{n}}{\mcM}\right)^{\boldsymbol{\gamma}}
\left\Vert [\theta_1\my, \theta_2\mx] - \frac{[\theta_1\mathbf{m}, \theta_2\mathbf{n}]}{\mcM}\right \Vert^{s}
\\
& \leq 2^rR(d_\mcX + d_\mcY)^s \psi_{\mathbf{m}, \mathbf{n}}(\my,\mx)\cdot \sum_{\Vert\boldsymbol{\alpha}\Vert_1 + \Vert\boldsymbol{\gamma}\Vert_1 = r}\frac{1}{\boldsymbol{\alpha}! \boldsymbol{\gamma}!} \cdot \mcM^{-(\Vert\alpha\Vert_1 + \Vert\gamma\Vert_1 + s)} \\
& = 2^rR\psi_{\mathbf{m}, \mathbf{n}}(\my,\mx)\cdot \mcM^{-\beta}\cdot \frac{(d_\mcY + d_\mcX)^{\beta}}{r!},
\end{aligned}
$$
which implies that
$$
|f(\my,\mx) - p(\my,\mx)| \leq 2^rR \cdot \mcM^{-\beta} \cdot \frac{(d_\mcY + d_\mcX)^{\beta}}{r!} \lesssim \mcM^{-\beta}.
$$
Finally, we define 
$$
p_{\mcM}(\my,\mx) = p\left(\frac{\my + 1}{2}, \frac{\mx + 1}{2}\right), ~ \my \in [-1,1]^{d_\mcY}, \mx \in [-1,1]^{d_\mcX},
$$
then we have
$$
\left| p_0(\my|\mx) - p_\mcM(\my, \mx)\right| = \left| f\left(\frac{\my + 1}{2}, \frac{\mx + 1}{2}\right) - p\left(\frac{\my + 1}{2}, \frac{\mx + 1}{2}\right)\right| \lesssim \mcM^{-\beta}.
$$
The proof is complete.
\end{proof}

In Lemma \ref{lem: approximate_pdata1}, we assume that  
Assumptions \ref{ass: bounded_density}-\ref{ass: Holder} hold. 
By additionally incorporating   Assumption \ref{ass: boundary_smoothness}, we can derive the following results.
\begin{lemma}\label{lem: approximate_pdata2}
Suppose Assumptions \ref{ass: bounded_density}-\ref{ass: boundary_smoothness} hold. Let $\mcM \gg 1, C_T >0$ and $T = \mcM^{-C_T}$, there exists a function $p_{\mcM}(\my, \mx)$ that satisfies
$$
\left|p_0(\my|\mx) - p_{\mcM}(\my,\mx) \right| \lesssim \mcM^{-\beta}, ~ \my \in [-1,1]^{d_\mcY}, \mx \in [-1,1]^{d_\mcX}.
$$
and
$$
\left|p_0(\my|\mx) - p_{\mcM}(\my,\mx)\right| \lesssim \mcM^{-(3\beta + 2)} T^{d_\mcY/2}, ~ \my\in[-1,1]^{d_\mcY}\backslash[-1+a,1-a]^{d_\mcY}, \mx\in[-1,1]^{d_\mcX}. 
$$
Moreover, $p_{\mcM}(\my,\mx)$ has the following form:
$$
\begin{aligned}
&~~ p_{\mcM}(\my,\mx) =
\sum_{\mathbf{m}\in[\mcM]^{d_\mcY}, \mathbf{n}\in[\mcM]^{d_\mcX}}\sum_{\Vert\boldsymbol{\alpha}\Vert_1 + \Vert \boldsymbol{\gamma} \Vert_1 < \beta}c_{\mathbf{m},\mathbf{n}, \boldsymbol{\alpha}, \boldsymbol{\gamma}}^{(0)} p_{\mathbf{m}, \mathbf{n},\boldsymbol{\alpha}, \boldsymbol{\gamma}}\left(\frac{\my + 1}{2}, \frac{\mx + 1}{2}\right) \mathbf{I}_{\{\Vert\my\Vert_{\infty}\leq 1-a\}}\\
&~~ + \sum_{\mathbf{m}\in[\mcM]^{d_\mcY}, \mathbf{n}\in[\mcM]^{d_\mcX}}\sum_{\Vert\boldsymbol{\alpha}\Vert_1 + \Vert \boldsymbol{\gamma} \Vert_1 < 3\beta + 2+C_T d_\mcY/2}c_{\mathbf{m},\mathbf{n}, \boldsymbol{\alpha}, \boldsymbol{\gamma}}^{(1)} p_{\mathbf{m}, \mathbf{n},\boldsymbol{\alpha}, \boldsymbol{\gamma}}\left(\frac{\my + 1}{2}, \frac{\mx + 1}{2}\right) \mathbf{I}_{\{1 - a < \Vert\my\Vert_{\infty}\leq 1\}},
\end{aligned}
$$
where $c_{\mathbf{n},\boldsymbol{\alpha}}^{(0)}$ and $c_{\mathbf{n},\boldsymbol{\alpha}}^{(1)}$ satisfy
$$
|c_{\mathbf{m},\mathbf{n},\boldsymbol{\alpha}, \boldsymbol{\gamma}}^{(0)}| \lesssim \frac{1}{\boldsymbol{\alpha}!\boldsymbol{\gamma}!}, ~~ |c_{\mathbf{m},\mathbf{n}, \boldsymbol{\alpha}, \boldsymbol{\gamma}}^{(1)}| \lesssim \frac{1}{\boldsymbol{\alpha}!\boldsymbol{\gamma}!}.
$$
\end{lemma}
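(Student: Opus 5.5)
We prove Lemma \ref{lem: approximate_pdata2} by taking two local Taylor approximations on the same grid $\{\psi_{\mathbf{m},\mathbf{n}}\}$ as in Lemma \ref{lem: approximate_pdata1}. One has order $r<\beta$ and is used in the interior region $\{\Vert\my\Vert_\infty\le 1-a\}$. The other has a much higher order and is used in the boundary layer $\{1-a<\Vert\my\Vert_\infty\le 1\}$. We then glue them with the indicators $\mathbf{I}_{\{\Vert\my\Vert_\infty\le 1-a\}}$ and $\mathbf{I}_{\{1-a<\Vert\my\Vert_\infty\le1\}}$. Since these two indicators partition $[-1,1]^{d_\mcY}$, the error at any point is the error of whichever piece is active there. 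So the two displayed bounds can be checked separately.

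For the interior piece we reuse the construction of Lemma \ref{lem: approximate_pdata1} verbatim. Take the rescaled $f(\my,\mx)=p_0(2\my-1|2\mx-1)$ and coefficients $c^{(0)}_{\mathbf{m},\mathbf{n},\boldsymbol{\alpha},\boldsymbol{\gamma}}=\frac{1}{\boldsymbol{\alpha}!\boldsymbol{\gamma}!}\partial^{\boldsymbol{\alpha}+\boldsymbol{\gamma}}f$ evaluated at the grid point $(\mathbf{m}/\mcM,\mathbf{n}/\mcM)$, with $\Vert\boldsymbol{\alpha}\Vert_1+\Vert\boldsymbol{\gamma}\Vert_1<\beta$. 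Assumption \ref{ass: Holder} gives $|c^{(0)}|\lesssim 1/(\boldsymbol{\alpha}!\boldsymbol{\gamma}!)$, and the Taylor remainder estimate already proved gives $|p_0-p_\mcM|\lesssim\mcM^{-\beta}$ on the interior.

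For the boundary piece, set $k:=\lceil 3\beta+2+C_Td_\mcY/2\rceil$. On $[-1,1]^{d_\mcY}\backslash[-1+a,1-a]^{d_\mcY}\times[-1,1]^{d_\mcX}$, Assumption \ref{ass: boundary_smoothness} gives $p_0\in\mcC^\infty$, and Assumption \ref{ass: bounded_derivative} controls the $\mx$-derivatives. Hence all derivatives of order up to $k$ are bounded by a constant depending only on $k$ (and $a$). Take the degree-$(k-1)$ Taylor expansion at each grid point and set $c^{(1)}=\frac{1}{\boldsymbol{\alpha}!\boldsymbol{\gamma}!}\partial^{\boldsymbol{\alpha}+\boldsymbol{\gamma}}f$ there. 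This gives $|c^{(1)}|\lesssim1/(\boldsymbol{\alpha}!\boldsymbol{\gamma}!)$. The same Lagrange-remainder computation as in Lemma \ref{lem: approximate_pdata1} bounds the error by $\mcM^{-k}\le \mcM^{-(3\beta+2)}\mcM^{-C_Td_\mcY/2}=\mcM^{-(3\beta+2)}T^{d_\mcY/2}$, using $T=\mcM^{-C_T}$. It also gives the trivially weaker bound $\mcM^{-\beta}$ there.

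The main obstacle is that a grid cell may straddle the interface $\Vert\my\Vert_\infty=1-a$ (or its rescaled image), and then the Taylor center may lie outside the smooth region. We handle this as follows. Since $\mcM\gg1$, every cell has diameter $\lesssim\mcM^{-1}\ll a$. So by slightly shrinking $a$ (to some $a'<a$ still fixed), each cell meeting the boundary layer lies entirely inside the region where $p_0$ is $\mcC^\infty$. Consequently the segment between the Taylor center and any evaluation point stays in that region, and the remainder bound is valid. Alternatively, one can expand around a point of the cell chosen inside the smooth region. This keeps the same form of coefficients up to constants.
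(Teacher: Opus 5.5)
Your argument is correct and shares the paper's skeleton. Both use $p_\mcM^{(0)}$ from Lemma \ref{lem: approximate_pdata1} on the interior, a degree-$<k$ local Taylor approximant with $k=\lceil 3\beta+2+C_Td_\mcY/2\rceil$ on the layer, and glue the two with the indicators; the difference is how the boundary piece is built.

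The paper never Taylor-expands $p_0$ itself in the layer. It asserts that there is a $p_0'\in\mcC^{k}([-1,1]^{d_\mcY}\times[-1,1]^{d_\mcX})$ with $|p_0'-p_0|\lesssim\mcM^{-k}$ on the layer, so that $p_0'\in\mcH^{k}$ with a constant $R_0$. It then applies Lemma \ref{lem: approximate_pdata1} to $p_0'$ as a black box and bounds the layer error by $|p_0-p_0'|+|p_0'-p_\mcM^{(1)}|$. This makes straddling cells irrelevant, since every Taylor centre is a point where $p_0'$ is smooth, and it keeps the original $a$. However, the extension is stated without proof. It tacitly needs uniform derivative bounds up to the interface $\Vert\my\Vert_\infty=1-a$, which $\mcC^\infty$ smoothness on the non-closed layer does not by itself provide.

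Your shrinking $a\to a'$ is more elementary and sidesteps this. Once $2/\mcM\le(a-a')/2$, every closed cell meeting $\{\Vert\my\Vert_\infty>1-a'\}$ lies in the compact set $\{1-(a+a')/2\le\Vert\my\Vert_\infty\le 1\}\times[-1,1]^{d_\mcX}$, which sits inside the smooth region. That single fact does two jobs:
\begin{enumerate}
\item it validates the Lagrange remainder along the segment to the cell's upper corner;
\item by compactness, it gives uniform bounds on all derivatives of order $\le k$.
\end{enumerate}
The second point, not Assumption \ref{ass: bounded_derivative}, is what justifies $|c^{(1)}|\lesssim 1/(\boldsymbol{\alpha}!\boldsymbol{\gamma}!)$. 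Assumption \ref{ass: bounded_derivative} is not among this lemma's hypotheses and is not needed.

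Two things to tidy up. First, say explicitly that you are proving the lemma with $a'$ in place of $a$. This is harmless, because Assumption \ref{ass: boundary_smoothness} is existential and its only downstream use (Lemma \ref{lem: approximate_interval_1}) needs just some fixed $a>0$. Second, your ``alternative'' of expanding about an arbitrary point of the cell in the smooth region is incomplete as written: the segment to the evaluation point must also stay in the smooth region. Choosing the cell's outermost corner would fix this, but your main argument does not rely on the alternative.
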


\begin{proof}
Since $p_0(\my|\mx)\in\mathcal{H}^{\beta}([-1,1]^{d_\mcY} \times [-1,1]^{d_\mcX}, R)$, by Lemma \ref{lem: approximate_pdata1}, we can construct
$$
p_{\mcM}^{(0)}(\my,\mx) =  \sum_{\mathbf{m}\in[\mcM]^{d_\mcY}, \mathbf{n} \in [\mcM]^{d_\mcX}}\sum_{\Vert\boldsymbol{\alpha}\Vert_1 + \Vert\boldsymbol{\gamma}\Vert_1 < \beta}c_{\mathbf{m}, \mathbf{n},\boldsymbol{\alpha}, \boldsymbol{\gamma}}^{(0)}p_{\mathbf{m},\mathbf{n},\boldsymbol{\alpha}, \boldsymbol{\gamma}}\left(\frac{\my + 1}{2}, \frac{\mx + 1}{2}\right) \mathbf{I}_{\{  \Vert\my\Vert_{\infty}\leq 1\}}
$$ 
such that
$$
\left|p_0(\my|\mx) - p_{\mcM}^{(0)}(\my|\mx)\right| \lesssim \mcM^{-\beta}, ~ \my \in [-1,1]^{d_\mcY}, \mx \in [-1,1]^{d_\mcX},
$$
where $|c_{\mathbf{m},\mathbf{n},\boldsymbol{\alpha},\boldsymbol{\gamma}}^{(0)}|  \lesssim \frac{1}{\boldsymbol{\alpha}!\boldsymbol{\gamma}!}$.

By Assumption \ref{ass: boundary_smoothness}, we know that $p_0(\my|\mx)\in \mathcal{C}^{\lceil 3\beta + 2+C_Td_\mcY/2 \rceil}([-1,1]^{d_\mcY}\backslash [-1+a, 1-a]^{d_\mcY}\times[-1,1]^{d_\mcX})$.
Consequently, we find a function $p_0^{\prime}(\my,\mx)$ satisfies
$$
p_0^{\prime}\in \mathcal{C}^{\lceil 3\beta + 2 + C_T d_\mcY/2 \rceil}([-1,1]^{d_\mcY}\times[-1,1]^{d_\mcX}),
$$
$$
|p_0^{\prime}(\my,\mx)- p_0(\my|\mx)| \lesssim \mcM^{-(3\beta+2+C_T d_\mcY/2)} \text{ on } [-1,1]^{d_\mcY}\backslash [-1 + a, 1 - a]^{d_\mcY} \times [-1,1]^{d_\mcX}.
$$
Therefore, $p_0^{\prime}\in \mathcal{H}^{3\beta + 2 + C_T d_\mcY/2}([-1,1]^{d_\mcY}\times[-1,1]^{d_\mcX},R_0)$ for some constant $R_0 > 0$. Using Lemma \ref{lem: approximate_pdata1} and replacing $p_0(\my|\mx)$ with $p_0^{\prime}(\my,\mx)$, we obtain 
$$
p_{\mcM}^{(1)}(\my,\mx) =  \sum_{\mathbf{m}\in[\mcM]^{d_\mcY}, [\mcM]^{d_\mcX}}\sum_{\Vert\boldsymbol{\alpha}\Vert_1 + \Vert\boldsymbol{\gamma}\Vert_1 < 3\beta+2 + C_T d_\mcY}c_{\mathbf{m},\mathbf{n}, \boldsymbol{\alpha}, \boldsymbol{\gamma}}^{(1)}p_{\mathbf{m},\mathbf{n},\boldsymbol{\alpha}, \boldsymbol{\gamma}}\left(\frac{\my + 1}{2}, \frac{\mx + 1}{2}\right) \mathbf{I}_{\{\Vert\my\Vert_{\infty}\leq 1\}}
$$ 
such that
$$
\left|p_0^{\prime}(\my,\mx) - p_{\mcM}^{(1)}(\my,\mx)\right| \lesssim \mcM^{-(3\beta + 2 + C_T d_\mcY/2)}, ~ \my \in [-1,1]^{d_\mcY}, \mx \in [-1,1]^{d_\mcX}
$$
where $|c_{\mathbf{m},\mathbf{n},\boldsymbol{\alpha},\boldsymbol{\gamma}}^{(1)}| \lesssim \frac{1}{\boldsymbol{\alpha}!\boldsymbol{\gamma}!}$.

We define 
$$
\begin{aligned}
&p_{\mcM}(\my,\mx) := p_{\mcM}^{(0)}(\my,\mx)\mathbf{I}_{\{\Vert\my\Vert_{\infty} \leq 1 - a\}} + p_{\mcM}^{(1)}(\my, \mx)\mathbf{I}_{\{1 - a < \Vert\my\Vert_{\infty}\leq 1\}} \\
&~~=\sum_{\mathbf{m}\in[\mcM]^{d_\mcY}, \mathbf{n}\in[\mcM]^{d_\mcX}}\sum_{\Vert\boldsymbol{\alpha}\Vert_1 + \Vert \boldsymbol{\gamma} \Vert_1 < \beta}c_{\mathbf{m},\mathbf{n}, \boldsymbol{\alpha}, \boldsymbol{\gamma}}^{(0)} p_{\mathbf{m}, \mathbf{n},\boldsymbol{\alpha}, \boldsymbol{\gamma}}\left(\frac{\my + 1}{2}, \frac{\mx + 1}{2}\right) \mathbf{I}_{\{\Vert\my\Vert_{\infty}\leq 1-a\}}\\
&~~+ \sum_{\mathbf{m}\in[\mcM]^{d_\mcY}, \mathbf{n}\in[\mcM]^{d_\mcX}}\sum_{\Vert\boldsymbol{\alpha}\Vert_1 + \Vert \boldsymbol{\gamma} \Vert_1 < 3\beta + 2 + C_T d_\mcY/2}c_{\mathbf{m},\mathbf{n}, \boldsymbol{\alpha}, \boldsymbol{\gamma}}^{(1)} p_{\mathbf{m}, \mathbf{n},\boldsymbol{\alpha}, \boldsymbol{\gamma}}\left(\frac{\my + 1}{2}, \frac{\mx + 1}{2}\right) \mathbf{I}_{\{1 - a < \Vert\my\Vert_{\infty}\leq 1\}}.
\end{aligned}
$$
Then, $p_{\mcM}(\my,\mx)$ satisfies that
$$
\begin{aligned}
\left|p_0(\my|\mx) - p_{\mcM}(\my,\mx)\right| & \lesssim 
\left| p_0(\my|\mx) - p_{\mcM}^{(0)}(\my,\mx)\right| \mathbf{I}_{\{\Vert\my\Vert_{\infty} \leq 1- a\}} \\
& ~~~~~~~~+ \left(\left|p_0(\my|\mx) - p_0^{\prime}(\my,\mx)\right| + \left| p_0^{\prime}(\my,\mx) - p_{\mcM}^{(1)}(\my,\mx)\right| \right) \mathbf{I}_{\{1-a < \Vert\my\Vert_{\infty} \leq 1\}}\\
& \lesssim  \mcM^{-\beta} \mathbf{I}_{\{\Vert\my\Vert_{\infty} \leq 1- a\}} + \mcM^{-(3\beta + 2 + C_T d_\mcY/2)} \mathbf{I}_{\{1-a < \Vert\my\Vert_{\infty} \leq 1 \}},
\end{aligned}
$$
which implies that
$$
\left|p_0(\my|\mx) - p_{\mcM}(\my,\mx)\right| \lesssim \mcM^{-\beta} + \mcM^{-(3\beta + 2 + C_T d_\mcY/2)} \lesssim \mcM^{-\beta}, ~ \my \in [-1,1]^{d_\mcY}, \mx \in [-1,1]^{d_\mcX}
$$
and 
$$
\left|p_0(\my|\mx) - p_{\mcM}(\my,\mx)\right| \lesssim \mcM^{-(3\beta + 2 + C_T d_\mcY/2)}, 
~\my \in [-1,1]^{d_\mcY} \backslash [-1+a, 1-a]^{d_\mcY} \times [-1,1]^{d_\mcX}.
$$
The proof is complete.
\end{proof}

\subsection{
Approximating $p_t(\my|\mx)$
and $\sigma_t\nabla p_t(\my|\mx)$
via local polynomial integrals} 
\label{subsec: c2}
In this section, we approximate $p_t(\my|\mx)$
and $\sigma_t\nabla p_t(\my|\mx)$
via local polynomial integrals. If Assumptions \ref{ass: bounded_density}-\ref{ass: boundary_smoothness} hold, then $p_\mcM(\my,\mx)$ in Lemma \ref{lem: approximate_pdata2} can be rewritten as
$$
\begin{aligned}
p_\mcM(\my,\mx)=  & \sum_{\mathbf{m} \in [\mcM]^{d_\mcY}, \mathbf{n}\in[\mcM]^{d_\mcX}}\sum_{\Vert\boldsymbol{\alpha}\Vert_1 + \Vert\boldsymbol{\gamma}\Vert_1 < \beta} c_{\mathbf{m},\mathbf{n},\boldsymbol{\alpha},\boldsymbol{\gamma}}^{(0)}p_{\mathbf{m},\mathbf{n},\boldsymbol{\alpha}, \boldsymbol{\gamma}}\left(\frac{\my + 1}{2}, \frac{\mx + 1}{2}\right) \mathbf{I}_{\{\Vert\my\Vert_{\infty}\leq 1-a\}}\\
&+ \sum_{\mathbf{m}\in[\mcM]^{d_\mcY}, \mathbf{n}\in[\mcM]^{d_\mcX}}\sum_{\Vert\boldsymbol{\alpha}\Vert_1 + \Vert\boldsymbol{\gamma}\Vert_1 < 3\beta+2 + C_T d_\mcY/2} c_{\mathbf{m},\mathbf{n},\boldsymbol{\alpha},\boldsymbol{\gamma}}^{(1)}p_{\mathbf{m},\mathbf{n},\boldsymbol{\alpha}, \boldsymbol{\gamma}}\left(\frac{\my + 1}{2}, \frac{\mx + 1}{2}\right) \mathbf{I}_{\{\Vert\my\Vert_{\infty}\leq 1\}} \\
& -\sum_{\mathbf{m}\in[\mcM]^{d_\mcY}, \mathbf{n}\in[\mcM]^{d_\mcX}}\sum_{\Vert\boldsymbol{\alpha}\Vert_1 + \Vert\boldsymbol{\gamma}\Vert_1 < 3\beta+2 + C_T d_\mcY/2} c_{\mathbf{m},\mathbf{n},\boldsymbol{\alpha},\boldsymbol{\gamma}}^{(1)}p_{\mathbf{m},\mathbf{n},\boldsymbol{\alpha}, \boldsymbol{\gamma}}\left(\frac{\my + 1}{2}, \frac{\mx + 1}{2}\right) \mathbf{I}_{\{\Vert\my\Vert_{\infty}\leq 1-a\}}.
\end{aligned}
$$
Therefore, $p_\mcM(\my,\mx)$ in either Lemma \ref{lem: approximate_pdata1} or Lemma \ref{lem: approximate_pdata2} can be expressed as the combination of the following functional forms:
$$
\sum_{\mathbf{m}\in[\mcM]^{d_\mcY}, \mathbf{n}\in[\mcM]^{d_\mcX}}\sum_{\Vert\boldsymbol{\alpha}\Vert_1 + \Vert\boldsymbol{\gamma}\Vert_1 < C_{\beta}}
c_{\mathbf{m},\mathbf{n},\boldsymbol{\alpha}, \boldsymbol{\gamma}} p_{\mathbf{m},\mathbf{n},\boldsymbol{\alpha},\boldsymbol{\gamma}}\left(\frac{\my + 1}{2}, \frac{\mx + 1}{2}\right) \mathbf{I}_{\{ \Vert\my\Vert_{\infty}\leq C_a\}},
$$
where $C_{\beta} \in \{\beta, 3\beta + 2 + C_T d_\mcY/2\}$, $C_a \in \{1-a, 1\}$ and $c_{\mathbf{m},\mathbf{n}, \boldsymbol{\alpha},\boldsymbol{\gamma}} \in \{c_{\mathbf{m},\mathbf{n}, \boldsymbol{\alpha},\boldsymbol{\gamma}}^{(0)}, c_{\mathbf{m}, \mathbf{n},\boldsymbol{\alpha},\boldsymbol{\gamma}}^{(1)}\}$. 
By replacing $p_0$ with $p_{\mcM}$, we denote 
$$
g_1(t,\my,\mx) := \int_{\Rbb^{d_\mcY}}p_\mcM(\my_1,\mx)\mathbf{I}_{\{\Vert\my_1\Vert_{\infty}\leq 1\}}\cdot
\frac{1}{\sigma_t^{d_\mcY}(2\pi)^{d_\mcY/2}}\exp\left(-\frac{\Vert\my - m_t\my_1\Vert^2}{2\sigma_t^2}\right) \mrd\my_1.
$$
The difference between $p_t(\my|\mx)$ and $g_1(t,\my,\mx)$ can be bounded as
$$
\begin{aligned}
|g_1(t,\my,\mx) - p_t(\my,\mx)| & \leq \int_{\Rbb^{d_\mcY}}|p_\mcM(\my_1,\mx) - p_0(\my_1|\mx)|\cdot
\frac{\mathbf{I}_{\{\Vert\my_1\Vert_{\infty}\leq 1\}}}{\sigma_t^{d_\mcY}(2\pi)^{d_\mcY/2}}\exp\left(-\frac{\Vert\my - m_t\my_1\Vert^2}{2\sigma_t^2}\right) \mrd\my_1 \\
& \lesssim \mcM^{-\beta}\cdot \int_{\Rbb^{d_\mcY}}
\frac{\mathbf{I}_{\{\Vert\my_1\Vert_{\infty} \leq 1\}}}{\sigma_t^{d_\mcY}(2\pi)^{d_\mcY/2}}\exp\left(-\frac{\Vert\my -m_t \my_1\Vert^2}{2\sigma_t^2}\right) \mrd\my_1 \lesssim \mcM^{-\beta}.
\end{aligned}
$$
Therefore, we only need to approximate $g_1(t,\my,\mx)$. By Lemma \ref{lem: integral_clipping}, for any $\epsilon > 0$, there exists a constant $C > 0$ such that 
$$
\left|p_t(\my|\mx) - \int_{A_{\my}}p_0(\my_1|\mx)\mathbf{I}_{\{\Vert\my_1\Vert_{\infty}\leq 1\}}\cdot
\frac{1}{\sigma_t^{d_\mcY}(2\pi)^{d_\mcY/2}}\exp\left(-\frac{\Vert\my - m_t\my_1\Vert^2}{2\sigma_t^2}\right) \mrd\my_1\right| \lesssim \epsilon,
$$
where $A_{\my} = \prod_{i=1}^{d_\mcY}a_{i,\my}$ with $a_{i,\my} = \Big[\frac{y_i - C\sigma_t\sqrt{\log\epsilon^{-1}}}{m_t}, \frac{y_i + C\sigma_t\sqrt{\log\epsilon^{-1}}}{m_t}\Big]$. We denote
$$
g_2(t,\my,\mx):= \int_{A_{\my}}p_\mcM(\my_1,\mx)\mathbf{I}_{\{\Vert\my_1\Vert_{\infty}\leq 1\}}\cdot
\frac{1}{\sigma_t^{d_\mcY}(2\pi)^{d_\mcY/2}}\exp\left(-\frac{\Vert\my - m_t\my_1\Vert^2}{2\sigma_t^2}\right) \mrd\my_1.
$$
Since $p_0$ is bounded, $p_{\mcM}$ is also bounded. Replacing $p_0$ with $p_{\mcM}$ in Lemma \ref{lem: integral_clipping},
the difference between $g_1$ and $g_2$ can be bounded as
$$
|g_1(t,\my,\mx) - g_2(t,\my,\mx)| \lesssim \epsilon.
$$

Note that $g_2(t,\my,\mx)$ includes an integral involving the exponential function, which is challenging to handle.  To address this difficulty, we use polynomials to approximate the exponential function. 
For any $1\leq i \leq d_\mcY$ and $\my_1\in A_{\my}$, we know that $\frac{|y_i - m_ty_{1,i}|}{\sigma_t} \leq C\sqrt{\log\epsilon^{-1}}$. Thus, by Taylor expansions, we have
$$
\left|
\exp\left(-\frac{(y_i - m_ty_{1,i})^2}{2\sigma_t^2}\right) - \sum_{l=0}^{k-1}\frac{1}{l!}\left(-\frac{(y_i-m_ty_{1,i})^2}{2\sigma_t^2}\right)^l
\right| \leq \frac{C^{2k}\log^k\epsilon^{-1}}{k!2^k}, ~ \forall y_{1,i}\in[B_{l}(y_i), B_{u}(y_i)],
$$
where
$$
B_l(y_i) = \max\left\{\frac{y_i - C\sigma_t\sqrt{\log\epsilon^{-1}}}{m_t}, -1\right\},
$$
and
$$
B_u(y_i) = \min\left\{\frac{y_i + C\sigma_t\sqrt{\log\epsilon^{-1}}}{m_t}, 1\right\}.
$$
By setting $k \geq \frac{3}{2}C^2 u\log\epsilon^{-1}$ and using the inequality $k! \geq (k/3)^k$ when $k \geq 3$, we have
$$
\left|
\exp\left(-\frac{(y_i - m_t y_{1,i})^2}{2\sigma_t^2}\right) - \sum_{l=0}^{k-1}\frac{1}{l!}\left(-\frac{(y_i-m_ty_{1,i})^2}{2\sigma_t^2}\right)^l
\right| \leq \epsilon^{\frac{3}{2}C^2 u\log u}.
$$
Thus, we can set
$$
u = \max\left\{e, \frac{2}{3C^2}\left(1 + \frac{\log d_\mcY}{\log \epsilon^{-1}}\right)\right\}
$$
such that
$$
\epsilon^{\frac{3}{2}C^2u\log u} \leq \frac{\epsilon}{d_\mcY},
$$
where $k = \mathcal{O}\left(\log \epsilon^{-1}\right)$. 
By multiplying over the $d_\mcY$ dimensions indexed by $i$, we have
$$
\left|
\exp\left(-\frac{\Vert\my - m_t\my_1\Vert^2}{2\sigma_t^2}\right) - \prod_{i=1}^{d_\mcY}\sum_{l=0}^{k-1}\frac{1}{l!}\left(-\frac{(y_i-m_t y_{1,i})^2}{2\sigma_t^2}\right)^l
\right| \leq d_\mcY\left(1 + \frac{\epsilon}{d_\mcY}\right)^{d_\mcY-1}\cdot\frac{\epsilon}{d_\mcY} \lesssim \epsilon.
$$
Therefore, we only need to approximate
$$
g_3(t,\my,\mx) := 
\int_{A_{\my}}p_{\mcM}(\my_1,\mx)\mathbf{I}_{\{\Vert\my_1\Vert_{\infty}\leq 1\}}\cdot
\frac{1}{\sigma_t^{d_\mcY}(2\pi)^{d_\mcY/2}}
\prod_{i=1}^{d_\mcY}\sum_{l=0}^{k-1}\frac{1}{l!}\left(-\frac{(y_i-m_t y_{1,i})^2}{2\sigma_t^2}\right)^l
\mrd\my_1.
$$
Notice that the difference between $g_2(t,\my,\mx)$ and $g_3(t,\my,\mx)$ can be bounded as
$$
\begin{aligned}
|g_2(t,\my,\mx) - g_3(t,\my,\mx)| &\lesssim \epsilon \cdot \frac{1}{\sigma_t^{d_\mcY}(2\pi)^{d_\mcY/2}} \int_{A_{\my}} p_{\mcM}(\my_1,\mx)\mathbf{I}_{\{\Vert\my_1\Vert_\infty \leq 1\}} \mrd\my_1 \\
& \lesssim \epsilon \cdot \frac{1}{\sigma_t^{d_\mcY}(2\pi)^{d_\mcY/2}} \int_{A_{\my}} \left(p_0(\my_1|\mx) + {\mcM}^{-\beta}\right)\mathbf{I}_{\{\Vert\my_1\Vert_\infty \leq 1\}} \mrd\my_1 \\
& \lesssim \epsilon \cdot (C_u + \mcM^{-\beta}) \left(\frac{2C\sigma_t\sqrt{\log\epsilon^{-1}}}{m_t}\right)^{d_\mcY} \cdot \frac{1}{\sigma_t^{d_\mcY}} \\
& \lesssim \frac{1}{m_t^{d_\mcY}} \cdot \epsilon \log^{\frac{d_\mcY}{2}} \epsilon^{-1}.
\end{aligned}
$$
Also, it holds that
$$
\begin{aligned}
|g_2(t,\my,\mx) - g_3(t,\my,\mx)| &\lesssim \epsilon \cdot \frac{1}{\sigma_t^{d_\mcY}(2\pi)^{d_\mcY/2}} \int_{A_{\my}} p_{\mcM}(\my_1,\mx)\mathbf{I}_{\{\Vert\my_1\Vert_\infty \leq 1\}} \mrd\my_1 \\
& \lesssim \epsilon \cdot \frac{1}{\sigma_t^{d_\mcY}(2\pi)^{d_\mcY/2}} \int_{A_{\my}} \left(p_0(\my_1|\mx) + {\mcM}^{-\beta}\right)\mathbf{I}_{\{\Vert\my_1\Vert_\infty \leq 1\}} \mrd\my_1 \\
& \lesssim \epsilon \cdot (C_u + \mcM^{-\beta}) \cdot 2^{d_\mcY} \cdot \frac{1}{\sigma_t^{d_\mcY}}\\
& \lesssim \frac{1}{\sigma_t^{d_\mcY}} \cdot \epsilon \log^{\frac{d_\mcY}{2}} \epsilon^{-1}.
\end{aligned}
$$
Since $\min\{1/m_t^{d_\mcY}, 1/\sigma_t^{d_\mcY}\}$ is bounded by $\mcO(1)$, we have
$$
\begin{aligned}
|g_2(t,\my,\mx) - g_3(t,\my,\mx)| & \lesssim \min\left\{\frac{1}{m_t^{d_\mcY}}, \frac{1}{\sigma_t^{d_\mcY}}\right\} \cdot \epsilon\log^{\frac{d_\mcY}{2}}\epsilon^{-1} \\
& \lesssim \epsilon\log^{\frac{d_\mcY}{2}}\epsilon^{-1}.
\end{aligned}
$$
This implies that we only need to approximate $g_3(t,\my,\mx)$ using a sufficiently accurate ReLU neural network. 

Note that $p_{\mcM}(\my_1,\mx)$ can be expressed as the combination of
$$
\sum_{\mathbf{m}\in[\mcM]^{d_\mcY}, \mathbf{n}\in [\mcM]}\sum_{\Vert\boldsymbol{\alpha}\Vert_1 + \Vert\boldsymbol{\gamma}\Vert_1 < C_{\beta}}
c_{\mathbf{m}, \mathbf{n}, \boldsymbol{\alpha}, \boldsymbol{\gamma}} p_{\mathbf{m},\mathbf{n}, \boldsymbol{\alpha}, \boldsymbol{\gamma}}\left(\frac{\my_1 + 1}{2}, \frac{\mx + 1}{2}\right) \mathbf{I}_{\{ \Vert\my_1\Vert_{\infty}\leq C_a\}},
$$
where $C_{\beta} \in \{\beta, 3\beta + 2 + C_T d_\mcY/2\}$, 
$C_a \in \{1-a, 1\}$,
and $c_{\mathbf{m}, \mathbf{n}, \boldsymbol{\alpha}, \boldsymbol{\gamma}} \in \{c_{\mathbf{m}, \mathbf{n}, \boldsymbol{\alpha}, \boldsymbol{\gamma}}^{(0)}, c_{\mathbf{m}, \mathbf{n}, \boldsymbol{\alpha}, \boldsymbol{\gamma}}^{(1)}\}$. Thus, $g_3(t,\my,\mx)$ can be expressed as the combination of the following functional forms
\begin{equation} \label{eq: diffused_local_poly1}
\begin{aligned}
    &\sum_{\mathbf{m}\in[\mcM]^{d_\mcY}, \mathbf{n}\in[\mcM]^{d_\mcX}}\sum_{\Vert\boldsymbol{\alpha}\Vert_1 + \Vert\boldsymbol{\gamma}\Vert_1 < C_{\beta}}
    c_{\mathbf{m},\mathbf{n},\boldsymbol{\alpha},\boldsymbol{\gamma}}
    \Bigg(\int_{A_{\my}}
    p_{\mathbf{m},\mathbf{n},\boldsymbol{\alpha},\boldsymbol{\gamma}}\left(\frac{\my_1 + 1}{2}, \frac{\mx + 1}{2}\right) \mathbf{I}_{\{ \Vert\my_1\Vert_{\infty}\leq C_a\}} \\
    &~~~~~~~ \cdot \frac{1}{\sigma_t^{d_\mcY}(2\pi)^{d_\mcY/2}}
    \prod_{i=1}^{d_\mcY}\sum_{l=0}^{k-1}\frac{1}{l!}\left(-\frac{(y_i-m_t y_{1,i})^2}{2\sigma_t^2}\right)^l
    \mrd\my_1 \Bigg)\\
    = &\sum_{\mathbf{m}\in[\mcM]^{d_\mcY}, \mathbf{n}\in[\mcM]^{d_\mcX}}\sum_{\Vert\boldsymbol{\alpha}\Vert_1 + \Vert\boldsymbol{\gamma}\Vert_1 < C_{\beta}}
    c_{\mathbf{m},\mathbf{n},\boldsymbol{\alpha},\boldsymbol{\gamma}} \prod_{i=1}^{d_\mcX}\left(\frac{x_i + 1}{2} - \frac{n_i}{\mcM}\right)^{\gamma_i}\\
    & ~~~~~~~~ \cdot \prod_{i=1}^{d_\mcY}\frac{1}{\sigma_t(2\pi)^{1/2}} \Bigg(
    \int_{a_{i,\my}\cap \left(\frac{2(m_i-1)}{\mcM}-1, \frac{2m_i}{\mcM}-1 \right]} \mathbf{I}_{\{ |y_{1,i}|\leq C_a\}}
    \\
    & ~~~~~~~~~~~~~~~~~~~~~ \cdot \left(\frac{y_{1,i} + 1}{2} - \frac{m_i}{\mcM}\right)^{\alpha_i}  \cdot \sum_{l=0}^{k-1}\frac{1}{l!}\left(-\frac{(y_i-m_t y_{1,i})^2}{2\sigma_t^2}\right)^l \mrd y_{1,i} \Bigg).
\end{aligned}
\end{equation}

Similarly, we can define $\mh_1(t,\my,\mx)$ to approximate $\sigma_t\nabla p_t(\my|\mx)$, where
$$
\mh_1(t,\my,\mx) :=  \int_{\Rbb^{d_\mcY}}p_{\mcM}(\my_1,\mx)\mathbf{I}_{\{\Vert\my_1\Vert_{\infty}\leq 1\}}\cdot
\frac{m_t\my_1-\my}{\sigma_t^{d_\mcY+1}(2\pi)^{d_\mcY/2}}\exp\left(-\frac{\Vert\my - m_t\my_1\Vert^2}{2\sigma_t^2}\right) \mrd\my_1.
$$
The difference between $\mh_1(t,\my,\mx)$ and $\sigma_t\nabla p_t(\my|\mx)$ can also be bounded as
$$
\Vert \mh_1(t,\my,\mx) - \sigma_t\nabla p_t(\my|\mx) \Vert \lesssim \mcM^{-\beta}.
$$
We can also define $\mh_2(t,\my,\mx)$ and $\mh_3(t,\my,\mx)$ as follows:
$$
\mh_2(t,\my,\mx) :=  \int_{A_{\my}} p_{\mcM}(\my_1,\mx)\mathbf{I}_{\{\Vert\my_1\Vert_{\infty}\leq 1\}}\cdot
\frac{m_t\my_1-\my}{\sigma_t^{d_\mcY+1}(2\pi)^{d_\mcY/2}}\exp\left(-\frac{\Vert\my - m_t\my_1\Vert^2}{2\sigma_t^2}\right) \mrd\my_1,
$$
$$
\mh_3(t,\my,\mx) := \int_{A_{\my}} p_{\mcM}(\my,\mx) \mathbf{I}_{\{\Vert\my_1\Vert_{\infty}\leq 1\}}\cdot
\frac{m_t\my_1-\my}{\sigma_t^{d_\mcY + 1}(2\pi)^{d_\mcY/2}}
\prod_{i=1}^{d_\mcY}\sum_{l=0}^{k-1}\frac{1}{l!}\left(-\frac{(y_i-m_t y_{1,i})^2}{2\sigma_t^2}\right)^l
\mrd\my_1.
$$
Then,  we have
$$
\Vert \mh_2(t,\my,\mx) - \mh_1(t,\my,\mx)\Vert \lesssim \epsilon,
$$
and
$$
\Vert \mh_3(t,\my,\mx) - \mh_2(t,\my,\mx)\Vert \lesssim \epsilon\log^{\frac{d_\mcY+1}{2}}\epsilon^{-1}.
$$
The $j$-th element of $\mh_3(t,\my,\mx)$ can be expressed as the combination of the following functional forms
\begin{equation} \label{eq: diffused_local_poly2}
\begin{aligned}
& 
\sum_{\mathbf{m}\in[\mcM]^{d_\mcY}, \mathbf{n}\in[\mcM]^{d_\mcX}}\sum_{\Vert\boldsymbol{\alpha}\Vert_1 + \Vert\boldsymbol{\gamma}\Vert_1 < C_{\beta}}
c_{\mathbf{m},\mathbf{n},\boldsymbol{\alpha},\boldsymbol{\gamma}} \prod_{i=1}^{d_\mcX}\left(\frac{x_i + 1}{2} - \frac{n_i}{\mcM}\right)^{\gamma_i} \\
& \cdot \Bigg(
\prod_{i\neq j}^{d_\mcY}\frac{1}{\sigma_t(2\pi)^{1/2}} \int_{a_{i,\my}\cap \left(\frac{2(m_i-1)}{\mcM}-1, \frac{2m_i}{\mcM}-1 \right]}\mathbf{I}_{\{|y_i| \leq C_a\}} \\
& ~~~~~~~~ \cdot\left(\frac{y_i + 1}{2} - \frac{m_i}{\mcM}\right)^{\alpha_i}  \cdot
\sum_{l=0}^{k-1}\frac{1}{l!}\left(-\frac{(y_i-m_t y_{1,i})^2}{2\sigma_t^2}\right)^l \mrd y_{1,i} \\
& ~~~~~~~~~ \cdot \frac{1}{\sigma_t(2\pi)^{1/2}} \int_{a_{j,\my}\cap \left(\frac{2(m_j-1)}{\mcM}-1, \frac{2m_j}{\mcM}-1 \right]} \mathbf{I}_{\{|y_{1,j}| \leq C_a\}}
\left(\frac{y_{1,j} + 1}{2} - \frac{m_j}{\mcM}\right)^{\alpha_j}\\
&~~~~~~~~ \cdot \left(\frac{m_t y_{1,j}-y_j}{\sigma_t}\right)
\sum_{l=0}^{k-1}\frac{1}{l!}\left(\frac{(y_j - m_t y_{1,j})^2}{2\sigma_t^2}\right)^l \mrd y_{1,j} \Bigg).
\end{aligned}
\end{equation}

\subsection{Approximating the local polynomial integrals via ReLU neural networks}\label{subsec: c3}
In this section, we use ReLU neural networks to approximate the local polynomial integrals. 
Specifically, we focus on the approximation of \eqref{eq: diffused_local_poly1}
under the restriction  $\Vert\my\Vert_{\infty} \leq C_0$, where $C_0 > 0$ is a constant.
For convenience, we define
$$
f(t,y,m,\alpha,l):= \frac{1}{\sigma_t(2\pi)^{1/2}} \int_{a_{y}\cap \left(\frac{2(m-1)}{\mcM}-1, \frac{2m}{\mcM}-1 \right]} \mathbf{I}_{\{|y_1| \leq C_a\}}
\left(\frac{y_1 + 1}{2} - \frac{m}{\mcM}\right)^{\alpha} 
\frac{1}{l!}\left(-\frac{(y-m_t y_1)^2}{2\sigma_t^2}\right)^l \mrd y_1,
$$
where $a_y = \Big[\frac{y - C\sigma_t\sqrt{\log\epsilon^{-1}}}{m_t}, \frac{y + C\sigma_t\sqrt{\log\epsilon^{-1}}}{m_t} \Big]$. 
Then,
\eqref{eq: diffused_local_poly1} can be expressed as
$$
\eqref{eq: diffused_local_poly1} = \sum_{\mathbf{m}\in[\mcM]^{d_\mcY}, \mathbf{n}\in[\mcM]^{d_\mcX}}\sum_{\Vert\boldsymbol{\alpha}\Vert_1 + \Vert\boldsymbol{\gamma}\Vert_1 < C_{\beta}}
    c_{\mathbf{m},\mathbf{n},\boldsymbol{\alpha},\boldsymbol{\gamma}} \prod_{i=1}^{d_\mcX}\left(\frac{x_i + 1}{2} - \frac{n_i}{\mcM}\right)^{\gamma_i}\cdot\prod_{i=1}^{d_\mcY}\sum_{l=0}^{k-1}f(t,y_i,m_i,\alpha_i,l).
$$
We first use the ReLU neural network to approximate $f(t,y,m,\alpha,l)$ for $|y|\leq C_0$.

\begin{lemma}\label{lem: approximate_f}
Given $\mcM \gg 1$, $C_0 > 0$, let $T = \mcM^{-C_T}$, where $C_T > 0$ is a constant. For any $\epsilon_0 > 0$, $m \leq \mcM$, $\alpha < C_{\beta}$, and $l \leq k - 1$, there exists a ReLU neural network $\mathrm{b}_f \in \mrN(L, M, J, \kappa)$ with
$$
\begin{aligned}
&L = \mathcal{O}\left(\log^2\epsilon^{-1}_0 + \log^2 \mcM + \log^2C_0 + \log^4 \epsilon^{-1}\right),\\ &M = \mathcal{O}\left(\log^3\epsilon^{-1}_0 + \log^3 \mcM + \log^3C_0 + \log^6 \epsilon^{-1}\right), \\
&J = \mathcal{O}\left(\log^4\epsilon^{-1}_0 + \log^4 \mcM + \log^4C_0 + \log^8 \epsilon^{-1}\right), \\ &\kappa = \exp\left(\mathcal{O}\left(\log^2\epsilon^{-1}_0 + \log^2 \mcM + \log^2C_0 + \log^4 \epsilon^{-1} \right)\right).
\end{aligned}
$$
such that 
$$
|\mathrm{b}_f(t, y, m, \alpha, l) - f(t, y, m, \alpha, l)| \lesssim \epsilon_0, ~ y\in[-C_0, C_0], ~ t\in[\mcM^{-C_T}, 1-\mcM^{-C_T}].
$$
\end{lemma}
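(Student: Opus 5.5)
The plan is to compute the integral defining $f(t,y,m,\alpha,l)$ in closed form and then assemble a ReLU network out of elementary approximation primitives. These are: multiplication of two bounded numbers (the Yarotsky-type product network), approximation of $1/x$ and of $\sqrt{x}$ on intervals bounded away from zero, clipping via $\min/\max$, and powers obtained by repeated products. This follows \cite{oko2023diffusion,fu2024unveil}.

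\textbf{Step 1: reduce to an explicit polynomial.} First note that the integration domain
$$a_y\cap\left(\tfrac{2(m-1)}{\mcM}-1,\tfrac{2m}{\mcM}-1\right]\cap[-C_a,C_a]$$
is an interval $[\ell,u]$. Its endpoints are
$$\ell=\max\Big\{\tfrac{y-C\sigma_t\sqrt{\log\epsilon^{-1}}}{m_t},\,\tfrac{2(m-1)}{\mcM}-1,\,-C_a\Big\},\qquad u=\min\Big\{\tfrac{y+C\sigma_t\sqrt{\log\epsilon^{-1}}}{m_t},\,\tfrac{2m}{\mcM}-1,\,C_a\Big\},$$
and we set $u=\ell$ if the interval is empty, using $\max\{u,\ell\}$. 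The integrand is a polynomial in $y_1$ of degree $\alpha+2l$. Expanding $(y-m_ty_1)^{2l}$ and $\big(\tfrac{y_1+1}{2}-\tfrac{m}{\mcM}\big)^{\alpha}$ binomially and integrating term by term, $f$ becomes a finite sum of $\mathcal{O}(l\alpha)=\mathcal{O}(\log\epsilon^{-1})$ monomials. Each has the form
$$\frac{c}{l!\,2^l\sigma_t^{2l+1}}\, y^{a}\, m_t^{b}\,(u^{c'}-\ell^{c'}),$$
with combinatorial coefficients. It is better to rescale so that the factor $\big((y-m_ty_1)/\sigma_t\big)^{2l}$ stays bounded. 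On $a_y$ we have $|y-m_ty_1|/\sigma_t\le C\sqrt{\log\epsilon^{-1}}$, so substituting $s=(y-m_ty_1)/\sigma_t$ gives
$$f=\frac{1}{m_t\sqrt{2\pi}\,l!}\int_{s_-}^{s_+}\Big(\tfrac{y-\sigma_ts}{2m_t}+\tfrac12-\tfrac{m}{\mcM}\Big)^{\alpha}\Big(-\tfrac{s^2}{2}\Big)^l\,ds.$$
Here the integrand's $\alpha$-part has size at most $(1+C_0/m_t)^\alpha$, and the $s$-part is bounded by $(C^2\log\epsilon^{-1})^l/l!$, which is $\mathcal{O}(1)$-controlled after dividing by $l!$. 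All quantities are therefore polynomially bounded in $\mcM$, $C_0$, and $\log\epsilon^{-1}$, because $m_t\ge\mcM^{-C_T}$ and $\sigma_t\ge\mcM^{-C_T/2}$ for $t\in[\mcM^{-C_T},1-\mcM^{-C_T}]$.

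\textbf{Step 2: network primitives.} Next I approximate the basic quantities $t\mapsto m_t=1-t$ (exact), $t\mapsto\sigma_t=\sqrt{t(2-t)}$, and $t\mapsto 1/m_t$, $1/\sigma_t$. Since the arguments lie in $[\mcM^{-C_T},2]$, known ReLU constructions for $\sqrt{\cdot}$ and reciprocal on such intervals achieve accuracy $\epsilon_1$ with depth $\mathcal{O}(\log^2\epsilon_1^{-1}+\log^2\mcM)$ and width $\mathcal{O}(\log^3\epsilon_1^{-1}+\log^3\mcM)$. The endpoints $s_\pm$ are then built with products and $\max/\min$; note that $\max$ and $\min$ are exactly representable by ReLU. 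The antiderivative is a polynomial in $(s,y,\sigma_t,1/m_t)$ of degree $\mathcal{O}(\alpha+l)=\mathcal{O}(\log\epsilon^{-1})$. I compute the needed powers $s^j$, $j\le 2l+\alpha+1$, by iterated approximate multiplication on a box of side $\mathrm{poly}(\mcM,C_0,\log\epsilon^{-1})$. The coefficients are fixed constants, which are absorbed into the weights or realized by scaling layers so that weights stay below $\kappa=\exp(\mathcal{O}(\cdot))$. Summing the terms is a final linear layer.

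\textbf{Step 3: error propagation and size count (main obstacle).} The hard part is controlling how the errors amplify. The reciprocals $1/m_t$ and $1/\sigma_t$ can be as large as $\mcM^{C_T}$, the powers have degree $\mathcal{O}(\log\epsilon^{-1})$, and there are $\mathcal{O}(\log\epsilon^{-1})$ terms. I plan to use the fact that each product network on $[-B,B]^2$ with accuracy $\eta$ costs depth $\mathcal{O}(\log(B/\eta))$, and that an error $\eta$ in an input is amplified by at most $B^{\mathcal{O}(\log\epsilon^{-1})}$ through a degree-$\mathcal{O}(\log\epsilon^{-1})$ polynomial. Choosing every internal accuracy to be
$$\eta=\epsilon_0\cdot(\mcM C_0\log\epsilon^{-1})^{-\mathcal{O}(\log\epsilon^{-1})}$$
makes the total error $\lesssim\epsilon_0$. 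Then $\log\eta^{-1}=\mathcal{O}\big(\log\epsilon_0^{-1}+\log\epsilon^{-1}(\log\mcM+\log C_0+\log\log\epsilon^{-1})\big)$. Each power or product chain has depth $\mathcal{O}(\log\epsilon^{-1}\cdot\log\eta^{-1})$, and squaring-type bounds give the stated
$$L=\mathcal{O}\left(\log^2\epsilon_0^{-1}+\log^2\mcM+\log^2C_0+\log^4\epsilon^{-1}\right).$$
Here I use $ab\le a^2+b^2$ and $\log\epsilon^{-1}\log\mcM\lesssim\log^2\mcM+\log^2\epsilon^{-1}$. The width and sparsity follow similarly, from running $\mathcal{O}(\log\epsilon^{-1})$ parallel branches of width $\mathcal{O}(\log\eta^{-1})$, and the weight bound follows from the magnitudes of the constants. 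The risk is that the depth of the reciprocal and square-root subnetworks, which depends on the ratio $\mcM^{C_T}$, conflicts with these exponents. If that happens, I would fall back to evaluating the Gaussian-moment antiderivative in the original variable $y_1$, where only $1/\sigma_t^{2l+1}$ enters, and control that factor the same way.
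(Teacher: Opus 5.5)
Your overall route is the paper's: the substitution $z=(y-m_ty_1)/\sigma_t$, a closed form in powers of the clipped endpoints $D_L,D_U\in[-C\sqrt{\log\epsilon^{-1}},C\sqrt{\log\epsilon^{-1}}]$, and assembly from clipping, product, reciprocal and square-root networks.

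The gap is in the size accounting of Step 3. Your uniform accuracy $\eta=\epsilon_0(\mcM C_0\log\epsilon^{-1})^{-\mathcal{O}(\log\epsilon^{-1})}$ gives $\log\eta^{-1}$ the cross terms $\log\epsilon^{-1}\log\mcM$ and $\log\epsilon^{-1}\log C_0$.

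For the product chains this is harmless. Their cost is linear in $\log\eta^{-1}$, and AM--GM gives $\log^2\epsilon^{-1}\log\mcM\lesssim\log^4\epsilon^{-1}+\log^2\mcM$.

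The reciprocal and square-root networks are different. Run at accuracy $\eta$, they cost depth $\mathcal{O}(\log^2\eta^{-1})$, width $\mathcal{O}(\log^3\eta^{-1})$ and sparsity $\mathcal{O}(\log^4\eta^{-1})$. These contain $\log^2\epsilon^{-1}\log^2\mcM$, $\log^3\epsilon^{-1}\log^3\mcM$ and $\log^4\epsilon^{-1}\log^4\mcM$. They are not dominated by $\log^2\mcM+\log^4\epsilon^{-1}$, $\log^3\mcM+\log^6\epsilon^{-1}$ and $\log^4\mcM+\log^8\epsilon^{-1}$: take $\log\epsilon^{-1}\asymp\log^{1/2}\mcM$; AM--GM only yields $\log^4\mcM$, $\log^6\mcM$, $\log^8\mcM$. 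So the stated $L,M,J$ do not follow from your bookkeeping. Downstream, where $\log\epsilon^{-1}\asymp\log\mcM$, the excess would be absorbed, but the lemma is stated uniformly in the four parameters.

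You do sense a conflict, but the range $[\mcM^{-C_T},2]$ is not its source; it costs only $\log^2\mcM$. Your fallback makes things worse. In the $y_1$ variable, $1/\sigma_t\le\mcM^{C_T/2}$ is raised to the power $2l+1=\mathcal{O}(\log\epsilon^{-1})$, which forces the cross term.

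The missing ingredient is a degree count, which is what the paper's constant $C_1$ encodes. In $f_{m,j,\alpha,l}=m_t^{-(\alpha+1)}(y+m_t-2mm_t/\mcM)^{\alpha-j}\sigma_t^{j}(D_U^{j+2l+1}-D_L^{j+2l+1})$, only $D_L,D_U$ carry a power of order $\log\epsilon^{-1}$. They are bounded by $C\sqrt{\log\epsilon^{-1}}$ independently of $\mcM$ and $C_0$. The factors $1/m_t$, $\sigma_t$ and $y+m_t-2mm_t/\mcM$ (the last bounded by $C_0+3$) appear with degree at most $\alpha+1=\mathcal{O}(1)$, since $\alpha<C_\beta$.

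Because clipping is $1$-Lipschitz, an error in the approximation of $1/\sigma_t$ reaches $D_L,D_U$ multiplied only by $\max\{C_0+1,T^{-1/2}\}$. It is then amplified by $(j+2l+1)(C\sqrt{\log\epsilon^{-1}})^{j+2l}$ in the power. Finally it is amplified by $\mathcal{O}(C_1^3)$ in the four-fold product, where $C_1=\max\{(C_0+3)^{\alpha-j},(C\sqrt{\log\epsilon^{-1}})^{j+2l+1},T^{-(\alpha+1)}\}$.

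Hence you can take $\log\epsilon_{\mathrm{rec}}^{-1},\log\epsilon_{\mathrm{root}}^{-1}=\mathcal{O}(\log\epsilon_0^{-1}+\log\mcM+\log C_0+\log^2\epsilon^{-1})$, with no product of $\log\epsilon^{-1}$ and $\log\mcM$. The square, cube and fourth power of this are exactly the stated $L,M,J$. With this accounting in place of the uniform $\eta$, your plan goes through.

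A minor point: $(C^2\log\epsilon^{-1})^l/l!$ is not $\mathcal{O}(1)$; it can be a positive power of $\epsilon^{-1}$. Only its logarithm enters the sizes, though, and the paper simply keeps $1/l!$ as an outer coefficient.
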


\begin{proof}
By the definition of $f(t, y,m,\alpha,l)$, we take the transformation $z = \frac{y - m_t y_1}{\sigma_t}$, then
$$
\begin{aligned}
f(t,y,m,\alpha,l) & = \frac{1}{l!(2\pi)^{1/2}m_t}\int_{D}\sum_{j=0}^{\alpha} C_{\alpha}^{j}\left(\frac{y + m_t}{2m_t} - \frac{m}{\mcM}\right)^{\alpha-j}\left(-\frac{\sigma_t z}{2m_t}\right)^j\left(-\frac{z^2}{2}\right)^l \mrd z \\
& = \frac{1}{l!(2\pi)^{1/2}m_t^{\alpha+1}} \sum_{j=0}^{\alpha} C_{\alpha}^j \left(\frac{y + m_t}{2} - \frac{m\cdot m_t}{\mcM}\right)^{\alpha-j}  \frac{1}{(-2)^{j+l}}\cdot\sigma_t^j \int_D z^{j + 2l} \mrd z \\
& = \frac{1}{l!(2\pi)^{1/2}m_t^{\alpha+1}} \sum_{j=0}^{\alpha} C_{\alpha}^j \left(y + m_t - \frac{2m\cdot m_t}{\mcM}\right)^{\alpha-j}  \frac{(-1)^{j+l}}{2^{\alpha+l}}\cdot\sigma_t^j \frac{D_U^{j + 2l + 1}(y) - D_L^{j + 2l + 1}(y)}{j + 2l + 1},
\end{aligned}
$$
where $D = [D_L(t,y), D_U(t,y)]$ with
$$
D_L(t,y) = \mathrm{b}_{\mathrm{clip}}\left(\frac{1}{\sigma_t}\cdot\mathrm{b}_{\mathrm{clip}}\left(y - m_t\left(\frac{2m}{\mcM} - 1\right), y-m_t C_a, y + m_t C_a \right), -C\sqrt{\log\epsilon^{-1}}, C\sqrt{\log\epsilon^{-1}} \right),
$$
and 
$$
D_U(t,y) = \mathrm{b}_{\mathrm{clip}}\left(\frac{1}{\sigma_t}\cdot\mathrm{b}_{\mathrm{clip}}\left(y - m_t\left(\frac{2(m-1)}{\mcM} - 1\right), y-m_t C_a, y + m_t C_a \right), -C\sqrt{\log\epsilon^{-1}}, C\sqrt{\log\epsilon^{-1}} \right).
$$
Thus, we only need to approximate the function of the form
$$
f_{m,j,\alpha,l}(t, y) = \frac{1}{m_t^{\alpha+1}} \cdot \left(y + m_t - \frac{2m\cdot m_t}{\mcM} \right) ^{\alpha-j} \cdot \sigma_t^j \cdot \left(D_{U}^{j + 2l + 1}(t,y) - D_L^{j + 2l + 1}(t,y)\right).
$$
We first consider the approximation of $D_L^{j+2l+1}(t,y)$. We can choose neural networks $\mathrm{b}_{m,1},\cdots,\mathrm{b}_{m,4}$ with $L = \mcO(1)$, $M = \mcO(1)$, $J = \mcO(1)$ and $\kappa = \mcO(1)$, such that
$$
\mathrm{b}_{m,1}(t,y) = y - m_t\left(\frac{2(m-1)}{\mcM}-1\right), ~~ \mathrm{b}_{m,2}(t,y) = y - m_t\left(\frac{2m}{\mcM}-1\right),
$$
$$
\mathrm{b}_{m,3}(t,y) = y - m_t C_a, ~~ \mathrm{b}_{m,4}(t,y) = y + m_t C_a.
$$
Then we can construct a ReLU neural network
$$
\begin{aligned}
\mathrm{b}_{m,j,l}^{(1)}(t,y) & = \mathrm{b}_{\mathrm{prod},1}(\underbrace{\cdot, \cdots, \cdot}_{j + 2l + 1 ~ \text{times}}) \circ \mathrm{b}_{\mathrm{clip}}(\cdot, -C\sqrt{\log\epsilon^{-1}}, C\sqrt{\log\epsilon^{-1}}) \\
 & \circ \mathrm{b}_{\mathrm{prod},2}\left(
\mathrm{b}_{\mathrm{clip}}\left(\mathrm{b}_{m,1}(t,y), \mathrm{b}_{m,3}(t,y), \mathrm{b}_{m,4}(t,y)\right), 
\mathrm{b}_{\mathrm{rec}}(\cdot) \circ \mathrm{b}_{\mathrm{root}}(\cdot) \circ \mathrm{b}_{\mathrm{prod},3}(t,2-t)
\right).
\end{aligned}
$$
$D_U^{j + 2l + 1}(t,y)$ 
is  approximated by 
$$
\begin{aligned}
\mathrm{b}_{m,j,l}^{(2)}(t,y) & = \mathrm{b}_{\mathrm{prod},1}(\underbrace{\cdot, \cdots, \cdot}_{j + 2l + 1 ~ \text{times}}) \circ \mathrm{b}_{\mathrm{clip}}(\cdot, -C\sqrt{\log\epsilon^{-1}}, C\sqrt{\log\epsilon^{-1}}) \\
 & \circ \mathrm{b}_{\mathrm{prod},2}\left(
\mathrm{b}_{\mathrm{clip}}\left(\mathrm{b}_{m,2}(t,y), \mathrm{b}_{m,3}(t,y), \mathrm{b}_{m,4}(t,y)\right), 
\mathrm{b}_{\mathrm{rec}}(\cdot) \circ \mathrm{b}_{\mathrm{root}}(\cdot) \circ \mathrm{b}_{\mathrm{prod},3}(t,2-t)
\right).
\end{aligned}
$$
Therefore, $D_U^{j + 2l + 1}(t,y) - D_L^{j + 2l + 1}(t,y)$ can be approximated by
$$
\mathrm{b}_{m,j,l}^{(3)}(t,y) = \mathrm{b}_{m,j,l}^{(1)}(t,y) - \mathrm{b}_{m,j,l}^{(2)}(t,y).
$$
Next, we consider to approximate $\left(y + m_t - \frac{2m\cdot m_t}{\mcM}\right)^{\alpha-j}=(\mathrm{b}_{m,2}(t,y))^{\alpha-j}$. Since $\left|y + m_t - \frac{2m \cdot m_t}{\mcM}\right| \leq C_0 + 3$, we can take $C=C_0 + 3$ and $d=\alpha - j$ in Lemma \ref{lem: product}.
Then, there exists a ReLU neural network 
$$
\mathrm{b}_{m,j,\alpha}^{(4)}(t,y) = \mathrm{b}_{\mathrm{prod},4}(\underbrace{\cdot,\cdots,\cdot}_{\alpha - j ~ \text{times}}) \circ \mathrm{b}_{m,2}(t,y)
$$
to approximate the term $\left(y + m_t - \frac{2m\cdot m_t}{\mcM}\right)^{\alpha-j}$. 
For $\sigma_t^j$, we choose
$$
\mathrm{b}_{j}^{(5)}(t) = \mathrm{b}_{\mathrm{prod},5}(\underbrace{\cdot,\cdots,\cdot}_{j ~ \text{times}}) \circ \mathrm{b}_{\mathrm{root}}(\cdot) \circ \mathrm{b}_{\mathrm{prod},3}(t,2-t).
$$
For $\frac{1}{m_t^{\alpha+1}}$, we choose
$$
\mathrm{b}_{\alpha}^{(6)}(t) := \mathrm{b}_{\mathrm{prod},6}(\underbrace{\cdot,\cdots,\cdot}_{\alpha+1 ~ \text{times}}) \circ \mathrm{b}_{\mathrm{rec}}(1-t).
$$
Finally, we can choose a ReLU neural network 
$$
\mathrm{b}_{m,j,\alpha,l}^{(7)}(t,y) = \mathrm{b}_{\mathrm{prod},7}(\mathrm{b}_{m,j,l}^{(3)}(t,y), \mathrm{b}_{m,j,\alpha}^{(4)}(t,y), \mathrm{b}_{j}^{(5)}(t), \mathrm{b}_{\alpha}^{(6)}(t))
$$
to approximate $f_{m,j,\alpha,l}(t,y)$.

Next, we derive the error bound between $\mathrm{b}_{m,j,\alpha,l}^{(7)}(t,y)$ and $f_{m,j,\alpha,l}(t,y)$. For convenience, we denote $\epsilon^{(i)} (i=1,\cdots,7)$ as the approximation error of $\mathrm{b}_{m,j,l}^{(1)}, \mathrm{b}_{m,j,l}^{(2)}, \mathrm{b}_{m,j,l}^{(3)}, \mathrm{b}_{m,j,\alpha}^{(4)},\mathrm{b}_j^{(5)}, \mathrm{b}_{\alpha}^{(6)}$, $\mathrm{b}_{m,j,\alpha,l}^{(7)}$ and denote $\epsilon_{\mathrm{prod},i} (i=1,\cdots,7)$ as the approximation error of $\mathrm{b}_{\mathrm{prod},i} (i=1,\cdots,7)$. We also denote $\epsilon_{\mathrm{root}}$ and $\epsilon_{\mathrm{rec}}$ as the approximation error of $\mathrm{b}_{\mathrm{root}}$ and $\mathrm{b}_{\mathrm{rec}}$ respectively. Since $\left|y + m_t - \frac{2m\cdot m_t}{\mcM}\right|^{\alpha-j} \leq \left(C_0 + 3\right)^{\alpha-j}$, $|D_L(t,y)| \leq C\sqrt{\log\epsilon^{-1}}$, $|D_U(t,y)| \leq C\sqrt{\log\epsilon^{-1}}$, $\sigma_t^j \leq 1$ and $\frac{1}{m_t^{\alpha+1}} \leq \frac{1}{T^{\alpha+1}}$, we set
$$
C_1 = \max\left\{ \left(C_0 + 3\right)^{\alpha-j}, 1, (C\sqrt{\log\epsilon^{-1}})^{j + 2l + 1}, \frac{1}{T^{\alpha+1}} \right\}.
$$
By Lemma \ref{lem: product}, we have
$$
\begin{aligned}
\epsilon^{(7)} &= \max_{t,y}|\mathrm{b}_{m,j,\alpha,l}^{(7)}(t, y) - f_{m,j,\alpha,l}(t,y)| \\
&\leq \epsilon_{\mathrm{prod},7} + 4C_1^3\cdot \max\{\epsilon^{(3)}, \epsilon^{(4)}, \epsilon^{(5)}, \epsilon^{(6)}\}.
\end{aligned}
$$
By Lemmas \ref{lem: reciprocal}-\ref{lem: square_root},
we can bound $\epsilon^{(1)}$ and $\epsilon^{(2)}$. Let $C_2 = \max\left\{C_0 + 1, \frac{1}{\sqrt{T}}\right\}$, then we have
$$
\epsilon^{(1)} \leq \epsilon_{\mathrm{prod},1} + (j + 2l + 1)(C\sqrt{\log\epsilon^{-1}})^{j + 2l}\left[\epsilon_{\mathrm{prod},2} + 2C_2\left(\epsilon_{\mathrm{rec}} + \frac{\epsilon_{\mathrm{root}} + \frac{\epsilon_{\mathrm{prod},3}}{\sqrt{\epsilon_{\mathrm{root}}}}}{\epsilon_{\mathrm{rec}}^2}\right)\right]
$$
and
$$
\epsilon^{(2)} \leq \epsilon_{\mathrm{prod},1} + (j + 2l + 1)(C\sqrt{\log\epsilon^{-1}})^{j + 2l}\left[\epsilon_{\mathrm{prod},2} + 2C_2\left(\epsilon_{\mathrm{rec}} + \frac{\epsilon_{\mathrm{root}} + \frac{\epsilon_{\mathrm{prod},3}}{\sqrt{\epsilon_{\mathrm{root}}}}}{\epsilon_{\mathrm{rec}}^2}\right)\right].
$$
Since
$$
\epsilon^{(3)} \leq \epsilon^{(1)} + \epsilon^{(2)}, 
$$ 
we obtain
$$
\epsilon^{(3)} \leq 2\epsilon_{\mathrm{prod},1} + 2(j + 2l + 1)(C\sqrt{\log\epsilon^{-1}})^{j + 2l}\left[\epsilon_{\mathrm{prod},2} + 2C_2\left(\epsilon_{\mathrm{rec}} + \frac{\epsilon_{\mathrm{root}} + \frac{\epsilon_{\mathrm{prod},3}}{\sqrt{\epsilon_{\mathrm{root}}}}}{\epsilon_{\mathrm{rec}}^2}\right)\right].
$$
We  can  also bound $\epsilon^{(4)}, \epsilon^{(5)}$ and $\epsilon^{(6)}$ as 
$$
\epsilon^{(4)} \leq \epsilon_{\mathrm{prod},4},~ \epsilon^{(5)} \leq \epsilon_{\mathrm{prod},5} + j\left(\frac{1}{2}\right)^{j-1}\left(\epsilon_{\mathrm{root}} + \frac{\epsilon_{\mathrm{prod},3}}{\sqrt{\epsilon_{\mathrm{root}}}}\right),
$$
and
$$
\epsilon^{(6)} \leq \epsilon_{\mathrm{prod},6} + \frac{\alpha + 1}{T^{\alpha}} \epsilon_{\mathrm{rec}}.
$$

To ensure that $\epsilon^{(7)} \leq \epsilon_0$, we take $\epsilon_{\mathrm{prod},7} = \frac{\epsilon_0}{2}$ and $\max\{\epsilon^{(3)}, \epsilon^{(4)}, \epsilon^{(5)}, \epsilon^{(6)}\} \leq \frac{\epsilon_0}{8C_1^3}=:\epsilon^*$. In detail, we take
$$
\epsilon_{\mathrm{prod},1} = \frac{\epsilon^*}{4}, \epsilon_{\mathrm{prod},2} = \frac{\epsilon^*}{8(j + 2l + 1)(C\sqrt{\log\epsilon^{-1}})^{j + 2l}}, \epsilon_{\mathrm{prod},4} = \epsilon^*, \epsilon_{\mathrm{prod},5} = \frac{\epsilon^*}{2}, \epsilon_{\mathrm{prod},6} = \frac{\epsilon^*}{2}.
$$
Moreover, we take
$$
\epsilon_{\mathrm{prod},3} = \epsilon_{\mathrm{root}}^{\frac{3}{2}}, \epsilon_{\mathrm{root}} = \min\left\{\frac{ \epsilon_{\mathrm{rec}}^3}{2}, \frac{2^{j-3}\epsilon^*}{j}
\right\},
$$
$$
\epsilon_{\mathrm{rec}} = \min\left\{\frac{\epsilon^*}{32C_2(j + 2l + 1)(C\sqrt{\log\epsilon^{-1}})^{j + 2l}},\frac{\epsilon^*T^{\alpha}}{2(\alpha+1)} \right\}.
$$
Then, it is easy to verify that $\max\{\epsilon^{(3)}, \epsilon^{(4)}, \epsilon^{(5)}, \epsilon^{(6)}\} \leq \epsilon^*$. Note that $j \leq \alpha \leq C_{\beta} = \mcO(1)$, $l \leq k - 1 = \mcO(\log \epsilon^{-1})$.  Subsequently, we can obtain the network structures of $\mathrm{b}_{m,j,l}^{(1)}$, $\mathrm{b}_{m,j,l}^{(2)}$, $\mathrm{b}_{m,j,l}^{(3)}$, $\mathrm{b}_{m,j,\alpha}^{(4)}$,
$\mathrm{b}_j^{(5)}$, $\mathrm{b}_{\alpha}^{(6)}$, $\mathrm{b}_{m,j,\alpha,l}^{(7)}$. For $\mathrm{b}_{m,j,l}^{(1)}$, $\mathrm{b}_{m,j,l}^{(2)}$ and $\mathrm{b}_{m,j,l}^{(3)}$, we have
$$
\begin{aligned}
&L = \mathcal{O}\left(\log^2\epsilon^{-1}_0 + \log^2 \mcM + \log^2C_0 + \log^4 \epsilon^{-1}\right),\\ &M = \mathcal{O}\left(\log^3\epsilon^{-1}_0 + \log^3 \mcM + \log^3C_0 + \log^6 \epsilon^{-1}\right), \\
&J = \mathcal{O}\left(\log^4\epsilon^{-1}_0 + \log^4 \mcM + \log^4C_0 + \log^8 \epsilon^{-1}\right), \\ &\kappa = \exp\left(\mathcal{O}\left(\log^2\epsilon^{-1}_0 + \log^2 \mcM + \log^2C_0 + \log^4 \epsilon^{-1} \right)\right).
\end{aligned}
$$
For $\mathrm{b}_{m,j,\alpha}^{(4)}$, we have
$$
\begin{aligned}
L = \mathcal{O}\left(\log\epsilon^{-1}_0 + \log \mcM + \log C_0 + \log^2 \epsilon^{-1}\right), ~ M = \mathcal{O}(1), \\
J = \mathcal{O}\left(\log\epsilon^{-1}_0 + \log \mcM + \log C_0 + \log^2 \epsilon^{-1} \right), ~ \kappa = \exp\left(\mathcal{O}(\log C_0)\right).
\end{aligned}
$$
For $\mathrm{b}_{j}^{(5)}$, we have
$$
\begin{aligned}
&L = \mathcal{O}\left(\log^2\epsilon^{-1}_0 + \log^2 \mcM + \log^2C_0 + \log^4 \epsilon^{-1}\right),\\ &M = \mathcal{O}\left(\log^3\epsilon^{-1}_0 + \log^3 \mcM + \log^3C_0 + \log^6 \epsilon^{-1}\right), \\
&J = \mathcal{O}\left(\log^4\epsilon^{-1}_0 + \log^4 \mcM + \log^4 C_0 + \log^8 \epsilon^{-1}\right), \\ 
&\kappa = \exp\left(\mathcal{O}\left(\log\epsilon^{-1}_0 + \log \mcM + \log C_0 + \log^2 \epsilon^{-1} \right)\right).
\end{aligned}
$$
For $\mathrm{b}_{\alpha}^{(6)}$, we have
$$
\begin{aligned}
&L = \mathcal{O}\left(\log^2\epsilon^{-1}_0 + \log^2 \mcM + \log^2C_0 + \log^4 \epsilon^{-1}\right),\\ &M = \mathcal{O}\left(\log^3\epsilon^{-1}_0 + \log^3 \mcM + \log^3C_0 + \log^6 \epsilon^{-1}\right), \\
&J = \mathcal{O}\left(\log^4\epsilon^{-1}_0 + \log^4 \mcM + \log^4C_0 + \log^8 \epsilon^{-1}\right), \\ &\kappa = \exp\left(\mathcal{O}\left(\log^2\epsilon^{-1}_0 + \log^2 \mcM + \log^2C_0 + \log^4 \epsilon^{-1} \right)\right).
\end{aligned}
$$
Therefore, by Lemma \ref{lem: concatenation}, 
we finally obtain the network structure of $\mathrm{b}_{m,j,\alpha,l}^{(7)}$ as follows:
$$
\begin{aligned}
&L = \mathcal{O}\left(\log^2\epsilon^{-1}_0 + \log^2 \mcM + \log^2C_0 + \log^4 \epsilon^{-1}\right),\\ &M = \mathcal{O}\left(\log^3\epsilon^{-1}_0 + \log^3 \mcM + \log^3C_0 + \log^6 \epsilon^{-1}\right), \\
&J = \mathcal{O}\left(\log^4\epsilon^{-1}_0 + \log^4 \mcM + \log^4C_0 + \log^8 \epsilon^{-1}\right), \\ &\kappa = \exp\left(\mathcal{O}\left(\log^2\epsilon^{-1}_0 + \log^2 \mcM + \log^2C_0 + \log^4 \epsilon^{-1} \right)\right).
\end{aligned}
$$

Now, we consider the following ReLU network
$$
\mathrm{b}_{f}(t,y,m,\alpha,l) = \frac{1}{l!(2\pi)^{1/2}}\sum_{j=0}^{\alpha} C_{\alpha}^j \frac{(-1)^{j+l}}{(-2)^{\alpha + l}}\frac{1}{j + 2l + 1} \mathrm{b}_{m,j,\alpha,l}^{(7)}(t,y).
$$
Then, we have
$$
\begin{aligned}
|\mathrm{b}_f(t,y,m,\alpha,l) - f(t,y,m,\alpha,l)| & \leq \frac{1}{l!(2\pi)^{1/2}}\left(\sum_{j=0}^{\alpha} C_{\alpha}^j\frac{1}{2^{\alpha + l}(j + 2l + 1)}
\right) \cdot \epsilon_0 \\
& \leq \frac{\epsilon_0}{l!2^l(2\pi)^{1/2}} \lesssim \epsilon_0.
\end{aligned}
$$
By Lemma \ref{lem: parallelization}, the parameters of the network $\mathrm{b}_f$ satisfy
$$
\begin{aligned}
&L = \mathcal{O}\left(\log^2\epsilon^{-1}_0 + \log^2 \mcM + \log^2C_0 + \log^4 \epsilon^{-1}\right),\\ &M = \mathcal{O}\left(\log^3\epsilon^{-1}_0 + \log^3 \mcM + \log^3C_0 + \log^6 \epsilon^{-1}\right), \\
&J = \mathcal{O}\left(\log^4\epsilon^{-1}_0 + \log^4 \mcM + \log^4C_0 + \log^8 \epsilon^{-1}\right), \\ &\kappa = \exp\left(\mathcal{O}\left(\log^2\epsilon^{-1}_0 + \log^2 \mcM + \log^2C_0 + \log^4 \epsilon^{-1} \right)\right).
\end{aligned}
$$
The proof is complete.
\end{proof}

With Lemma \ref{lem: approximate_f}, we can construct a ReLU neural network to approximate \eqref{eq: diffused_local_poly1}.
\begin{lemma}\label{lem: approximate_poly1}
Given $\mcM \gg 1$, $C_0 > 0$, let $T = \mcM^{-C_T}$, where $C_T > 0$ is a constant. For any $\epsilon_0 > 0$, there exists a ReLU neural network $\mathrm{b}_1\in\mathrm{NN}(L,M,J,\kappa)$ with
$$
L = \mathcal{O}\left(\log^2\epsilon^{-1}_0  + \log^2 \mcM + \log^2 C_0+ \log^4 \epsilon^{-1} \right), $$
$$
M = \mathcal{O}\left(\mcM^{d_\mcX + d_\mcY}\log \epsilon^{-1} (\log^3\epsilon^{-1}_0  + \log^3 \mcM + \log^3 C_0 + \log^6 \epsilon^{-1})\right),  $$
$$
J = \mathcal{O}\left(\mcM^{d_\mcX + d_\mcY}\log \epsilon^{-1} (\log^4\epsilon^{-1}_0 + \log^4 \mcM + \log^4 C_0 + \log^8 \epsilon^{-1}) \right), $$
$$
\kappa = \exp\left(\mathcal{O}\left(\log^2\epsilon^{-1}_0 + \log^2 \mcM + \log^2C_0 + \log^4 \epsilon^{-1} \right)\right).
$$
such that
$$
\begin{aligned}
\Bigg|\mathrm{b}_1(t,\my,\mx)-& \sum_{\mathbf{m}\in[\mcM]^{d_\mcY}, \mathbf{n}\in[\mcM]^{d_\mcX}}\sum_{\Vert\boldsymbol{\alpha}\Vert_1 + \Vert\boldsymbol{\gamma}\Vert_1 < C_{\beta}} c_{\mathbf{m},\mathbf{n},\boldsymbol{\alpha},\boldsymbol{\gamma}}  \\
& \cdot \prod_{i=1}^{d_\mcX}\left(\frac{x_i + 1}{2} - \frac{m_i}{\mcM}\right)^{\gamma_i} \cdot\prod_{i=1}^{d_\mcY}\sum_{l=0}^{k-1}f(t,y_i,m_i,\alpha_i,l) \Bigg| \lesssim \epsilon_0.
\end{aligned}
$$
\end{lemma}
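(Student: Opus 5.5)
The network $\mathrm{b}_1$ is built by assembling the one-dimensional building blocks of Lemma \ref{lem: approximate_f} with approximate multiplication networks (Lemma \ref{lem: product}), and then combining all terms in parallel (Lemmas \ref{lem: concatenation} and \ref{lem: parallelization}). The target is a sum over $(\mathbf{m},\mathbf{n})\in[\mcM]^{d_\mcY}\times[\mcM]^{d_\mcX}$ and over the $\mcO(1)$ multi-indices with $\Vert\boldsymbol{\alpha}\Vert_1+\Vert\boldsymbol{\gamma}\Vert_1<C_\beta$. Each summand is a product of $d_\mcX+d_\mcY$ scalar factors:
\begin{itemize}
\item the $x$-monomials $\bigl(\frac{x_i+1}{2}-\frac{n_i}{\mcM}\bigr)^{\gamma_i}$, which are bounded by $1$ on $[-1,1]^{d_\mcX}$;
\item the $y$-factors $F_i:=\sum_{l=0}^{k-1} f(t,y_i,m_i,\alpha_i,l)$, one for each $i\le d_\mcY$.
\end{itemize}

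\textbf{Step 1: the $y$-factors.} For each $i$ and each $l\le k-1$, invoke Lemma \ref{lem: approximate_f} with accuracy $\epsilon_1$. Summing the $k=\mcO(\log\epsilon^{-1})$ networks gives an approximation $\mathrm{b}_{F_i}$ of $F_i$ with error $\lesssim k\epsilon_1$. I would also verify that $|F_i|$ stays bounded by a constant $C_F$, possibly up to a $\mathrm{poly}\log\epsilon^{-1}$ factor. The reason is that the truncated Taylor series of the Gaussian is within $\epsilon/d_\mcY$ of the exponential, so $F_i$ is close to a Gaussian integral of a polynomial bounded by $1$. If needed, I would clip $\mathrm{b}_{F_i}$ to $[-2C_F,2C_F]$; this keeps the inputs to the product network in a bounded range and does not increase the error.

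\textbf{Step 2: the $x$-factors.} Each monomial $\bigl(\frac{x_i+1}{2}-\frac{n_i}{\mcM}\bigr)^{\gamma_i}$ is handled by Lemma \ref{lem: product} with $d=\gamma_i=\mcO(1)$ factors. This costs depth $\mcO(\log\epsilon_1^{-1})$ and constant width.

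\textbf{Step 3: one summand.} Multiply the $d_\mcX+d_\mcY$ factor networks with one more product network $\mathrm{b}_{\mathrm{prod}}$ from Lemma \ref{lem: product}, using bound constant $C_F$. The resulting error is at most
\[
\epsilon_{\mathrm{prod}}+(d_\mcX+d_\mcY)\,C_F^{\,d_\mcX+d_\mcY-1}\,k\,\epsilon_1 .
\]
Since the coefficients satisfy $|c_{\mathbf{m},\mathbf{n},\boldsymbol{\alpha},\boldsymbol{\gamma}}|\lesssim 1/(\boldsymbol{\alpha}!\boldsymbol{\gamma}!)$, they are $\mcO(1)$ and can be placed in the last linear layer.

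\textbf{Step 4: all summands and the choice of accuracy.} Place all $\mcO(\mcM^{d_\mcX+d_\mcY})$ summands in parallel and add them in a final linear layer. Naively, the total error would be $\mcM^{d_\mcX+d_\mcY}$ times the per-term error. This is avoided by locality: the indicator $\psi_{\mathbf{m},\mathbf{n}}$ is built into $f$ through the clipped limits $D_L,D_U$, so for a given $(\mathbf{m},\mathbf{n})$ the summand vanishes unless $\mx$ lies in cell $\mathbf{n}$ (to be enforced in the construction), and only $\mcO(\log\epsilon^{-1})^{d_\mcY}$ values of $\mathbf{m}$ have $a_{i,\my}$ intersecting the cell. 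If the network reproduces this vanishing only approximately, I would instead take $\epsilon_1=\epsilon_0\mcM^{-(d_\mcX+d_\mcY)}/\mathrm{polylog}$. This changes $\log\epsilon_1^{-1}$ only by $\mcO(\log\mcM+\log\log\epsilon^{-1})$, so it is absorbed into the stated orders.

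\textbf{Size of $\mathrm{b}_1$.} By Lemma \ref{lem: parallelization}:
\begin{itemize}
\item the depth is the maximum depth from Lemma \ref{lem: approximate_f}, plus $\mcO(\log\epsilon_1^{-1})$ for the products, which gives $\mcO(\log^2\epsilon_0^{-1}+\log^2\mcM+\log^2C_0+\log^4\epsilon^{-1})$;
\item the width is $\mcM^{d_\mcX+d_\mcY}\cdot k\cdot(\text{width from Lemma \ref{lem: approximate_f}})$, which matches the stated $M$;
\item the sparsity $J$ follows in the same way;
\item the weight bound $\kappa$ is the maximum over the components.
\end{itemize}

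\textbf{Main obstacle.} The hardest part is the error-propagation bookkeeping in Step 3 together with the summation in Step 4. This requires a uniform bound on $|F_i|$ and the correct per-term accuracy, so that the final rates keep exactly the logarithmic powers claimed.
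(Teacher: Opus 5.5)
Your proposal is correct and follows essentially the paper's proof. The paper also sums the $k$ copies of $\mathrm{b}_f$ for each $y$-factor and treats the $x$-monomials and the $(d_\mcX+d_\mcY)$-fold product with Lemma \ref{lem: product}. It controls the total error exactly by your fallback, with per-term accuracy $\epsilon^*=\epsilon_0/((d_\mcX+d_\mcY)^{C_\beta}\mcM^{d_\mcX+d_\mcY})$, so the locality detour is unnecessary: a continuous ReLU network cannot produce the cell indicator exactly, and the stated target has no $\mx$-cell indicator anyway.

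The one difference is the bound on the magnitude of the $y$-factors. You use your $\mcO(1)$ Gaussian-integral bound, while the paper uses a cruder term-by-term bound $C_3$ with $\log C_3=\mcO(\log C_0+\log^2\epsilon^{-1}+\log\mcM)$. Either bound is absorbed into the stated orders.
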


\begin{proof}
We first construct a ReLU neural network $\mathrm{b}_{\mathrm{sum}}\in\mathrm{NN}(L,M,J,\kappa)$, which  satisfies
$$
\mathrm{b}_{\mathrm{sum}}(t,y,m,\alpha) = \sum_{l=0}^{k-1}\mathrm{b}_f(t,y,m,\alpha,l),
$$
with network parameters
$$
L = \mathcal{O}\left(\log^2\epsilon^{-1}_f + \log^2\mcM + \log^2 C_0 + \log^4 \epsilon^{-1} \right), 
$$
$$
M = \mathcal{O}\left(\log \epsilon^{-1}(\log^3\epsilon^{-1}_f + \log^3 \mcM + \log^3C_0 +  \log^6 \epsilon^{-1})\right), 
$$
$$
J = \mathcal{O}\left(\log \epsilon^{-1}(\log^4\epsilon^{-1}_f + \log^4 \mcM + \log^4C_0 + \log^8 \epsilon^{-1}) \right), 
$$
$$
\kappa = \exp\left(\mathcal{O}\left(\log^2\epsilon^{-1}_f + \log^2 \mcM + \log^2C_0 + \log^4 \epsilon^{-1} \right)\right),
$$
where $\epsilon_f$ is the approximation error of $\mathrm{b}_f$. 

Next, we can construct a neural network $\mathrm{b}_{m,1}$ with $L = \mcO(1)$, $M = \mcO(1)$, $J = \mcO(1)$ and $\kappa = \mcO(1)$, such that
$$
\mathrm{b}_{m,1}(x) = \frac{x+1}{2} - \frac{m}{\mcM}.
$$
By Lemma \ref{lem: product}, there exist a neural network
$$
\mathrm{b}_{\mathrm{prod}}(x,m,\gamma):=\mathrm{b}_{\mathrm{prod},1}(\underbrace{\cdot,\cdots,\cdot}_{\gamma \text{ times }})\circ \mathrm{b}_{m,1}(x)
$$ 
to approximate $\left(\frac{x + 1}{2} - \frac{m}{\mcM}\right)^{\gamma}$. Therefore, we can construct our desired neural network as follows 
$$
\mathrm{b}_1(t,\my,\mx) := \sum_{\mathbf{m}\in[\mcM]^{d_\mcY}, \mathbf{n}\in[\mcM]^{d_\mcX}}\sum_{\Vert\boldsymbol{\alpha}\Vert_1 + \Vert\boldsymbol{\gamma}\Vert_1 < C_{\beta}} c_{\mathbf{m},\mathbf{n},\boldsymbol{\alpha},\boldsymbol{\gamma}} \mathrm{b}_{\mathbf{m},\mathbf{n},\boldsymbol{\alpha},\boldsymbol{\gamma}}(t,\my,\mx),
$$
where
$$
\begin{aligned}
\mathrm{b}_{\mathbf{m},\mathbf{n},\boldsymbol{\alpha},\boldsymbol{\gamma}}(t,\my,\mx):= \mathrm{b}_{\mathrm{prod},2}\bigg(
&\mathrm{b}_{\mathrm{prod}}(x_1,m_1,\gamma_1),\cdots,\mathrm{b}_{\mathrm{prod}}(x_{d_\mcX},m_{d_\mcX},\gamma_{d_\mcX}),\\
& \mathrm{b}_{\mathrm{sum}}(t,y_1,m_1,\alpha_1),\cdots,\mathrm{b}_{\mathrm{sum}}(t,y_{d_\mcY},m_{d_\mcY},\alpha_{d_\mcY})
\bigg).
\end{aligned}
$$
The approximation error can be written as
$$
\epsilon_{\mathbf{m},\mathbf{n},\boldsymbol{\alpha}, \boldsymbol{\gamma}} \leq  \epsilon_{\mathrm{prod},2} + (d_\mcX + d_\mcY) C_3^{d_\mcX+d_\mcY-1}  \max\{\epsilon_{\mathrm{prod},1}, \epsilon_{\mathrm{sum}}\},
$$
where
$$
C_3 = \max\left\{\max_{1\leq i \leq d_\mcX} 2^{\gamma_i}, \max_{|y_i| \leq C_0, 1\leq i \leq d_\mcY} \sum_{l = 0}^{k-1} |f(t,y_i,m_i,\alpha_i,l)| \right\}.
$$
The term $|f(t,y_i,m_i,\alpha_i,l)|$ ($1 \leq i \leq d_\mcY$) can be bounded as follows:
$$
\begin{aligned}
|f(t,y_i,m_i,\alpha_i,l)| & \leq \frac{2}{2^{\alpha_i+l}l!(2\pi)^{1/2}m_t^{\alpha_i + 1}}\sum_{j=0}^{\alpha_i} C_{\alpha_i}^j(C_0 + 3)^{\alpha_i-j} \sigma_t^j (C\sqrt{\log\epsilon^{-1}})^{j + 2l + 1} \\
& \leq \frac{2(C\sqrt{\log\epsilon^{-1}})^{\alpha_i + 2(k-1) + 1}}{2^{\alpha_i + l}l!(2\pi)^{1/2}m_t^{\alpha+1}}\sum_{j=0}^{\alpha_i} C_{\alpha_i}^j(C_0 + 3)^{\alpha_i-j} \\
& \leq \frac{2(C\sqrt{\log\epsilon^{-1}})^{\alpha_i + 2k - 1}}{2^{\alpha_i} l! (2\pi)^{1/2}m_t^{\alpha+1}} \cdot \left(C_0 + 3 + 1\right)^{\alpha_i}.
\end{aligned}
$$
It implies that
$$
\begin{aligned}
\sum_{l=0}^{k-1} |f(t,y_i,m_i,\alpha_i,l)| & \leq \frac{2(C\sqrt{\log\epsilon^{-1}})^{\alpha_i + 2k -1}}{2^{\alpha_i}(2\pi)^{1/2}m_t^{\alpha+1}} \cdot \left(C_0 + 4 \right)^{\alpha_i} \cdot \sum_{l=0}^{k-1}\frac{1}{l!} \\
& \leq \frac{2e(C\sqrt{\log\epsilon^{-1}})^{\alpha_i + 2k -1}}{2^{\alpha_i}(2\pi)^{1/2}T^{\alpha+1}} \cdot \left(C_0 + 4 \right)^{\alpha_i}.
\end{aligned}
$$
Therefore, $C_3$ satisfies
$$
\begin{aligned}
\log C_3 &= \mathcal{O}(\log C_0 + (\log \epsilon^{-1})\cdot(\log\log \epsilon^{-1}) + \log T^{-1}) \\
&\leq \mathcal{O}(\log C_0 + \log^2 \epsilon^{-1} + \log \mcM),
\end{aligned}
$$

We denote $\epsilon^* = \frac{\epsilon_0}{(d_\mcX + d_\mcY)^{C_{\beta}} \mcM^{d_\mcX + d_\mcY}}$. By taking
$$
\epsilon_{\mathrm{prod},2} = \frac{\epsilon^*}{2}, ~ \epsilon_f = \frac{\epsilon^*}{2(d_\mcX + d_\mcY) C_3^{d_\mcX + d_\mcY-1}k},
~\epsilon_{\mathrm{prod},1} = \frac{\epsilon^*}{2(d_\mcX + d_\mcY) C_3^{d_\mcX + d_\mcY-1}},
$$
we ensure that $\epsilon_{\mathrm{sum}} \leq k\epsilon_f \leq \frac{\epsilon^*}{2(d_\mcX + d_\mcY) C_3^{d_\mcX + d_\mcY-1}}$. Therefore, $\epsilon_{\mathbf{m},\mathbf{n},\boldsymbol{\alpha}, \boldsymbol{\gamma}} \leq \epsilon^*$.
Note that $k = \mathcal{O}(\log \epsilon^{-1})$. By substituting $\epsilon_{\mathrm{prod},1}$, $\epsilon_{\mathrm{prod},2}$ and $\epsilon_f$ into the network parameters of $\mathrm{b}_{\mathrm{prod}}$ and $\mathrm{b}_{\mathrm{sum}}$, we obtain the network parameters of $\mathrm{b}_{\mathbf{m},\mathbf{n},\boldsymbol{\alpha},\boldsymbol{\gamma}}$, which satisfy
$$
L = \mathcal{O}\left(\log^2\epsilon^{-1}_0 + \log^2 \mcM +\log^2C_0 + \log^4 \epsilon^{-1} \right), 
$$
$$
M = \mathcal{O}\left(\log \epsilon^{-1}(\log^3\epsilon^{-1}_0 + \log^3 \mcM + \log^3C_0 + \log^6 \epsilon^{-1}) \right),
$$
$$
J = \mathcal{O}\left(\log \epsilon^{-1} (\log^4\epsilon^{-1}_0 + \log^4 \mcM + \log^4C_0 + \log^8 \epsilon^{-1} \right)), 
$$
$$
\kappa = \exp\left(\mathcal{O}\left(\log^2\epsilon^{-1}_0 + \log^2 \mcM + \log^2C_0  + \log^4 \epsilon^{-1} \right)\right).
$$
Therefore, the network parameters of $\mathrm{b}_1$ satisfy
$$
L = \mathcal{O}\left(\log^2\epsilon^{-1}_0  + \log^2 \mcM + \log^2 C_0+ \log^4 \epsilon^{-1} \right), $$
$$
M = \mathcal{O}\left(\mcM^{d_\mcX + d_\mcY}\log \epsilon^{-1} (\log^3\epsilon^{-1}_0  + \log^3 \mcM + \log^3 C_0 + \log^6 \epsilon^{-1})\right),  $$
$$
J = \mathcal{O}\left(\mcM^{d_\mcX + d_\mcY}\log \epsilon^{-1} (\log^4\epsilon^{-1}_0 + \log^4 \mcM + \log^4 C_0 + \log^8 \epsilon^{-1}) \right), $$
$$
\kappa = \exp\left(\mathcal{O}\left(\log^2\epsilon^{-1}_0 + \log^2 \mcM + \log^2C_0 + \log^4 \epsilon^{-1} \right)\right).
$$
The approximation error between $\mathrm{b}_1(t,\my,\mx)$ and \eqref{eq: diffused_local_poly1} satisfies
$$
\begin{aligned}
&\Bigg|\mathrm{b}_1(t,\my,\mx) - \sum_{\mathbf{m}\in[\mcM]^{d_\mcY}, \mathbf{n}\in[\mcM]^{d_\mcX}}\sum_{\Vert\boldsymbol{\alpha}\Vert_1 + \Vert\boldsymbol{\gamma}\Vert_1 < C_{\beta}} c_{\mathbf{m},\mathbf{n},\boldsymbol{\alpha},\boldsymbol{\gamma}}  \\
& ~~~~~~~~~~~~ \cdot \prod_{i=1}^{d_\mcX}\left(\frac{x_i + 1}{2} - \frac{m_i}{\mcM}\right)^{\gamma_i}\mathbf{I}_{\{|x_i| \leq C_a\}} \cdot\prod_{i=1}^{d_\mcY}\sum_{l=0}^{k-1}f(t,y_i,m_i,\alpha_i,l) \Bigg| \\
 &\leq \sum_{\mathbf{m}\in[\mcM]^{d_\mcY},\mathbf{n}\in[\mcM]^{d_\mcX}}\sum_{\Vert\boldsymbol{\alpha}\Vert_1 + \Vert\boldsymbol{\gamma}\Vert_1 < C_{\beta}}|c_{\mathbf{m},\mathbf{n},\boldsymbol{\alpha},\boldsymbol{\gamma}}|\epsilon_{\mathbf{m},\mathbf{n}, \boldsymbol{\alpha},\boldsymbol{\gamma}} \\
& \lesssim \sum_{\mathbf{m}\in[\mcM]^{d_\mcY}, \mathbf{n}\in[\mcM]^{d_\mcX}}\sum_{\Vert\boldsymbol{\alpha}\Vert_1 + \Vert\boldsymbol{\gamma}\Vert_1 < C_{\beta}}\frac{1}{\boldsymbol{\alpha}!\boldsymbol{\gamma}!} \cdot \epsilon^* \\
& \lesssim (d_\mcX + d_\mcY)^{C_\beta} \mcM^{d_\mcX + d_\mcY} \cdot \frac{\epsilon_0}{(d_\mcX + d_\mcY)^{C_{\beta}} \mcM^{d_\mcX + d_\mcY}} \lesssim \epsilon_0.
\end{aligned}
$$
The proof is complete.
\end{proof}

Similarly, the $j$-th element of \eqref{eq: diffused_local_poly2} can be rewritten as
$$
\begin{aligned}
[\eqref{eq: diffused_local_poly2}]_j &= \sum_{\mathbf{m}\in[\mcM]^{d_\mcY}, \mathbf{n}\in[\mcM]^{d_\mcX}}\sum_{\Vert\boldsymbol{\alpha}\Vert_1 + \Vert\boldsymbol{\gamma}\Vert_1 < C_{\beta}}
c_{\mathbf{m},\mathbf{n},\boldsymbol{\alpha},\boldsymbol{\gamma}} \prod_{i=1}^{d_\mcX}\left(\frac{x_i + 1}{2} - \frac{n_i}{\mcM}\right)^{\gamma_i}\\ &~~~~~~~~~~~~~~\cdot \left(\prod_{i=1,i\neq j}^{d_\mcY}\sum_{l=0}^{k-1}f(t,y_i,m_i,\alpha_i,l)\right) \cdot \left(\sum_{l=0}^{k-1} f_1(t,y_j,m_j,\alpha_j,l)\right),
\end{aligned}
$$
where
$$
\begin{aligned}
f_1(t,y,m,\alpha,l)&:= \frac{1}{\sigma_t(2\pi)^{1/2}} \int_{a_y\cap \left(\frac{2(m-1)}{\mcM}-1, \frac{2m}{\mcM}-1 \right]} \mathbf{I}_{\{|y_1| \leq C_a\}} \left(\frac{m_t y_1-y}{\sigma_t}\right) \\
& ~~~~~~~~~~~ \cdot\left(\frac{y_1 + 1}{2} - \frac{m}{\mcM}\right)^{\alpha} 
\frac{1}{l!}\left(-\frac{(y-m_t y_1)^2}{2\sigma_t^2}\right)^l \mrd y_1.
\end{aligned}
$$
In the same way, we can choose a ReLU neural network $\mb_2(t,\my,\mx)\in\Rbb^{d_\mcY}$ to approximate \eqref{eq: diffused_local_poly2}. We omit the proof and give the following lemma.
\begin{lemma}\label{lem: approximate_poly2}
Given $\mcM \gg 1$, $C_0 > 0$, let $T = \mcM^{-C_T}$, where $C_T > 0$ is a constant. For any $\epsilon_0 > 0$, there exist ReLU neural networks $\mathrm{b}_{2,j}\in\mathrm{NN}(L,M,J,\kappa)$, $1\leq i \leq d_\mcY$, with
$$
L_i = \mathcal{O}\left(\log^2\epsilon^{-1}_0  + \log^2 \mcM + \log^2 C_0+ \log^4 \epsilon^{-1} \right), $$
$$
M_i = \mathcal{O}\left(\mcM^{d_\mcX + d_\mcY}\log \epsilon^{-1} (\log^3\epsilon^{-1}_0  + \log^3 \mcM + \log^3 C_0 + \log^6 \epsilon^{-1})\right),  $$
$$
J_i = \mathcal{O}\left(\mcM^{d_\mcX + d_\mcY}\log \epsilon^{-1} (\log^4\epsilon^{-1}_0 + \log^4 \mcM + \log^4 C_0 + \log^8 \epsilon^{-1}) \right), $$
$$
\kappa_i = \exp\left(\mathcal{O}\left(\log^2\epsilon^{-1}_0 + \log^2 \mcM + \log^2C_0 + \log^4 \epsilon^{-1} \right)\right).
$$
such that for any $\my\in[-C_0,C_0]^{d_\mcY}$ and $t\in[\mcM^{-C_T}, 1-\mcM^{-C_T}]$, the following holds:
$$
\begin{aligned}
\Bigg|\mathrm{b}_{2,i}(t,\my,\mx) &- 
\sum_{\mathbf{m}\in[\mcM]^{d_\mcY}, \mathbf{n}\in[\mcM]^{d_\mcX}}\sum_{\Vert\boldsymbol{\alpha}\Vert_1 + \Vert\boldsymbol{\gamma}\Vert_1 < C_{\beta}}
c_{\mathbf{m},\mathbf{n},\boldsymbol{\alpha},\boldsymbol{\gamma}} \prod_{i=1}^{d_\mcX}\left(\frac{x_i + 1}{2} - \frac{n_i}{\mcM}\right)^{\gamma_i} \\ &~~~~~~~~~~~~~~\cdot \left(\prod_{i=1,i\neq j}^{d_\mcY}\sum_{l=0}^{k-1}f(t,y_i,m_i,\alpha_i,l)\right) \cdot \left(\sum_{l=0}^{k-1} f_1(t,y_j,m_j,\alpha_j,l)\right)
\Bigg| \lesssim \epsilon_0.
\end{aligned}
$$
\end{lemma}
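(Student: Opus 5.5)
The proof mirrors that of Lemma \ref{lem: approximate_poly1}, with one change: the factor $\sum_{l}f(t,y_j,m_j,\alpha_j,l)$ in the $j$-th coordinate is replaced by $\sum_l f_1(t,y_j,m_j,\alpha_j,l)$. The first step is therefore an analogue of Lemma \ref{lem: approximate_f} for $f_1$. Apply the substitution $z=(y-m_t y_1)/\sigma_t$ exactly as in that proof. The extra factor $(m_t y_1 - y)/\sigma_t$ becomes $-z$, so
$$
f_1(t,y,m,\alpha,l)=\frac{-1}{l!(2\pi)^{1/2}m_t^{\alpha+1}}\sum_{j'=0}^{\alpha}C_\alpha^{j'}\Big(y+m_t-\frac{2m\, m_t}{\mcM}\Big)^{\alpha-j'}\frac{(-1)^{j'+l}}{2^{\alpha+l}}\sigma_t^{j'}\,\frac{D_U^{j'+2l+2}-D_L^{j'+2l+2}}{j'+2l+2}.
$$
The integration limits $D_L,D_U$ are unchanged.

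With this representation, every building block of the earlier construction carries over verbatim. The clipped limits come from $\mathrm{b}_{\mathrm{clip}}$, $\mathrm{b}_{\mathrm{rec}}$, $\mathrm{b}_{\mathrm{root}}$ and $\mathrm{b}_{\mathrm{prod},3}$. The power $(\cdot)^{j'+2l+2}$ now uses one more factor than before. The polynomial factors $(\cdot)^{\alpha-j'}$, $\sigma_t^{j'}$ and $m_t^{-(\alpha+1)}$ are built as before. The only quantitative change is in the bounding constant $C_1$: the exponent $j'+2l+1$ becomes $j'+2l+2$. Since $l\le k-1=\mcO(\log\epsilon^{-1})$, we still have $\log C_1=\mcO(\log C_0+\log^2\epsilon^{-1}+\log\mcM)$. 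Consequently the same choices of $\epsilon_{\mathrm{prod},i}$, $\epsilon_{\mathrm{rec}}$ and $\epsilon_{\mathrm{root}}$ (with $j+2l+1$ replaced by $j+2l+2$) give an approximant $\mathrm{b}_{f_1}$ with error $\lesssim\epsilon_0$ on $|y|\le C_0$, $t\in[\mcM^{-C_T},1-\mcM^{-C_T}]$. Its $(L,M,J,\kappa)$ are of the same order as in Lemma \ref{lem: approximate_f}.

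Next, form $\mathrm{b}_{\mathrm{sum},1}(t,y,m,\alpha)=\sum_{l=0}^{k-1}\mathrm{b}_{f_1}(t,y,m,\alpha,l)$ by parallelization (Lemma \ref{lem: parallelization}), at width cost $\mcO(\log\epsilon^{-1})$. For each $(\mathbf m,\mathbf n,\boldsymbol\alpha,\boldsymbol\gamma)$, take a product network $\mathrm{b}_{\mathrm{prod},2}$ of $d_\mcX+d_\mcY$ inputs: the $x$-monomials $\mathrm{b}_{\mathrm{prod}}(x_i,n_i,\gamma_i)$, the sums $\mathrm{b}_{\mathrm{sum}}(t,y_i,m_i,\alpha_i)$ for $i\neq j$, and $\mathrm{b}_{\mathrm{sum},1}(t,y_j,m_j,\alpha_j)$. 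Then sum over indices with coefficients $c_{\mathbf m,\mathbf n,\boldsymbol\alpha,\boldsymbol\gamma}$ to get $\mathrm b_{2,j}$.

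The error propagation uses Lemma \ref{lem: product} as before. The constant $C_3$ must now also dominate $\sum_l|f_1|$. The additional $|z|\le C\sqrt{\log\epsilon^{-1}}$ factor only multiplies the earlier bound by $\mcO(\sqrt{\log\epsilon^{-1}})$, so $\log C_3$ keeps the same order. Choosing $\epsilon^*=\epsilon_0/((d_\mcX+d_\mcY)^{C_\beta}\mcM^{d_\mcX+d_\mcY})$ and allocating the errors as in Lemma \ref{lem: approximate_poly1}, the total error is $\lesssim\epsilon_0$. Using $|c|\lesssim1/(\boldsymbol\alpha!\boldsymbol\gamma!)$ and the parallelization/concatenation lemmas over the $\mcO(\mcM^{d_\mcX+d_\mcY})$ terms gives the stated $L_j,M_j,J_j,\kappa_j$.

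The main step to check is the rewriting of $f_1$ and the resulting bound on $C_1$ and $C_3$, that is, confirming that the odd-power shift and the extra $z$ factor do not worsen the logarithmic orders. I expect this to be routine, because the depth and width already scale with $\log^2\epsilon^{-1}$ from the $k$ factors.
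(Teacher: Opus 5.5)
Your proposal is correct and follows the route the paper intends. The paper omits this proof, saying only that $\mathbf{b}_2$ is built ``in the same way'' as in Lemma~\ref{lem: approximate_poly1}. Your rewriting of $f_1$ via $z=(y-m_t y_1)/\sigma_t$ into the same clipped-power form with exponent $j'+2l+2$, followed by the identical sum, product and parallelization construction and the same error allocation, supplies exactly those omitted details. You are also right that the only quantitative change, one extra factor of $C\sqrt{\log\epsilon^{-1}}$ in $C_1$ and $C_3$, leaves every logarithmic order unchanged.
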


Therefore, according to Lemma \ref{lem: approximate_poly2}, there exists a network $\mb_2 := [\mathrm{b}_{2,1}, \cdots, \mathrm{b}_{2, d_\mcY}]^{\top} \in \mathbb{R}^{d_\mcY}$ that approximates \eqref{eq: diffused_local_poly2}. 
The $\ell_2$-norm error between $\mathbf{b}_2(t,\my,\mx)$ and \eqref{eq: diffused_local_poly2} is bounded by $\mathcal{O}(\epsilon_0)$.

\subsection{Error bound of approximating $\nabla\log p_t(\my|\mx)$ with ReLU neural network}\label{subsec: c4}
In this subsection, we construct two ReLU neural networks to approximate the true score function $\nabla\log p_t(\my|\mx)$ on different time interval. We first introduce the following lemma.

\begin{lemma} \label{lem: approximate_interval_1}
Let $\mcM \gg 1$, $C_T >0$, $\widetilde{C}_T = \min\{C_T, 1\}$. There exists a ReLU neural network $\mathbf{b}_{\mathrm{score}}^{(1)}\in\mathrm{NN}(L,M,J,\kappa)$ with
$$
L = \mathcal{O}(\log^4 \mcM), M = \mathcal{O}(\mcM^{d_\mcX + d_\mcY}\log^7 \mcM), J = \mathcal{O}(\mcM^{d_\mcX+d_\mcY}\log^9 \mcM), \kappa = \exp\left(\mathcal{O}(\log^4 \mcM)\right)
$$
that satisfies
$$
\int_{\Rbb^{d_\mcY}} p_t(\my|\mx)\Vert \mathbf{b}_{\mathrm{score}}^{(1)}(t,\my,\mx) - \nabla\log p_t(\my|\mx)\Vert^2 \mrd \my \lesssim \frac{\mcM^{-2\beta}\log \mcM}{\sigma_t^2}, ~~ t\in [\mcM^{-C_T}, 8 \mcM^{-\widetilde{C}_T}].
$$
Moreover, we can take $\mathbf{b}_{\mathrm{score}}^{(1)}$ satisfying $\Vert \mathbf{b}_{\mathrm{score}}^{(1)} (t, \cdot,\mx)\Vert_{\infty} \lesssim \frac{\sqrt{\log \mcM}}{\sigma_t}$. 
\end{lemma}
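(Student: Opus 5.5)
We will follow the small-time argument of \cite{oko2023diffusion,fu2024unveil}. On $t\in[\mcM^{-C_T},8\mcM^{-\wt C_T}]$ we have $m_t\approx 1$ and $\sigma_t\approx\sqrt{2t}$ small, so the ratio form $\nabla\log p_t=\nabla p_t/p_t$ is well conditioned wherever $p_t$ is bounded below.

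\textbf{Step 1 (tail truncation).} Choose $C_0=\mcO(\sqrt{\log\mcM})$. Restrict attention to $\my$ in $[-m_t-C_0\sigma_t,\,m_t+C_0\sigma_t]^{d_\mcY}$; by Gaussian tails, the $p_t$-mass outside this region is at most $\mcM^{-2\beta}$. Next use the standard bound $\|\nabla\log p_t(\my|\mx)\|\lesssim \sqrt{\log\mcM}/\sigma_t$ on the region where $p_t(\my|\mx)\ge\mcM^{-c}$. This follows by writing the score as $-\Ebb[\my-m_t\my_1\mid\my]/\sigma_t^2$ and applying Lemma \ref{lem: integral_clipping}. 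Since the network will also be clipped to norm $\lesssim\sqrt{\log\mcM}/\sigma_t$, the low-density and far-tail regions contribute at most $\mcM^{-2\beta}\log\mcM/\sigma_t^2$ to the weighted $L^2$ error.

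\textbf{Step 2 (numerator and denominator).} Take $\epsilon=\epsilon_0=\mcM^{-\beta}$, up to polynomial factors in $\mcM$, in Lemmas \ref{lem: approximate_poly1}--\ref{lem: approximate_poly2}. With $\log\epsilon^{-1}\asymp\log\mcM$, the stated sizes $L=\mcO(\log^4\mcM)$, $M=\mcO(\mcM^{d_\mcX+d_\mcY}\log^7\mcM)$, $J=\mcO(\mcM^{d_\mcX+d_\mcY}\log^9\mcM)$ and $\kappa=e^{\mcO(\log^4\mcM)}$ come out directly. This yields $\mathrm{b}_1\approx p_t$ and $\mb_2\approx\sigma_t\nabla p_t$, each with error $\lesssim\mcM^{-\beta}$ pointwise on $\|\my\|_\infty\le C_0$. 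These combine the errors of $g_1,g_2,g_3$ and $\mh_1,\mh_2,\mh_3$ from Section \ref{subsec: c2}. We then form
\[
\mb^{(1)}_{\rm score}=\mathrm{b}_{\rm clip}\bigl(\mb_2\cdot\mathrm{b}_{\rm rec}(\max\{\mathrm{b}_1,\mcM^{-c}\})/\sigma_t\bigr),
\]
using the reciprocal, product and root networks (Lemmas \ref{lem: reciprocal}, \ref{lem: product}, \ref{lem: square_root}), where $\sigma_t$ is realized via $\mathrm{b}_{\rm root}(\mathrm{b}_{\rm prod}(t,2-t))$. Clipping to $\|\cdot\|\le C\sqrt{\log\mcM}/\sigma_t$ gives the sup-norm property. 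Precomposing with $\mathrm{b}_{\rm clip}(\my,-C_0,C_0)$ gives the clipping structure. Lemmas \ref{lem: concatenation} and \ref{lem: parallelization} keep the architecture within the stated class.

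\textbf{Step 3 (main obstacle: integrated error).} A naive pointwise bound on the ratio error gives $\mcM^{-\beta}/(\sigma_t\,p_t)$. Integrating its square against $p_t$ would introduce $\int 1/p_t$, which blows up near the boundary of the support, where $p_t$ is small for small $t$. The plan is to split $\my$ into an interior region $\|\my\|_\infty\le m_t(1-a)+C\sigma_t\sqrt{\log\mcM}$ and a boundary layer. In the interior, $p_t\gtrsim C_l$, so the error is $\lesssim\mcM^{-2\beta}/\sigma_t^2$ times a $\log$ factor. In the boundary layer, $p_0$ is $C^\infty$ by Assumption \ref{ass: boundary_smoothness}, and Lemma \ref{lem: approximate_pdata2} gives the much sharper error $\mcM^{-(3\beta+2)}T^{d_\mcY/2}$. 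Using $\sigma_t\ge\sqrt T$ and $p_t\gtrsim\mcM^{-c}$ on the retained set, the squared ratio error there is at most $\mcM^{-2(3\beta+2)}T^{d_\mcY}\cdot\mcM^{2c}/\sigma_t^2$. For suitable $c$ this is $\lesssim\mcM^{-2\beta}/\sigma_t^2$, and the region has bounded Lebesgue measure. Adding the truncated tails from Step 1 then gives
\[
\int p_t\|\mb^{(1)}_{\rm score}-\nabla\log p_t\|^2\,\mrd\my\lesssim \frac{\mcM^{-2\beta}\log\mcM}{\sigma_t^2}.
\]
Matching the exponent $c$ to this budget is the step I expect to require the most care.
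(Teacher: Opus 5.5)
Your proposal is correct and is essentially the paper's argument: truncate the far tail and the low-density set $\{p_t\le\mcM^{-c}\}$ via Lemma~\ref{lem: bound_for_clipping}, use the clipped ratio network with floor $\mcM^{-c}$, and split into a region where $p_t\gtrsim 1$ and a boundary strip where, for small $t$, the Gaussian window forces $\Vert\my_1\Vert_\infty>1-a$, so the sharp error of Lemma~\ref{lem: approximate_pdata2} absorbs the factor $\mcM^{2c}$. The paper takes $c=2\beta+1$, which lies in your admissible range $2\beta<c\le 2\beta+2$; it splits at $\Vert\my\Vert_\infty=m_t$ and integrates $|p_0-p_{\mcM}|^2$ via Jensen's inequality rather than bounding pointwise, which changes nothing here.

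One point needs to be stated explicitly. The network accuracy $\epsilon_0$ and the truncation parameter $\epsilon$ in Lemmas~\ref{lem: approximate_poly1}--\ref{lem: approximate_poly2} must be of order $\mcM^{-(3\beta+2)}T^{d_\mcY/2}$ or smaller, not $\mcM^{-\beta}$. Otherwise the boundary-strip ratio error is of order $\mcM^{c-\beta}\sqrt{\log\mcM}/\sigma_t$. Taking them this small costs only logarithmic factors in $L$, $M$, $J$ and $\log\kappa$.
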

\begin{proof}
The approximation error can be decomposed into three terms:
$$
\begin{aligned}
& \int_{\Rbb^{d_\mcY}} p_t(\my|\mx)\Vert \mathbf{b}_{\mathrm{score}}^{(1)}(t,\my,\mx) - \nabla\log p_t(\my|\mx)\Vert^2 \mrd \my \\
= & \underbrace{ \int_{\Vert\my\Vert_\infty > m_t + C\sigma_t\sqrt{\log\epsilon^{-1}_1}} p_t(\my|\mx) \Vert \mathbf{b}_{\mathrm{score}}^{(1)}(t,\my,\mx) - \nabla\log p_t(\my|\mx)\Vert^2 \mrd \my }_{(\mathrm{I})} \\
+ & \underbrace{\int_{\Vert \my\Vert_{\infty} \leq m_t + C\sigma_t\sqrt{\log\epsilon^{-1}_1}} p_t(\my|\mx) \mathbf{I}_{\{p_t(\my|\mx) \leq \epsilon_2\}} \Vert \mathbf{b}_{\mathrm{score}}^{(1)}(t,\my,\mx) - \nabla\log p_t(\my|\mx)\Vert^2 \mrd \my }_{(\mathrm{II})} \\
+ & \underbrace{\int_{\Vert \my\Vert_{\infty} \leq m_t + C\sigma_t\sqrt{\log\epsilon^{-1}_1}} p_t(\my|\mx) \mathbf{I}_{\{p_t(\my|\mx) > \epsilon_2\}}\Vert \mathbf{b}_{\mathrm{score}}^{(1)}(t,\my,\mx) - \nabla\log p_t(\my|\mx)\Vert^2 \mrd \my }_{(\mathrm{III})}. 
\end{aligned}
$$
According to Lemma \ref{lem: bound_for_clipping}, there exists a constant $C > 0$ such that for any $0<\epsilon_1< 1$, 
\begin{equation}\label{eq: appendix_eq1}
\begin{aligned}
& ~~~~~ \underbrace{ \int_{\Vert\my\Vert_\infty > m_t + C\sigma_t\sqrt{\log\epsilon^{-1}_1}} p_t(\my|\mx) \Vert \mathbf{b}_{\mathrm{score}}^{(1)}(t,\my,\mx) - \nabla\log p_t(\my|\mx)\Vert^2 \mrd \my }_{(\mathrm{I})} \\
&\lesssim \frac{\epsilon_1}{\sigma_t} + \sigma_t\epsilon_1\Vert \mathbf{b}_{\mathrm{score}}^{(1)}(t,\cdot,\mx)\Vert_{\infty}^2.
\end{aligned}
\end{equation}
Since $\Vert\nabla\log p_t(\my|\mx)\Vert \lesssim \frac{\sqrt{\log\epsilon^{-1}_1}} {\sigma_t}$  for $\Vert\my\Vert_{\infty} \leq m_t + C\sigma_t\sqrt{\log\epsilon^{-1}_1}$ according to Lemma \ref{lem: derivatives_boundness}, 
we can choose $\mathbf{b}_{\mathrm{score}}^{(1)}$ such  that $\Vert\mathbf{b}_{\mathrm{score}}^{(1)}(t,\cdot,\mx)\Vert_{\infty} \lesssim \frac{\sqrt{\log\epsilon^{-1}_1}}{\sigma_t}$. Therefore,  \eqref{eq: appendix_eq1} is bounded by $\frac{\epsilon_1\log\epsilon^{-1}_1}{\sigma_t} \lesssim \frac{\epsilon_1\log\epsilon^{-1}_1}{\sigma_t^2}$. Taking $\epsilon_1 = \mcM^{-(2\beta + 1)}$, then \eqref{eq: appendix_eq1} $\lesssim \frac{\mcM^{-(2\beta + 1)}\log \mcM}{\sigma_t^2}$, which is smaller than $\frac{\mcM^{-2\beta}\log \mcM}{\sigma_t^2}$. Moreover, by Lemma \ref{lem: bound_for_clipping}, we can bound $(\mathrm{II})$ as follows: 
$$
\begin{aligned}
& ~~~~ \underbrace{ \int_{\Vert\my\Vert_{\infty} \leq m_t + C\sigma_t\sqrt{\log\epsilon^{-1}_1}} p_t(\my|\mx)\mathbf{I}_{\{p_t(\my|\mx) \leq \epsilon_2\}} \Vert 
\mathbf{b}_{\mathrm{score}}^{(1)}(t,\my,\mx) - \nabla\log p_t(\my|\mx)\Vert^2 \mrd \my }_{(\mathrm{II})} \\
& \lesssim \frac{\epsilon_2}{\sigma_t^2} \cdot (\log\epsilon^{-1}_1)^{\frac{d_\mcY+2}{2}} + \Vert\mathbf{b}_{\mathrm{score}}^{(1)}(t,\cdot,\mx)\Vert_{\infty}^2 \epsilon_2 \cdot (\log\epsilon^{-1}_1)^{\frac{d_\mcY}{2}}.
\end{aligned}
$$
Taking $\epsilon_2 = \mcM^{-(2\beta+1)}$, then we have
$$
(\mathrm{II})  \lesssim \frac{\mcM^{-(2\beta + 1)}\log^{\frac{d_\mcY+2}{2}}\mcM}{\sigma_t^2},
$$
which is smaller than $\frac{\mcM^{-2\beta}\log \mcM}{\sigma_t^2}$ for sufficiently large $\mcM$. Thus, we can only focus on bounding the term (III). 

Under the conditions of the term (III), we have $\Vert \nabla\log p_t(\my|\mx) \Vert \lesssim \frac{\sqrt{\log \mcM}}{\sigma_t}$. Thus, there exists a constant $C_4 > 0$ such that $\Vert\nabla\log p_t(\my|\mx)\Vert \leq \frac{C_4\sqrt{\log \mcM}}{\sigma_t}$. Define $$
\mathbf{h}^{\prime}(t,\my,\mx):= \max\left\{\min\left\{ \frac{\mathbf{h}_1(t,\my,\mx)}{ g_1(t,\my,\mx) \vee \mcM^{-(2\beta + 1)}}, C_4\sqrt{\log \mcM} \right\}, -C_4\sqrt{\log \mcM}\right\}.
$$
We can decompose the term (III) as
$$
\begin{aligned}
& \underbrace{\int_{\Vert \my\Vert_{\infty} \leq m_t + C\sigma_t\sqrt{\log\epsilon^{-1}_1}} p_t(\my|\mx) \mathbf{I}_{\{p_t(\my|\mx) > \epsilon_2\}}\Vert \mathbf{b}_{\mathrm{score}}^{(1)}(t,\my,\mx) - \nabla\log p_t(\my|\mx)\Vert^2 \mrd \my }_{(\mathrm{III})} \\
\lesssim & 
\underbrace{ \int_{\Vert \my\Vert_{\infty} \leq m_t + C\sigma_t\sqrt{\log\epsilon^{-1}_1}} p_t(\my|\mx) \mathbf{I}_{\{p_t(\my|\mx) > \epsilon_2\}} \left\Vert \mathbf{b}_{\mathrm{score}}^{(1)}(t,\my,\mx) - \frac{\mathbf{h}^{\prime}(t,\my,\mx)}{\sigma_t} \right\Vert^2 \mrd \my }_{(A)} \\
+ & \underbrace{ \int_{\Vert \my\Vert_{\infty} \leq m_t + C\sigma_t\sqrt{\log\epsilon^{-1}_1}} p_t(\my|\mx) \mathbf{I}_{\{p_t(\my|\mx) > \epsilon_2\}} \left\Vert \frac{\mathbf{h}^{\prime}(t,\my,\mx)}{\sigma_t}  - 
\nabla\log p_t(\my|\mx)
\right\Vert^2 \mrd \my }_{(B)}.
\end{aligned}
$$
Now, we construct the ReLU neural network $\mathbf{b}_{\mathrm{score}}^{(1)}$ to bound the approximation error $(A)$. The construction is straightforward. We take $C_0 = 1 + C\sqrt{(2\beta+1)\log \mcM } = \mathcal{O}(\sqrt{\log \mcM})$. By Lemma \ref{lem: approximate_poly1} and Lemma \ref{lem: approximate_poly2}, for any $\epsilon_0 > 0$, we can construct two ReLU neural networks  $\mathrm{b}_1(t,\my,\mx)$ and $\mathbf{b}_2(t,\my,\mx)$ such that
$$
\begin{aligned}
&~~~~|\mathrm{b}_1(t,\my,\mx) \vee \mcM^{-(2\beta + 1)} - g_1(t,\my,\mx) \vee \mcM^{-(2\beta + 1)}| \\
& \leq |\mathrm{b}_1(t,\my,\mx) - g_1(t,\my,\mx)| \\ 
& \leq |\mathrm{b}_1(t,\my,\mx) - g_3(t,\my,\mx)| + |g_3(t,\my,\mx) - g_1(t,\my,\mx)| \\
& \lesssim \epsilon_0 + \epsilon\log^{\frac{d_\mcY}{2}}\epsilon^{-1},
\end{aligned}
$$
and
$$
\begin{aligned}
\Vert \mathbf{b}_2(t,\my,\mx) - \mathbf{h}_1(t,\my,\mx)\Vert & \leq \Vert \mathbf{b}_2(t,\my,\mx) - \mathbf{h}_3(t,\my,\mx)\Vert + \Vert \mathbf{h}_3(t,\my,\mx) - \mathbf{h}_1(t,\my,\mx)\Vert
 \\
& \lesssim \epsilon_0 + \epsilon\log^{\frac{d_\mcY+1}{2}}\epsilon^{-1}.
\end{aligned}
$$
The term $g_1(t,\my,\mx) \vee \mcM^{-(2\beta + 1)}$ can be written as 
$$
g_1(t,\my,\mx) \vee \mcM^{-(2\beta + 1)} = \mathrm{ReLU}(g_1(t,\my,\mx) - \mcM^{-(2\beta + 1)}) + \mcM^{-(2\beta + 1)},
$$ 
which can be approximated by
$$
\mathrm{b}_3(t,\my,\mx):= \mathrm{ReLU}(\mathrm{b}_1(t,\my,\mx) - \mcM^{-(2\beta + 1)}) + \mcM^{-(2\beta + 1)}.
$$
Thus, we can define $\mathbf{b}_4(t,\my,\mx)$ as
$$
[\mathbf{b}_4(t,\my,\mx)]_i:= \mathrm{b}_{\mathrm{clip}}\left(\mathrm{b}_{\mathrm{prod},1}\left([\mathbf{b}_2(t,\my,\mx)]_i, \mathrm{b}_{\mathrm{rec},1}(\mathrm{b}_3(t,\my,\mx))\right), -C_4\sqrt{\log \mcM}, C_4\sqrt{\log \mcM} \right), 
~ 1\leq i \leq d_\mcY,
$$
to approximate $[\mathbf{h}^{\prime}(t,\my,\mx)]_i, ~ 1\leq i \leq d_\mcY$. Subsequently, we can define $\mathbf{b}_{\mathrm{score}}^{(1)}(t,\my,\mx)$ as
$$
[\mathbf{b}_{\mathrm{score}}^{(1)}(t,\my,\mx)]_i :=  \mathrm{b}_{\mathrm{clip}}\left(
\mathrm{b}_{\mathrm{prod},2}\left([\mathbf{b}_4(t,\my,\mathbf{x})]_i, \mathrm{b}_{\mathrm{rec},2}(\mathrm{b}_{\mathrm{root}}(\mathrm{b}_{\mathrm{prod},3}(t,2-t))) \right),
-\frac{C_4\sqrt{\log \mcM}}{\sigma_t},
\frac{C_4\sqrt{\log \mcM}}{\sigma_t}\right),
$$
to approximate $\frac{[\mathbf{h}^{\prime}(t,\my,\mx)]_i}{\sigma_t}, 1\leq i \leq d_\mcY$.

Now, we analyze the error bound between $[\mathbf{b}_{\mathrm{score}}^{(1)}]_i$ and $\frac{[\mathbf{h}^{\prime}(t,\mathbf{x})]_i}{\sigma_t}, 1\leq i \leq d_\mcY$. We denote the approximation error as $\epsilon_{(1),i}$, then
$$
\epsilon_{(1),i} \leq \epsilon_{\mathrm{prod},2} + 2C_5\max\left\{
\epsilon_{4,i}, \left| \mathrm{b}_{\mathrm{rec},2}(\mathrm{b}_{\mathrm{root}}(\mathrm{b}_{\mathrm{prod},3}(t,2-t))) - \frac{1}{\sigma_t}
\right|
\right\},
$$
where
$
C_5 = \max\{\sup_t\sigma_t^{-1}, \sup_{t,\my,\mx}[\mathbf{h}^{\prime}(t,\my,\mx)]_i\} \lesssim \mcM^{C_T/2}$ and $\epsilon_{4,i} = \sup_{t,\my,\mx}\left|[\mathbf{b}_4(t,\my,\mx)]_i - [\mathbf{h}^{\prime}(t,\my,\mx)]_i\right|$. $\epsilon_{4,i}$ can be bounded as follows:
$$
\epsilon_{4,i} \lesssim \epsilon_{\mathrm{prod},1} + 2C_6\max \left\{\epsilon_0 + \epsilon\log^{\frac{d_\mcY+1}{2}}\epsilon^{-1}, \epsilon_{\mathrm{rec},1} + \frac{\epsilon_0 + \epsilon\log^{\frac{d_\mcY}{2}}\epsilon^{-1}}{\epsilon_{\mathrm{rec},1}^2} \right\},
$$
where $C_6 = \max \left\{ 
\sup_{t,\my,\mx}|[\mathbf{h}_1(t,\my,\mx)]_i|, \sup_{t,\my,\mx}|g_1(t,\my,\mx) \vee \mcM^{-(2\beta + 1)}|
\right\} = \mcO(1)$. Similarly, we have
$$
\left| \mathrm{b}_{\mathrm{rec},2}(\mathrm{b}_{\mathrm{root}}(\mathrm{b}_{\mathrm{prod},3}(t,2-t))) - \frac{1}{\sigma_t}
\right| \leq \epsilon_{\mathrm{rec},2} + \frac{\epsilon_{\mathrm{root}} + \frac{\epsilon_{\mathrm{prod},3}}{\sqrt{\epsilon_{\mathrm{root}}}}}{\epsilon_{\mathrm{rec},2}^2}.
$$
By taking
$$
\epsilon = \epsilon_0^2, ~ \epsilon_{\mathrm{prod},1} = \frac{\mcM^{-\beta}}{8C_5}, ~ \epsilon_{\mathrm{prod},2} = \frac{\mcM^{-\beta}}{2}, ~ \epsilon_{\mathrm{rec},1} = \frac{\mcM^{-\beta}}{32C_5 C_6}, ~ \epsilon_{\mathrm{rec},2} = \frac{\mcM^{-\beta}}{8C_5},
$$
and
$$
\epsilon_0 = \min\left\{
\frac{\mcM^{-\beta}}{32C_5 C_6}, \frac{1}{2}\left(\frac{\mcM^{-\beta}}{32C_5C_6}\right)^3
\right\}, ~ \epsilon_{\mathrm{root}} = \frac{1}{2} \left(\frac{\mcM^{-\beta}}{8C_5}\right)^3, \epsilon_{\mathrm{prod},3} = \epsilon_{\mathrm{root}}^{\frac{3}{2}},
$$
we obtain
$$
\epsilon_{4,i} \lesssim \frac{\mcM^{-\beta}}{C_5}, ~ \epsilon_{(1),i} \lesssim \mcM^{-\beta}.
$$
Subsequently, we can obtain the network parameters of $[\mathbf{b}_{\mathrm{score}}^{(1)}]_i$, $1\leq i \leq d_\mcY$:
$$
L_i = \mathcal{O}(\log^4 \mcM), M_i = \mathcal{O}(\mcM^{d_\mcX+d_\mcY}\log^7 \mcM), J_i = \mathcal{O}(\mcM^{d_\mcX+d_\mcY}\log^9 \mcM), \kappa_i = \exp\left(\mathcal{O}(\log^4 \mcM)\right).
$$
Combining $[\mathbf{b}_{\mathrm{score}}^{(1)}]_i$, $1\leq i \leq d_\mcY$, and by Lemma \ref{lem: parallelization}, we construct the ReLU neural network 
$$
\mathbf{b}_{\mathrm{score}}^{(1)}:=\left[ [\mathbf{b}_{\mathrm{score}}^{(1)}]_1, [\mathbf{b}_{\mathrm{score}}^{(1)}]_2, \cdots, [\mathbf{b}_{\mathrm{score}}^{(1)}]_{d_\mcY} \right]^{\top},
$$
with network parameters
$$
L = \mathcal{O}(\log^4 \mcM), M = \mathcal{O}(\mcM^{d_\mcX+d_\mcY}\log^7 \mcM), J = \mathcal{O}(\mcM^{d_\mcX+d_\mcY}\log^9 \mcM), \kappa = \exp\left(\mathcal{O}(\log^4 \mcM)\right),
$$
and it satisfies $\Vert\mathbf{b}_{\mathrm{score}}^{(1)}(t,\cdot,\mx)\Vert_{\infty} \lesssim \frac{\sqrt{\log \mcM}}{\sigma_t}$. Thus, the approximation error $(A)$ is bounded by
\begin{equation}\label{eq: appendix_boundA}
(A) \lesssim d_\mcY\mcM^{-2\beta} \lesssim \mcM^{-2\beta}.
\end{equation}

Next, we bound the term $(B)$. Recall that $\Vert\my\Vert_{\infty} \leq m_t + C\sigma_t\sqrt{(2\beta + 1)\log \mcM} \leq  C_0=\mcO(\sqrt{\log \mcM})$ and $p_t(\my|\mx) \geq \mcM^{-(2\beta + 1)}$ hold. In this case, we have $\Vert \nabla\log p_t(\my|\mx)\Vert \leq \frac{C_4\sqrt{\log \mcM}}{\sigma_t}.$ We first consider the case $\my\in [-m_t,m_t]^{d_\mcY}$. For $1 \leq i \leq d_\mcY$, according to the property of clipping function, we have
\begin{equation*}
\begin{aligned}
& ~~~~ \left|
[\mathbf{h}^{\prime}(t,\my,\mx)]_i - [\sigma_t \nabla\log p_t(\my|\mx)]_i
\right| \\
& \leq \left|\frac{[\mathbf{h}_1(t,\my,\mx)]_i}{g_1(t,\my,\mx) \vee \mcM^{-(2\beta+1)}} - \sigma_t[\nabla\log p_t(\my|\mx)]_i
\right| \\
& = \left|\frac{[\mathbf{h}_1(t,\my,\mx)]_i - g_1(t,\my,\mx)\vee \mcM^{-(2\beta+1)}\cdot\sigma_t[\nabla \log p_t(\my|\mx)]_i }{g_1(t,\my,\mx) \vee \mcM^{-(2\beta+1)}}\right| \\
& \leq \left| \frac{\sigma_t[\nabla \log p_t(\my|\mx)]_i\left(p_t(\my|\mx) - g_1(t,\my,\mx)\vee \mcM^{-(2\beta+1)}\right)}{g_1(t,\my,\mx)\vee \mcM^{-(2\beta+1)}} \right| \\
& ~~~~~~ + \left|\frac{ [\mathbf{h}_1(t,\my,\mx)]_i - [\sigma_t\nabla p_t(\my|\mx)]_i}
{g_1(t,\my,\mx) \vee \mcM^{-(2\beta+1)}}
\right|.
\end{aligned}
\end{equation*}
When $\my\in [-m_t,m_t]^{d_\mcY}$, by Lemma \ref{lem: bound_for_density}, we have $p_t(\my|\mx) \gtrsim 1$. Then, for sufficiently large $\mcM$,
$$
\begin{aligned}
|g_1(t,\my,\mx)\vee \mcM^{-(2\beta+1)}| &\geq |p_t(\my|\mx)| - |p_t(\my|\mx)-g_1(t,\my,\mx)\vee \mcM^{-(2\beta+1)}| \\
& \gtrsim 1 - |p_t(\my|\mx) - g_1(t,\my,\mx)| \\
& \gtrsim 1 - \mcM^{-\beta} \\
& \gtrsim 1.
\end{aligned}
$$ 
Therefore, we obtain
$$
\begin{aligned}
&~~~~~\left|
\frac{[\mathbf{h}^{\prime}(t,\my,\mx)]_i}{\sigma_t} - [\nabla\log p_t(\my|\mx)]_i
\right| \\
&\lesssim \frac{\sqrt{\log \mcM}}{\sigma_t} \bigg(
\left|p_t(\my|\mx) - g_1(t,\my,\mx) \vee \mcM^{-(2\beta + 1)}\right|
+  \left|[\mathbf{h}_1(t,\my,\mx)]_i - \sigma_t[\nabla p_t(\my|\mx)]_i
\right| \bigg)
\end{aligned}
$$
which implies that
\begin{equation} \label{eq: appendix_eq2}
\begin{aligned}
&~~~~\left\Vert 
\frac{\mathbf{h}^{\prime}(t,\my,\mx)}{\sigma_t} - \nabla\log p_t(\my|\mx)
\right\Vert \\ 
&\lesssim \frac{\sqrt{\log \mcM}}{\sigma_t} \left(
| p_t(\my|\mx) - g_1(t,\my,\mx)| + \Vert \mathbf{h}_1(t,\my,\mx) - \sigma_t \nabla p_t(\my|\mx)\Vert
\right).
\end{aligned}
\end{equation}
Here,  we use $p_t(\my|\mx) \geq \mcM^{-(2\beta + 1)}$ and $|p_t(\my|\mx) \vee \mcM^{-(2\beta + 1)} - g_1(t,\my,\mx) \vee \mcM^{-(2\beta + 1)}| \leq |p_t(\my|\mx) - g_1(t,\my,\mx)|$. When $m_t < \Vert\my\Vert_{\infty} \leq m_t + C\sigma_t\sqrt{(2\beta + 1)\log \mcM}$, in the same way, we have
\begin{equation} \label{eq: appendix_eq3}
\begin{aligned}
&~~~~~\left\Vert
\frac{\mathbf{h}^{\prime}(t,\my,\mx)}{\sigma_t} - \nabla\log p_t(\my|\mx)
\right\Vert \\ 
&\lesssim \frac{\mcM^{2\beta + 1}\sqrt{\log \mcM}}{\sigma_t} \left(
|p_t(\my|\mx) - g_1(t,\my,\mx)| + \Vert \mathbf{h}_1(t,\my,\mx) - \sigma_t \nabla p_t(\my|\mx)\Vert
\right).
\end{aligned}
\end{equation}

Now, we bound the approximation error $(B)$. We first consider the case when $\Vert \my\Vert_{\infty} \leq m_t$, then according to \eqref{eq: appendix_eq2} and Jensen's inequality, we have
$$
\begin{aligned}
& ~~~ \int_{\Vert\my\Vert_{\infty} \leq m_t} p_t(\my|\mx) \mathbf{I}_{\{p_t(\my|\mx) > \mcM^{-(2\beta + 1)}\}} \left\Vert
\frac{\mathbf{h}^{\prime}(t,\my,\mx)}{\sigma_t} - \nabla\log p_t(\my|\mx)
\right\Vert^2 \mrd \my \\
& \lesssim \int_{\Vert\my\Vert_{\infty} \leq m_t} \frac{\log \mcM}{\sigma_t^2} \left(
| p_t(\my|\mx) - g_1(t,\my,\mx)|^2 + \Vert \mathbf{h}_1(t,\my,\mx) - \sigma_t \nabla p_t(\my|\mx)\Vert^2
\right) \mrd\my \\
& \lesssim \frac{\log \mcM}{\sigma_t^2} \int_{\Vert \my \Vert_{\infty} \leq m_t} \Bigg(\frac{1}{m_t^{2d_\mcY}}\left|
\int_{\Rbb^{d_\mcY}}\frac{m_t^{d_\mcY}}{\sigma_t^{d_\mcY}(2\pi)^{d_\mcY/2}} [p_0(\my_1|\mx)-p_{\mcM}(\my_1,\mx)]\exp\left(-\frac{\Vert\my-m_t\my_1\Vert^2}{2\sigma_t^2}\right)\mrd\my_1 \right|^2 \\
& ~~~~~~~~~ + \frac{1}{m_t^{2d_\mcY}}\left\Vert \int_{\Rbb^{d_\mcY}}\frac{(\my-m_t\my_1)m_t^{d_\mcY}}{\sigma_t^{d_\mcY+1}(2\pi)^{d_\mcY/2}} [p_0(\my_1|\mx)-p_{\mcM}(\my_1,\mx)]\exp\left(-\frac{\Vert\my-m_t\my_1\Vert^2}{2\sigma_t^2}\right)\mrd\my_1 \right\Vert^2 \Bigg) \mrd\my \\
& \lesssim \frac{\log \mcM}{\sigma_t^2} \int_{\Vert \my \Vert_{\infty} \leq m_t} \frac{1}{m_t^{d_\mcY}}\Bigg(
\int_{\Rbb^{d_\mcY}}\frac{1}{\sigma_t^{d_\mcY}(2\pi)^{d_\mcY/2}} |p_0(\my_1|\mx)-p_{\mcM}(\my_1,\mx)|^2\exp\left(-\frac{\Vert\my-m_t\my_1\Vert^2}{2\sigma_t^2}\right)\mrd\my_1  \\
& ~~~~~~~~~ +  \int_{\Rbb^{d_\mcY}}\frac{\Vert\my-m_t\my_1\Vert^2 }{\sigma_t^{d_\mcY+2}(2\pi)^{d_\mcY/2}} |p_0(\my_1|\mx)-p_{\mcM}(\my_1,\mx)|^2\exp\left(-\frac{\Vert\my-m_t\my_1\Vert^2}{2\sigma_t^2}\right)\mrd\my_1  \Bigg) \mrd\my\\
& \lesssim \frac{\log \mcM}{\sigma_t^2} \cdot \frac{1}{m_t^{d_\mcY}}
\int_{\Vert \my \Vert_{\infty} \leq m_t} \mcM ^{-2\beta} \mrd\my \lesssim \frac{\mcM^{-2\beta}\log \mcM}{\sigma_t^2}.
\end{aligned}
$$

We then consider the case of $m_t < \Vert\my\Vert_{\infty} \leq m_t + C\sigma_t\sqrt{(2\beta + 1)\log \mcM} = m_t+\sigma_t\cdot\mcO(\sqrt{\log\mcM})$. Then, according to \eqref{eq: appendix_eq3}, we have
$$
\begin{aligned}
& ~~~ \int_{m_t < \Vert\my\Vert_{\infty} \leq m_t + \sigma_t\cdot\mcO(\sqrt{\log\mcM})} p_t(\my|\mx) \mathbf{I}_{\{p_t(\my|\mx) \geq \mcM^{-(2\beta + 1)}\}} \left\Vert
\frac{\mathbf{h}^{\prime}(t,\my,\mx)}{\sigma_t} - \nabla\log p_t(\my|\mx)
\right\Vert^2 \mrd\my \\
& \lesssim \int_{m_t < \Vert\my\Vert_{\infty} \leq m_t + \sigma_t\cdot\mcO(\sqrt{\log\mcM})} \frac{\mcM^{4\beta + 2}\log \mcM}{\sigma_t^2} \left(
| p_t(\my|\mx) - g_1(t,\my,\mx)|^2 + \Vert \mathbf{h}_1(t,\my,\mx) - \sigma_t \nabla p_t(\my|\mx)\Vert^2
\right) \mrd \my \\
& \lesssim \frac{\mcM^{4\beta + 2}\log \mcM}{\sigma_t^2} \int_{m_t < \Vert \my \Vert_{\infty} \leq m_t + \sigma_t\cdot\mcO(\sqrt{\log\mcM})} \Bigg(\left|
\int_{\Rbb^{d_\mcY}}\frac{[p_0(\my_1|\mx)-p_{\mcM}(\my_1,\mx)]}{\sigma_t^{d_\mcY}(2\pi)^{d_\mcY/2}} \exp\left(-\frac{\Vert\my-m_t\my_1\Vert^2}{2\sigma_t^2}\right)\mrd\my_1 \right|^2 \\
& ~~~~~~~~~ + \left\Vert \int_{\Rbb^{d_\mcY}}\frac{\my-m_t\my_1}{\sigma_t^{d_\mcY+1}(2\pi)^{d_\mcY/2}} [p_0(\my_1|\mx)-p_{\mcM}(\my_1,\mx)]\exp\left(-\frac{\Vert\my-m_t\my_1\Vert^2}{2\sigma_t^2}\right)\mrd \my_1 \right\Vert^2 \Bigg) \mrd \my  \\
& \lesssim \frac{\mcM^{4\beta+2}\log\mcM}{\sigma_t^2 m_t^{d_\mcY}}\int_{m_t < \Vert \my \Vert_{\infty} \leq m_t + \sigma_t\cdot\mcO(\sqrt{\log\mcM})} \Bigg(
\int_{\Rbb^{d_\mcY}}\frac{|p_0(\my_1|\mx)-p_{\mcM}(\my_1,\mx)|^2}{\sigma_t^{d_\mcY}(2\pi)^{d_\mcY/2}} \exp\left(-\frac{\Vert\my-m_t\my_1\Vert^2}{2\sigma_t^2}\right)\mrd\my_1  \\
& ~~~~~~~~~ +  \int_{\Rbb^{d_\mcY}}\frac{\Vert\my-m_t\my_1\Vert^2 }{\sigma_t^{d_\mcY+2}(2\pi)^{d_\mcY/2}} |p_0(\my_1|\mx)-p_{\mcM}(\my_1,\mx)|^2\exp\left(-\frac{\Vert\my-m_t\my_1\Vert^2}{2\sigma_t^2}\right)\mrd\my_1  \Bigg) \mrd\my.
\end{aligned}
$$
Taking $\epsilon = \mcM^{-(6\beta + 2)}T^{d_\mcY} = \mcM^{-(6\beta + 2 + C_T d_\mcY)}$ in Lemma \ref{lem: integral_clipping} and replacing $p_0(\my_1|\mx)$ with $|p_0(\my_1|\mx) - p_\mcM(\my_1,\mx)|^2$, we have
$$
\begin{aligned}
& ~~~~~~\int_{\Rbb^{d_\mcY}}\frac{|p_0(\my_1|\mx)-p_{\mcM}(\my_1,\mx)|^2}{\sigma_t^{d_\mcY}(2\pi)^{d_\mcY/2}}\exp\left(-\frac{\Vert\my-m_t\my_1\Vert^2}{2\sigma_t^2}\right)\mrd \my_1  \\
& \lesssim 
\int_{\Vert\my - m_t\my_1\Vert_{\infty} \leq \sigma_t\cdot\mcO(\sqrt{\log \mcM})}\frac{|p_0(\my_1|\mx) - p_\mcM(\my_1,\mx)|^2}{\sigma_t^{d_\mcY}(2\pi)^{d_\mcY/2}} \exp\left(-\frac{\Vert\my-m_t\my_1\Vert^2}{2\sigma_t^2}\right)\mrd\my_1 \\
&~~~~~~ + \mcM^{-(6\beta + 2)}T^{d_\mcY},
\end{aligned}
$$
and
$$
\begin{aligned}
&~~~~~ \int_{\Rbb^{d_\mcY}}\frac{\Vert\my-m_t\my_1\Vert^2 }{\sigma_t^{d_\mcY+2}(2\pi)^{d_\mcY/2}} |p_0(\my_1|\mx)-p_{\mcM}(\my_1,\mx)|^2\exp\left(-\frac{\Vert\my-m_t\my_1\Vert^2}{2\sigma_t^2}\right)\mrd\my_1 \\
&\lesssim 
\log\mcM \cdot \int_{\Vert\my - m_t\my_1\Vert_{\infty} \leq \sigma_t\cdot\mcO(\sqrt{\log \mcM})}\frac{|p_0(\my_1|\mx) - p_\mcM(\my_1,\mx)|^2}{\sigma_t^{d_\mcY}(2\pi)^{d_\mcY/2}} \exp\left(-\frac{\Vert\my-m_t\my_1\Vert^2}{2\sigma_t^2}\right)\mrd\my_1 \\
& ~~~~~~  + \mcM^{-(6\beta + 2)}T^{d_\mcY}.
\end{aligned}
$$ 
For $t\in [\mcM^{-C_T}, 8\mcM^{-\widetilde{C}_T}]$, we have $m_t + \sigma_t\cdot\mcO(\sqrt{\log\mcM}) = \mcO(1)$ for sufficiently large $\mcM$. When $m_t < \Vert\my\Vert_{\infty} \leq m_t + \sigma_t\cdot\mcO(\sqrt{\log\mcM}) = \mathcal{O}(1)$ and $\Vert\my-m_t\my_1\Vert_{\infty} \leq \sigma_t\cdot\mcO(\sqrt{\log \mcM})$, then $1-\mathcal{O}(1)\sigma_t\sqrt{\log \mcM} \leq \Vert\my_1\Vert_{\infty} \leq 1$. Since $t\in [\mcM^{-C_T}, 8\mcM^{-\widetilde{C}_T}]$, $\mathcal{O}(1)\sigma_t\sqrt{\log \mcM} \leq a$ holds for sufficiently large $\mcM$.
Therefore, by Lemma \ref{lem: approximate_pdata2}, it holds that
$$
\begin{aligned}
& ~~~ \int_{m_t < \Vert \my \Vert_{\infty} \leq m_t + \sigma_t\cdot\mcO(\sqrt{\log\mcM})}
\int_{\Vert\my - m_t\my_1\Vert_{\infty} \leq \sigma_t\cdot\mcO(\sqrt{\log \mcM})}\frac{|p_0(\my_1|\mx)-p_{\mcM}(\my_1,\mx)|^2}{\sigma_t^{d_\mcY}(2\pi)^{d_\mcY/2}} \exp\left(-\frac{\Vert\my-m_t\my_1\Vert^2}{2\sigma_t^2}\right)\mrd \my_1\mrd\my \\
& \lesssim \int_{m_t < \Vert \my \Vert_{\infty} \leq m_t + \sigma_t\cdot\mcO(\sqrt{\log \mcM})}
\int_{1-a < \Vert \my_1\Vert_{\infty} \leq 1}\frac{|p_0(\my_1|\mx)-p_{\mcM}(\my_1,\mx)|^2}{\sigma_t^{d_\mcY}(2\pi)^{d_\mcY/2}}\exp\left(-\frac{\Vert\my-m_t\my_1\Vert^2}{2\sigma_t^2}\right)\mrd \my_1\mrd\my \\
& \lesssim \int_{1-a<\Vert\my_1\Vert_\infty \leq 1}|p_0(\my_1|\mx) - p_{\mcM}(\my_1,\mx)|^2 \mrd \my_1 \lesssim \mcM^{-(6\beta + 4 + C_T d_\mcY)} = \mcM^{-(6\beta + 4)}T^{d_\mcY}.
\end{aligned}
$$
Thus, we have
\begin{equation}\label{eq: appendix_boundB}
\begin{aligned}
& ~~~ \int_{m_t < \Vert\my_1\Vert_{\infty} \leq m_t + \sigma_t\cdot\mcO(\sqrt{\log\mcM})} p_t(\my|\mx) \mathbf{I}_{\{p_t(\my|\mx) \geq \mcM^{-(2\beta + 1)}\}} \left\Vert
\frac{\mathbf{h}^{\prime}(t,\my,\mx)}{\sigma_t} - \nabla\log p_t(\my|\mx)
\right\Vert^2 \mrd \my \\
& \lesssim  \frac{\mcM^{4\beta + 2}\log^2 \mcM}{\sigma_t^2 m_t^{d_\mcY}} \cdot \mcM^{-(6\beta + 4)} T ^{d_\mcY} + \frac{\mcM^{4\beta + 2}\log \mcM}{\sigma_t^2 m_t^{d_\mcY}} \cdot \mcM^{-(6\beta + 2)} T^{d_\mcY} \\
& \lesssim \frac{\mcM^{-2\beta}\log \mcM}{\sigma_t^2},
\end{aligned}
\end{equation}
where we used $m_t \geq T$.
Combining \eqref{eq: appendix_boundA} and \eqref{eq: appendix_boundB}, we finally obtain (III) $\lesssim \frac{\mcM^{-2\beta}\log \mcM}{\sigma_t^2}$.
The proof is complete.
\end{proof}

Next, we approximate $\nabla\log p_t(\my|\mx)$ via ReLU neural networks on $[6\mcM^{-\widetilde{C}_T}, 1]$. Let $t_0 = 2\mcM^{-\widetilde{C}_T}$, by the Markov property, we have
$$
p_t(\my|\mx) = \frac{1}{(2\pi)^{d_\mcY/2}\sigma_{t,t_0}^{d_\mcY}}\int_{\Rbb^{d_\mcY}} p_{t_0}(\my|\mx) \exp\left(-\frac{\Vert\my - m_{t,t_0}\my_1\Vert^2}{2\sigma_{t,t_0}^2}\right) \mrd \my_1,
$$
where 
$$
m_{t,t_0}:= \frac{1-t}{1-t_0}, ~  \sigma_{t,t_0}:= \sqrt{1 - \frac{(1-t)^2}{(1-t_0)^2}}.
$$
Since $t_0 = 2\mcM^{-\widetilde{C}_T}$, for sufficiently large $\mcM$, we know that $1/2 \leq 1 - t_0 < 1$, therefore, we have
$$
2m_t \geq m_{t,t_0} \geq m_t
$$ 
and 
$$
\sigma_t^2 \geq \sigma_{t,t_0}^2 = \sigma_t^2 - \frac{(1-t)^2}{(1-t_0)^2}\sigma_{t_0}^2 \geq \sigma_t^2 - \sigma_{t_0}^2 \geq \frac{1}{3}\sigma_t^2,
$$
where we used
$
\frac{\sigma_{t_0}^2}{\sigma_t^2} = \frac{t_0(2-t_0)}{t(2-t)} \leq \frac{2t_0}{t} \leq \frac{4\mcM^{-\widetilde{C}_T}}{6\mcM^{-\widetilde{C}_T}} = \frac{2}{3}.
$

We first approximate $p_{t_0}$ and give the following lemma.

\begin{lemma}\label{lem: approximate_pt0}
Let $\mcM \gg 1, C_T > 0$,  $\overline{C}_T = 1 - \widetilde{C}_T / 2 > 0$. There exists a constant $C_7 > 0$ and a function $p_{\mcM,t_0}$ such that
$$
\int_{\Rbb^{d_\mcY}} \left|p_{\mcM,t_0}(\my,\mx) - p_{t_0}(\my|\mx)  \right|^2 \mrd\my \lesssim \mathcal{M}^{-(6\beta + 2)}T^{d_\mcY},
$$
where $p_{\mcM,t_0}(\my,\mx)$ has the following form:
$$
\begin{aligned}
p_{\mcM,t_0}(\my,\mx) &= t_0^{-\frac{k_0}{2}} \sum_{\mathbf{m}\in[\mcM]^{d_\mcY},\mathbf{n}\in[\mcM]^{d_\mcX}}\sum_{\Vert\boldsymbol{\alpha}\Vert_1 + \Vert\boldsymbol{\gamma}\Vert_1 < k_0}\widetilde{c}_{\mathbf{m},\mathbf{n},\boldsymbol{\alpha},\boldsymbol{\gamma}} \\
& ~~~~~~~~~~~~~\cdot p_{\mathbf{m},\mathbf{n},\boldsymbol{\alpha},\boldsymbol{\gamma}}\left(\frac{\my}{2(1+C_7\sqrt{\log \mcM})} 
 + \frac{1}{2}, \frac{\mx+1}{2} \right) \mathbf{I}_{\{\Vert \my \Vert_\infty \leq 1 + C_7\sqrt{\log \mcM}\}},
 \end{aligned}
$$
with $k_0 = \lfloor\frac{3\beta+2 + C_T d_\mcY/2}{\overline{C}_T}\rfloor + 1$ and $\widetilde{c}_{\mathbf{m},\mathbf{n},\boldsymbol{\alpha},\boldsymbol{\gamma}} = \mcO(\log^{\frac{k_0}{2}} \mcM)$.
\end{lemma}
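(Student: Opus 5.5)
The plan follows the template of Oko et al.\ (their Lemma on approximating $p_{t_0}$). The key observation is that at time $t_0 = 2\mcM^{-\widetilde{C}_T}$ the diffused density $p_{t_0}(\my|\mx)=\int p_0(\my_1|\mx)\,\phi_{\sigma_{t_0}}(\my-m_{t_0}\my_1)\,\mrd\my_1$ is a Gaussian convolution. It is therefore $C^\infty$ in $\my$, with derivatives controlled by powers of $\sigma_{t_0}^{-1}\asymp t_0^{-1/2}$, and in $\mx$ it inherits the bounded derivatives of Assumption \ref{ass: bounded_derivative}. Hence it can be approximated by local polynomials of a high degree $k_0$, with the constant paid per degree being $t_0^{-1/2}$ rather than a Hölder norm. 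This explains the prefactor $t_0^{-k_0/2}$ in the stated form.

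\textbf{Step 1: truncation of the domain.} By the tail bound of Lemma \ref{lem: integral_clipping} (or Lemma \ref{lem: bound_for_clipping}), with $\epsilon=\mcM^{-(6\beta+2)}T^{d_\mcY}$, the mass and the squared $L^2$ norm of $p_{t_0}(\cdot|\mx)$ outside $\{\Vert\my\Vert_\infty\le 1+C_7\sqrt{\log\mcM}\}$ are $\lesssim \mcM^{-(6\beta+2)}T^{d_\mcY}$. Here $C_7$ is chosen using $m_{t_0}\le 1$, $\sigma_{t_0}\le 1$, and the fact that $\log\epsilon^{-1}\asymp\log\mcM$. Indeed, for such $\my$ we have $p_{t_0}\lesssim \sigma_{t_0}^{-d_\mcY}\exp(-C_7^2\log\mcM/2)$, so its square, integrated over the exterior region, decays faster than any fixed polynomial in $\mcM$.

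\textbf{Step 2: derivative bounds.} On the box $[-(1+C_7\sqrt{\log\mcM}),\,1+C_7\sqrt{\log\mcM}]^{d_\mcY}$, differentiate under the integral. Derivatives in $\my$ of order $k$ hit the Gaussian and yield Hermite factors, giving $|\partial_\my^{\boldsymbol\alpha}\partial_\mx^{\boldsymbol\gamma}p_{t_0}|\lesssim \sigma_{t_0}^{-\Vert\boldsymbol\alpha\Vert_1}$, up to constants depending on $k_0$. Derivatives in $\mx$ fall on $p_0$ and are bounded by $C_\mcX$ (Assumption \ref{ass: bounded_derivative}).

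\textbf{Step 3: local Taylor approximation.} Rescale the box to $[0,1]^{d_\mcY}$ via $\my\mapsto \my/(2(1+C_7\sqrt{\log\mcM}))+1/2$, and $\mx\mapsto(\mx+1)/2$. Then repeat the construction of Lemma \ref{lem: approximate_pdata1} with degree $k_0$ on the grid of mesh $1/\mcM$. Each rescaled $\my$-derivative picks up a factor $(1+C_7\sqrt{\log\mcM})$, so the Taylor coefficients are $\widetilde c\cdot t_0^{-k_0/2}$ with $\widetilde c=\mcO(\log^{k_0/2}\mcM)$. Writing all coefficients with the common factor $t_0^{-k_0/2}$ is harmless, since $t_0<1$ and the lower-order ones are only bounded more generously. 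The remainder is pointwise $\lesssim t_0^{-k_0/2}\log^{k_0/2}\mcM\cdot\mcM^{-k_0}$.

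\textbf{Step 4: balancing the error (main obstacle).} This is the bookkeeping that has to be checked. Since $t_0^{-1/2}\asymp \mcM^{\widetilde C_T/2}$, the pointwise error is $\mcM^{-k_0(1-\widetilde C_T/2)}$ up to logs, that is $\mcM^{-k_0\overline C_T}$. With $k_0\overline C_T>3\beta+2+C_Td_\mcY/2$, squaring and integrating over a box of volume $\mcO(\log^{d_\mcY/2}\mcM)$ gives $\lesssim \mcM^{-(6\beta+4+C_Td_\mcY)}\cdot\mathrm{polylog}$. Because the exponent inequality is strict, the spare power $\mcM^{-2}$ absorbs the logarithms, and the bound is $\lesssim\mcM^{-(6\beta+2)}T^{d_\mcY}$. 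Adding Step 1 completes the proof. The delicate point is making Step 2 uniform, i.e.\ ensuring the Hermite-type constants depend only on $k_0=\mcO(1)$ and not on $\mcM$; this is where I would be careful, following Oko et al.'s argument.
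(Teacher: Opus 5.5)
Your proposal is correct and follows essentially the same route as the paper. The paper's steps are: truncate $p_{t_0}^2$ outside $\{\Vert\my\Vert_\infty\le 1+C_7\sqrt{\log\mcM}\}$; use Lemma \ref{lem: derivatives_boundness} to bound the mixed derivatives by $\sigma_{t_0}^{-k_0}\lesssim t_0^{-k_0/2}$, so that $t_0^{k_0/2}p_{t_0}$ lies in $\mcH^{k_0}$; rescale the box at a cost of $\log^{k_0/2}\mcM$; apply Lemma \ref{lem: approximate_pdata1}; and balance $\mcM^{-k_0\overline{C}_T}$ against $\mcM^{-(3\beta+2)}T^{d_\mcY/2}$, with the spare $\mcM^{-2}$ absorbing the logarithms. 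One small correction: Lemma \ref{lem: integral_clipping} truncates the $\my_1$-integral rather than the $\my$-tail, so the right tool for Step 1 is Lemma \ref{lem: bound_for_clipping} combined with $p_{t_0}\lesssim 1$ (as the paper does), and the uniformity you worry about in Step 2 is exactly what Lemma \ref{lem: derivatives_boundness} provides, with constants depending only on $k_0$.
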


\begin{proof}
Let $k_0 \geq 1$, $0\leq \ell \leq k_0$. According to Lemma \ref{lem: derivatives_boundness}, for any $\my\in\Rbb^{d_\mcY}$,$\mx\in[-1,1]^{d_\mcX}$, we have
$$
\Vert \partial_{y_{i_1}\cdots y_{i_{k_0-\ell}}} \partial_{x_{j_1},\cdots x_{j_\ell}} p_{t_0}(\my|\mx) \Vert \lesssim \frac{1}{\sigma_{t_0}^{k_0-\ell}} \lesssim \frac{1}{\sigma_{t_0}^{k_0}}\lesssim t_0^{-\frac{k_0}{2}}.
$$
Thus $t_0^{\frac{k_0}{2}}p_{t_0}(\my|\mx)\in\mcH^{k_0}(\Rbb^{d_\mcY}\times[-1,1]^{d_\mcX}, C_{k_0})$ with some constant $C_{k_0} > 0$. By replacing $p_{t_0}$ with $p_{t_0}^2$ in Lemma \ref{lem: bound_for_clipping}, we can claim that there exists a constant $C_7$ such that 
$$
\int_{\Vert\my\Vert_\infty > 1 + C_7\sqrt{\log \mcM}}p_{t_0}^2(\my|\mx) \mrd \my 
 \leq \int_{\Vert\my\Vert_\infty > m_{t_0} + C_7\sigma_{t_0}\sqrt{\log \mcM}}p_{t_0}^2(\my|\mx) \mrd \my \lesssim \mcM^{-(6\beta + 2)}T^{d_\mcY}.
$$
Similar to the proof of Lemma \ref{lem: approximate_pdata1}, we let $$
f(\my,\mx) = t_0^{\frac{k_0}{2}}p_{t_0}\left((1 + C_7\sqrt{\log \mcM})(2\my -1)\mid 2\mx-1 \right), ~ \my \in [0,1]^{d_\mcY}, \mx\in[0,1]^{d_\mcX}.
$$ 
Then, $\Vert f \Vert_{\mathcal{H}([0,1]^{d_\mcY}\times[0,1]^{d_\mcX})} \lesssim 2^{k_0}(1 + C_7\sqrt{\log \mcM})^{k_0} = \mcO(\log^{\frac{k_0}{2}} \mcM)$. Thus, there exists a function $p_{\mcM,0}$ such that
$$
\left| p_{\mcM,0}(\my,\mx) - t_0^{\frac{k_0}{2}}p_{t_0}(\my|\mx) \right| \lesssim \mcM^{-k_0}\log^{\frac{k_0}{2}}\mcM, ~ \Vert\my\Vert_\infty \leq 1 + C_7\sqrt{\log \mcM}, ~\mx\in[-1,1]^{d_{\mcX}}
$$
where 
$$
\begin{aligned}
p_{\mcM,0}(\my,\mx) &=  \sum_{\mathbf{m}\in[\mcM]^{d_\mcY},\mathbf{n}\in[\mcM]^{d_\mcX}}\sum_{\Vert\boldsymbol{\alpha}\Vert_1 + \Vert\boldsymbol{\gamma}\Vert_1 < k_0}\widetilde{c}_{\mathbf{m},\mathbf{n},\boldsymbol{\alpha},\boldsymbol{\gamma}} \\
& ~~~~~~~~~~~~~\cdot p_{\mathbf{m},\mathbf{n},\boldsymbol{\alpha},\boldsymbol{\gamma}}\left(\frac{\my}{2(1+C_7\sqrt{\log \mcM})} 
 + \frac{1}{2}, \frac{\mx+1}{2} \right) \mathbf{I}_{\{\Vert \my \Vert_\infty \leq 1 + C_7\sqrt{\log \mcM}\}}
 \end{aligned}
$$
with $\widetilde{c}_{\mathbf{m},\mathbf{n},\boldsymbol{\alpha},\boldsymbol{\gamma}} = \mcO(\log^{\frac{k_0}{2}} \mcM)$.

Next, we define $p_{\mcM,t_0}(\my,\mx) := t_0^{-\frac{k_0}{2}}p_{\mcM,0}(\my,\mx)$, then we have
$$
\begin{aligned}
\left|p_{\mcM,t_0}(\my,\mx) - p_{t_0}(\my|\mx)\right| &\lesssim\mcM^{-k_0}t_0^{-\frac{k_0}{2}}\log^{\frac{k_0}{2}}\mcM \\
&\lesssim \mcM^{-k_0(1 - \widetilde{C}_T/2)}\log^{\frac{k_0}{2}}\mcM \\
&= \mcM^{-k_0\overline{C}_T}\log^{\frac{k_0}{2}}\mcM.
\end{aligned}
$$
Let $k_0 = \lfloor\frac{3\beta+2 + C_T d_\mcY/2}{\overline{C}_T}\rfloor + 1 \geq \frac{3\beta+2 + C_T d_\mcY/2}{\overline{C}_T}$, then we obtain 
$$
\begin{aligned}
\int_{\Vert\my\Vert_\infty \leq 1+C_7\sqrt{\log \mcM}}\left|p_{\mcM,t_0}(\my,\mx) - p_{t_0}(\my|\mx)\right|^2 \mrd \my  &\lesssim {\mcM}^{-(6\beta+4)}T^{d_\mcY}\log^{k_0+d_\mcY/2} \mcM \\
&\lesssim \mcM^{-(6\beta + 2)}T^{d_\mcY}
\end{aligned}
$$
for sufficient large $\mcM$. Subsequently, $p_{\mcM,t_0}$ satisfies that
$$
\begin{aligned}
&~~~~~\int_{\Rbb^{d_\mcY}} \left|p_{\mcM,t_0}(\my,\mx) -  p_{t_0}(\my|\mx)  \right|^2 \mrd\my\\
& = \int_{\Vert\my\Vert_\infty \leq 1+C_7\sqrt{\log \mcM}}\left|p_{\mcM,t_0}(\my,\mx) - p_{t_0}(\my|\mx)\right|^2 \mrd\my + \int_{\Vert\my\Vert_\infty > 1 + C_7\sqrt{\log \mcM}}p_{t_0}^2(\my|\mx) \mrd\my \\
&\lesssim {\mcM}^{-(6\beta + 2)}T^{d_\mcY}.
\end{aligned}
$$
The proof is complete.
\end{proof}

\begin{lemma} \label{lem: approximate_interval_2}
Let $\mcM \gg 1$. There exists a ReLU neural network $\mathbf{b}_{\mathrm{score}}^{(2)}\in\mathrm{NN}(L,M,J,\kappa)$ with
$$
L = \mcO(\log^4 \mcM), M = \mcO(\mcM^{d_\mcX + d_\mcY}\log^7 \mcM), J = \mcO({\mcM}^{d_\mcX + d_\mcY}\log^9 \mcM), \kappa = \exp\left(\mcO(\log^4 \mcM)\right)
$$
that satisfies
$$
\int_{\Rbb^{d_\mcY}} p_t(\my|\mx)\Vert \mathbf{b}_{\mathrm{score}}^{(2)}(t,\my,\mx) - \nabla\log p_t(\my|\mx)\Vert^2 \mrd\my \lesssim \frac{\mcM^{-2\beta}\log \mcM}{\sigma_t^2}, ~~ t\in [6{\mcM}^{-\widetilde{C}_T}, 1-\mcM^{-C_T}].
$$
Moreover, we can take $\mathbf{b}_{\mathrm{score}}^{(2)}$ satisfying $\Vert \mathbf{b}_{\mathrm{score}}^{(2)} (t, \cdot,\mx)\Vert_{\infty} \lesssim \frac{\sqrt{\log \mcM}}{\sigma_t}$. 
\end{lemma}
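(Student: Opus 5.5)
We will mirror the proof of Lemma \ref{lem: approximate_interval_1}, replacing the data density $p_0$ by the intermediate density $p_{t_0}$ with $t_0=2\mcM^{-\widetilde{C}_T}$. By the Markov property displayed above, for $t\in[6\mcM^{-\widetilde{C}_T},1-\mcM^{-C_T}]$ we have $p_t(\my|\mx)=\int p_{t_0}(\my_1|\mx)\,\phi_{m_{t,t_0},\sigma_{t,t_0}}(\my-m_{t,t_0}\my_1)\,\mrd\my_1$. Here $m_t\le m_{t,t_0}\le 2m_t$ and $\sigma_t^2/3\le\sigma_{t,t_0}^2\le\sigma_t^2$. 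We substitute the surrogate $p_{\mcM,t_0}$ from Lemma \ref{lem: approximate_pt0} for $p_{t_0}$. It is a local polynomial on the enlarged box $\|\my_1\|_\infty\le 1+C_7\sqrt{\log\mcM}$, with coefficients $t_0^{-k_0/2}\widetilde c=\mcO(\mcM^{\widetilde C_T k_0/2}\log^{k_0/2}\mcM)$ that are only polynomially large. This gives surrogates $g_1,\mh_1$ for $p_t$ and $\sigma_t\nabla p_t$. The pipeline of Section \ref{subsec: c2} then applies verbatim: clip the integral via Lemma \ref{lem: integral_clipping}, Taylor-expand the Gaussian kernel to order $k=\mcO(\log\epsilon^{-1})$, and integrate the polynomial explicitly. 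After rescaling the box, Lemmas \ref{lem: approximate_f}, \ref{lem: approximate_poly1} and \ref{lem: approximate_poly2} produce networks $\mathrm{b}_1,\mb_2$ with error $\epsilon_0$ for $\|\my\|_\infty\le C_0=\mcO(\sqrt{\log\mcM})$. The polynomially large coefficients and the $\sqrt{\log\mcM}$ box only enter through $\log\epsilon_0^{-1}$, so the sizes stay $L=\mcO(\log^4\mcM)$, $M=\mcO(\mcM^{d_\mcX+d_\mcY}\log^7\mcM)$, and so on.

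Next we split the error integral into (I), (II) and (III) exactly as before. The tail region is $\|\my\|_\infty>m_t+C\sigma_t\sqrt{\log\epsilon_1^{-1}}$, the low-density region is $p_t\le\epsilon_2$, and the remaining region is the main one. We take $\epsilon_1=\epsilon_2=\mcM^{-(2\beta+1)}$, and Lemma \ref{lem: bound_for_clipping} and Lemma \ref{lem: derivatives_boundness} bound (I) and (II) by $\mcM^{-(2\beta+1)}\mathrm{polylog}/\sigma_t^2$. This is legitimate because we clip the output of $\mb^{(2)}_{\mathrm{score}}$ at $C_4\sqrt{\log\mcM}/\sigma_t$, which also gives the claimed sup-norm bound. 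On region (III) we build $\mh'=\mathrm{clip}(\mh_1/(g_1\vee\mcM^{-(2\beta+1)}))$ and the network $\mb_4$, $\mb^{(2)}_{\mathrm{score}}=\mb_4\cdot\mathrm{b}_{\mathrm{rec}}(\sigma_t)$, exactly as in Lemma \ref{lem: approximate_interval_1}. With the same parameter choices this makes term (A) at most $\mcM^{-2\beta}$.

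The main obstacle is term (B), which measures how $p_{t_0}-p_{\mcM,t_0}$ propagates into the score error. On region (III), $p_t\ge\mcM^{-(2\beta+1)}$, so the analogue of \eqref{eq: appendix_eq3} gives pointwise error at most $\mcM^{2\beta+1}\sqrt{\log\mcM}\,\sigma_t^{-1}(|p_t-g_1|+\|\mh_1-\sigma_t\nabla p_t\|)$. We square this and apply Jensen/Cauchy--Schwarz with the Gaussian kernel. Integrating over $\my$, the kernel integrates to $m_{t,t_0}^{-d_\mcY}\le T^{-d_\mcY}$, and the factor $\|\my-m_{t,t_0}\my_1\|^2/\sigma_{t,t_0}^2$ contributes only $\log\mcM$ after clipping. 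This yields
$$\text{(B)}\lesssim \frac{\mcM^{4\beta+2}\log^2\mcM}{\sigma_t^2\,T^{d_\mcY}}\int|p_{t_0}(\my_1|\mx)-p_{\mcM,t_0}(\my_1,\mx)|^2\mrd\my_1+\frac{\mcM^{4\beta+2}\log\mcM}{\sigma_t^2}\,\epsilon,$$
where $\epsilon=\mcM^{-(6\beta+2+C_Td_\mcY)}$ is the clipping error. By Lemma \ref{lem: approximate_pt0} the integral is at most $\mcM^{-(6\beta+2)}T^{d_\mcY}$, so the $T^{d_\mcY}$ cancels and (B) is at most $\mcM^{-2\beta}\log^2\mcM/\sigma_t^2$. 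The remaining work is to absorb the extra logarithm, either by tracking that the clipping factor is $\mcO(1)$ on the bulk or by slightly tightening $k_0$. That bookkeeping, together with checking that $\sigma_{t,t_0}\asymp\sigma_t$ preserves all constants, is where I expect the difficulty. This differs from Lemma \ref{lem: approximate_interval_1}: there no boundary smoothness is needed, since the $L^2$ accuracy of Lemma \ref{lem: approximate_pt0} already pays for the $\mcM^{4\beta+2}$ loss uniformly over $\my$. Summing (I), (II), (A) and (B) gives the claim.
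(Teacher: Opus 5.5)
Your proposal follows the paper's proof essentially step for step. It reduces to $p_{t_0}$ via the Markov property, uses the surrogate $p_{\mcM,t_0}$ of Lemma \ref{lem: approximate_pt0}, and keeps the same (I)/(II)/(III) split and network construction. It then bounds (B) through the $\mcM^{2\beta+1}$ pointwise factor, Jensen, and the $L^2$ estimate whose $T^{d_\mcY}$ cancels $m_{t,t_0}^{-d_\mcY}$. The extra $\log\mcM$ you flag as the main difficulty does not actually arise: after Jensen, integrate over all $\my$ by Fubini with no clipping, and $\|\my-m_{t,t_0}\my_1\|^2/\sigma_{t,t_0}^2$ against the Gaussian kernel contributes only the constant $d_\mcY$, which is what the paper does.
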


\begin{proof}
The proof is similar to the proof of Lemma \ref{lem: approximate_interval_1}.
We also decompose the approximation error into three terms.
$$
\begin{aligned}
& \int_{\Rbb^{d_\mcY}} p_t(\my|\mx)\Vert \mathbf{b}_{\mathrm{score}}^{(2)}(t,\my,\mx) - \nabla\log p_t(\my|\mx)\Vert^2 \mrd\my \\
= & \underbrace{ \int_{\Vert\my\Vert_\infty > m_t + C\sigma_t\sqrt{\log\epsilon^{-1}_3}} p_t(\my|\mx) \Vert \mathbf{b}_{\mathrm{score}}^{(2)}(t,\my,\mx) - \nabla\log p_t(\my|\mx)\Vert^2 \mrd\my. }_{(\mathrm{I})} \\
+ & \underbrace{\int_{\Vert \my\Vert_{\infty} \leq m_t + C\sigma_t\sqrt{\log\epsilon^{-1}_3}} p_t(\my|\mx) \mathbf{I}_{\{p_t(\my|\mx) \leq \epsilon_4\}} \Vert \mathbf{b}_{\mathrm{score}}^{(2)}(t,\my,\mx) - \nabla\log p_t(\my|\mx)\Vert^2 \mrd\my }_{(\mathrm{II})} \\
+ & \underbrace{\int_{\Vert \my\Vert_{\infty} \leq m_t + C\sigma_t\sqrt{\log\epsilon^{-1}_3}} p_t(\my|\mx) \mathbf{I}_{\{p_t(\my|\mx) > \epsilon_4\}}\Vert \mathbf{b}_{\mathrm{score}}^{(2)}(t,\my,\mx) - \nabla\log p_t(\my|\mx)\Vert^2 \mrd\my }_{(\mathrm{III})}. 
\end{aligned}
$$
According to Lemma \ref{lem: bound_for_clipping}, there exists a constant $C > 0$ such that for any $\epsilon_3 > 0$, 
\begin{equation}\label{eq: appendix_eq11}
\begin{aligned}
& ~~~~ \underbrace{ \int_{\Vert\my\Vert_\infty > m_t + C\sigma_t\sqrt{\log\epsilon^{-1}_3}} p_t(\my|\mx) \Vert \mathbf{b}_{\mathrm{score}}^{(2)}(t,\my,\mx) - \nabla\log p_t(\my|\mx)\Vert^2 \mrd\my }_{(\mathrm{I})} \\
&\lesssim \frac{\epsilon_3}{\sigma_t} + \sigma_t\epsilon_3\Vert \mathbf{b}_{\mathrm{score}}^{(2)}(t,\cdot,\mx)\Vert_{\infty}^2.
\end{aligned}
\end{equation}
Since $\Vert\nabla\log p_t(\my|\mx)\Vert\lesssim\frac{\sqrt{\log\epsilon^{-1}_3}}{\sigma_t}$ in $\Vert\my\Vert_{\infty} \leq m_t + C\sigma_t\sqrt{\log\epsilon^{-1}_3}$ due to Lemma \ref{lem: derivatives_boundness}, $\mathbf{b}_{\mathrm{score}}^{(2)}$ can be taken so that $\Vert\mathbf{b}_{\mathrm{score}}^{(2)}(t,\cdot,\mx)\Vert_{\infty} \lesssim \frac{\sqrt{\log\epsilon^{-1}_3}}{\sigma_t}$. Therefore \eqref{eq: appendix_eq11} $\lesssim \frac{\epsilon_3\log\epsilon^{-1}_3}{\sigma_t} \lesssim \frac{\epsilon_3\log\epsilon^{-1}_3}{\sigma_t^2}$. Taking $\epsilon_3 = \mcM^{-(2\beta + 1)}$, then \eqref{eq: appendix_eq11} $\lesssim \frac{{\mcM}^{-(2\beta + 1)}\log \mcM}{\sigma_t^2}$, which is smaller than $\frac{{\mcM}^{-2\beta}\log \mcM}{\sigma_t^2}$. Moreover, taking $\epsilon_4 = \mcM^{-(2\beta+1)}$ and by Lemma \ref{lem: bound_for_clipping}, we can bound the error
$$
\begin{aligned}
& ~~~~ \underbrace{ \int_{\Vert\my\Vert_{\infty} \leq m_t + C\sigma_t\sqrt{\log\epsilon^{-1}_3}} p_t(\my|\mx)\mathbf{I}_{\{p_t(\my|\mx) \leq \epsilon_4\}} \Vert 
\mathbf{b}_{\mathrm{score}}^{(2)}(t,\my,\mx) - \nabla\log p_t(\my|\mx)\Vert^2 \mrd \my }_{(\mathrm{II})} \\
& \lesssim \frac{\epsilon_4}{\sigma_t^2} \cdot (\log\epsilon^{-1}_3)^{\frac{d_\mcY+2}{2}} + \Vert \mathbf{b}_{\mathrm{score}}^{(2)}(t,\cdot,\mx)\Vert_{\infty}^2 \epsilon_3 \cdot (\log\epsilon^{-1}_4)^{\frac{d_\mcY}{2}} \\
& \lesssim \frac{{\mcM}^{-(2\beta + 1)}\log^{\frac{d_\mcY+2}{2}}\mcM}{\sigma_t^2},
\end{aligned}
$$
which is smaller than $\frac{\mcM^{-2\beta}\log \mcM}{\sigma_t^2}$ for sufficiently large $\mcM$. Thus, we can only focus on bounding the term (III).  

We first take $C_7 \geq C\sqrt{2\beta + 1}$ in Lemma \ref{lem: approximate_pt0} and $C_0 = 1 + C_7\sqrt{\log \mcM} = \mcO(\sqrt{\log \mcM})$. 
Then, we replace $p_0$, $m_t$, $\sigma_t$ with $p_{t_0}$, $m_{t,t_0}$ and $\sigma_{t,t_0}$, and replace $\frac{y_1 + 1}{2} - \frac{m}{\mcM}$ with
$\frac{y_1}{2(1 + C_7\sqrt{\log \mcM})} + \frac{1}{2} - \frac{m}{\mcM}$ in $f(t,y,m,\alpha,l)$. We also replace $C_{\beta}$, $C_a$ and $c_{\mathbf{m},\mathbf{n},\boldsymbol{\alpha},\boldsymbol{\gamma}}$ with $k_0$, $1 + C_7\sqrt{\log \mcM}$ and $t_0^{-\frac{k_0}{2}}\widetilde{c}_{\mathbf{m},\mathbf{n},\boldsymbol{\alpha},\boldsymbol{\gamma}}$.  
We still use the notations $g_1(t,\my,\mx)$, $\mathbf{h}_1(t,\my,\mx)$ and $\mathbf{h}^{\prime}(t,\my,\mx)$, where
$$
g_1(t,\my,\mx) := \int_{\Rbb^{d_\mcY}}p_{\mcM,t_0}(\my_1,\mx)\mathbf{I}_{\{\Vert\my\Vert_{\infty}\leq 1+C_7\sqrt{\log \mcM}\}}\cdot
\frac{1}{\sigma_{t,t_0}^{d_\mcY}(2\pi)^{d_\mcY/2}}\exp\left(-\frac{\Vert\my - m_{t,t_0}\my_1\Vert^2}{2\sigma_{t,t_0}^2}\right) \mathrm{d}\mathbf{y},
$$
$$
\mathbf{h}_1(t,\my,\mx):= \int_{\Rbb^{d_\mcY}}p_{\mcM,t_0}(\my_1|\mx)\mathbf{I}_{\{\Vert\my\Vert_{\infty}\leq 1+C_7\sqrt{\log \mcM}\}}\cdot
\frac{\my-m_{t,t_0}\my_1}{\sigma_{t,t_0}^{d_\mcY+1}(2\pi)^{d_\mcY/2}}\exp\left(-\frac{\Vert\my -m_{t,t_0} \my_1\Vert^2}{2\sigma_{t,t_0}^2}\right) \mrd\my_1,
$$
$$
\mathbf{h}^{\prime}(t,\my,\mx):= \max\left\{\min\left\{ \frac{\mathbf{h}_1(t,\my,\mx)}{ g_1(t,\my,\mx) \vee {\mcM}^{-(2\beta + 1)}}, \frac{C_4}{2}\sqrt{\log \mcM} \right\}, -\frac{C_4}{2}\sqrt{\log \mcM}\right\}.
$$

Applying the argument of Lemma \ref{lem: approximate_interval_1}, we can construct a ReLU neural network $\mathbf{b}_{\mathrm{score}}^{(2)}$ with network parameters
$$
L = \mcO(\log^4 \mcM), M = \mcO({\mcM}^{d_\mcX + d_\mcY}\log^7 \mcM), J = \mcO({\mcM}^{d_\mcX + d_\mcY}\log^9 \mcM), \kappa = \exp\left(\mcO(\log^4 \mcM)\right)
$$
that satisfies 
$$  
\left\Vert \mathbf{b}_{\mathrm{score}}^{(2)}(t,\my,\mx) - \frac{\mathbf{h}^{\prime}(t,\my,\mx)}{\sigma_{t,t_0}} \right\Vert_\infty \lesssim {\mathcal{M}}^{-\beta}, ~ \Vert\my\Vert_{\infty} \leq 1 + C_7\sqrt{\log \mcM},
$$
and
$$
\Vert \mathbf{b}_{\mathrm{score}}^{(2)}(t, \cdot,\mx) \Vert_{\infty} \lesssim \frac{\sqrt{\log \mcM}}{\sigma_{t,t_0}} \lesssim \frac{\sqrt{\log\mcM}}{\sigma_t}.
$$
Therefore, we have
$$
\begin{aligned}
\int_{\Vert\my\Vert_\infty\leq m_t+C\sigma_t\sqrt{(2\beta+1)\log \mcM}} p_t(\my|\mx)\mathbf{I}_{\{p_t(\my|\mx)\geq {\mcM}^{-(2\beta + 1)}\}} \left\Vert \mathbf{b}_{\mathrm{score}}^{(2)}(t,\my,\mx) - \frac{\mathbf{h}^{\prime}(t,\my,\mx)}{\sigma_{t,t_0}} \right\Vert^2 \mrd\my &\lesssim d_\mcY{\mcM}^{-2\beta} \\ 
&\lesssim {\mcM}^{-2\beta}.
\end{aligned}
$$
Next, we consider bound the term
$$
\int_{\Vert\my\Vert_\infty\leq m_t+C\sigma_t\sqrt{(2\beta+1)\log \mcM}} p_t(\my|\mx)\mathbf{I}_{\{p_t(\my|\mx)\geq {\mcM}^{-(2\beta + 1)}\}} \left\Vert \frac{\mathbf{h}^{\prime}(t,\my,\mx)}{\sigma_{t,t_0}} - \nabla\log p_t(\my|\mx)\right\Vert^2 \mrd \my.
$$
For $1 \leq i \leq d_\mcY$, using $\left|[\nabla\log p_t(\my|\mx)]_i\right| \leq \Vert\nabla\log p_t(\my|\mx)\Vert \leq \frac{C_4\sqrt{\log\mcM}}{\sigma_t}$ and $\frac{\left|[\mathbf{h}^{}\prime(t,\my,\mx)]_i\right|}{\sigma_{t,t_0}} \leq \frac{C_4\sqrt{\log\mcM}}{2\sigma_{t,t_0}}\leq\frac{C_4\sqrt{\log\mcM}}{\sigma_t}$, by the property of clipping function, we have
\begin{equation*}
\begin{aligned}
& ~~~~ \left|
\frac{[\mathbf{h}^{\prime}(t,\my,\mx)]_i}{\sigma_{t,t_0}} - [\nabla\log p_t(\my|\mx)]_i
\right| \\
& \leq \left|\frac{[\mathbf{h}_1(t,\my,\mx)]_i}{\sigma_{t,t_0}(g_1(t,\my,\mx) \vee \mcM^{-(2\beta+1)})} - [\nabla\log p_t(\my|\mx)]_i
\right| \\
& = \left|\frac{[\mathbf{h}_1(t,\my,\mx)]_i - \sigma_{t,t_0}(g_1(t,\my,\mx)\vee \mcM^{-(2\beta+1)})\cdot[\nabla \log p_t(\my|\mx)]_i }{\sigma_{t,t_0}(g_1(t,\my,\mx) \vee \mcM^{-(2\beta+1)})}\right| \\
& \leq \left| \frac{[\nabla \log p_t(\my|\mx)]_i\left(p_t(\my|\mx) - g_1(t,\my,\mx)\vee \mcM^{-(2\beta+1)}\right)}{g_1(t,\my,\mx)\vee \mcM^{-(2\beta+1)}} \right| + \left|\frac{ [\mathbf{h}_1(t,\my,\mx)]_i - \sigma_{t,t_0}[\nabla p_t(\my|\mx)]_i}
{\sigma_{t,t_0}(g_1(t,\my,\mx) \vee \mcM^{-(2\beta+1)})} \right| \\
& \lesssim \frac{\mcM^{2\beta+1}\sqrt{\log\mcM}}{\sigma_t}\left|p_t(\my|\mx) - g_1(t,\my,\mx)\right| + \frac{\mcM^{2\beta+1}}{\sigma_{t,t_0}}\left|\mathbf{h}_1(t,\my,\mx) - \sigma_{t,t_0}[\nabla \log p_t(\my|\mx)]_i\right| \\
& \lesssim \frac{\mcM^{2\beta+1}\sqrt{\log\mcM}}{\sigma_t}\left|p_t(\my|\mx) - g_1(t,\my,\mx)\right| + \frac{\mcM^{2\beta+1}}{\sigma_{t}}\left|\mathbf{h}_1(t,\my,\mx) - \sigma_{t,t_0}[\nabla \log p_t(\my|\mx)]_i\right|.
\end{aligned}
\end{equation*}
Therefore, we obtain
$$
\left\Vert
\frac{\mathbf{h}^{\prime}(t,\my,\mx)}{\sigma_t} - \nabla\log p_t(\my|\mx)
\right\Vert \lesssim \frac{{\mcM}^{2\beta + 1}\sqrt{\log \mcM}}{\sigma_t} \left(
|p_t(\my|\mx) - g_1(t,\my,\mx)| + \Vert \mathbf{h}_1(t,\my,\mx) - \sigma_{t,t_0} \nabla p_t(\my|\mx)\Vert
\right).
$$
Then, we have
$$
\begin{aligned}
& ~~~~ \int_{\Vert\my\Vert_\infty\leq m_t+C\sigma_t\sqrt{(2\beta+1)\log \mcM}} p_t(\my|\mx)\mathbf{I}_{\{p_t(\my|\mx)\geq {\mcM}^{-(2\beta + 1)}\}} \left\Vert \frac{\mathbf{h}^{\prime}(t,\my,\mx)}{\sigma_{t,t_0}} - \nabla\log p_t(\my|\mx)\right\Vert^2 \mrd \my \\
& \lesssim \frac{\mcM^{4\beta + 2}\log \mcM}{\sigma_t^2 m_{t,t_0}^{d_\mcY}} \int_{ \Vert \my \Vert_{\infty} \leq m_t + C\sigma_t\sqrt{(2\beta+1)\log \mcM}} \Bigg(
\int_{\Rbb^{d_\mcY}}\frac{|p_{t_0}(\my_1|\mx)-p_{\mcM,t_0}(\my_1,\mx)|^2}{\sigma_{t,t_0}^{d_\mcY}(2\pi)^{d_\mcY/2}} \exp\left(-\frac{\Vert\my-m_{t,t_0}\my_1\Vert^2}{2\sigma_{t,t_0}^2}\right)\mrd \my_1 \\
& ~~~~~~~~~ +  \int_{\Rbb^{d_\mcY}}\frac{\Vert\my-m_{t,t_0}\my_1\Vert^2}{\sigma_{t,t_0}^{d_\mcY+2}(2\pi)^{d_\mcY/2}} |p_{t_0}(\my_1|\mx)-p_{\mcM,t_0}(\my_1,\mx)|^2\exp\left(-\frac{\Vert\my-m_{t,t_0}\my_1\Vert^2}{2\sigma_{t,t_0}^2}\right)\mrd\my_1  \Bigg) \mrd\my \\
& \lesssim \frac{{\mcM}^{4\beta + 2}\log \mcM}{\sigma_t^2 m_{t,t_0}^{d_\mcY}} \int_{\Rbb^{d_\mcY}} \left|p_{t_0}(\my_1|\mx) - p_{\mcM,t_0}(\my_1,\mx)  \right|^2 \mrd\my_1 \\
&\lesssim \frac{\mcM^{4\beta + 2}\log \mcM}{\sigma_t^2 m_{t,t_0}^{d_\mcY}} \cdot {\mcM}^{-(6\beta + 2)} T^{d_\mcY} \lesssim \frac{{\mcM}^{-2\beta}\log \mcM}{\sigma_t^2},
\end{aligned}
$$
where we used $m_{t,t_0} \gtrsim m_t$ and $m_t \geq T$.
Therefore, we finally obtain
$$
(\mathrm{III}) \lesssim \frac{{\mcM}^{-2\beta}\log \mcM}{\sigma_t^2},
$$
which implies that
$$
\int_{\Rbb^{d_\mcY}} p_t(\my|\mx)\Vert \mathbf{b}_{\mathrm{score}}^{(2)}(t,\my,\mx) - \nabla\log p_t(\my|\mx)\Vert^2 \mrd\my \lesssim \frac{\mcM^{-2\beta}\log \mcM}{\sigma_t^2}.
$$
The proof is complete.
\end{proof}

Combining Lemma \ref{lem: approximate_interval_1} and Lemma \ref{lem: approximate_interval_2}, we immediately obtain Lemma \ref{lem: approximation_error}.

\begin{proof}[Proof of Lemma \ref{lem: approximation_error}]
According to Lemma \ref{lem: approximate_interval_1} and Lemma \ref{lem: approximate_interval_2}, there exist two ReLU neural networks $\mathbf{b}_{\mathrm{score}}^{(1)}(t,\my,\mx)$ and $\mathbf{b}_{\mathrm{score}}^{(2)}(t,\my,\mx)$ that approximate the conditional score function $\nabla\log p_t(\my|\mx)$ on $[{\mcM}^{-C_T},8{\mcM}^{-\widetilde{C}_T}]$ and $[6{\mcM}^{-\widetilde{C}_T},1-\mcM^{-C_T}]$, respectively. 
Therefore, setting $t_1={\mcM}^{-C_T}$, $t_2=6{\mcM}^{-\widetilde{C}_T}$, $s_1=8{\mathcal{M}}^{-\widetilde{C}_T}$ and $s_2=1-\mcM^{-C_T}$ in Lemma \ref{lem: switching}, we can construct a ReLU neural network 
$$\mathbf{b}(t,\my,\mx) := \mathrm{b}_{\mathrm{switch},1}(t,t_2,s_1)\mathbf{b}_{\mathrm{score}}^{(1)}(t,\my,\mx) + \mathrm{b}_{\mathrm{switch},2}(t,t_2,s_1)\mathbf{b}_{\mathrm{score}}^{(2)}(t,\my,\mx)
$$ 
with network parameters
$$
L = \mcO(\log^4 \mcM), M = \mcO(\mcM^{d_\mcX + d_\mcY}\log^7 \mcM), J = \mcO({\mcM}^{d_\mcX + d_\mcY}\log^9 \mcM), \kappa = \exp\left(\mcO(\log^4 \mcM)\right)
$$
that approximates $\nabla\log p_t(\my|\mx)$ with the approximation error 
$$
\begin{aligned}
& ~~~~ \int_{\Rbb^{d_\mcY}} p_t(\my|\mx)\Vert \mathbf{b}(t,\my,\mx) - \nabla\log p_t(\my|\mx)\Vert^2 \mrd\my \\
& \lesssim \mathrm{b}_{\mathrm{switch},1}^2(t,t_2,s_1)\int_{\Rbb^{d_\mcY}} p_t(\my|\mx)\Vert \mathbf{b}_{\mathrm{score}}^{(1)}(t,\my,\mx) - \nabla\log p_t(\my|\mx)\Vert^2 \mrd\my \\
& ~~~~~~~~~ + 
\mathrm{b}_{\mathrm{switch},2}^2(t,t_2,s_1)\int_{\Rbb^{d_\mcY}} p_t(\my|\mx)\Vert \mathbf{b}_{\mathrm{score}}^{(2)}(t,\my,\mx) - \nabla\log p_t(\my|\mx)\Vert^2 \mrd\my \\
& \lesssim \mathrm{b}_{\mathrm{switch},1}^2(t,t_2,s_1) \frac{\mcM^{-2\beta}\log \mcM}{\sigma_t^2} + \mathrm{b}_{\mathrm{switch},2}^2(t,t_2,s_1) \frac{\mcM^{-2\beta}\log \mcM}{\sigma_t^2} \\
& \lesssim \frac{\mcM^{-2\beta}\log \mcM}{\sigma_t^2}.
\end{aligned}
$$
Here, we use 
$0\leq \mathrm{b}_{\mathrm{switch},1}, \mathrm{b}_{\mathrm{switch},2} \leq 1$ and $\mathrm{b}_{\mathrm{switch},1} +  \mathrm{b}_{\mathrm{switch},2} = 1$. The proof is complete.
\end{proof}


\section{Statistical Error}\label{sec:se}
In this section, we bound the statistical error and prove Lemma \ref{lem: statistical_error}. Then, combining Lemma \ref{lem: approximation_error} and Lemma \ref{lem: statistical_error}, we prove Theorem \ref{thm: generalization}. We begin by providing an upper bound for $\ell_{\mb}(\mx,\my_0^F)$.
\\\\
\noindent \textbf{Upper bound for $\ell_{\mb}(\mx,\my_0^F)$.}
By the definition of $\ell_{\mb}(\mx,\my_0^F)$, it holds that
$$
\Ebb_{\mz}\left\Vert \mb(t, m_t\my_0^F + \sigma_t\mz,\mx) + \frac{\mz}{\sigma_t}\right\Vert^2 \lesssim 
\frac{\log \mcM}{\sigma_t^2} + \frac{\Ebb\Vert\mz\Vert^2}{\sigma_t^2}
\lesssim \frac{\log \mcM}{t}.
$$
Thus, we have
$$
\ell_{\mb}(\mx,\my_0^F) \lesssim \frac{\log \mcM}{1-{2\mcM}^{-C_T}}\int_T^{1-T}\frac{1}{t(1-t)}\mrd t \lesssim \log^2 \mcM
$$
for sufficiently large $\mcM$.
\\\\
\noindent\textbf{Lipschitz continuity for $\ell_{\mb}(\mx,\my_0^F)$.}
Note that we restrict ReLU neural networks into class $\mcC$. For any $\mb_1$, $\mb_2 \in \mcC$, by the construction structure of $\mb_1$, $\mb_2$,
we have
\begin{equation*}
\begin{aligned}
&~~~\left|\ell_{\mb_1}(\mx,\my_0^F) - \ell_{\mb_2}((\mx,\my_0^F))\right|\\
&\leq\frac{1}{1-2T}\int_{T}^{1-T}
\frac{1}{1-t}\cdot\Ebb_{\mz}\Vert \mb_1 - \mb_2\Vert\left\Vert \mb_1 + \mb_2 + \frac{2\mz}{\sigma_t}\right\Vert \mrd t\\
&\leq\frac{1}{1-2T}\int_{T}^{1-T}\frac{1}{1-t}\left(\Ebb_{\mz}\Vert \mb_1 - \mb_2 \Vert^2\right)^{\frac{1}{2}}\left(\Ebb_{\mz}\left\Vert \mb_1 + \mb_2 + \frac{2\mz}{\sigma_t}\right\Vert^2\right)^{\frac{1}{2}}\mrd t\\
&\lesssim \frac{1}{1-2T}\int_{T}^{1-T}\frac{1}{1-t}\left(\Ebb_{\mz}\Vert \mb_1 - \mb_2 \Vert^2\right)^{\frac{1}{2}}\left(\frac{\log \mcM}{\sigma_t^2} + \frac{d_\mcY}{\sigma_t^2}\right)^{\frac{1}{2}}\mrd t\\
&\lesssim \frac{1}{1-2T}\left(\int_{T}^{1-T}\frac{1}{1-t}\cdot\Ebb_{\mz}\left\Vert \mb_1 - \mb_2 \right\Vert^2 \mrd t\right)^{\frac{1}{2}}\left(\log \mcM \cdot \int_{T}^{1-T}\frac{1}{t(2-t)}\mrd t\right)^{\frac{1}{2}}\\
&\lesssim  \log \mcM \cdot \left(\int_{T}^{1-T}\frac{1}{1-t}\cdot\Ebb_{\mz}\Vert \mb_1 - \mb_2 \Vert^2 \mrd t\right)^{\frac{1}{2}}\\
&\lesssim  \log^{\frac{3}{2}} \mcM \cdot
\Vert \mb_1 - \mb_2 \Vert_{L^{\infty}([T,1-T]\times\Rbb^{d_\mcY}\times [-1,1]^{d_\mcX})}\\
& \lesssim \log^{\frac{3}{2}} \mcM \cdot \Vert \mb_1 - \mb_2 \Vert_{L^{\infty}([T,1-T]\times[-\mcO(1)\sqrt{\log \mcM}, \mcO(1) \sqrt{\log \mcM}]^{d_\mcY} \times [-1,1]^{d_\mcX})}.
\end{aligned}
\end{equation*}
\\\\
\noindent\textbf{Covering number evaluation.}
We denote the $\delta$-covering number of the neural network class $\mcC$ as $\mcN_\delta$. $\mcN_\delta$ is evaluated as follows:
\begin{equation*}
    \begin{aligned}
    \log\mcN_\delta &= \log\mcN\left(\mcC, \delta,\Vert\cdot\Vert_{L^{\infty}([T,1-T]\times\Rbb^{d_\mcY} \times [-1,1]^{d_\mcX})}\right) \\
    &= \log\mcN\left(\mcC, \delta,\Vert\cdot\Vert_{L^{\infty}([T,1-T]\times[-\mcO(1)\sqrt{\log \mcM}, \mcO(1) \sqrt{\log \mcM}]^{d_\mcY}\times[-1,1]^{d_\mcX})}\right)\\
    &\lesssim JL\log{\left(\frac{LM\kappa (1-{\mcM}^{-C_T} \vee \mcO(1)\sqrt{\log \mcM})}{\delta}\right)}\\
    & \lesssim {\mcM}^{d_\mcX + d_\mcY}\log^{13}\mcM\left(\log^4 \mcM + \log \frac{1}{\delta} \right),
    \end{aligned}
\end{equation*} 
where we used $
L = \mcO(\log^4 \mcM), M = \mcO({\mcM}^{d_\mcX + d_\mcY}\log^7 \mcM), J = \mcO(\mcM^{d_\mcX + d_\mcY}\log^9 \mcM), \kappa = \exp\left(\mcO(\log^4 \mcM)\right)$ and $T = \mcM^{-C_T}$. The details of this derivation
can be found in \cite[Lemma 5.3]{CJLZ2022nonparametric}.

\begin{proof}[Proof of Lemma \ref{lem: statistical_error}]
Let $\ell(\mb,\mx,\my_0^F) = \ell_{\mb}(\mx,\my_0^F) - \ell_{\mb^*} (\mx,\my_0^F)$ and $\mcD^{\prime} = \{(\mx_1^{\prime},\my_{0,1}^{F,\prime}), \cdots, (\mx_n^{\prime},\my_{0,n}^{F,\prime})\}$ be an independent copy of $\mcD$, 
then for any $\mb_1$, $\mb_2\in\mcC$, there exists a constant $C_8 > 0$ such that
$$
\begin{aligned}
|\ell(\mb_1,\mx,\my_0^F) - \ell(\mb_2,\mx,\my_0^F)| & = |\ell_{\mb_1}(\mx,\my_0^F) - \ell_{\mb_2}(\mx,\my_0^F)| \\
&\leq C_8\log^{\frac{3}{2}} \mcM \Vert \mb_1 - \mb_2 \Vert_{L^{\infty}([T,1-T]\times\Rbb^{d_\mcY}\times[-1,1]^{d_\mcX})}. 
\end{aligned}
$$
We first estimate $\Ebb_{\mcD,\mcT,\mcZ}\left(\mcL(\wh{\mb}) - 2\ov{\mcL}_{\mcD}(\wh{\mb}) + \mcL(\mb^*)\right)$. 
It follows that
\begin{equation*}
    \begin{aligned}
&\Ebb_{\mcD,\mcT,\mcZ}\left(\mcL(\wh{\mb}) - 2\ov{\mcL}_{\mcD}(\wh{\mb}) + \mcL(\mb^*)\right)\\ 
&= \Ebb_{\mcD,\mcT,\mcZ}\left(\Ebb_{\mcD^{\prime}}\left[\frac{1}{n}\sum_{i=1}^{n}\left(\ell_{\wh{\mb}}(\mx_i^{\prime},\my_{0,i}^{F,\prime}) - \ell_{\mb^*}(\mx_i^{\prime},\my_{0,i}^{F,\prime})\right)\right]-\frac{2}{n}\sum_{i=1}^{n}\left(\ell_{\wh{\mb}}(\mx_i,\my_{0,i}^F) - \ell_{\mb^*}(\mx_i,\my_{0,i}^F)\right)
        \right)\\
        &=\Ebb_{\mcD,\mcT,\mcZ}\left[\frac{1}{n}\sum_{i=1}^{n}G(\wh{\mb},\mx_i,\my_{0,i}^F)\right],
    \end{aligned}
\end{equation*}
where
$$
G(\wh{\mb},\mx,\my_{0}^F) := \Ebb_{\mcD^{\prime}}\left[\ell(\wh{\mb},\mx_i^{\prime},\my_{0,i}^{F,\prime}) - 2\ell(\wh{\mb},\mx,\my_{0}^F)\right].
$$

Let $\mcC_{\delta}$ be the $\delta$-covering of $\mcC$ with minimum cardinality $\mcN_\delta$, then for any $\mb\in\mcC$, there exists a $\mb_{\delta}\in\mcC_{\delta}$ such that 
$$
\begin{aligned}
|\ell(\mb,\mx,\my_0^F) - \ell(\mb_{\delta},\mx,\my_0^F)| &\leq C_8\log^{\frac{3}{2}} \mcM \Vert \mb - \mb_{\delta}\Vert_{L^{\infty}([T,1-T]\times\Rbb^{d_\mcY}\times[-1,1]^{d_\mcX})} \\
&\leq C_8\delta\log^{\frac{3}{2}} \mcM.
\end{aligned}
$$
Therefore, for $1 \leq i \leq n$, we have
$$
G(\wh{\mb},\mx_i,\my_{0,i}^F)\leq G(\mb_{\delta},\mx_i,\my_{0,i}^F) + 3C_8\delta\log^{\frac{3}{2}} \mcM.
$$
Since $|\ell(\mb_{\delta},\mx_i,\my_{0,i}^{F})|\lesssim \log^2 \mcM$, there exists a constant $C_9 > 0$ such that
$|\ell(\mb_{\delta},\mx_i,\my_{0,i}^F)| \leq C_9\log^2 \mcM$.
We have that $|\ell(\mb_{\delta}, \mx_i,\my_{0,i}^F) - \Ebb\ell(\mb_{\delta},\mx_i,\my_{0,i}^F)|\leq 2C_9\log^2 \mcM$. We denote $V^2 = \mathrm{Var}[\ell(\mb_{\delta},\mx_i,\my_{0,i}^F)]$, then we have 
\begin{equation*}
    \begin{aligned}
V^2&\leq\Ebb_{\mcD}[\ell(\mb_{\delta},\mx_i,\my_{0,i}^F)^2]\\
    &\leq C_8^2\log^2 \mcM\cdot \Ebb_{\mcD}\left[\ell_{\mb_{\delta}}(\mx_i,\my_{0,i}^F) - \ell_{\mb^*}(\mx_i,\my_{0,i}^F)\right]\\
    &= C_8^2\log^2 \mcM \cdot \Ebb_{\mcD}[\ell(\mb_{\delta}, \mx_i,\my_{0,i}^F)].
    \end{aligned}
\end{equation*}
We obtain
$$
\Ebb_{\mcD}[\ell(\mb_{\delta}, \mx_i,\my_{0,i}^F)]\geq\frac{V^2}{C_8^2\log^2 \mcM}.
$$
By Bernstein's inequality, for any $t > 0$, we have
\begin{equation*}
    \begin{aligned}
    &~\Pbb_{\mcD,\mcT,\mcZ}\left[\frac{1}{n}\sum_{i=1}^{n}G(\mb_{\delta},\mx_i,\my_{0,i}^F) > t\right]\\
    =&~\Pbb_{\mcD,\mcT,\mcZ}\left(\Ebb_{\mcD^{\prime}}\left[\frac{1}{n}\sum_{i=1}^{n}\ell(\mb_{\delta}, \mx_i^{\prime},\my_{0,i}^{F,\prime})\right] - \frac{1}{n}\sum_{i=1}^{n}\ell(\mb_{\delta},\mx_i,\my_{0,i}^F) > \frac{t}{2} + \Ebb_{\mcD^{\prime}}\left[\frac{1}{2n}\sum_{i=1}^{n}\ell(\mb_{\delta}, \mx_i^{\prime},\my_{0,i}^{F,\prime})\right]\right)\\
    =&~\Pbb_{\mcD,\mcT,\mcZ}\left(\Ebb_{\mcD}\left[\frac{1}{n}\sum_{i=1}^{n}\ell(\mb_{\delta}, \mx_i,\my_{0,i}^F)\right] - \frac{1}{n}\sum_{i=1}^{n}\ell(\mb_{\delta},\mx_i,\my_{0,i}^F) > \frac{t}{2} + \Ebb_{\mcD}\left[\frac{1}{2n}\sum_{i=1}^{n}\ell(\mb_{\delta}, \mx_i,\my_{0,i}^F)\right]\right)\\
    \leq&~\Pbb_{\mcD,\mcT,\mcZ}\left(\Ebb_{\mcD}\left[\frac{1}{n}\sum_{i=1}^{n}\ell(\mb_{\delta}, \mx_i,\my_{0,i}^F)\right] - \frac{1}{n}\sum_{i=1}^{n}\ell(\mb_{\delta},\mx_i,\my_{0,i}^F) > \frac{t}{2} + \frac{V^2}{2C_8^2\log^2 \mcM}\right)\\
    \leq&~\exp\left(-\frac{nu^2}{2V^2 + \frac{4uC_9\log^2 \mcM}{3}}\right)\\
    \leq&\exp\left(-\frac{nt}{8\left(C_8^2 + \frac{C_9}{3}\right)\log^2 \mcM}\right),
    \end{aligned}
\end{equation*}
where $u = \frac{t}{2} + \frac{V^2}{2C_8^2\log^2 \mcM}$, and we use $u\geq\frac{t}{2}$ and $V^2\leq 2uC_8^2\log^2 \mcM $. 
Hence,  for any $t > 3C_8\delta\log^{\frac{3}{2}} \mcM$, we have
\begin{equation*}
    \begin{aligned}
        \Pbb_{\mcD,\mcT,\mcZ}\left[\frac{1}{n}\sum_{i=1}^{n}G(\wh{\mb}, \mx_i,\my_{0,i}^F) > t\right]
        &\leq\Pbb_{\mcD,\mcT,\mcZ}\left[\mathop{\mathrm{sup}}_{\mb\in\mcC}\frac{1}{n}\sum_{i=1}^{n}G(\mb, \mx_i,\my_{0,i}^F) > t\right]\\
        &\leq\Pbb_{\mcD,\mcT,\mcZ}\left[\mathop{\mathrm{max}}_{\mb_{\delta}\in \mcC_{\delta}}\frac{1}{n}\sum_{i=1}^{n}G(\mb_{\delta}, \mx_i,\my_{0,i}^F) > t- 3C_8\delta\log^{\frac{3}{2}}\mcM\right]\\
        &\leq\mcN_{\delta}\mathop{\mathrm{max}}_{\mb_{\delta}\in \mcC_{\delta}}\Pbb_{\mcD,\mcT,\mcZ}\left[\frac{1}{n}\sum_{i=1}^{n}G(\mb_{\delta}, \mx_i,\my_{0.i}^F) > t- 3C_8\delta\log^{\frac{3}{2}} \mcM\right]\\
        &\leq\mcN_{\delta}\exp\left(-\frac{n(t-3C_8\delta\log^{\frac{3}{2}} \mcM)}{8\left(C_8^2 + \frac{C_9}{3}\right)\log^2 \mcM}\right).
    \end{aligned}
\end{equation*}
By setting $a=\left[3C_8\log ^{\frac{3}{2}}\mcM + 8\left(C_8^2 + \frac{C_9}{3} \right)\log^2 \mcM\log\mcN_{\delta} \right] \delta$ and $\delta = \frac{1}{n}$, then we obtain
\begin{equation}\label{eq: statsitical_error1}
    \begin{aligned}
        & ~~~~ \Ebb_{\mcD,\mcT,\mcZ}\left[\frac{1}{n}\sum_{i=1}^{n}G(\wh{\mb},\mx_i,\my_{0,i}^F)\right]
        \\&\leq \int_0^{+\infty}\Pbb\left(\frac{1}{n}\sum_{i=1}^{n}G(\wh{\mb},\mx_i,\my_{0,i}^F) > t\right) \mathrm{d}t \\ 
        &\leq a + \mcN_{\delta}\int_{a}^{\infty}\exp\left(-\frac{n(t-3C_8\delta\log^{\frac{3}{2}} \mcM)}{8\left(C_8^2 + \frac{C_9}{3}\right)\log^2 \mcM}\right)\mathrm{d}t\\
        & \leq \left[3C_8\log^{\frac{3}{2}} \mcM + 8\left(C_8^2 + \frac{C_9}{3} \right)\log^2 \mcM\log\mcN_{\delta} \right] \delta + \frac{8\left(C_8^2 + \frac{C_9}{3} \right)\log^2 \mcM}{n} \\
        & \lesssim {\mcM}^{d_\mcX + d_\mcY}\log^{15} \mcM\left(\log^4 \mcM + \log\frac{1}{\delta}\right) \delta + \frac{\log^2 \mcM}{n} \\
        & \lesssim \frac{{\mcM}^{d_\mcX + d_\mcY}\log^{15} \mcM\left(\log^4 \mcM + \log n\right)}{n}.
    \end{aligned}
\end{equation}

Next, we estimate $\Ebb_{\mcD,\mcT,\mcZ}\left(\ov{\mcL}_{\mcD}(\wh{\mb}) - \wh{\mcL}_{\mcD,\mcT,\mcZ}(\wh{\mb})\right)$.
Recall that
$$
\ov{\mcL}_{\mcD}(\wh{\mb}) - \wh{\mcL}_{\mcD,\mcT,\mcZ}(\wh{\mb}) = \frac{1}{n}\sum_{i=1}^{n}\left(\ell_{\wh{\mb}}(\mx_i,\my_{0,i}^F) - \wh{\ell}_{\wh{\mb}}(\mx_i,\my_{0,i}^F)\right).
$$
We decompose $\frac{1}{n}\sum_{i=1}^{n}\left(\ell_{\wh{\mb}}(\mx_i,\my_{0,i}^F) - \wh{\ell}_{\wh{\mb}}(\mx_i,\my_{0,i}^F)\right)$ into 
the following three terms:
$$
\begin{aligned}
&~~~~ \frac{1}{n}\sum_{i=1}^{n}\left(\ell_{\wh{\mb}}(\mx_i,\my_{0,i}^F) - \wh{\ell}_{\wh{\mb}}(\mx_i,\my_{0,i}^F)\right) \\
&= 
\underbrace{\frac{1}{n}\sum_{i=1}^{n}(\ell_{\wh{\mb}}(\mx_i,\my_{0,i}^F) - \ell_{\wh{\mb}}^{\mathrm{trunc}}(\mx_i,\my_{0,i}^F)
)}_{(A)} \\
& ~~~~ +\underbrace{\frac{1}{n}\sum_{i=1}^{n}(\ell_{\wh{\mb}}^{\mathrm{trunc}}(\mx_i,\my_{0,i}^F) - \wh{\ell}_{\wh{\mb}}^{\mathrm{trunc}}(\mx_i,\my_{0,i}^F)
)}_{(B)} \\
& ~~~~ +\underbrace{\frac{1}{n}\sum_{i=1}^{n}
(\wh{\ell}_{\wh{\mb}}^{\mathrm{trunc}}(\mx_i,\my_{0,i}^F) - \wh{\ell}_{\wh{\mb}}(\mx_i,\my_{0,i}^F)
)}_{(C)},
\end{aligned}
$$
where 
$$
\ell_{\wh{\mb}}^{\mathrm{trunc}}(\mx,\my_{0}^F) := \Ebb_{\mz}\left(\frac{1}{1-2T}\int_{T}^{1-T}\frac{1}{1-t}\left\Vert\wh{\mb}(t,m_t\my_{0}^F + \sigma_t\mz, \mx) + \frac{\mz}{\sigma_t}\right\Vert^2\mrd t\cdot\mathbf{I}_{\{\Vert\mz\Vert_{\infty} \leq r\}}\right),
$$
and
$$
\wh{\ell}_{\wh{\mb}}^{\mathrm{trunc}}(\mx,\my_{0}^F) := \frac{1}{m}\sum_{j=1}^{m}\frac{1}{1-t_j}\left\Vert\wh{\mb}(t_j,m_{t_j}\my_{0}^F + \sigma_{t_j}\mz_j,\mx) + \frac{\mz_j}{\sigma_{t_j}}\right\Vert^2\mathbf{I}_{\{\Vert\mz_j\Vert_{\infty} \leq r\}}.
$$
We estimate these three terms separately.
Firstly, 
\begin{equation*}
    \begin{aligned}
        (A) &= \frac{1}{n}\sum_{i=1}^{n}\Ebb_{\mz}\left(\frac{1}{1-2T}\int_{T}^{1-T}\frac{1}{1-t}\cdot\left\Vert\wh{\mb}(t,m_t\my_{0,i}^F+\sigma_t\mz,\mx_i) + \frac{\mz}{\sigma_t}\right\Vert^2\mrd t\cdot\mathbf{I}_{\{\Vert\mz\Vert_{\infty} > r\}}\right)\\
        &\lesssim \left(\log \mcM\cdot\Pbb(\Vert\mz\Vert_{\infty} > r) + \Ebb\left(
        \Vert\mz\Vert^2\mathbf{I}_{\{\Vert\mz\Vert_{\infty}>r\}}\right)\right) \cdot \frac{1}{1-2T}\int_{T}^{1-T}\frac{1}{t(1-t)}\mrd t\\
        &\lesssim \log \mcM\left(\log \mcM \cdot \Pbb(\Vert\mz\Vert_{\infty} > r) + [\Ebb
        (\Vert\mz\Vert^4)]^{\frac{1}{2}}\cdot\Pbb({\Vert\mz\Vert_{\infty}>r})^{\frac{1}{2}} \right)\\
        &\lesssim \log^2 \mcM \cdot \Pbb({\Vert\mz\Vert_{\infty}>r})^{\frac{1}{2}}\\
        &\lesssim \log^2 \mcM\exp\left(-\frac{r^2}{4}\right).
    \end{aligned}
\end{equation*}
Therefore, there exists a constant $C_{10} > 0$ such that
$$
\Ebb_{\mcD,\mcT,\mcZ}\left(\frac{1}{n}\sum_{i=1}^{n}(\ell_{\wh{\mb}}(\mx_i,\my_{0,i}^F) - \ell_{\wh{\mb}}^{\mathrm{trunc}}(\mx_i,\my_{0,i}^F)
)\right) \leq C_{10}\log^2 \mcM\exp\left(-\frac{r^2}{4}\right).
$$

Next, we bound the second term. Let
$$
h_{\mb}(t,\mx,\my_{0}^F,\mz) := \frac{1}{1-t}\left\Vert \mb(t,m_t\my_{0}^F + \sigma_t\mz, \mx) + \frac{\mz}{\sigma_t}\right\Vert^2\mathbf{I}_{\{\Vert\mz\Vert_{\infty}\leq r\}},
$$ 
then there exists a constant $C_{11} > 0$ such that
$$
0 \leq h_{\mb}(t,\mx,\my_0^F\mz)\lesssim \frac{r^2 + \log \mcM}{T} \leq C_{11} {\mcM}^{C_T}(r^2 + \log \mcM):= E_{\mcM}(r).
$$
For any $\delta_1 > 0$, $\mb\in\mcC$, there exists a $\mb_{\delta_1}\in \mcC_{\delta_1}$ and a constant $C_{12} > 0$ such that
\begin{equation*}
\begin{aligned}
|h_{\mb}(t,\mx,\my_0^F,\mz) - h_{\mb_{\delta_1}}(t,\mx,\my_0^F, \mz)|
&\lesssim \frac{\delta_1}{1-t}\left\Vert \mb + \mb_{\delta_1} + \frac{2\mz}{\sigma_t}\right\Vert\mathbf{I}_{\{\Vert\mz\Vert_{\infty}\leq r\}}\\
& \lesssim \frac{r + \sqrt{\log \mcM}}{T} \cdot \delta_1 \\
&\leq C_{12} {\mcM}^{C_T}(r + \sqrt{\log \mcM})\delta_1.
\end{aligned}
\end{equation*}
Then, for fixed pair $(\mx_i,\my_{0,i}^F), 1\leq i \leq n$, we have
\begin{equation*}
\begin{aligned}
&\frac{1}{1-2T}\int_{T}^{1-T}\Ebb_{\mz}h_{\wh{\mb}}(t,\mx_i, \my_{0,i}^F, \mz)\mrd t - \frac{1}{m}\sum_{j=1}^{m}h_{\wh{\mb}}(t_j,\mx_i,\my_{0,i}^F,\mz_j)\\
\leq&\mathop{\mathrm{sup}}_{\mb\in\mcC}\left(\frac{1}{1-2T}\int_{T}^{1-T}\Ebb_{\mz}h_{\mb}(t,\mx_i,\my_{0,i}^F, \mz)\mrd t - \frac{1}{m}\sum_{j=1}^{m}h_{\mb}(t_j,\mx_i,\my_{0,i}^F,\mz_j)\right)\\
\leq&\mathop{\mathrm{max}}_{\mb_{\delta}\in \mcC_{\delta_1}}\left(\frac{1}{1-2T}\int_{T}^{1-T}\Ebb_{\mz}h_{\mb_{\delta_1}}(t,\mx_i,\my_{0,i}^F, \mz)\mrd t - \frac{1}{m}\sum_{j=1}^{m}h_{\mb_{\delta_1}}(t_j,\mx_i,\my_{0,i}^F,\mz_j)\right) \\
&~~~~~~~~~ + 2C_{12}{\mcM}^{C_T}(r + \sqrt{\log \mcM})\delta_1.
\end{aligned}
\end{equation*}
Let $b = 2C_{12}{\mcM}^{C_T}(r + \sqrt{\log \mcM})\delta_1$. For $t > b$,  by Hoeffding's inequality, we obtain
$$
\begin{aligned}
&~\Pbb_{\mcT,\mcZ}\left(\frac{1}{1-2T}\int_{T}^{1-T}\Ebb_{\mz}h_{\wh{\mb}}(t,\mx_i, \my_{0,i}^F, \mz)\mrd t - \frac{1}{m}\sum_{j=1}^{m}h_{\wh{\mb}}(t_j,\mx_i,\my_{0,i}^F,\mz_j) > t
\right)\\
\leq&~\Pbb_{\mcT,\mcZ}\left(\mathop{\mathrm{max}}_{\mb_{\delta_1}\in \mcC_{\delta_1}}\left(\frac{1}{1-2T}\int_{T}^{1-T}\Ebb_{\mz}h_{\mb_{\delta_1}}(t,\mx_i,\my_{0,i}^F, \mz)\mrd t - \frac{1}{m}\sum_{j=1}^{m}h_{\mb_{\delta_1}}(t_j,\mx_i,\my_{0,i}^F,\mz_j)\right) > t - b
\right)\\
\leq&~\mcN_{\delta_1}\mathop{\mathrm{max}}_{\mb_{\delta_1}\in \mcC_{\delta_1}}\Pbb_{\mcT,\mcZ}\left(\frac{1}{1-2T}\int_{T}^{1-T}\Ebb_{\mz}h_{\mb_{\delta_1}}(t,\mx_i,\my_{0,i}^F, \mz)\mrd t - \frac{1}{m}\sum_{j=1}^{m}h_{\mb_{\delta_1}}(t_j,\mx_i,\my_{0,i}^F,\mz_j) > t - b
\right)\\
\leq&~\mcN_{\delta_1}\exp{\left(-\frac{2m(t-b)^2}{E_{\mcM}^2(r)}\right)}.
\end{aligned}
$$
Therefore, by taking  expectation  over 
$\mcT,\mcZ$,  for any 
$c_0 > 0$, we deduce that $(B)$ satisfies
$$
\begin{aligned}
\Ebb_{\mcT,\mcZ}\left(\ell_{\wh{\mb}}^{\mathrm{trunc}}(\mx_i,\my_{0,i}^F) - \wh{\ell}_{\wh{\mb}}^{\mathrm{trunc}}(\mx_i,\my_{0,i}^F)\right)  &\leq\int_{0}^{+\infty}\Pbb_{\mcT,\mcZ}\left(
\ell_{\wh{\mb}}^{\mathrm{trunc}}(\mx_i,\my_{0,i}^F) - \wh{\ell}_{\wh{\mb}}^{\mathrm{trunc}}(\mx_i,\my_{0,i}^F) > t
\right)\mrd t\\
&\leq b + c_0 + \mcN_{\delta_1}\int_{c_0}^{+\infty}\exp{\left(-\frac{2mt^2}{E_{\mcM}^2(r)}\right)}\mrd t\\
&\leq b + c_0 + \frac{\sqrt{\pi}}{2}\mcN_{\delta_1}\exp{\left(-\frac{2mc_0^2}{E_{\mcM}^2(r)}\right)}\frac{E_{\mcM}(r)}{\sqrt{2m}}.
\end{aligned}
$$
Thus, we have
$$
\begin{aligned}
&~\Ebb_{\mcD,\mcT,\mcZ}\left(\frac{1}{n}\sum_{i=1}^{n}\left(\ell_{\wh{\mb}}^{\mathrm{trunc}}(\mx_i,\my_{0,i}^F) - \wh{\ell}_{\wh{\mb}}^{\mathrm{trunc}}(\mx_i,\my_{0,i}^F)\right)\right) \\
=&~\frac{1}{n}\sum_{i=1}^{n}\Ebb_{\mcD}\left[\Ebb_{\mcT,\mcZ}\left(\ell_{\wh{\mb}}^{\mathrm{trunc}}(\mx_i,\my_{0,i}^F) - \wh{\ell}_{\wh{\mb}}^{\mathrm{trunc}}(\mx_i,\my_{0,i}^F)\right)\right]\\
\leq&~b + c_0 + \frac{\sqrt{\pi}}{2}\mcN_{\delta_1}\exp{\left(-\frac{2mc_0^2}{E_{\mcM}^2(r)}\right)}\frac{E_{\mcM}(r)}{\sqrt{2m}}.
\end{aligned}
$$

The last term can be expressed as
$$
(C) = -\frac{1}{mn}\sum_{i=1}^{n}\sum_{j=1}^{m}\left\Vert\wh{\mb}(t_j,m_{t_j}\my_{0,i}^F + \sigma_{t_j}\mz_j, \mx_i) + \frac{\mz_j}{\sigma_{t_j}}\right\Vert^2\mathbf{I}_{\{\Vert\mz_j\Vert_{\infty} > r\}}\leq 0,
$$
which implies
$$
\Ebb_{\mcD,\mcT,\mcZ}\left(
\frac{1}{n}\sum_{i=1}^{n}\left(\wh{\ell}_{\wh{\mb}}^{\mathrm{trunc}}(\mx_i,\my_{0,i}^F) - \wh{\ell}_{\wh{\mb}}(\mx_i,\my_{0,i}^F)\right)
\right)\leq 0.
$$
Combining the above inequalities, we have
\begin{equation*}
\Ebb_{\mcD,\mcT,\mcZ}\left[\frac{1}{n}\sum_{i=1}^{n}\left(\ell_{\wh{\mb}}(\mx_i,\my_{0,i}^F) - \wh{\ell}_{\wh{\mb}}(\mx_i,\my_{0,i}^F)\right)\right]
\leq b_0 + c_0 + \frac{\sqrt{\pi}}{2}\mcN_{\delta_1}\exp{\left(-\frac{2mc_0^2}{E_{\mcM}^2(r)}\right)}\frac{E_{\mcM}(r)}{\sqrt{2m}},
\end{equation*}
where $b_0 = C_{10}\log^2 \mcM\exp\left(-\frac{r^2}{4}\right) + 2C_{12}{\mcM}^{C_T}(r + \sqrt{\log \mcM})\delta_1$. 
By setting $c_0 = E_{\mcM}(r)\sqrt{\frac{\log{\mcN_{\delta_1}}}{2m}}$, $r = 2\sqrt{\log{m}}$, and $\delta_1 = \frac{1}{m}$, we obtain
\begin{equation}\label{eq: statsitical_error2}
\begin{aligned}
& ~~~~ \Ebb_{\mcD,\mcT,\mcZ}\left[\frac{1}{n}\sum_{i=1}^{n}\left(\ell_{\wh{\mb}}(\mx_i,\my_{0,i}^F) - \wh{\ell}_{\wh{\mb}}(\mx_i,\my_{0,i}^F)\right)\right]\\
& \leq b_0 + E_{\mcM}(r)\cdot\frac{\sqrt{\log\mcN_{1/m}} + 1}{\sqrt{2m}}\\
& \lesssim \frac{\log^2 \mcM + {\mcM}^{C_T}(\sqrt{\log m} + \sqrt{\log \mcM})}{m} \\
& ~~~~~~~~~~ + {\mcM}^{C_T}(\log m + \log \mcM) \cdot \frac{{\mcM}^{\frac{d_\mcX + d_\mcY}{2}}\log^{\frac{13}{2}}\mcM(\log^2 \mcM + \sqrt{\log m})}{\sqrt{m}} \\
& \lesssim {\mcM}^{C_T}(\log m + \log \mcM) \cdot \frac{{\mcM}^{\frac{d_\mcX + d_\mcY}{2}}\log^{\frac{13}{2}}\mcM(\log^2 \mcM + \sqrt{\log m})}{\sqrt{m}}.
\end{aligned}
\end{equation}
The proof is complete.
\end{proof}

Combining Lemma \ref{lem: approximation_error} and Lemma \ref{lem: statistical_error}, we can prove Theorem \ref{thm: generalization}.

\begin{proof}[Proof of Theorem \ref{thm: generalization}]
By choosing $\mcM = \lfloor n^{\frac{1}{d_\mcX + d_\mcY + 2\beta}} \rfloor + 1 \lesssim n^{\frac{1}{d_\mcX + d_\mcY + 2\beta}}$ in Lemma \ref{lem: approximation_error}, the approximation error can be bounded as
$$
\begin{aligned}
\inf_{\mb\in\mcC}\left(\mcL(\mb) - \mcL(\mb^*)\right) 
& \lesssim \frac{1}{1-2T}\int_T^{1-T}\frac{{\mcM}^{-2\beta}\log \mcM}{(1-t)\sigma_t^2} \mrd t \\
&\lesssim {\mcM}^{-2\beta} \log^2 \mcM \\
&\lesssim n^{-\frac{2\beta}{d_\mcX + d_\mcY +  2\beta}} \log^2 n.
\end{aligned}
$$
Substituting $\mcM = \lfloor n^{\frac{1}{d_\mcX + d_\mcY + 2\beta}} \rfloor + 1 \lesssim n^{\frac{1}{d_\mcX + d_\mcY + 2\beta}}$, $m = n^{\frac{d_\mcX + d_\mcY + 8\beta}{d_\mcX + d_\mcY + 2\beta}}$ and $C_T = 2\beta$ into \eqref{eq: statsitical_error1} and \eqref{eq: statsitical_error2}, we have
$$
\eqref{eq: statsitical_error1} \lesssim n^{-\frac{2\beta}{d^* + 2\beta}}\log^{19}n, ~~ \eqref{eq: statsitical_error2} \lesssim n^{-\frac{2\beta}{d^* + 2\beta}}\log^{\frac{19}{2}}n,
$$
which implies that the statistical error can be bounded as
$$
\Ebb_{\mcD,\mcT,\mcZ}\left(\mcL(\wh{\mb}) - 2\ov{\mcL}_{\mcD}(\wh{\mb}) + \mcL(\mb^*)\right) + 2\Ebb_{\mcD,\mcT,\mcZ}\left(\ov{\mcL}_{\mcD}(\wh{\mb}) - \wh{\mcL}_{\mcD,\mcT,\mcZ}(\wh{\mb})\right) \lesssim n^{-\frac{2\beta}{d_\mcX + d_\mcY + 2\beta}}\log^{19}n.
$$
Therefore, we finally obtain
$$
\begin{aligned}
& ~~~ \Ebb_{\mcD,\mcT,\mcZ}\left(\frac{1}{1-2T}\int_{T}^{1-T}\frac{1}{t}\cdot\Ebb_{\my_t,\mx}\Vert\wh{\mb}(1-t,\my_t,\mx) - \nabla\log p_{1-t}(\my_t|\mx)\Vert^2 \mrd t\right) \\
& ~~~ \Ebb_{\mcD,\mcT,\mcZ}\left(\frac{1}{1-2T}\int_{T}^{1-T}\frac{1}{1-t}\cdot\Ebb_{\my_t^F,\mx}\Vert\wh{\mb}(t,\my_t^F,\mx) - \nabla\log p_t(\my_t^F|\mx)\Vert^2 \mrd t\right) \\
&\lesssim n^{-\frac{2\beta}{d_\mcX + d_\mcY + 2\beta}} \log^{19}n.
\end{aligned}
$$
The proof is complete.
\end{proof}

\section{Bound $\Ebb_{\mcD, \mcT,\mcZ}\Ebb_{\mx}\left[\mcW_2(\wt{p}_T^B(\cdot|\mx), p_0(\cdot|\mx))\right]$}\label{sec:bb}

In this section, we bound $\Ebb_{\mcD, \mcT,\mcZ}\Ebb_{\mx}[\mcW_2(\wt{p}_T^B(\cdot|\mx), p_0(\cdot|\mx))]$. We have the following decomposition:
$$
\begin{aligned}
\Ebb_{\mcD, \mcT,\mcZ}\Ebb_{\mx}[\mcW_2(\wt{p}_T^B(\cdot|\mx), p_0(\cdot|\mx))]
 &\leq 
  \Ebb_{\mcD,\mcT,\mcZ}\Ebb_{\mx}[\mcW_2(\wt{p}_T^B(\cdot|\mx), p_T^B(\cdot|\mx))] \\
  &~~~~+ \Ebb_{\mx}[\mcW_2(p_T^B(\cdot|\mx), p_0(\cdot|\mx))].
  \end{aligned}
$$
In the following two subsections, we bound $\Ebb_{\mcD,\mcT,\mcZ}\Ebb_{\mx}[\mcW_2(\wt{p}_T^B(\cdot|\mx), p_T^B(\cdot|\mx))]$ and $\Ebb_{\mx}[\mcW_2(p_T^B(\cdot|\mx), p_0(\cdot|\mx))]$ separately.

\subsection{Bound $\Ebb_{\mx}[\mcW_2(p_T^B(\cdot|\mx),p_0(\cdot|\mx))]$.}
In this subsection, we bound the term $\Ebb_{\mx}[\mcW_2(p_T^B(\cdot|\mx),p_0(\cdot|\mx))]$
and prove Lemma \ref{lem: early_stopping}.

\begin{proof}[Proof of Lemma \ref{lem: early_stopping}]
$\Ebb_{\mx}[\mcW_2(p_T^B(\cdot|\mx), p_0(\cdot|\mx))]$ can be decomposed into following two terms:
$$
\Ebb_{\mx}[\mcW_2(p_T^B(\cdot|\mx), p_0(\cdot|\mx))] \leq \Ebb_{\mx}[\mcW_2(p_T^B(\cdot|\mx), p_T(\cdot|\mx))] + \Ebb_{\mx}[\mcW_2(p_T(\cdot|\mx),p_0(\cdot|\mx))].
$$
The first term $\Ebb_{\mx}[\mcW_2(p_T^B(\cdot|\mx), p_T(\cdot|\mx))]$ satisfies that
$$
\begin{aligned}
\mcW_2(p_T^B, p_T)
&\leq \left(\Ebb\Vert \my_{1-T} - \my_{1-T}\mathbf{I}_{\{\Vert\my_{1-T}\Vert_{\infty} \leq B\}}\Vert^2\right)^{\frac{1}{2}} \\
& \leq \left(
\Ebb\Vert\my_{1-T}\Vert^2\mathbf{I}_{\{\Vert \my_{1-T} \Vert_\infty > B\}}
\right)^{\frac{1}{2}} \\
& \leq \left( \Ebb\Vert\my_{1-T}\Vert^4 \cdot \Pbb(\Vert \my_{1-T}\Vert_\infty > B)
\right)^{\frac{1}{4}}.
\end{aligned}
$$
Since $\my_{1-T}\overset{d}{=}\my_T^F\overset{d}{=}(1-T)\my_0^F + \sqrt{T(2-T)}\mz$, we have
$$
\begin{aligned}
\Ebb\Vert\my_{1-T}\Vert^4 &= \Ebb\left\Vert(1-T)\my_0^F + \sqrt{T(2-T)}\mz\right\Vert^4 \\
&\lesssim \left((1-T)^2 + T(2-T)\right)^2 \\
& \lesssim (T^2 + 1 - 2T + 2T - T^2)^2 \\
& \lesssim 1.
\end{aligned}
$$
And we also have
$$
\begin{aligned}
\Pbb(\Vert\my_{1-T}\Vert_\infty > B) &= \Pbb\left(\left\Vert(1-T)\my_0^F + \sqrt{T(2-T)}\mz\right\Vert_\infty > B\right) \\
& \leq \Pbb\left(\Vert\mz\Vert_\infty > \frac{B-1}{\sqrt{T(2-T)}}\right) \\
& \leq 2d_\mcY\exp\left(-\frac{(B-1)^2}{2T(2-T)}\right).
\end{aligned}
$$
Using $e^{-x} \leq \frac{1}{x}$ for $x > 0$, we obtain
$$
\Pbb(\Vert\my_{1-T}\Vert_\infty > B)^{\frac{1}{4}} \lesssim \exp\left(-\frac{(B-1)^2}{4T(2-T)}\right)^{\frac{1}{2}} \lesssim \sqrt{T(2-T)} \lesssim \sqrt{T},
$$
which implies that
$$
\Ebb_{\mx}[\mcW_2(p_T^B(\cdot|\mx),p_T(\cdot|\mx))] \lesssim \sqrt{T}.
$$
The second term $\Ebb_{\mx}[\mcW_2(p_T(\cdot|\mx),p_0(\cdot|\mx))]$ satisfies that
$$
\begin{aligned}
\mcW_2(p_T(\cdot|\mx),p_0(\cdot|\mx)) &\leq \left(\Ebb\Vert\my_{1-T} - \my_1\Vert^2\right)^{\frac{1}{2}} \\ &\leq \left(\Ebb\Vert\my_T^F-\my_0^F\Vert^2\right)^{\frac{1}{2}} \\
&\leq \left(\Ebb \left\Vert -T\my_0^F + \sqrt{T(2-T)}\mz \right\Vert^2 \right)^{\frac{1}{2}}\\ 
&\lesssim (T^2 + T(2-T))^{\frac{1}{2}} \\
&\lesssim \sqrt{T},
\end{aligned}
$$
which implies that
$$
\Ebb_{\mx}[\mcW_2(p_T(\cdot|\mx),p_0(\cdot|\mx))] \lesssim \sqrt{T}.
$$
Combining the above inequalities ,we finally obtain
\begin{equation} \label{eq: sub_bound1}
\Ebb_{\mx}[\mcW_2(p_T^B(\cdot|\mx),p_0(\cdot|\mx))] \lesssim \sqrt{T} \lesssim n^{-\frac{\beta}{d_\mcX + d_\mcY + 2\beta}}.
\end{equation}
The proof is complete.
\end{proof}

\subsection{Bound $\Ebb_{\mcD,\mcT,\mcZ}\Ebb_{\mx}[\mcW_2(\wt{p}_T^B(\cdot|\mx),p_T^B(\cdot|\mx))]$}
In this subsection, we bound the term $\Ebb_{\mcD,\mcT,\mcZ}\Ebb_{\mx}[\mcW_2(\wt{p}_T^B(\cdot|\mx),p_T^B(\cdot|\mx))]$. We summarize the SDEs we 've defined and introduce a new SDE as follows
\begin{align*}
    \mrd \my_t &= \left[ \frac{\my_t}{t}+\frac{2}{t}\nabla\log p_{1-t}(\my_t|\mx) \right] \mrd t + \sqrt{\frac{2}{t}}\mrd \mw_t,\quad \my_0\sim \mcN(0,\mI_{d_\mcY}),\ \my_1\sim p_0(\my|\mx)\\
    \mrd \ov{\my}_t &= \left[ \frac{\ov{\my}_{t_i}}{t_i}+\frac{2\wh{\mb}(1-t_i,\ov{\my}_{t_i},\mx)}{t_i} \right] \mrd t + \sqrt{\frac{2}{t}}\mrd \mw_t,\quad \ov{\my}_{t_0}=\my_{t_0}=\my_T,\ t\in [t_i,t_{i+1})\\
    \mrd \wt{\my}_t &= \left[ \frac{\wt{\my}_{t_i}}{t_i}+\frac{2\wh{\mb}(1-t_i,\wt{\my}_{t_i},\mx)}{t_i} \right] \mrd t + \sqrt{\frac{2}{t}}\mrd \mw_t,\quad \wt{\my}_{t_0}=\my_0\sim \mcN(0,\mI_{d_\mcY}),\ t\in [t_i,t_{i+1})
\end{align*}
where $T=t_0<t_1<\cdots<t_K=1-T$ and $\frac{t_{i+1}}{t_i}\leq 2$ for $0\leq i\leq K-1$.
\par Since $\wt{p}_T^B(\cdot|\mx)$ and $p_T^B(\cdot|\mx)$ are supported on bounded region $[-B, B]^{d_\mcY}$, we obtain the inequality 
$$
\begin{aligned}
\Ebb_{\mcD,\mcT,\mcZ}\Ebb_{\mx}[\mcW_2(\wt{p}_T^B(\cdot|\mx),p_T^B(\cdot|\mx))] &= \Ebb_{\mcD,\mcT,\mcZ}\Ebb_{\mx}[\mcW_2(p_T^B(\cdot|\mx), \wt{p}_T^B(\cdot|\mx))] \\
&\lesssim \Ebb_{\mcD,\mcT,\mcZ}\Ebb_{\mx}[\mcW_2(p_T^B(\cdot|\mx), \ov{p}_T^B(\cdot|\mx))] + \Ebb_{\mcD,\mcT,\mcZ}\Ebb_{\mx}[\mcW_2(\ov{p}_T^B(\cdot|\mx), \wt{p}_T^B(\cdot|\mx))]\\
&\lesssim \Ebb_{\mcD,\mcT,\mcZ}\Ebb_{\mx}[\mathrm{TV}(p_T^B(\cdot|\mx),\ov{p}_T^B(\cdot|\mx))] + \Ebb_{\mcD,\mcT,\mcZ}\Ebb_{\mx}[\mathrm{TV}(\ov{p}_T^B(\cdot|\mx),\wt{p}_T^B(\cdot|\mx))]
\\ &\lesssim \Ebb_{\mcD,\mcT,\mcZ}\Ebb_{\mx}[\mathrm{TV}(p_T(\cdot|\mx),\ov{p}_T(\cdot|\mx))] + \Ebb_{\mcD,\mcT,\mcZ}\Ebb_{\mx}[\mathrm{TV}(\ov{p}_T(\cdot|\mx),\wt{p}_T(\cdot|\mx))]\\
&\lesssim \Ebb_{\mcD,\mcT,\mcZ}\Ebb_{\mx}[\mathrm{TV}(p_T(\cdot|\mx),\ov{p}_T(\cdot|\mx))] + \Ebb_{\mcD,\mcT,\mcZ}\Ebb_{\mx}[\mathrm{TV}(p_{1-T}(\cdot|\mx),\mcN(0,\mI_{d_\mcY}))],
\end{aligned}
$$ 
where in the last line, we use the data processing inequality. Therefore, we only need to bound $\Ebb_{\mcD,\mcT,\mcZ}\Ebb_{\mx}[\mathrm{TV}(p_T(\cdot|\mx),\ov{p}_T(\cdot|\mx))]$ and $\Ebb_{\mcD,\mcT,\mcZ}\Ebb_{\mx}[\mathrm{TV}(p_{1-T}(\cdot|\mx),\mcN(0,\mI_{d_\mcY}))]$. 
\par We bound $\Ebb_{\mcD,\mcT,\mcZ}\Ebb_{\mx}[\mathrm{TV}(p_{1-T}(\cdot|\mx),\mcN(0,\mI_{d_\mcY}))]$ firstly and gives the following lemma.
\begin{lemma}
\label{lem: TV bound}
Suppose $\mcM=\lfloor n^\frac{1}{d_\mcX+d_\mcY+2\beta} \rfloor+1\lesssim n^\frac{1}{d_\mcX+d_\mcY+2\beta},T=\mcM^{-C_T}$ and $C_T=2\beta$, we have
\begin{equation}
\label{eq: TV bound_1}
    \Ebb_{\mcD,\mcT,\mcZ}\Ebb_{\mx}[\mathrm{TV}(p_{1-T}(\cdot|\mx),\mcN(0,\mI_{d_\mcY}))] \lesssim n^{-\frac{2\beta}{d_\mcX+d_\mcY+2\beta}}.
\end{equation}
\end{lemma}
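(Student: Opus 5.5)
Note first that $p_{1-T}(\cdot|\mx)$ is the law of the forward process $\my^F_{1-T}$ started from $p_0(\cdot|\mx)$. It does not depend on $\mcD,\mcT,\mcZ$, so the outer expectation is trivial and we only need a bound uniform in $\mx\in[-1,1]^{d_\mcX}$. The transition kernel gives $\my^F_{1-T}\,|\,\my^F_0 \sim \mcN(m_{1-T}\my_0^F,\sigma_{1-T}^2\mI_{d_\mcY})$ with $m_{1-T}=T$ and $\sigma_{1-T}^2=(1-T)(1+T)=1-T^2$. Hence $p_{1-T}(\cdot|\mx)$ is a mixture of Gaussians $\mcN(T\my_0,(1-T^2)\mI_{d_\mcY})$, mixed over $\my_0\sim p_0(\cdot|\mx)$, which is supported in $[-1,1]^{d_\mcY}$ by Assumption \ref{ass: bounded_density}.

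By joint convexity of TV (or KL), it suffices to bound, uniformly over $\Vert\my_0\Vert_\infty\le 1$, the distance between $\mcN(T\my_0,(1-T^2)\mI_{d_\mcY})$ and $\mcN(0,\mI_{d_\mcY})$. I will use Pinsker's inequality together with the closed-form Gaussian KL:
\begin{equation*}
\mathrm{KL}\bigl(\mcN(T\my_0,(1-T^2)\mI_{d_\mcY})\,\|\,\mcN(0,\mI_{d_\mcY})\bigr)=\frac{1}{2}\Bigl[d_\mcY(1-T^2)+T^2\Vert\my_0\Vert^2-d_\mcY-d_\mcY\log(1-T^2)\Bigr].
\end{equation*}
Since $-\log(1-T^2)=T^2+\mcO(T^4)$, the terms $-d_\mcY T^2$ and $d_\mcY T^2$ cancel to leading order. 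What remains is $\tfrac12 T^2\Vert\my_0\Vert^2+\mcO(T^4)\lesssim T^2$, using $\Vert\my_0\Vert^2\le d_\mcY$. Pinsker then gives $\mathrm{TV}\lesssim\sqrt{\mathrm{KL}}\lesssim T$. (Alternatively, bound the mean-shift and variance-change contributions to TV directly, each by $\mcO(T)$.)

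Finally, substitute $T=\mcM^{-C_T}=\mcM^{-2\beta}$ with $\mcM\asymp n^{1/(d_\mcX+d_\mcY+2\beta)}$. This gives $T\lesssim n^{-2\beta/(d_\mcX+d_\mcY+2\beta)}$, which is \eqref{eq: TV bound_1}.

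The only delicate point is the second-order cancellation in the KL term. If it were ignored, one would only get $\mathrm{KL}=\mcO(T)$, hence $\mathrm{TV}=\mcO(\sqrt T)$, and only the weaker rate $n^{-\beta/(d_\mcX+d_\mcY+2\beta)}$. That weaker rate would still suffice for Theorem \ref{thm: sampling_error}, but it would not match the stated bound. So I will expand $\log(1-T^2)$ carefully, for $T$ small, that is, for $\mcM$ large.
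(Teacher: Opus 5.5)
Your proposal is correct and follows the paper's proof essentially verbatim: convexity of KL over the mixture in $\my_0^F\sim p_0(\cdot|\mx)$, the closed-form Gaussian KL with $m_{1-T}=T$ and $\sigma_{1-T}^2=1-T^2$, the bound $\mathrm{KL}\lesssim T^2$, Pinsker to get $\mathrm{TV}\lesssim T$, and finally $T=\mcM^{-2\beta}\le n^{-2\beta/(d_\mcX+d_\mcY+2\beta)}$. One small correction: the ``delicate'' cancellation is not actually needed, since $\sigma_{1-T}^2=1-T^2$ already makes $d_\mcY(1-T^2)-d_\mcY$ and $-d_\mcY\log(1-T^2)$ each $\mcO(T^2)$ on their own (which is exactly how the paper bounds them), so ignoring the cancellation would not degrade the bound to $\mathrm{TV}=\mcO(\sqrt{T})$.
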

\begin{proof}[Proof of Lemma \ref{lem: TV bound}]
Using Pinsker' s inequality, we derive that
\begin{align*}
    \mathrm{TV}\left(p_{1-T}(\cdot|\mx),\mcN(0,\mI_{d_\mcY})\right)
    &\lesssim \sqrt{\mathrm{KL}\left( p_{1-T}(\cdot|\mx)\left|\right.\mcN(0,\mI_{d_\mcY}) \right)}\\
    &\lesssim \sqrt{\mathrm{KL}\left( \int p_{1-T}(\cdot|\my_0^F,\mx)p_0(\my_0^F|\mx) \mathrm{d}\my_0^F \left|\right. \mcN(0,\mI_{d_\mcY}) \right)}\\
    &\lesssim \sqrt{ \int \mathrm{KL}\left( p_{1-T}(\cdot|\my_0^F,\mx) \left|\right. \mcN(0,\mI_{d_\mcY}) \right)p_0(\my_0^F|\mx) \mathrm{d}\my_0^F },
\end{align*}
where the last inequality follows from the convexity of KL divergence. Note that $p_{1-T}(\cdot|\my_0^F,\mx)\sim\mcN(m_{1-T}\my_0^F,\sigma_{1-T}^2\mI_{d_\mcY})$ and the KL divergence between two Gaussian distributions has the explicit form of
\begin{equation*}
    \mathrm{KL}(\rho_1|\rho_2)=\frac{1}{2}\left( (\mu_1-\mu_2)^T\Sigma_2^{-1}(\mu_1-\mu_2) + \log\left(\frac{|\Sigma_2|}{|\Sigma_1|}\right) + \mathrm{Tr}(\Sigma_2^{-1}\Sigma_1) - d \right),
\end{equation*}
where $\rho_1\sim\mcN_d(\mu_1,\Sigma_1),~\rho_1\sim\mcN_d(\mu_2,\Sigma_2)$. We thus have
\begin{align*}
    \mathrm{KL}\left( p_{1-T}(\cdot|\my_0^F,\mx) \left|\right. \mcN(0,\mI_{d_\mcY}) \right)
    &= \frac{1}{2}\left( \left\|m_{1-T}\my_0^F\right\|^2-\log\sigma_{1-T}^{2d_{\mcY}}+d_\mcY\sigma_{1-T}^2-d_\mcY \right)\\
    &\lesssim T^2-d_\mcY\log(1-T^2)+d_\mcY((1-T^2)-1)\\
    &\lesssim T^2.
\end{align*}
Finally we get
\begin{equation*}
    \mathrm{TV}\left(p_{1-T}(\cdot|\mx),\mcN(0,\mI_{d_\mcY})\right)
    \lesssim \sqrt{T^2\cdot\int p_0(\my_0^F|\mx)\mathrm{d}\my_0^F}
    \lesssim T\lesssim n^{-\frac{2\beta}{d_\mcX+d_\mcY+2\beta}},
\end{equation*}
and the desired result follows.
\end{proof}

\par To bound $\Ebb_{\mcD,\mcT,\mcZ}\Ebb_{\mx}[\mathrm{TV}(p_T(\cdot|\mx),\ov{p}_T(\cdot|\mx))]$, we first introduce the following lemma.
\begin{lemma}[Proposition D.1 in \cite{oko2023diffusion}]
\label{lem: Girsanov}
Let $\pi_0$ be any probability distribution, and $\mz=(\mz_t)_{t\in [0,T]}$, $\mz^{\prime}=(\mz^{\prime}_t)_{t\in [0,T]}$ be two different processes satisfying
$$
\begin{aligned}
& \mrd \mz_t = \mb(t,\mz_t)\mrd t + \sigma(t)\mrd \mw_t, ~ \mz_0 \sim \pi_0, \\
& \mrd \mz^{\prime}_t = \mb^{\prime}(t,\mz^{\prime}_t) \mrd t + \sigma(t)\mrd \mw_t, ~ \mz^{\prime}_0 \sim \pi_0.
\end{aligned}
$$
We define the distributions of $\mz_t$ and $\mz^{\prime}_t$ as $\pi_t$ and $\pi^{\prime}_t$, and the path measures of $\mz$ and $\mz^{\prime}$ as $\Pbb$ and $\Pbb^{\prime}$, respectively. Suppose that the Novikov's condition holds, i.e.,
\begin{equation}\label{eq: novikov}
\Ebb_{\Pbb}\left[
\exp\left(\frac{1}{2}\int_0^T \frac{\Vert \mb(t,\mz_t) - \mb^{\prime}(t,\mz_t)\Vert^2}{\sigma^2(t)}\mrd t \right)
\right] < +\infty.
\end{equation}
Then, the Radon-Nikodym derivative of $\Pbb^{\prime}$ with respect to $\Pbb$ is
$$
\frac{\mrd \Pbb^{\prime}}{\mrd \Pbb} = \exp\left(-\int_{0}^T\frac{\mb(t,\mz_t) - \mb^{\prime}(t,\mz_t)}{\sigma(t)}\mrd \mw_t -\int_0^T \frac{\Vert \mb(t,\mz_t) - \mb^{\prime}(t,\mz_t)\Vert^2}{2\sigma^2(t)}\mrd t\right),
$$
and therefore we have 
$$
\mathrm{KL}(\pi_T|\pi_T^{\prime}) \leq \mathrm{KL}(\Pbb|\Pbb^{\prime}) = \Ebb_{\Pbb} \left[\frac{1}{2}\int_0^T\frac{\Vert \mb(t,\mz_t) - \mb^{\prime}(t,\mz_t)\Vert^2}{\sigma^2(t)} \mrd t \right].
$$
Moreover, if there exists a constant $C>0$ such that for any $t\in [0,T]$ it holds
$$ \Ebb_\Pbb\left[ \frac{\Vert \mb(t,\mz_t) - \mb^{\prime}(t,\mz_t)\Vert^2}{\sigma^2(t)} \right] \leq C, 
$$
then even if the Novikov' s condition is not satisfied, we can still derive that
$$
\mathrm{KL}(\pi_T|\pi_T^{\prime}) \leq \Ebb_{\Pbb} \left[\frac{1}{2}\int_0^T\frac{\Vert \mb(t,\mz_t) - \mb^{\prime}(t,\mz_t)\Vert^2}{\sigma^2(t)} \mrd t \right].
$$
\end{lemma}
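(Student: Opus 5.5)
This is Proposition D.1 of \cite{oko2023diffusion}, and I would reprove it by Girsanov's theorem plus the data processing inequality, handling the non-Novikov case by localization.

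\textbf{Step 1: the density.} Assume first that the Novikov condition \eqref{eq: novikov} holds, and work on the filtered space carrying $\mz$ under $\Pbb$. Set
$$\theta_t := \frac{\mb^{\prime}(t,\mz_t)-\mb(t,\mz_t)}{\sigma(t)}.$$
Define the exponential martingale
$$\mathcal{E}_T := \exp\Big(\int_0^T\theta_t\,\mrd\mw_t-\tfrac12\int_0^T\Vert\theta_t\Vert^2\mrd t\Big).$$
Novikov's condition gives $\Ebb_\Pbb[\mathcal{E}_T]=1$. By Girsanov, $\tilde{\mw}_t:=\mw_t-\int_0^t\theta_s\,\mrd s$ is a Brownian motion under $\mrd\mathbb{Q}=\mathcal{E}_T\,\mrd\Pbb$. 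Under $\mathbb{Q}$, $\mz$ therefore solves $\mrd\mz_t=\mb^{\prime}(t,\mz_t)\mrd t+\sigma(t)\mrd\tilde{\mw}_t$ with $\mz_0\sim\pi_0$. Assuming weak uniqueness for the $\mz'$-equation (implicit in the statement), the law of $\mz$ under $\mathbb{Q}$ equals $\Pbb'$. This identifies $\mrd\Pbb'/\mrd\Pbb=\mathcal{E}_T$ as a function of the path, which is the displayed formula after the sign convention $\mb-\mb'$.

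\textbf{Step 2: the KL bound.} Compute
$$\mathrm{KL}(\Pbb|\Pbb')=\Ebb_\Pbb[-\log\mathcal{E}_T]=\Ebb_\Pbb\Big[-\int\theta_t\,\mrd\mw_t+\tfrac12\int\Vert\theta_t\Vert^2\mrd t\Big].$$
The stochastic integral has zero mean because $\Ebb_\Pbb\int\Vert\theta_t\Vert^2\mrd t<\infty$, which follows from Novikov via $x\le e^{x}$. This gives the stated identity. The marginal inequality $\mathrm{KL}(\pi_T|\pi'_T)\le\mathrm{KL}(\Pbb|\Pbb')$ is the data processing inequality for the projection $\omega\mapsto\omega_T$.

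\textbf{Step 3: dropping Novikov (the main obstacle).} For the case without Novikov, I would localize:
$$\tau_n:=\inf\Big\{t:\int_0^t\Vert\theta_s\Vert^2\mrd s\ge n\Big\}\wedge T.$$
Define $\mb'_n(t,\cdot)$ to equal $\mb'$ for $t\le\tau_n$ and $\mb$ afterwards. For the pair $(\mb,\mb'_n)$, Novikov holds trivially, so Steps 1--2 give
$$\mathrm{KL}(\pi_T|\pi^{n}_T)\le\Ebb_\Pbb\Big[\tfrac12\int_0^{\tau_n}\Vert\theta_t\Vert^2\mrd t\Big]\le\Ebb_\Pbb\Big[\tfrac12\int_0^T\Vert\theta_t\Vert^2\mrd t\Big].$$
The hypothesis $\Ebb_\Pbb[\Vert\theta_t\Vert^2]\le C$ makes this right-hand side at most $CT/2<\infty$. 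It also gives $\int_0^T\Vert\theta_t\Vert^2\mrd t<\infty$ almost surely, so $\tau_n=T$ eventually.

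It remains to pass $n\to\infty$. I would show that the path laws $\Pbb^{n}$ of the localized processes converge to $\Pbb'$ in total variation, or at least weakly. The mechanism is that $\Pbb^n$ and $\Pbb'$ agree on the event $\{\tau_n=T\}$ up to the change of measure, but this needs the event to have $\Pbb'$-probability tending to one. Equivalently, one needs $\int\Vert\theta\Vert^2<\infty$ a.s. under $\Pbb'$ as well as under $\Pbb$, and this is the delicate point. I would try to obtain it from the uniform bound on $\sup_n\mathrm{KL}(\Pbb|\Pbb^n)$, which gives uniform integrability of the densities, followed by a tightness argument.

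Once convergence is in hand, lower semicontinuity of KL under weak convergence yields
$$\mathrm{KL}(\pi_T|\pi'_T)\le\liminf_n\mathrm{KL}(\pi_T|\pi^n_T),$$
which completes the proof.
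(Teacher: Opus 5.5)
The paper does not prove this lemma (it imports it from \cite{oko2023diffusion}). Your Steps 1--2 are the standard Girsanov-plus-data-processing argument and are fine. The gap is in Step 3.

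\textbf{The step that fails.} The property you flag as the delicate point, $\int_0^T\Vert\theta_t\Vert^2\mrd t<\infty$ $\Pbb'$-a.s., does not follow from the hypotheses.
\begin{enumerate}
\item Using uniqueness in law for the $\mb'$-equation stopped at $\tau_n$, $\Pbb^n$ and $\Pbb'$ agree on $\mathcal{F}_{\tau_n}$. Since $\mrd\Pbb^n/\mrd\Pbb=\mathcal{E}_{T\wedge\tau_n}$, this gives $\Pbb'(\tau_n=T)=\Ebb_\Pbb[\mathcal{E}_T\mathbf{I}_{\{\tau_n=T\}}]$. Letting $n\to\infty$ yields $\Pbb'\bigl(\int_0^T\Vert\theta_t\Vert^2\mrd t<\infty\bigr)=\Ebb_\Pbb[\mathcal{E}_T]$. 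So what you need is exactly that $\mathcal{E}$ is a true martingale, which is what can fail once Novikov is dropped.
\item Your proposed mechanism points the wrong way. The bound $\sup_n\mathrm{KL}(\Pbb|\Pbb^n)<\infty$ gives uniform integrability of $\mrd\Pbb/\mrd\Pbb^n$ under $\Pbb^n$, i.e.\ it controls $\Pbb$ by $\Pbb^n$. Getting $\Pbb^n(\tau_n<T)\to 0$ would instead require uniform integrability of $\mrd\Pbb^n/\mrd\Pbb=\mathcal{E}_{T\wedge\tau_n}$ under $\Pbb$.
\item Concrete counterexample: $d=1$, $\sigma\equiv 1$, $\pi_0=\delta_1$, $\mb(t,z)=1/z$ (so $\mz$ is a three-dimensional Bessel process) and $\mb'\equiv 0$ (so $\mz'$ is Brownian motion). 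Then $\sup_{t\le T}\Ebb_\Pbb[\mz_t^{-2}]<\infty$, so the hypothesis of the ``moreover'' part holds. But $\mathcal{E}_t=1/\mz_t$ is the classical strict local martingale, with $\Ebb_\Pbb[\mathcal{E}_T]<1$.
\item In this example every $\pi^n_T$ is carried by $(0,\infty)$: before $\tau_n$ the path has not reached $0$, and after $\tau_n$ it follows the Bessel drift. Meanwhile $\pi'_T=\mathcal{N}(1,T)$. So $\pi^n_T\not\to\pi'_T$ even weakly, and your lower-semicontinuity step would bound the divergence from the wrong limit.
\end{enumerate}

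\textbf{The fix.} The conclusion is still true, and your localization already contains the repair: replace the limit of laws by a monotone limit of measures.
\begin{enumerate}
\item For $A\in\mathcal{F}_T$ one has $A\cap\{\tau_n=T\}\in\mathcal{F}_{\tau_n}$. Hence $\Pbb'(A)\ge\Pbb'(A\cap\{\tau_n=T\})=\Ebb_\Pbb[\mathcal{E}_T\mathbf{I}_{A\cap\{\tau_n=T\}}]$.
\item Let $n\to\infty$, using only that $\tau_n=T$ eventually $\Pbb$-a.s., which you already have. This gives $\Pbb'\ge\mathcal{E}_T\cdot\Pbb$ as measures on path space.
\item Consequently $\Pbb\ll\Pbb'$ with $\mrd\Pbb/\mrd\Pbb'\le 1/\mathcal{E}_T$ $\Pbb$-a.s., and
\[
\mathrm{KL}(\Pbb|\Pbb')\le\Ebb_\Pbb[-\log\mathcal{E}_T]=\frac{1}{2}\Ebb_\Pbb\int_0^T\Vert\theta_t\Vert^2\mrd t .
\]
The stochastic integral has mean zero because $\Ebb_\Pbb\int_0^T\Vert\theta_t\Vert^2\mrd t\le CT$.
\item Data processing then gives the bound for $\mathrm{KL}(\pi_T|\pi'_T)$.
\end{enumerate}
In the Bessel example this bound holds with equality, $\mathrm{KL}(\Pbb|\Pbb')=\Ebb_\Pbb[\log\mz_T]$, even though $\Pbb'$ has a part singular to $\Pbb$.
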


Based on Lemma \ref{lem: Girsanov}, we can prove Theorem \ref{thm: sampling_error}.

\begin{proof}[Proof of Theorem \ref{thm: sampling_error}]
For any $t\in[T, 1-T]$, we have $\my_t\sim p_{1-t}(\my|\mx)$ and $\Vert\wh{\mb}(1-t,\my_t,\mx)\Vert \lesssim \frac{\sqrt{\log\mcM}}{\sqrt{1-t}}$. Moreover, we claim that $\Ebb[\|\nabla\log p_{1-t}(\my_t|\mx)\|^2]\lesssim\frac{1}{1-t}$. In fact, by Lemma \ref{lem: derivatives_boundness}, it holds
\begin{align*}
    \|\nabla\log p_{1-t}(\my_t|\mx)\|
    & \lesssim \frac{1}{\sigma_{1-t}} \cdot \left( \frac{(\|\my_t\|_\infty-m_{1-t})_+}{\sigma_{1-t}} \vee 1 \right)\\
    & = \frac{1}{\sigma_{1-t}} \cdot \left( \frac{(\|m_{1-t}\my_0^F+\sigma_{1-t}\mz\|_\infty-m_{1-t})_+}{\sigma_{1-t}} \vee 1 \right)\\
    & \lesssim \frac{1}{\sigma_{1-t}} \cdot \left( \frac{(m_{1-t}+\sigma_{1-t}\|\mz\|_\infty-m_{1-t})_+}{\sigma_{1-t}} \vee 1 \right)\\
    & \lesssim \frac{1}{\sqrt{1-t^2}} \cdot \left( \|\mz\|_\infty \vee 1 \right)\\
    & \lesssim \frac{1}{\sqrt{1-t}} \cdot \left( \|\mz\|_\infty+1 \right).
\end{align*}
Taking the square and expectation, we derive the result. Therefore, for $0\leq i \leq K-1$, $t\in[t_i,t_{i+1})$, it holds
$$
\begin{aligned}
&~~~\Ebb_\Pbb\left[\frac{t}{2}\cdot\left\Vert\frac{\my_t}{t} - \frac{\my_{t_i}}{t_i} + 2\left(\frac{\nabla\log p_{1-t}(\my_t|\mx)}{t} - \frac{\wh{\mb}(1-t_i, \my_{t_i},\mx)}{t_i}\right) \right\Vert^2\right]\\
& \lesssim \frac{t}{2}\cdot\left(\frac{1}{t^2} + \frac{1}{t_i^2} + \frac{1}{t^2(1-t)} + \frac{\log\mcM}{t_i^2(1-t_i)}\right) \\
& \lesssim \frac{t}{2}\cdot\left(\frac{1}{t^2} + \frac{t^2}{t^2\cdot t_i^2} + \frac{\log\mcM}{t^2(1-t)} + \frac{t^2(1-t)\log \mcM}{t^2(1-t)\cdot t_i^2(1-t_i)}\right) \\
& \lesssim \frac{t}{2}\cdot\left(\frac{1}{t^2} + \frac{4}{t^2} + \frac{\log \mcM}{t^2(1-t)} + \frac{4\log\mcM}{t^2(1-t)}\right)\\
& \lesssim \frac{t}{2}\cdot\frac{\log\mcM}{t^2(1-t)}\\
& \lesssim \frac{\log\mcM}{T^2},
\end{aligned}
$$
where we used $\frac{t_{i+1}}{t_i} \leq 2$, and the condition in Lemma \ref{lem: Girsanov} is satisfied. Therefore, by Lemma \ref{lem: Girsanov} and inequality $\mathrm{TV}(p_T, \ov{p}_T) \lesssim \sqrt{\mathrm{KL}(p_T, \ov{p}_T)}$, we obtain
$$
\begin{aligned}
& ~~~\Ebb_{\mcD,\mcT,\mcZ}\Ebb_{\mx}[\mathrm{TV}^2(p_T(\cdot|\mx), \ov{p}_T(\cdot|\mx))] \\
&\lesssim \Ebb_{\mcD,\mcT,\mcZ}\Ebb_{\mx}
\left(\Ebb_{\Pbb}\left[
\frac{1}{2}\sum_{i=0}^{K-1}\int_{t_i}^{t_{i+1}}\frac{t}{2} \cdot \left\Vert\frac{\my_t}{t} - \frac{\my_{t_i}}{t_i} + 2\left(\frac{\nabla\log p_{1-t}(\my_t|\mx)}{t} - \frac{\wh{\mb}(1-t_i, \my_{t_i},\mx)}{t_i}\right) \right\Vert^2 \mrd t
\right]\right)\\
&\lesssim \Ebb_{\mcD,\mcT,\mcZ}\Ebb_{\mx}
\left[\sum_{i=0}^{K-1}\int_{t_i}^{t_{i+1}} t\cdot\Ebb_{\Pbb}\left(
\left\Vert\frac{\my_t}{t} - \frac{\my_{t_i}}{t_i} \right\Vert^2 + \left\Vert\frac{\nabla\log p_{1-t}(\my_t|\mx)}{t} - \frac{\wh{\mb}(1-t_i, \my_{t_i},\mx)}{t_i} \right\Vert^2 \mrd t
\right) \right].
\end{aligned}
$$
We first bound the term $\Ebb_{\Pbb}\left(
\left\Vert\frac{\my_t}{t} - \frac{\my_{t_i}}{t_i} \right\Vert^2\right)$. For convenience, we denote $\Delta:=\max_{0\leq i \leq K-1}(t_{i+1} - t_i)$, then we have

$$
\begin{aligned}
\Ebb_{\Pbb}\left(
\left\Vert\frac{\my_t}{t} - \frac{\my_{t_i}}{t_i} \right\Vert^2\right) &\lesssim \Ebb_{\Pbb}\left(\left\Vert
\frac{\my_t}{t} - \frac{\my_{t_i}}{t}
\right\Vert^2\right) + \Ebb_{\Pbb}\left(\left\Vert
\frac{\my_{t_i}}{t} - \frac{\my_{t_i}}{t_i}
\right\Vert^2\right) \\
& \lesssim \frac{1}{t^2} \cdot \Ebb_{\Pbb}\Vert\my_t - \my_{t_i}\Vert^2 + \Ebb_{\Pbb}\Vert \my_{t_i} \Vert^2\cdot\left(\frac{1}{t} - \frac{1}{t_i}\right)^2 \\
& \lesssim \frac{1}{t^2} \cdot \Ebb_{\Pbb} \left\Vert \my_{1-t}^F - \my_{1-t_i}^F \right\Vert^2 + \frac{(t-t_i)^2}{t^2t_i^2} \\
& \lesssim \frac{1}{t^2}\cdot\left( (t-t_i)^2\cdot\Ebb_{\Pbb} \Vert\my_0^F\Vert^2 + (\sqrt{1-t_i^2} - \sqrt{1-t^2})^2 \cdot \Ebb_{\Pbb}\Vert\mz \Vert^2 \right) + \frac{(t-t_i)^2}{t^2t_i^2} \\
& \lesssim \frac{1}{t^2} \cdot \left((t-t_i)^2 + \left(\frac{t^2-t_i^2}{\sqrt{1-t_i^2} + \sqrt{1-t^2}} \right)^2 \right) + \frac{(t-t_i)^2}{t^2t_i^2} \\
& \lesssim \frac{(t-t_i)^2}{t^2t_i^2} + \frac{1}{t^2} \cdot \left(\frac{2t(t-t_i)}{2\sqrt{1-t^2}} \right)^2 \\
& \lesssim \frac{\Delta^2}{t^4} + \frac{\Delta^2}{1-t}.
\end{aligned}
$$
Thus, it holds that
\begin{equation*} 
\begin{aligned}
& ~~~ \Ebb_{\mcD,\mcT,\mcZ}\Ebb_{\mx}
\left[\sum_{i=0}^{K-1}\int_{t_i}^{t_{i+1}} t\cdot\Ebb_{\Pbb}\left(
\left\Vert\frac{\my_t}{t} - \frac{\my_{t_i}}{t_i} \right\Vert^2 \right)
\right] \\ &\lesssim \Delta^2 \int_{T}^{1-T} \frac{1}{t^3} \mrd t + \Delta^2 \int_T^{1-T} \frac{t}{1-t} \mrd t \\
& \lesssim \frac{\Delta^2}{T^2} + \Delta^2\cdot\log\mcM. 
\end{aligned}
\end{equation*}

Next, we bound the term $\Ebb_{\Pbb}\left(\left\Vert\frac{\nabla\log p_{1-t}(\my_t|\mx)}{t} - \frac{\wh{\mb}(1-t_i, \my_{t_i},\mx)}{t_i} \right\Vert^2 \right)$. We can decompose this term into following four terms. 
$$
\begin{aligned}
&~~~\Ebb_{\Pbb}\left(\left\Vert\frac{\nabla\log p_{1-t}(\my_t|\mx)}{t} - \frac{\wh{\mb}(1-t_i, \my_{t_i},\mx)}{t_i} \right\Vert^2 \right) \\
&\lesssim \underbrace{\Ebb_{\Pbb}\left(\left\Vert\frac{\nabla\log p_{1-t}(\my_t|\mx)}{t} - \frac{\nabla\log p_{1-t_i}(\my_{t}|\mx)}{t} \right\Vert^2 \right)}_{(\mathrm{I})} \\
& ~~~~ + \underbrace{\Ebb_{\Pbb}\left(\left\Vert\frac{\nabla\log p_{1-t_i}(\my_{t}|\mx)}{t} - \frac{\nabla\log p_{1-t_i}(\my_{t_i}|\mx)}{t} \right\Vert^2 \right)}_{\mathrm{(II)}} \\
& ~~~~ + \underbrace{\Ebb_{\Pbb}\left(\left\Vert\frac{\nabla\log p_{1-t_i}(\my_{t_i}|\mx)}{t} - \frac{\nabla\log p_{1-t_i}(\my_{t_i}|\mx)}{t_i} \right\Vert^2 \right)}_{(\mathrm{III})} \\
& ~~~~ + \underbrace{\Ebb_{\Pbb}\left(\left\Vert\frac{\nabla\log p_{1-t_i}(\my_{t_i}|\mx)}{t_i} - \frac{\wh{\mb}(1-t_i, \my_{t_i},\mx)}{t_i} \right\Vert^2 \right)}_{(\mathrm{IV})}.
\end{aligned}
$$
We bound these four terms separately. Before we begin, let' s give a bound for $\|\nabla\log p_{1-t_i}(\my_t|\mx)\|$ which is similar to $\|\nabla\log p_{1-t}(\my_t|\mx)\|$. Again by Lemma \ref{lem: derivatives_boundness}, we have
\begin{align*}
    \|\nabla\log p_{1-t_i}(\my_t|\mx)\|
    & \lesssim \frac{1}{\sigma_{1-t_i}} \cdot \left( \frac{(\|\my_t\|_\infty-m_{1-t_i})_+}{\sigma_{1-t_i}} \vee 1 \right)\\
    & = \frac{1}{\sigma_{1-t_i}} \cdot \left( \frac{(\|m_{1-t}\my_0^F+\sigma_{1-t}\mz\|_\infty-m_{1-t_i})_+}{\sigma_{1-t_i}} \vee 1 \right)\\
    & \lesssim \frac{1}{\sqrt{1-t_i^2}} \cdot \left( \frac{(t-t_i+\sqrt{1-t^2}\|\mz\|_\infty)_+}{\sqrt{1-t_i^2}} \vee 1 \right)\\
    & \lesssim \frac{1}{\sqrt{1-t_i}} \cdot \left( \left(\frac{t-t_i}{\sqrt{1-t_i}}+\|\mz\|_\infty\right) \vee 1 \right)\\
    & \lesssim \frac{1}{\sqrt{1-t_i}} \cdot \left( \|\mz\|_\infty+1+1 \right),
\end{align*}
where we used $t \geq t_i$ and $t \leq 1$. Now we' ve already derived
\begin{equation*}
    \Ebb_\Pbb\left[\|\nabla\log p_{1-t}(\my_t|\mx)\|^2\right]\lesssim\frac{1}{1-t},\quad
    \Ebb_\Pbb\left[\|\nabla\log p_{1-t_i}(\my_t|\mx)\|^2\right]\lesssim\frac{1}{1-t_i}.
\end{equation*}

We first bound term $(\mathrm{I})$, which corresponds to
$$
\begin{aligned}
\mathrm{(I)} = \frac{1}{t^2} \cdot \Ebb_{\Pbb}\Vert\nabla\log p_{1-t}(\my_t|\mx) - \nabla\log p_{1-t_i}(\my_{t}|\mx)\Vert^2.
\end{aligned}
$$
For any $\epsilon > 0$, there exists a constant $C > 0$ such that
$$
\begin{aligned}
(\mathrm{I}) &= \frac{1}{t^2} \cdot \Ebb_{\Pbb}\Vert \nabla\log p_{1-t}(\my_t|\mx) - \nabla\log p_{1-t_i}(\my_t|\mx)\Vert^2 \\
& \lesssim \frac{1}{t^2} \cdot \Ebb_{\Pbb}\Vert \nabla\log p_{1-t}(\my_t|\mx) - \nabla\log p_{1-t_i}(\my_t|\mx)\Vert^2 \mathbf{I}_{\{\Vert\my_t\Vert_\infty \leq m_{1-t} + C\sigma_{1-t}\sqrt{\log\epsilon^{-1}}\}} \\
& ~~~~ + \frac{1}{t^2} \cdot \Ebb_{\Pbb}\Vert \nabla\log p_{1-t}(\my_t|\mx) - \nabla\log p_{1-t_i}(\my_t|\mx)\Vert^2 \mathbf{I}_{\{\Vert\my_t\Vert_\infty > m_{1-t} + C\sigma_{1-t}\sqrt{\log\epsilon^{-1}}\}}.
\end{aligned}
$$
According to Lagrange' s theorem and Lemma \ref{lem: derivatives_boundness}, there exists $t^{\prime}\in [t_{i}, t]$ such that
$$
\begin{aligned}
&~~~~\Vert \nabla\log p_{1-t}(\my_t|\mx) - \nabla\log p_{1-t_i}(\my_t|\mx)\Vert \\& = (t-t_i) \cdot \Vert\partial_t\nabla\log p_{1-t}(\my_t|\mx)\Vert \Big|_{t=t^{\prime}}  \\
& \lesssim (t-t_i) \cdot \left[\frac{|\partial_t\sigma_{1-t}| + |\partial_t m_{1-t}|}{\sigma_{1-t}^2}\cdot\left(\frac{(\Vert\my_t\Vert_{\infty} - m_{1-t})_{+}^2}{\sigma_{1-t}^2} \vee 1\right)^{\frac{3}{2}} \right] \Bigg|_{t=t^{\prime}}\\
& \lesssim \frac{t-t_i}{(1-t^{\prime})^{\frac{3}{2}}} \cdot \left[\left(\frac{(\Vert\my_{t}\Vert_{\infty} - m_{1-t^{\prime}})_{+}^2}{\sigma_{1-t^{\prime}}^2} \vee 1\right)^{\frac{3}{2}} \right].
\end{aligned}
$$
Therefore, for term $(\mathrm{I})$ we have
$$
\begin{aligned}
(\mathrm{I}) &\lesssim \frac{1}{t^2} \cdot \Ebb_{\Pbb}\Vert \nabla\log p_{1-t}(\my_t|\mx) - \nabla\log p_{1-t_i}(\my_t|\mx)\Vert^2 \mathbf{I}_{\{\Vert\my_t\Vert_\infty \leq m_{1-t} + C\sigma_{1-t}\sqrt{\log\epsilon^{-1}}\}} \\
& ~~~~ + \frac{1}{t^2} \cdot \Ebb_{\Pbb}\Vert \nabla\log p_{1-t}(\my_t|\mx) - \nabla\log p_{1-t_i}(\my_t|\mx)\Vert^2 \mathbf{I}_{\{\Vert\my_t\Vert_\infty > m_{1-t} + C\sigma_{1-t}\sqrt{\log\epsilon^{-1}}\}} \\
& \lesssim \frac{(t-t_i)^2}{t^2(1-t^{\prime})^3} \cdot \left(\frac{1-t^2}{1-{t^{\prime}}^2}\right)^3\log^3\epsilon^{-1} + \frac{1}{t^2}\cdot\left(\frac{1}{1-t} + \frac{1}{1-t_i}\right)\cdot \Pbb\left(\Vert\my_t\Vert_\infty > m_{1-t}+C\sigma_{1-t}\sqrt{\log\epsilon^{-1}}\right).
\end{aligned}
$$
By Lemma \ref{lem: bound_for_clipping}, it holds
$$
\begin{aligned}
\Pbb\left(\Vert\my_t\Vert_\infty > m_{1-t}+C\sigma_{1-t}\sqrt{\log\epsilon^{-1}}\right) \lesssim \sigma_{1-t}\epsilon\lesssim \epsilon.
\end{aligned}
$$
Thus we continue to derive that
\begin{equation} \label{eq: bound_I}
\begin{aligned}
(\mathrm{I}) & \lesssim \frac{(t-t_i)^2}{t^2(1-t^{\prime})^3} \cdot \log^3\epsilon^{-1} + \frac{\epsilon}{t^2(1-t)}\\
& \lesssim \frac{(t-t_i)^2}{t^2(1-t^{\prime})^3} \cdot \log^3\epsilon^{-1} + \frac{\epsilon}{t^2(1-t)}\\
& \lesssim \frac{\Delta^2}{T^2}\cdot\frac{\log^3\epsilon^{-1} + \epsilon}{t^2(1-t)},
\end{aligned}
\end{equation}
where we used $t^{\prime}\leq t$.

Next, we bound term $(\mathrm{II})$, which expresses
\begin{equation*}
    (\mathrm{II})=\frac{1}{t^2}\cdot\Ebb_{\Pbb}\left\Vert \nabla\log p_{1-t_i}(\my_{t}|\mx) - \nabla\log p_{1-t_i}(\my_{t_i}|\mx) \right\Vert^2.
\end{equation*}
$$
\begin{aligned}
&~~~~\Ebb_{\Pbb}\Vert\nabla\log p_{1-t_i}(\my_t|\mx) - \nabla\log p_{1-t_i}(\my_{t_i}|\mx)\Vert^2 \\
&\lesssim \Ebb_{\Pbb}\Vert\nabla\log p_{1-t_i}(\my_t|\mx) - \nabla\log p_{1-t_i}(\my_{t_i}|\mx)\Vert^2 \mathbf{I}_{\{\Vert\my_t\Vert_\infty > m_{1-t} + C\sigma_{1-t}\sqrt{\log\epsilon^{-1}}\}} \\
& ~~~ + \Ebb_{\Pbb}\Vert\nabla\log p_{1-t_i}(\my_t|\mx) - \nabla\log p_{1-t_i}(\my_{t_i}|\mx)\Vert^2 \mathbf{I}_{\{\Vert\my_{t_i}\Vert_\infty > m_{1-t_i} + C\sigma_{1-t_i}\sqrt{\log\epsilon^{-1}}\}} \\
& ~~~ + \Ebb_{\Pbb}\Vert\nabla\log p_{1-t_i}(\my_t|\mx) - \nabla\log p_{1-t_i}(\my_{t_i}|\mx)\Vert^2 \mathbf{I}_{\{\Vert\my_t\Vert_\infty \leq m_{1-t} + C\sigma_{1-t}\sqrt{\log\epsilon^{-1}}, \Vert\my_{t_i}\Vert_\infty \leq m_{1-t_i} + C\sigma_{1-t_i}\sqrt{\log\epsilon^{-1}}\}} \\
& \lesssim \frac{1}{1-t_i} \cdot \Pbb\left(\Vert\my_t\Vert_\infty > m_{1-t}+C\sigma_{1-t}\sqrt{\log\epsilon^{-1}}\right) \\
& ~~~ + \frac{1}{1-t_i} \cdot \Pbb\left(\Vert\my_{t_i}\Vert_\infty > m_{1-t_i}+C\sigma_{1-t_i}\sqrt{\log\epsilon^{-1}}\right) \\
& ~~~ + \Ebb_{\Pbb}\Vert\nabla\log p_{1-t_i}(\my_t|\mx) - \nabla\log p_{1-t_i}(\my_{t_i}|\mx)\Vert^2 \mathbf{I}_{\{\Vert\my_t\Vert_\infty \leq m_{1-t} + C\sigma_{1-t}\sqrt{\log\epsilon^{-1}}, \Vert\my_{t_i}\Vert_\infty \leq m_{1-t_i} + C\sigma_{1-t_i}\sqrt{\log\epsilon^{-1}}\}}.
\end{aligned}
$$
By Lemma \ref{lem: bound_for_clipping}, we have
$$
\begin{aligned}
\Pbb\left(\Vert\my_t\Vert_\infty > m_{1-t}+C\sigma_{1-t}\sqrt{\log\epsilon^{-1}}\right) \lesssim \sigma_{1-t}\epsilon\lesssim \epsilon,\\
\Pbb\left(\Vert\my_{t_i}\Vert_\infty > m_{1-t_i}+C\sigma_{1-t_i}\sqrt{\log\epsilon^{-1}}\right) \lesssim \sigma_{1-t_i}\epsilon \lesssim \epsilon.
\end{aligned}
$$
Likewise, using Lagrange’ s theorem and Lemma \ref{lem: derivatives_boundness}, there exists $\my^{\prime}$ between $\my_{t_i}$ and $\my_t$ such that
$$
\begin{aligned}
& ~~~ \Vert\nabla\log p_{1-t_i}(\my_t|\mx) - \nabla\log p_{1-t_i}(\my_{t_i}|\mx)\Vert^2 \mathbf{I}_{\{\Vert\my_t\Vert_\infty \leq m_{1-t} + C\sigma_{1-t}\sqrt{\log\epsilon^{-1}}, \Vert\my_{t_i}\Vert_\infty \leq m_{1-t_i} + C\sigma_{1-t_i}\sqrt{\log\epsilon^{-1}}\}}\\
& = \Vert \partial_{\my} \nabla\log p_{1-t_i}(\my^{\prime}|\mx)\Vert^2 \Vert\my_t - \my_{t_i}\Vert^2 \mathbf{I}_{\{\Vert\my_t\Vert_\infty \leq m_{1-t} + C\sigma_{1-t}\sqrt{\log\epsilon^{-1}}, \Vert\my_{t_i}\Vert_\infty \leq m_{1-t_i} + C\sigma_{1-t_i}\sqrt{\log\epsilon^{-1}}\}} \\
& \lesssim \frac{\log^2\epsilon^{-1}}{\sigma_{1-t_i}^4} \cdot \Vert \my_t - \my_{t_i}\Vert^2.
\end{aligned}
$$
Then, we have
$$
\begin{aligned}
&~~~~\Ebb_{\Pbb}\Vert\nabla\log p_{1-t_i}(\my_t|\mx) - \nabla\log p_{1-t_i}(\my_{t_i}|\mx)\Vert^2 \mathbf{I}_{\{\Vert\my_t\Vert_\infty \leq m_{1-t} + C\sigma_{1-t}\sqrt{\log\epsilon^{-1}}, \Vert\my_{t_i}\Vert_\infty \leq m_{1-t_i} + C\sigma_{1-t_i}\sqrt{\log\epsilon^{-1}}\}} \\
& \lesssim \frac{\log^2\epsilon^{-1}}{\sigma_{1-t_i}^4} \cdot \Ebb_{\Pbb}\Vert \my_t - \my_{t_i}\Vert^2 \\
& \lesssim \frac{\log^2\epsilon^{-1}}{(1-t_i)^2} \cdot \left[\left(\sqrt{1-t_i^2} - \sqrt{1-t^2}\right)^2 + (t - t_i)^2 \right] \\
& \lesssim \frac{\log^2\epsilon^{-1}(t-t_i)^2}{(1-t)^3}.
\end{aligned}
$$
Therefore $(\mathrm{II})$ is bounded by
\begin{equation} \label{eq: bound_II}
(\mathrm{II}) \lesssim \frac{1}{t^2}\left(\frac{\epsilon}{1-t_i} + \frac{\Delta^2}{T^2}\cdot\frac{\log^2\epsilon^{-1}}{1-t} \right)
\lesssim \frac{1}{t^2}\left(\frac{\epsilon}{1-t} + \frac{\Delta^2}{T^2}\cdot\frac{\log^2\epsilon^{-1}}{1-t} \right).
\end{equation}

Then, we bound the third term $(\mathrm{III})$. According to Lemma \ref{lem: derivatives_boundness}, it holds that
$$
\begin{aligned}
(\mathrm{III}) &= \left(\frac{1}{t} - \frac{1}{t_i}\right)^2 \cdot \Ebb_{\Pbb}\Vert \nabla\log p_{1-t_i}(\my_{t_i}|\mx)\Vert^2 \\
& \lesssim \frac{(t-t_i)^2}{t^4} \cdot \Ebb_{\Pbb}\Vert \nabla\log p_{1-t_i}(\my_{t_i}|\mx)\Vert^2 \mathbf{I}_{\{\Vert\my_{t_i}\Vert_\infty \leq m_{1-t_i} + C\sigma_{1-t_i}\sqrt{\log\epsilon^{-1}}\}} \\
& ~~~~ +  \frac{(t-t_i)^2}{t^4} \cdot \Ebb_{\Pbb}\Vert \nabla\log p_{1-t_i}(\my_{t_i}|\mx)\Vert^2 \mathbf{I}_{\{\Vert\my_{t_i}\Vert_\infty > m_{1-t_i} + C\sigma_{1-t_i}\sqrt{\log\epsilon^{-1}}\}} \\
& \lesssim \frac{(t-t_i)^2}{t^4} \cdot \left(\frac{\log\epsilon^{-1}}{1-t} + \frac{\epsilon}{1-t}\right) \\
& \lesssim \frac{(t-t_i)^2}{t^4(1-t)} \cdot (\log\epsilon^{-1} + \epsilon ).
\end{aligned}
$$
Thus, term $(\mathrm{III})$ is bounded by
\begin{equation} \label{eq: bound_III}
(\mathrm{III}) \lesssim \frac{\Delta^2}{T^2}\cdot\frac{\log\epsilon^{-1} + \epsilon}{t^2(1-t)}.
\end{equation}

The last term $(\mathrm{IV})$ is bounded by 
\begin{equation} \label{eq: bound_IV}
\begin{aligned}
(\mathrm{IV}) &= \frac{1}{t_i^2}\cdot \Ebb_{\Pbb}\Vert\nabla\log p_{1-t_i}(\my_{t_i}|\mx) - \wh{\mb}(1-t_i,y_{t_i},\mx)\Vert^2 \\
& \lesssim \frac{1}{t\cdot t_i}\cdot \Ebb_{\Pbb}\Vert\nabla\log p_{1-t_i}(\my_{t_i}|\mx) - \wh{\mb}(1-t_i,y_{t_i},\mx)\Vert^2.
\end{aligned}
\end{equation}

Combining \eqref{eq: bound_I}-\eqref{eq: bound_IV} 
we obtain
$$
\begin{aligned}
&~~~\Ebb_{\mcD,\mcT,\mcZ}\Ebb_{\mx}
\left[\sum_{i=0}^{K-1}\int_{t_i}^{t_{i+1}} t\cdot\Ebb_{\Pbb}\left(
 \left\Vert\frac{\nabla\log p_{1-t}(\my_t|\mx)}{t} - \frac{\wh{\mb}(1-t_i, \my_{t_i},\mx)}{t_i} \right\Vert^2 \mrd t
\right) \right] \\
& \lesssim \frac{\Delta^2}{T^2}\left( \log^3\epsilon^{-1}+\epsilon \right)\log\mcM \\
&~~~ + \frac{\Delta^2}{T^2}\log^2\epsilon^{-1}\cdot\log\mcM+\epsilon\log\mcM \\
& ~~~ + \frac{\Delta^2}{T^2}\left( \log\epsilon^{-1} + \epsilon \right)\log\mcM \\
& ~~~ + \Ebb_{\mcD,\mcT,\mcZ}\Ebb_{\mx}\left(\sum_{i=0}^{K-1}\frac{t_{i+1}-t_i}{t_i}\cdot\Ebb_{\Pbb}\Vert\nabla\log p_{1-t_i}(\my_{t_i}|\mx) - \wh{\mb}(1-t_i,\my_{t_i},\mx)\Vert^2\right).
\end{aligned}
$$
By taking $\epsilon=n^{-\frac{2\beta}{d_\mcX + d_\mcY + 2\beta}}$, $\mcM = \lfloor n^{\frac{1}{d_\mcX + d_\mcY + 2\beta}}\rfloor + 1 \lesssim n^{\frac{1}{d_\mcX + d_\mcY + 2\beta}}$, $C_T = 2\beta$, $T = \mcM^{-C_T}$, $\Delta=\max_{0\leq i \leq K-1}(t_{i+1} - t_i)=\mcO\left(\min\left\{\Delta_n, n^{-\frac{3\beta}{d_\mcX + d_\mcY + 2\beta}}\right\}\right)$ and using \eqref{eq: discrete_generalization}, we obtain
$$
\Ebb_{\mcD,\mcT,\mcZ}\Ebb_{\mx}
\left[\sum_{i=0}^{K-1}\int_{t_i}^{t_{i+1}} t\cdot\Ebb_{\Pbb}\left(
\left\Vert\frac{\my_t}{t} - \frac{\my_{t_i}}{t_i} \right\Vert^2 \right)
\right] \lesssim n^{-\frac{2\beta}{d_\mcX + d_\mcY + 2\beta}},
$$
and
$$
\Ebb_{\mcD,\mcT,\mcZ}\Ebb_{\mx}
\left[\sum_{i=0}^{K-1}\int_{t_i}^{t_{i+1}} t\cdot\Ebb_{\Pbb}\left(
 \left\Vert\frac{\nabla\log p_{1-t}(\my_t|\mx)}{t} - \frac{\wh{\mb}(1-t_i, \my_{t_i},\mx)}{t_i} \right\Vert^2 \mrd t
\right) \right] \lesssim n^{-\frac{2\beta}{d_\mcX + d_\mcY + 2\beta}}\log^{19}n,
$$
for sufficiently large $n$, which implies that
\begin{equation} \label{eq: sub_bound2}
\begin{aligned}
\Ebb_{\mcD,\mcT,\mcZ}\Ebb_{\mx}[\mathrm{TV}(p_T(\cdot|\mx),\ov{p}_T(\cdot|\mx))]
&\lesssim \left(\Ebb_{\mcD,\mcT,\mcZ}\Ebb_{\mx}[\mathrm{TV}^2(p_T(\cdot|\mx),\ov{p}_T(\cdot|\mx))]\right)^{\frac{1}{2}} \\
&\lesssim n^{-\frac{\beta}{d_\mcX + d_\mcY + 2\beta}}\log^{\frac{19}{2}}n.
\end{aligned}
\end{equation}
Combining this with \eqref{eq: TV bound_1}, we have
\begin{align*}
    \Ebb_{\mcD,\mcT,\mcZ}\Ebb_{\mx}[\mcW_2(\wt{p}_T^B(\cdot|\mx),p_T^B(\cdot|\mx))]
    &\lesssim \Ebb_{\mcD,\mcT,\mcZ}\Ebb_{\mx}[\mathrm{TV}(p_T(\cdot|\mx),\ov{p}_T(\cdot|\mx))] + \Ebb_{\mcD,\mcT,\mcZ}\Ebb_{\mx}[\mathrm{TV}(p_{1-T}(\cdot|\mx),\mcN(0,\mI_{d_\mcY}))]\\
    &\lesssim n^{-\frac{\beta}{d_\mcX + d_\mcY + 2\beta}}\log^{\frac{19}{2}}n + n^{-\frac{2\beta}{d_\mcX + d_\mcY + 2\beta}}\\
    &\lesssim n^{-\frac{\beta}{d_\mcX + d_\mcY + 2\beta}}\log^{\frac{19}{2}}n.
\end{align*}
The proof is complete.
\end{proof}

Based on Theorem \ref{thm: sampling_error} and Lemma \ref{lem: early_stopping}, we can bound $\Ebb_{\mcD,\mcT,\mcZ}\Ebb_{\mx}[\mcW_2(\wt{p}_T^B(\cdot|\mx), p_0(\cdot|\mx))]$.
\begin{proof}[Proof of Theorem \ref{thm: convergence_rate}]
Combining \eqref{eq: sub_bound1} and \eqref{eq: sub_bound2}, we obtain
$$
\begin{aligned}
\Ebb_{\mcD, \mcT,\mcZ}\Ebb_{\mx}[\mcW_2(\wt{p}_T^B(\cdot|\mx), p_0(\cdot|\mx))]
 &\leq 
  \Ebb_{\mcD,\mcT,\mcZ}\Ebb_{\mx}[\mcW_2( \wt{p}_T^B(\cdot|\mx), p_T^B(\cdot|\mx))] + 
  \Ebb_{\mx}[\mcW_2(p_T^B(\cdot|\mx), p_0(\cdot|\mx))] \\
& \lesssim n^{-\frac{\beta}{d_\mcX + d_\mcY + 2\beta}}\log^{\frac{19}{2}}n.
\end{aligned}
$$
The proof is complete.
\end{proof}

\section{Proofs of Bootstrap Convergence}\label{sec:prbc}

\begin{proof}[Proof of Theorem \ref{thm: bootstrap_consistency}]
It follows that
\begin{align}\label{eqe1}
\left|\wh{R}(\mx)-\wh{R}^*(\mx)\right|
&=\left|\wh{f}(\mx)-f_0(\mx)-\wh{f}^*(\mx)+\wh{f}(\mx)\right|\notag\\
&\leq \left|\wh{f}(\mx)-f_0(\mx)\right|
+\left|\wh{f}^*(\mx)-\wh{f}(\mx)\right|\notag\\
&\leq \left|\frac{1}{n}\sum_{j=1}^n\wh{Y}_{\mx}^{(j)}-f_0(\mx)\right|
+\left|\frac{1}{n}\sum_{j=1}^n\wh{Y}^{*,(j)}_{\mx}-\frac{1}{n}\sum_{j=1}^n\wh{Y}_{\mx}^{(j)}\right|\notag\\
&\leq \frac{1}{n}\sum_{j=1}^n\left|\wh{Y}_{\mx}^{(j)}-f_0(\mx)\right|
+\frac{1}{n}\sum_{j=1}^n\left|\wh{Y}^{*,(j)}_{\mx}- \wh{Y}_{\mx}^{(j)}\right|.
\end{align}
We  further analyze the  two terms on the 
right-hand side of  \eqref{eqe1}, which yields 
\begin{align}\label{eqe2}
\Ebb 
\left[\frac{1}{n}\sum_{j=1}^n\left|\wh{Y}_{\mx}^{(j)}-f_0(\mx)\right| \bigg| \mcD,\mcT,\mcZ,\mx\right]
&=\Ebb\left[\left|\wh{Y}_{\mx}^{(1)}-f_0(\mx)\right| \bigg| \mcD,\mcT,\mcZ,\mx\right]\notag\\
&\leq 
\Ebb \left[\left|\wh{Y}_{\mx}^{(1)}-Y_\mx\right| \bigg| \mcD,\mcT,\mcZ,\mx\right]\notag\\
&\leq 
 \left[\Ebb\left[\left|\wh{Y}_{\mx}^{(1)}-Y_\mx\right|^2 \bigg| \mcD,\mcT,\mcZ,\mx\right]\right]^{\frac{1}{2}}
\end{align}
and
\begin{align}\label{eqe3}
\Ebb 
\left[\frac{1}{n}\sum_{j=1}^n\left|\wh{Y}^{*,(j)}_{\mx}- \wh{Y}_{\mx}^{(j)}\right| \bigg| \mcD,\mcT,\mcZ,\mcD^*,\mcT^*,\mcZ^*,\mx \right]
&=
\Ebb \left[\left|\wh{Y}^{*,(1)}_{\mx}- \wh{Y}_{\mx}^{(1)}\right| \Big| \mcD,\mcT,\mcZ,\mcD^*,\mcT^*,\mcZ^*,\mx \right]\notag\\
&\leq \left[ \Ebb\left[ \left|\wh{Y}^{*,(1)}_{\mx}- \wh{Y}_{\mx}^{(1)}\right|^2 \bigg| \mcD,\mcT,\mcZ,\mcD^*,\mcT^*,\mcZ^*,\mx \right] \right]^{\frac{1}{2}}.
\end{align}
The above inequalities \eqref{eqe1}-\eqref{eqe3}  imply that
\begin{align*}
\Ebb_{\mcD, \mcT,\mcZ, \mcD^*,\mcT^*,\mcZ^*} \Ebb_{\mx}\left[\mcW_1\left(\wh{R}(\mx),\wh{R}^*(\mx)\right)\right]
&\leq
\Ebb_{\mcD,\mcT,\mcZ} \Ebb_{\mx}
\left[\mcW_2(\wt{p}_T^B(\cdot|\mx), p_0(\cdot|\mx))\right] \\
& ~~~~ +
\Ebb_{\mcD, \mcT,\mcZ, \mcD^*,\mcT^*,\mcZ^*} \Ebb_{\mx}
\left[\mcW_2(\wt{p}_T^B(\cdot|\mx), p_T^{B,*}(\cdot|\mx))\right].
\end{align*}
By Theorem \ref{thm: convergence_rate}, we thus have
\begin{align*}
\Ebb_{\mcD, \mcT,\mcZ, \mcD^*,\mcT^*,\mcZ^*} \Ebb_{\mx}\left[\mcW_1\left(\wh{R}(\mx),\wh{R}^*(\mx)\right)\right]
&
\lesssim
n^{-\frac{\beta}{d_\mcX + d_\mcY + 2\beta}} \log^{\frac{19}{2}}n.
\end{align*}
The proof is complete.
\end{proof}
 
\begin{proof}[Proof of Theorem \ref{thm: bootstrap_confidence_interval}]
Denote
$
H(r):=\Pbb(\wh{R}(\mx)\leq r), ~
\wh{H}(r):=\Pbb(\wh{R}^*(\mx)\leq r).
$
From Theorem \ref{thm: bootstrap_consistency}, we can deduce that
$\wh{H}$ converges to $H$.
Furthermore, since $\wh{H}^*$ converges to $\wh{H}$,  it follows that 
$\wh{H}^*$ converges to $H$ as well.
Conversely, the consistency of their quantiles is preserved, implying that the inverse functions $(\wh{H}^*)^{-1}$ converge to $H^{-1}$. 
This completes the proof.
\end{proof}

\section{Auxiliary Lemmas}\label{sec:al}
\subsection{Several High-Probability Bounds}\label{sec: shb}
Following \cite{oko2023diffusion},
we provide several high-probability bounds in this section.  In the following, 
we denote $m_t:=1-t$ and $\sigma_t := \sqrt{t(2-t)}$.

\subsubsection{Bounds on $p_t(\my|\mx)$}

In this section, we give the upper and lower bounds 
on $p_t(\my|\mx)$.

\begin{lemma}\label{lem: bound_for_density}
For any $\mx\in[-1,1]^{d_\mcX}, \my\in\Rbb^{d_\mcY}$, the following upper and lower bounds on $p_t(\my|\mx)$ hold: 
\begin{equation}\label{eq: upper_lower_bound_p}
\exp\left(-\frac{d_\mcY(\Vert \my \Vert_{\infty} - m_t)_{+}^{2}}{\sigma_t^2}\right) \lesssim p_t(\my|\mx) \lesssim \exp\left(-\frac{(\Vert \my \Vert_{\infty} - m_t)_{+}^{2}}{2\sigma_t^2}\right).
\end{equation}
\end{lemma}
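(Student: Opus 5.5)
We start from the explicit mixture representation
$$
p_t(\my|\mx)=\int_{[-1,1]^{d_\mcY}} p_0(\my_1|\mx)\,\frac{1}{\sigma_t^{d_\mcY}(2\pi)^{d_\mcY/2}}\exp\left(-\frac{\Vert\my-m_t\my_1\Vert^2}{2\sigma_t^2}\right)\mrd\my_1 .
$$
By Assumption \ref{ass: bounded_density}, $C_l\le p_0(\my_1|\mx)\le C_u$ on the cube. Both bounds then reduce to estimating the Gaussian mass $\int_{[-1,1]^{d_\mcY}}\phi_{\sigma_t}(\my-m_t\my_1)\,\mrd\my_1$ up to constants. This integral factorizes over coordinates, so we work coordinatewise: each factor is $\frac{1}{m_t}\Pbb\big(|Z_i|\ \text{lands in an interval of length } 2m_t\sigma_t^{-1}\text{ centred at } y_i/\sigma_t\big)$, after the change of variables $u=m_t y_{1,i}$.

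\emph{Upper bound.} Pick a coordinate $i$ with $|y_i|=\Vert\my\Vert_\infty$. For every $\my_1$ in the cube we have $|y_i-m_ty_{1,i}|\ge(\Vert\my\Vert_\infty-m_t)_+$. Split the exponent as $\exp(-\Vert\cdot\Vert^2/2\sigma_t^2)\le \exp(-(\Vert\my\Vert_\infty-m_t)_+^2/(2\sigma_t^2))^{1/2}\cdot\exp(-\Vert\cdot\Vert^2/(4\sigma_t^2))$. Integrating the second factor gives $\lesssim \sigma_t^{d_\mcY}/m_t^{d_\mcY}\cdot\sigma_t^{-d_\mcY}$. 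The factor $m_t^{-d_\mcY}$ is harmless when $m_t\gtrsim 1$. When $m_t$ is small, we instead bound the integral directly by $\mathrm{vol}\cdot\sup\phi\lesssim\sigma_t^{-d_\mcY}\lesssim 1$, since $\sigma_t\gtrsim1$ there. Using the minimum of the two bounds, as in the estimate of $|g_2-g_3|$ in Section \ref{subsec: c2}, gives a constant prefactor. To keep the full exponent $1/(2\sigma_t^2)$ rather than $1/(4\sigma_t^2)$, apply the pointwise bound only to coordinate $i$ and integrate the remaining coordinates freely. The $i$-th factor is then at most $\frac{1}{m_t}\int$ over a half-line starting at distance $(\Vert\my\Vert_\infty-m_t)_+$, which is $\lesssim\exp(-(\Vert\my\Vert_\infty-m_t)_+^2/2\sigma_t^2)$ by the Gaussian tail. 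The same small/large $m_t$ dichotomy controls the $1/m_t$ factor.

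\emph{Lower bound.} Restrict the integral to a subcube of $\my_1$ close to the projection of $\my/m_t$ onto $[-1,1]^{d_\mcY}$. For each coordinate choose $y_{1,i}\in[-1,1]$ with $|y_i-m_ty_{1,i}|\le(|y_i|-m_t)_++\sigma_t$ on a set of $y_{1,i}$ of length $\gtrsim\min\{1,\sigma_t/m_t\}$. Then $\Vert\my-m_t\my_1\Vert^2\le d_\mcY((\Vert\my\Vert_\infty-m_t)_++\sigma_t)^2\le 2d_\mcY(\Vert\my\Vert_\infty-m_t)_+^2+2d_\mcY\sigma_t^2$. Hence the integrand is at least $C_l\sigma_t^{-d_\mcY}e^{-d_\mcY}\exp(-d_\mcY(\Vert\my\Vert_\infty-m_t)_+^2/\sigma_t^2)$, and the region has volume $\gtrsim\min\{1,\sigma_t/m_t\}^{d_\mcY}$. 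Their product is $\gtrsim\min\{\sigma_t^{-1},m_t^{-1}\}^{d_\mcY}\gtrsim1$, because $\sigma_t,m_t\le1$.

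The main obstacle is the bookkeeping of the prefactors $\sigma_t^{-d_\mcY}$ and $m_t^{-d_\mcY}$ uniformly over $t\in(0,1)$ in the upper bound, near both endpoints. We handle it with the two-regime split $m_t\ge1/2$ versus $m_t<1/2$, using $m_t^2+\sigma_t^2=1$ so that in each regime one of the two quantities is bounded below by a constant.
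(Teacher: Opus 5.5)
Your proposal is correct. The upper bound is essentially the paper's argument. You isolate a coordinate with $|y_i|=\Vert\my\Vert_\infty$ and apply the Gaussian tail to it. Every other one-dimensional factor is bounded by $\min\{m_t^{-1},\,2/(\sigma_t\sqrt{2\pi})\}$, and the prefactor is absorbed using $m_t^2+\sigma_t^2=1$. The preliminary exponent-splitting step is unnecessary once you do this.

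The lower bound is where you take a genuinely different route. The paper splits into the cases $\my\in[-m_t,m_t]^{d_\mcY}$ and $\my\notin[-m_t,m_t]^{d_\mcY}$. In the second case it introduces the one-dimensional factor $g(y_i)=\int_{-1}^{1}(\sigma_t\sqrt{2\pi})^{-1}\exp\bigl(-(y_i-m_ty_{1,i})^2/(2\sigma_t^2)\bigr)\,\mathrm{d}y_{1,i}$. It uses the monotonicity of $g$ on $\{y_i>m_t\}$ and on $\{y_i<-m_t\}$ to reduce to $g(\pm\Vert\my\Vert_\infty)$. It then lower-bounds that single Gaussian integral over a window of length $\sqrt{2}m_t$ in the normalized variable, obtaining $\exp(-r^2-4m_t^2)$ per coordinate with $r=(\Vert\my\Vert_\infty-m_t)/\sigma_t$.

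You instead exhibit one explicit product set $\prod_i S_i$ with $|S_i|\ge\min\{1,\sigma_t/m_t\}$, on which each deviation $|y_i-m_ty_{1,i}|$ is at most $(|y_i|-m_t)_++\sigma_t$. This treats coordinates inside and outside $[-m_t,m_t]$ simultaneously; the paper's Case II leaves the coordinates with $|y_i|\le m_t$ implicit. Your version needs no monotonicity argument, and the prefactor comes out as $\min\{\sigma_t^{-1},m_t^{-1}\}^{d_\mcY}\ge 1$ automatically.

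Your route is shorter and shows directly where the exponent $d_\mcY/\sigma_t^2$ (rather than $1/(2\sigma_t^2)$) comes from: summing $d_\mcY$ coordinates, each dominated by the maximal deviation, together with $(a+b)^2\le 2a^2+2b^2$. The paper's route is a more hands-on one-dimensional computation. It pays a slack of $e^{-4m_t^2}$ per coordinate instead of your $e^{-1}$, but arrives at the same final exponent.
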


\begin{proof}
This proof can be divided into two cases: $\my\in [-m_t, m_t]^{d_\mcY}$ and  $\my\notin [-m_t,m_t]^{d_\mcY}$.\\
\textbf{
Case I ($\my\in [-m_t, m_t]^{d_\mcY}$): 
}
Given that $p_0(\my_1|\mx) \leq C_u$, 
we have
$$
\begin{aligned}
p_t(\my|\mx) &= \int_{\Rbb^{d_\mcY}}\frac{1}{\sigma_t^{d_\mcY}(2\pi)^{d_\mcY}} p_0(\my_1|\mx)\exp\left(-\frac{\Vert \my - m_t\my_1\Vert^2}{2\sigma_t^2}\right)\mrd\my_1\\
&\leq C_u \int_{\Rbb^{d_\mcY}}\frac{\mathbf{I}_{\{\my_1\in [-1,1]^{d_\mcY}\}}}{\sigma_t^{d_\mcY}(2\pi)^{d_\mcY/2}} \exp\left(-\frac{\Vert \my - m_t\my_1\Vert^2}{2\sigma_t^2}\right)\mrd\my_1\\
&\leq \frac{C_u 2^{d_\mcY}}{\sigma_t^{d_\mcY}(2\pi)^{d_\mcY/2}}.
\end{aligned}
$$
Additionally, we also have
$$
p_t(\my|\mx) \leq C_u \int_{\Rbb^{d_\mcY}}\frac{1}{\sigma_t^{d_\mcY}(2\pi)^{d_\mcY/2}} \exp\left(-\frac{\Vert \my - m_t\my_1\Vert^2}{2\sigma_t^2}\right)\mrd\my_1 \leq \frac{C_u}{m_t^{d_\mcY}}.
$$
Thus, $p_t(\my|\mx)$ is bounded by $\min\left\{\frac{C_u2^{d_\mcY}}{\sigma_t^{d_\mcY}(2\pi)^{d_\mcY/2}}, \frac{C_u}{m_t^{d_\mcY}}\right\} = \mcO(1)$. 

The lower bound can be derived as follows:
$$
p_t(\my|\mx) \geq C_l\int_{\Rbb^{d_\mcY}}\frac{\mathbf{I}_{\{\my_1\in[-1,1]^{d_\mcY}\}}}{\sigma_t^{d_\mcY}(2\pi)^{d_\mcY/2}}\exp\left(-\frac{\Vert\my-m_t\my_1\Vert^2}{2\sigma_t^2}\right)\mrd\my_1.
$$
Let $\mz = \frac{\my - m_t\my_1}{\sigma_t}$, then we have
$$
\begin{aligned}
p_t(\my|\mx) &\geq C_l\cdot\frac{1}{m_t^{d_\mcY}}\int_{\Rbb^{d_\mcY}}\frac{\mathbf{I}_{\left\{\mz\in\left[\frac{\my - m_t}{\sigma_t},\frac{\my + m_t}{\sigma_t}\right]\right\}}}{(2\pi)^{d_\mcY}}\exp\left(-\frac{\Vert\mz\Vert^2}{2}\right)\mrd\mz \\
&\geq C_l\cdot\frac{1}{m_t^{d_\mcY}}\int_{\Rbb^{d_\mcY}}\frac{\mathbf{I}_{\left\{\mz\in [\my - m_t,\my + m_t]\right\}}}{(2\pi)^{d_\mcY/2}}\exp\left(-\frac{\Vert\mz\Vert^2}{2}\right)\mrd\mz \\
& \geq C_l\cdot\frac{1}{m_t^{d_\mcY}} \frac{(2m_t)^{d_\mcY}}{(2\pi)^{d_\mcY/2}}\exp\left(-2d_\mcY\right) \\
& \gtrsim 1.
\end{aligned}
$$

\noindent
\textbf{
Case II ($\my\notin [-m_t,m_t]^{d_\mcY}$): 
}
Let $r = \frac{\Vert \my \Vert_{\infty} - m_t}{\sigma_t}$ and $|y_{i^*}| = \Vert \my \Vert_{\infty}$. For $i\neq i^*$, if $y_i > m_t$, then
$$
\begin{aligned}
\int_{\Rbb}\frac{\mathbf{I}_{\{y_{1,i}\in[-1,1]\}}}{\sigma_t(2\pi)^{1/2}}\exp\left(-\frac{(y_i - m_t y_{1,i})^2}{2\sigma_t^2}\right)\mrd y_{1,i}
&\leq \int_{-1}^{1}\frac{1}{\sigma_t(2\pi)^{1/2}}\exp\left(-\frac{(m_t-m_t y_{1,i})^2}{2\sigma_t^2}\right)\mrd y_{1,i} \\
&\leq \frac{1}{m_t} \cdot \int_{0}^{\infty}\frac{1}{\sigma_t(2\pi)^{1/2}}\exp\left(-\frac{y_{1,i}^2}{2\sigma_t^2}\right)\mrd y_{1,i} \leq \frac{1}{2m_t}. 
\end{aligned}
$$
If $y_i < -m_t$, then
$$
\begin{aligned}
\int_{\Rbb}\frac{\mathbf{I}_{\{y_{1,i}\in[-1,1]\}}}{\sigma_t(2\pi)^{1/2}}\exp\left(-\frac{(y_i - m_t y_{1,i})^2}{2\sigma_t^2}\right)\mrd y_{1,i}
&\leq \int_{-1}^{1}\frac{1}{\sigma_t(2\pi)^{1/2}}\exp\left(-\frac{(-m_t-m_t y_{1,i})^2}{2\sigma_t^2}\right)\mrd y_{1,i} \\
&\leq \frac{1}{m_t} \cdot \int_{0}^{\infty}\frac{1}{\sigma_t(2\pi)^{1/2}}\exp\left(-\frac{y_{1,i}^2}{2\sigma_t^2}\right)\mrd y_{1,i} \leq \frac{1}{2m_t}.
\end{aligned}
$$
Also, we have
$$
\int_{\Rbb}\frac{\mathbf{I}_{\{y_{1,i}\in[-1,1]\}}}{\sigma_t(2\pi)^{1/2}}\exp\left(-\frac{(y_i - m_t y_{1,i})^2}{2\sigma_t^2}\right)\mrd y_{1,i} \leq \frac{2}{\sigma_t(2\pi)^{1/2}}.
$$
Therefore, we have
$$
p_t(\my|\mx) \lesssim \frac{1}{m_t^{d_\mcY-1}}\int_{-1}^{1}\frac{1}{\sigma_t(2\pi)^{1/2}}\exp\left(-\frac{(y_{i^*} - m_ty_{1,i^*})^2}{2\sigma_t^2}\right)\mrd y_{1,i^*},
$$
and
$$
\begin{aligned}
p_t(\my|\mx) &\lesssim \frac{1}{\sigma_t^{d_\mcY-1}}\int_{-1}^{1}\frac{1}{\sigma_t(2\pi)^{1/2}}\exp\left(-\frac{(y_{i^*} - m_ty_{1,i^*})^2}{2\sigma_t^2}\right)\mrd y_{1,i^*} \\
& \lesssim \frac{1}{\sigma_t^{d_\mcY}} \exp\left(-\frac{(\|\my\|_\infty - m_t)^2}{2\sigma_t^2}\right).
\end{aligned}
$$
Let $z_{i^*} = \frac{y_{i^*} - m_t y_{1,i^*}}{\sqrt{2}\sigma_t}$, then
$$
p_t(\my|\mx) \lesssim \frac{1}{m_t^{d_\mcY}}\cdot \int_{\frac{y_{i^*} - m_t}{\sqrt{2}\sigma_t}}^{\frac{y_{i^*} + m_t}{\sqrt{2}\sigma_t}}\frac{1}{\sqrt{\pi}}\exp(-z_{i^*}^2)\mathrm{d}z_{i^*}.
$$
If $y_{i^*} > m_t$, then
$$
p_t(\my|\mx) \lesssim \frac{1}{m_t^{d_\mcY}} \cdot \int_{\frac{\Vert\my\Vert_{\infty} - m_t}{\sqrt{2}\sigma_t}}^{\infty}\frac{1}{\sqrt{\pi}}\exp(-z_{i^*}^2)\mrd z_{i^*}\lesssim \frac{1}{m_t^{d_\mcY}}\exp\left(-\frac{(\Vert \my \Vert_{\infty} - m_t)^2}{2\sigma_t^2}\right).
$$
If $y_i^* < -m_t$, $y_i^* = -\Vert \my\Vert_{\infty}$, then
$$
\begin{aligned}
p_t(\my|\mx) &\lesssim \frac{1}{m_t^{d_\mcY}} \cdot \int_{-\frac{y_{i^*} + m_t}{\sqrt{2}\sigma_t}}^{-\frac{y_{i^*} - m_t}{\sqrt{2}\sigma_t}}\frac{1}{\sqrt{\pi}}\exp(-z_{i^*}^2)\mrd z_{i^*}\\
&\lesssim \frac{1}{m_t^{d_\mcY}}\cdot\int_{\frac{\Vert\my\Vert_{\infty} - m_t}{\sqrt{2}\sigma_t}}^{\infty}\frac{1}{\sqrt{\pi}}\exp(-z_{i^*}^2)\mrd z_{i^*}\\
&\lesssim \frac{1}{m_t^{d_\mcY}}\exp\left(-\frac{(\Vert \my \Vert_{\infty} - m_t)^2}{2\sigma_t^2}\right).
\end{aligned}
$$
Therefore, $p_t(\my|\mx)$ is bounded by
$$
\begin{aligned}
p_t(\my|\mx) & \lesssim \min\left\{\frac{1}{m_t^{d_\mcY}}, \frac{1}{\sigma_t^{d_\mcY}}\right\}\exp\left(-\frac{(\Vert \my \Vert_{\infty} - m_t)^2}{2\sigma_t^2}\right) \\
& \lesssim \exp\left(-\frac{(\Vert \my \Vert_{\infty} - m_t)^2}{2\sigma_t^2}\right).
\end{aligned}
$$

On the other hand, for $1\leq i\leq d_\mcY$, let $z_i = \frac{y_i-m_t y_{1,i}}{\sqrt{2}\sigma_t}$, then we define
$$
\begin{aligned}
g(y_i) &:= \int_{-1}^{1}\frac{1}{\sigma_t(2\pi)^{1/2}}\exp\left(-\frac{(y_i - m_t y_{1,i})^2}{2\sigma_t^2}\right)\mrd y_{1,i} \\
&= \frac{1}{m_t}\int_{\frac{y_{i} - m_t}{\sqrt{2}\sigma_t}}^{\frac{y_{i} + m_t}{\sqrt{2}\sigma_t}}\frac{1}{\sqrt{\pi}}\exp(-z_{i}^2)\mathrm{d}z_{i}.
\end{aligned}
$$
If $y_i > m_t$, then $g(y_i)$ is a decreasing function. Therefore,
\begin{align*}
g(y_i)\geq g(\Vert\my\Vert_{\infty}) &= \frac{1}{m_t}\int_{\frac{\Vert\my\Vert_{\infty} - m_t}{\sqrt{2}\sigma_t}}^{\frac{\Vert\my\Vert_{\infty} + m_t}{\sqrt{2}\sigma_t}}\frac{1}{\sqrt{\pi}}\exp(-z_{i}^2)\mrd z_{i} \\
&\geq \frac{1}{m_t} \int_{\frac{r}{\sqrt{2}}}^{\frac{r}{\sqrt{2}} + \sqrt{2} m_t}\frac{1}{\sqrt{\pi}}\exp(-z_i^2)\mrd z_i \\
&\gtrsim  \exp(-r^2 - 4m_t^2)\\
&\gtrsim  \exp\left(-\frac{(\Vert\my\Vert_{\infty} - m_t)^2}{\sigma_t^2}\right),
\end{align*}
where we used equality $(a + b)^2 \leq 2(a^2 + b^2)$ and $\exp(-4m_t^2) \geq \exp(-4)$.
If $y_i<-m_t$, then
$$
g(y_i) = \frac{1}{m_t}\int_{\frac{-y_{i} - m_t}{\sqrt{2}\sigma_t}}^{\frac{-y_{i} + m_t}{\sqrt{2}\sigma_t}}\frac{1}{\sqrt{\pi}}\exp(-z_{i}^2)\mathrm{d}z_{i}.
$$
It is easy to check that $g(y_i)$ is a increasing function. Therefore,
$$
\begin{aligned}
g(y_i)\geq g(-\Vert\my\Vert_{\infty}) &= \frac{1}{m_t}\int_{\frac{\Vert\my\Vert_{\infty} - m_t}{\sqrt{2}\sigma_t}}^{\frac{\Vert\my\Vert_{\infty} + m_t}{\sqrt{2}\sigma_t}}\frac{1}{\sqrt{\pi}}\exp(-z_{i}^2)\mrd z_{i} \\
& \gtrsim  \exp\left(-\frac{(\Vert\my\Vert_{\infty} - m_t)^2}{\sigma_t^2}\right).
\end{aligned}
$$
The above discussion implies that
$$
p_t(\my|\mx) \gtrsim \exp\left(-\frac{d_\mcY(\Vert\my\Vert_{\infty} - m_t)^2}{\sigma_t^2}\right).
$$
Based on the above discussion of the two cases,
we finally obtain
$$
\exp\left(-\frac{d_\mcY(\Vert \my \Vert_{\infty} - m_t)_{+}^{2}}{\sigma_t^2}\right) \lesssim p_t(\my|\mx) \lesssim \exp\left(-\frac{(\Vert \my \Vert_{\infty} - m_t)_{+}^{2}}{2\sigma_t^2}\right).
$$
The proof is complete.
\end{proof}

\subsubsection{Bounds on the derivatives of $p_t(\my|\mx)$ and $\nabla\log p_t(\my|\mx)$}
In this section, we give the upper bounds on the derivatives of $p_t(\my|\mx)$ and $\nabla\log p_t(\my|\mx)$. We first state the following lemma.
\begin{lemma}[Integral Clipping]\label{lem: integral_clipping} Let $\mx\in[-1,1]^{d_\mcX}$, $\my\in\Rbb^{d_\mcY}$ and $\boldsymbol{\alpha}\in\mathbb{N}^{d_\mcY}$. For any $0 < \epsilon < e^{-1}$, there exists a constant $C > 0$ such that
$$
\begin{aligned}
&\Bigg|
\int_{\Rbb^{d_\mcY}}\prod_{i=1}^{d_\mcY}\left(\frac{y_i-m_ty_{1,i}}{\sigma_t}\right)^{\alpha_i}\frac{1}{\sigma_t^{d_\mcY}(2\pi)^{d_\mcY/2}}p_0(\my_1|\mx)\exp\left(-\frac{\Vert\my-m_t\my_1\Vert^2}{2\sigma_t^2}\right)\mrd\my_1 \\ & ~~~~~~~~~ -  \int_{A_{\my}}\prod_{i=1}^{d_\mcY}\left(\frac{y_i-m_t y_{1,i}}{\sigma_t}\right)^{\alpha_i}\frac{1}{\sigma_t^{d_\mcY}(2\pi)^{d_\mcY/2}}p_0(\my_1|\mx)\exp\left(-\frac{\Vert\my-m_t\my_1\Vert^2}{2\sigma_t^2}\right)\mrd\my_1
\Bigg| \lesssim \epsilon,
\end{aligned}
$$
where $A_{\my} = \prod_{i=1}^{d_\mcY}a_{i,\my}$ with $a_{i,\my} = \bigg[\frac{y_i - C\sigma_t\sqrt{\log\epsilon^{-1}}}{m_t}, \frac{y_i + C\sigma_t\sqrt{\log\epsilon^{-1}}}{m_t} \bigg].$ 
    
\end{lemma}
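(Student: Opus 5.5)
The plan is to bound the difference of the two integrals by the integral over the complement $A_{\my}^c$ and to exploit Gaussian tail decay there. Since $p_0(\cdot|\mx)\le C_u$ and is supported on $[-1,1]^{d_\mcY}$ by Assumption \ref{ass: bounded_density}, the difference is at most
$$
C_u\int_{A_{\my}^c}\prod_{i=1}^{d_\mcY}\left|\frac{y_i-m_ty_{1,i}}{\sigma_t}\right|^{\alpha_i}\frac{1}{\sigma_t^{d_\mcY}(2\pi)^{d_\mcY/2}}\exp\left(-\frac{\Vert\my-m_t\my_1\Vert^2}{2\sigma_t^2}\right)\mathbf{I}_{\{\Vert\my_1\Vert_\infty\le 1\}}\mrd\my_1 .
$$
Membership of $\my_1$ in $A_{\my}^c$ means that for at least one coordinate $i$ we have $|y_i-m_ty_{1,i}|>C\sigma_t\sqrt{\log\epsilon^{-1}}$. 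By a union bound over $i$, it therefore suffices to control, for each fixed $i$, the integral restricted to that event.

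Next I would change variables $\mz=(\my-m_t\my_1)/\sigma_t$, so $\mrd\my_1=(\sigma_t/m_t)^{d_\mcY}\mrd\mz$. The integrand becomes $\prod_j|z_j|^{\alpha_j}\phi(z_j)$ times $m_t^{-d_\mcY}$. The factor $m_t^{-d_\mcY}$ is harmful when $t$ is close to $1$. To avoid it, I would split coordinates. For the coordinates $j\neq i$, I bound $|z_j|^{\alpha_j}e^{-z_j^2/4}\lesssim 1$ and keep the remaining factor $e^{-z_j^2/4}$ in the original variable $y_{1,j}$. The remaining one-dimensional integral $\int_{-1}^1\sigma_t^{-1}e^{-(y_j-m_ty_{1,j})^2/(4\sigma_t^2)}\mrd y_{1,j}$ is bounded by $\min\{2/\sigma_t,\,C/m_t\}\lesssim 1$, because $\max\{m_t,\sigma_t\}\gtrsim 1$ (as $m_t^2+\sigma_t^2=1$). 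This is the same min-trick used in Lemma \ref{lem: bound_for_density}.

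For the coordinate $i$, the same trick gives
$$
\int_{|z_i|>C\sqrt{\log\epsilon^{-1}}}|z_i|^{\alpha_i}e^{-z_i^2/2}\,\min\{\sigma_t^{-1},m_t^{-1}\}\text{-type factor}\lesssim (\log\epsilon^{-1})^{(\alpha_i-1)/2}\,\epsilon^{C^2/2} .
$$
This uses the standard Gaussian tail estimate $\int_a^\infty z^{\alpha}e^{-z^2/2}\mrd z\lesssim a^{\alpha-1}e^{-a^2/2}$ for $a\ge1$; the condition $\epsilon<e^{-1}$ guarantees $a=C\sqrt{\log\epsilon^{-1}}\ge C$. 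Concretely, I write $e^{-z_i^2/2}=e^{-z_i^2/4}\cdot e^{-z_i^2/4}$. One factor absorbs the min-trick in the $y_{1,i}$ variable. The other, on the tail region, gives $\sup_{|z|>a}|z|^{\alpha_i}e^{-z^2/4}\lesssim a^{\alpha_i}e^{-a^2/4}=a^{\alpha_i}\epsilon^{C^2/4}$.

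Finally, I choose $C$ large enough depending only on $|\boldsymbol\alpha|$ and $d_\mcY$, for instance $C^2\ge 8$. Then $(\log\epsilon^{-1})^{|\boldsymbol\alpha|/2}\epsilon^{C^2/4}\lesssim\epsilon$ uniformly for $\epsilon<e^{-1}$, since $\epsilon(\log\epsilon^{-1})^{k}$ is bounded. Summing over the $d_\mcY$ choices of $i$ yields the claim.

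The main obstacle is uniformity in $t$: the naive substitution produces $m_t^{-d_\mcY}$, which blows up as $t\to1$. The coordinate-wise $\min\{1/\sigma_t,1/m_t\}$ bound, applied after sacrificing half of the Gaussian exponent, is the device I would rely on to keep the constant independent of $t$, $\my$ and $\mx$.
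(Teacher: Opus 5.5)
Your proposal is correct and follows essentially the same route as the paper: bound the difference by the integral over $A_{\my}^c$ using $p_0\le C_u$ and its support, take a union bound over the coordinate that leaves $a_{i,\my}$, factorize, control every coordinate by $\min\{1/m_t,1/\sigma_t\}=\mathcal{O}(1)$ (via $m_t^2+\sigma_t^2=1$), and use the Gaussian tail in the exceptional coordinate with $C$ chosen large in terms of $\boldsymbol{\alpha}$. The only difference is cosmetic. You split $e^{-z^2/2}=e^{-z^2/4}e^{-z^2/4}$ to obtain both bounds at once, whereas the paper proves the $1/m_t$ bound (by substitution) and the $1/\sigma_t$ bound (by monotonicity of $z^{\alpha}e^{-z^2/2}$ and the interval length) separately and then takes the minimum. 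In your version the monotonicity step needs $C\ge\sqrt{2\alpha_i}$, which your choice of $C$ depending on $|\boldsymbol{\alpha}|$ already covers.
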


\begin{proof}
It follows that
$$
\begin{aligned}
&\Bigg|
\int_{\Rbb^{d_\mcY}\backslash A_{\my}} \prod_{i=1}^{d_\mcY}\left(\frac{y_i-m_ty_{1,i}}{\sigma_t}\right)^{\alpha_i}\frac{1}{\sigma_t^{d_\mcY}(2\pi)^{d_\mcY/2}}p_0(\my_1|\mx)\exp\left(-\frac{\Vert\my-m_t\my_1\Vert^2}{2\sigma_t^2}\right)\mrd\my_1
\Bigg| \\
\leq & \frac{C_u}{\sigma_t^{d_\mcY}(2\pi)^{d_\mcY/2}}\int_{\Rbb^{d_\mcY}\backslash A_{\my}}\prod_{i=1}^{d_\mcY} \left|\frac{y_i-m_t y_{i,1}}{\sigma_t}\right|^{\alpha_i} \mathbf{I}_{\{\Vert\my_1\Vert_\infty\leq 1\}}\exp\left(-\frac{\Vert\my-m_t\my_1\Vert^2}{2\sigma_t^2}\right)\mrd\my_1 \\
\leq & \frac{C_u}{\sigma_t^{d_\mcY}(2\pi)^{d_\mcY/2}} \cdot \sum_{j=1}^{d_\mcY}
\int_{\Rbb\times\cdots\times\Rbb\times (\Rbb\backslash a_{j,\my})\times\Rbb\times\cdots\times\Rbb} \prod_{i=1}^{d_\mcY} \left|\frac{y_i-m_t y_{1,i}}{\sigma_t}\right|^{\alpha_i} \mathbf{I}_{\{|y_{1,i}|\leq 1\}}\exp\left(-\frac{\Vert\my-m_t\my_1\Vert^2}{2\sigma_t^2}\right)\mrd\my_1 \\
\leq & C_u\cdot\sum_{j=1}^{d_\mcY} \Bigg[ \left(\prod_{i=1, i\neq j}^{d_\mcY} \frac{1}{\sigma_t\sqrt{2\pi}} \int_{\Rbb}\left|\frac{y_i-m_t y_{1,i}}{\sigma_t}\right|^{\alpha_i} \mathbf{I}_{\{|y_{1,i}|\leq 1\}} \exp\left(-\frac{(y_i-m_t y_{1,i})^2}{2\sigma_t^2}\right) \mrd y_{1,i}
\right) \\
& ~~~~~~~~~~~~~~~~~~~~~~~~~~ \cdot  \frac{1}{\sigma_t\sqrt{2\pi}} \int_{\Rbb\backslash a_{j,\my}}\left|\frac{y_j-m_t y_{1,j}}{\sigma_t}\right|^{\alpha_j} \mathbf{I}_{\{|y_{1,j}|\leq 1\}} \exp\left(-\frac{(y_j-m_t y_{1,j})^2}{2\sigma_t^2}\right) \mrd y_{1,j} \Bigg].
\end{aligned}
$$
For $i \neq j$, let $z_i = \frac{y_i - m_t y_{1,i}}{\sigma_t}$.
Then, it holds that
$$
\begin{aligned}
&~\frac{1}{\sigma_t\sqrt{2\pi}}\int_{\Rbb}\left|\frac{y_i-m_t y_{1,i}}{\sigma_t}\right|^{\alpha_i} \mathbf{I}_{\{|y_{1,i}|\leq 1\}} \exp\left(-\frac{(y_i-m_t y_{1,i})^2}{2\sigma_t^2}\right) \mrd y_{1,i} \\
\leq & ~\frac{1}{m_t} \cdot \frac{1}{\sqrt{2\pi}}\int_{\Rbb} |z_i|^{\alpha_i}\exp\left(-\frac{z_i^2}{2}\right)\mrd z_i \\
\lesssim & ~ \frac{1}{m_t}.
\end{aligned}
$$ 
Additionally, since for any $\alpha > 0$, $|z|^{\alpha}\exp(-z^2/2) = \mcO(1)$, it also have
$$
\frac{1}{\sigma_t\sqrt{2\pi}}\int_{\Rbb}\left|\frac{y_i-m_t y_{1,i}}{\sigma_t}\right|^{\alpha_i} \mathbf{I}_{\{|y_{1,i}|\leq 1\}} \exp\left(-\frac{(y_i-m_t y_{1,i})^2}{2\sigma_t^2}\right) \mrd y_{1,i} \lesssim \frac{1}{\sigma_t}.
$$
Thus, we obtain
$$
\frac{1}{\sigma_t\sqrt{2\pi}}\int_{\Rbb}\left|\frac{y_i-m_t y_{1,i}}{\sigma_t}\right|^{\alpha_i} \mathbf{I}_{\{|y_{1,i}|\leq 1\}} \exp\left(-\frac{(y_i-m_t y_{1,i})^2}{2\sigma_t^2}\right) \mrd y_{1,i} \lesssim \min\left\{\frac{1}{m_t}, \frac{1}{\sigma_t}\right\} = \mcO(1).
$$
For $i=j$, we have
$$
\begin{aligned}
&~\frac{1}{\sigma_t\sqrt{2\pi}}\int_{\Rbb\backslash a_{j,\my}}\left|\frac{y_j-m_t y_{1,j}}{\sigma_t}\right|^{\alpha_j} \mathbf{I}_{\{|y_{1,j}|\leq 1\}} \exp\left(-\frac{(y_j-m_t y_{1,j})^2}{2\sigma_t^2}\right) \mrd y_{1,j} \\ \lesssim &~\frac{1}{m_t} \cdot \int_{|z_j|\geq C\sqrt{\log\epsilon^{-1}}} |z_j|^{\alpha_j}\exp\left(-\frac{z_j^2}{2}\right)\mathrm{d}z_j \\
\lesssim &~ \frac{1}{m_t} \cdot 2\int_{z_j\geq C\sqrt{\log\epsilon^{-1}}} z_j^{\alpha_j}\exp\left(-\frac{z_j^2}{2}\right)\mathrm{d}z_j \\
\lesssim &~ \frac{1}{m_t} \cdot \epsilon^{\frac{C^2}{2}}(C^2\log\epsilon^{-1})^{\frac{\alpha_j}{2}}.
\end{aligned}
$$
Since for any $\alpha > 0$, $z^{\alpha}\exp(-z^2/2)$ is decreasing for $|z| > \sqrt{\alpha}$, we can choose $C \geq \sqrt{\alpha}$ and $\epsilon < e^{-1}$, then it holds that
$$
\begin{aligned}
&~\frac{1}{\sigma_t\sqrt{2\pi}}\int_{\Rbb\backslash a_{j,\my}}\left|\frac{y_j-m_t y_{1,j}}{\sigma_t}\right|^{\alpha_j} \mathbf{I}_{\{|y_{1,j}|\leq 1\}} \exp\left(-\frac{(y_j-m_t y_{1,j})^2}{2\sigma_t^2}\right) \mrd y_{1,j} \\ \lesssim &~\frac{1}{\sigma_t} \cdot \epsilon^{\frac{C^2}{2}}(C^2\log\epsilon^{-1})^{\frac{\alpha_j}{2}} \cdot \int_{\Rbb\backslash a_{j,\my}}\mathbf{I}_{\{|y_{1,j}| \leq 1\}}  \mrd y_{1,j} \\
\lesssim &~\frac{1}{\sigma_t} \cdot \epsilon^{\frac{C^2}{2}}(C^2\log\epsilon^{-1})^{\frac{\alpha_j}{2}} \cdot 2^{d_\mcY} \\
\lesssim & ~\frac{1}{\sigma_t} \cdot \epsilon^{\frac{C^2}{2}}(C^2\log\epsilon^{-1})^{\frac{\alpha_j}{2}} \cdot
\end{aligned}
$$
Taking $C \geq \max\{2, \max_{1\leq i \leq d_\mcY} \sqrt{\alpha_i}\}$, we obtain
$$
\frac{1}{\sigma_t\sqrt{2\pi}}\int_{\Rbb\backslash a_{j,\my}}\left|\frac{y_j-m_t y_{1,j}}{\sigma_t}\right|^{\alpha_j} \mathbf{I}_{\{|y_{1,j}|\leq 1\}} \exp\left(-\frac{(y_j-m_t y_{1,j})^2}{2\sigma_t^2}\right) \mrd y_{1,j} \lesssim \epsilon \cdot \min\left\{\frac{1}{m_t}, \frac{1}{\sigma_t}\right\} \lesssim \epsilon,
$$
which implies that
$$
\Bigg|
\int_{\Rbb^{d_\mcY}\backslash A_{\my}} \prod_{i=1}^{d_\mcY}\left(\frac{y_i-m_t y_{1,i}}{\sigma_t}\right)^{\alpha_i}\frac{1}{\sigma_t^{d_\mcY}(2\pi)^{d_\mcY/2}}p_0(\my_1|\mx)\exp\left(-\frac{\Vert\my-m_t\my_1\Vert^2}{2\sigma_t^2}\right)\mrd\my_1
\Bigg| \lesssim \sum_{j=1}^{d_\mcY} \epsilon \lesssim \epsilon.
$$
The proof is complete.
\end{proof}

\begin{lemma}[Boundedness of Derivatives]\label{lem: derivatives_boundness}
For any $k\in\mathbb{N}^+$ and $\ell\in\mathbb{N}$, the following upper bounds hold:
\begin{equation}\label{eq: derivative_q} 
|\partial_{y_{i_1}}\cdots\partial_{y_{i_k}}\partial_{x_{j_1}}\cdots\partial_{x_{j_\ell}} p_t(\my|\mx)| \lesssim \frac{1}{\sigma_t^k}
\end{equation}
where $i_1,\cdots i_k \in \{1,\cdots, d_\mcY\}$, $j_1,\cdots,j_\ell\in\{1,\cdots,d_\mcX\}$. Specially, when $\ell=0$, \eqref{eq: derivative_q} reduces to
$$
|\partial_{y_{i_1}}\partial_{y_{i_2}}\cdots\partial_{y_{i_k}}p_t(\my|\mx)| \lesssim \frac{1}{\sigma_t^k}.
$$
Moreover, we have 
\begin{equation}\label{eq: bound_score}
\Vert \nabla\log p_t(\my|\mx) \Vert \lesssim \frac{1}{\sigma_t}\cdot\left(\frac{(\Vert\my\Vert_{\infty} - m_t)_{+}}{\sigma_t}  \vee 1\right),
\end{equation}
\begin{equation}\label{eq: derivative_y_score}
\Vert\partial_{y_i}\nabla\log p_t(\my|\mx)\Vert \lesssim \frac{1}{\sigma_t^2}\cdot\left(\frac{(\Vert\my\Vert_{\infty} - m_t)_{+}^2}{\sigma_t^2}\vee 1\right), ~~~ 1\leq i\leq d_\mcY,
\end{equation}
and 
\begin{equation}\label{eq: derivative_t_score}
\Vert\partial_t\nabla\log p_t(\my|\mx)\Vert \lesssim \frac{|\partial_t\sigma_t| + |\partial_t m_t|}{\sigma_t^2}\cdot\left(\frac{(\Vert\my\Vert_{\infty} - m_t)_{+}^2}{\sigma_t^2} \vee 1\right)^{\frac{3}{2}}.
\end{equation}
\end{lemma}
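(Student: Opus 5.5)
I plan to prove Lemma \ref{lem: derivatives_boundness} directly from the Gaussian-convolution representation
$$
p_t(\my|\mx)=\int_{[-1,1]^{d_\mcY}}\frac{1}{\sigma_t^{d_\mcY}(2\pi)^{d_\mcY/2}}p_0(\my_1|\mx)\exp\Big(-\frac{\Vert\my-m_t\my_1\Vert^2}{2\sigma_t^2}\Big)\mrd\my_1 .
$$
For \eqref{eq: derivative_q}, I will differentiate under the integral. The $\mx$-derivatives land only on $p_0$, and by Assumption \ref{ass: bounded_derivative} they are bounded by $C_\mcX$. Each $\my$-derivative of the kernel produces a Hermite-type polynomial in $\mz=(\my-m_t\my_1)/\sigma_t$ of degree at most $k$, multiplied by $\sigma_t^{-k}$. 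This gives
$$
|\partial^k_{\my}\partial^\ell_{\mx}p_t|\lesssim \sigma_t^{-k}\int |P_k(\mz)|\,\sigma_t^{-d_\mcY}e^{-\Vert\mz\Vert^2/2}\mathbf{I}_{\{\Vert\my_1\Vert_\infty\le1\}}\mrd\my_1 .
$$
To bound the remaining integral by $\mcO(1)$, I will use the same $\min\{1/m_t,1/\sigma_t\}$ per-coordinate argument as in Lemma \ref{lem: integral_clipping}.

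For the score bounds, I will write each quantity as a ratio of such integrals. For example, $\nabla\log p_t=\sigma_t^{-1}\,\Ebb[-\mz\mid\my]$, where the expectation is under the posterior of $\my_1$ given $\my_t=\my$. Similarly, $\partial_{y_i}\nabla\log p_t$ equals $\sigma_t^{-2}$ times a posterior covariance of $\mz$ minus the identity. The key estimate is then a bound on posterior moments:
$$
\Ebb[\Vert\mz\Vert^j\mid\my]\lesssim\Big(\frac{(\Vert\my\Vert_\infty-m_t)_+}{\sigma_t}\vee1\Big)^j .
$$
To prove it, I will split the numerator integral into the region $\Vert\mz\Vert_\infty\le C(r\vee1)$, with $r=(\Vert\my\Vert_\infty-m_t)_+/\sigma_t$, and its complement. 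On the first region the moment is trivially $\lesssim (r\vee1)^j$. On the complement, the Gaussian tail $e^{-C^2(r\vee1)^2/2}$ beats the lower bound $p_t\gtrsim e^{-d_\mcY r^2}$ from Lemma \ref{lem: bound_for_density}, once $C$ is taken large relative to $\sqrt{d_\mcY}$. Plugging $j=1$ and $j=2$ into the ratio gives \eqref{eq: bound_score} and \eqref{eq: derivative_y_score}.

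For \eqref{eq: derivative_t_score}, I will apply the chain rule in $(m_t,\sigma_t)$. Differentiating the exponent in $t$ produces $(\partial_t m_t)\,\my_1\cdot\mz/\sigma_t$ and $(\partial_t\sigma_t)(\Vert\mz\Vert^2-d_\mcY)/\sigma_t$. Therefore $\partial_t\nabla\log p_t$ becomes $(|\partial_t\sigma_t|+|\partial_tm_t|)\sigma_t^{-2}$ times a combination of posterior covariances involving terms of total degree up to $3$ in $\mz$ (with $\Vert\my_1\Vert\le\sqrt{d_\mcY}$ bounded). Using the moment bound with $j\le3$ then yields the exponent $3/2$ on $(r^2\vee1)$.

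The main obstacle is the tail-versus-lower-bound comparison in the posterior moment estimate. The lower bound from Lemma \ref{lem: bound_for_density} has constant $d_\mcY$ in the exponent, so the cutoff must be chosen as $C\sqrt{d_\mcY}(r\vee1)$ and tracked uniformly in $t$. Care is also needed near $t\to0$, where $m_t\to1$ and the $1/m_t$ factors must stay bounded; I handle this by taking the minimum of the $1/m_t$ and $1/\sigma_t$ bounds.
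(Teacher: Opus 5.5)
Your proposal is correct and follows essentially the same route as the paper. The paper also writes the mixed derivatives as $\sigma_t^{-k}$ times Gaussian--polynomial integrals bounded by $\min\{1/m_t,1/\sigma_t\}^{d_{\mcY}}=\mcO(1)$, and bounds the score, its $\my$-derivative and its $t$-derivative as ratios of such integrals (moments of degree at most $1$, $2$, $3$ in $\mz$) by clipping to $|z_i|\lesssim\sqrt{\log\epsilon^{-1}}$ via Lemma \ref{lem: integral_clipping}, with $\epsilon$ set to the lower bound $e^{-d_{\mcY} r^2}$ of Lemma \ref{lem: bound_for_density}; this amounts to your cutoff $C\sqrt{d_{\mcY}}(r\vee 1)$ phrased in posterior-moment language.

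One small correction to your closing remark: as $t\to0$ it is $1/\sigma_t$ that blows up while $1/m_t\to1$, and the $1/m_t$ blow-up happens as $t\to1$. The bound $\min\{1/m_t,1/\sigma_t\}\le\sqrt{2}$, which holds because $m_t^2+\sigma_t^2=1$, handles both ends uniformly.
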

\begin{proof}
We first prove \eqref{eq: derivative_q}. Let $f_1(\my|\mx) = p_t(\my|\mx)$. For multi-indices $\boldsymbol{\alpha}\in\mathbb{N}^{d_\mcY}$ and $\boldsymbol{\beta}\in\mathbb{N}^{d_\mcX}$, we denote $f_1^{(\boldsymbol{\alpha},\boldsymbol{\beta})}(\my|\mx):= \partial_{y_1}^{\alpha_1}\cdots\partial_{y_{d_\mcY}}^{\alpha_{d_\mcY}}\partial_{x_1}^{\beta_1}\cdots\partial_{x_{d_\mcX}}^{\beta_{d_\mcX}}f_1(\my|\mx)$. We define $B_{\boldsymbol{\alpha}}:=\{\mathbf{u}\in\mathbb{N}^{d_\mcY}|u_i\leq \alpha_i ~ (i=1,2,\cdots,d_\mcY)\}$. Then, it holds that 
$$
\partial_{y_1}^{\alpha_1}\partial_{y_2}^{\alpha_2}\cdots\partial_{y_{d_\mcY}}^{\alpha_{d_\mcY}} e^{-\Vert\my\Vert^2/2} = \sum_{\mathbf{u}\in B_{\boldsymbol{\alpha}}}C_{\mathbf{u}}\partial_{y_1}^{u_1}\partial_{y_2}^{u_2}\cdots\partial_{y_{d_\mcY}}^{u_{d_\mcY}} e^{-\Vert\my\Vert^2/2}
$$
with some constants $C_{\mathbf{u}}$. Then, we can write $f_1^{(\boldsymbol{\alpha},\boldsymbol{\beta})}(\my|\mx)$ as
$$
f_1^{(\boldsymbol{\alpha},\boldsymbol{\beta})}(\my|\mx) = \frac{1}{\sigma_t^{\sum_{i=1}^{d_\mcY}\alpha_i}}\cdot\sum_{\mathbf{u}\in B_{\boldsymbol{\alpha}}}C_{\mathbf{u}}\int_{\Rbb^{d_\mcY}}\prod_{i=1}^{d_\mcY}\left(\frac{y_i-m_t y_{1,i}}{\sigma_t}\right)^{u_i}\frac{\partial_{\mx}^{\boldsymbol{\beta}}p_0(\my_1|\mx)}{\sigma_t^{d_\mcY}(2\pi)^{d_\mcY/2}}\exp\left(-\frac{\Vert\my-m_t\my_1\Vert^2}{2\sigma_t^2}\right)\mrd\my_1.
$$
Taking $\alpha_{i_1},\cdots,\alpha_{i_k}=1$, $\beta_{j_1},\cdots,\beta_{j_\ell}=1$, and the others as 0, 
since 
$|\partial_{\mx}^{\boldsymbol{\beta}}p_0(\my_1|\mx)| \leq C_\mcX$, then we have
$$
\begin{aligned}
\sum_{\mathbf{u}\in B_{\boldsymbol{\alpha}}}C_{\mathbf{u}}\int_{\Rbb^{d_\mcY}}\prod_{i=1}^{d_\mcY}\left|\frac{y_i-m_t y_{1,i}}{\sigma_t}\right|^{u_i}\frac{|\partial_{\mx}^{\boldsymbol{\beta}}p_0(\my_1|\mx)|}{\sigma_t^{d_\mcY}(2\pi)^{d_\mcY/2}}\exp\left(-\frac{\Vert\my-m_t\my_1\Vert^2}{2\sigma_t^2}\right)\mrd\my_1 &\lesssim \min\left\{\frac{1}{m_t^{d_\mcY}}, \frac{1}{\sigma_t^{d_\mcY}}\right\} \\
&= \mcO(1),
\end{aligned}
$$
which implies that
$$
|\partial_{y_{i_1}}\cdots\partial_{y_{i_k}}\partial_{x_{j_1}}\cdots\partial_{x_{j_\ell}} p_t(\my|\mx)| \lesssim \frac{1}{\sigma_t^k}.
$$
Specially, when $\ell=0$, i.e., $\boldsymbol{\beta}=\boldsymbol{0}$, it holds that
$$
|\partial_{y_{i_1}}\partial_{y_{i_2}}\cdots\partial_{y_{i_k}}p_t(\my|\mx)| \lesssim \frac{1}{\sigma_t^k}.
$$

Next, we prove \eqref{eq: bound_score} and \eqref{eq: derivative_y_score}. For convenience, we focus on the first coordinate of $\nabla\log p_t(\my|\mx)$, and all  other coordinates of $\nabla\log p_t(\my|\mx)$ are bounded in the same manner. Let $f_2(\my|\mx):= [\sigma_t\nabla p_t(\my|\mx)]_1$. 
Then, we have
$$
[\nabla\log p_t(\my|\mx)]_1 = \frac{1}{\sigma_t}\cdot\frac{f_2(\my|\mx)}{f_1(\my|\mx)}, ~ [\partial_{y_i}\nabla\log p_t(\my|\mx)]_1 = \frac{1}{\sigma_t}\cdot\left(\frac{\partial_{y_i}f_2(\my|\mx)}{f_1(\my|\mx)} - \frac{f_2(\my|\mx)(\partial_{y_i}f_1(\my|\mx))}{f_1^2(\my|\mx)}\right).
$$
Moreover,
$$
\frac{f_2(\my|\mx)}{f_1(\my|\mx)} = \frac{
-\int_{\Rbb^{d_\mcY}}\left(\frac{y_1-m_t y_{1,i}}{\sigma_t}\right)\cdot\frac{1}{\sigma_t^{d_\mcY}(2\pi)^{d_\mcY/2}}p_0(\my_1|\mx)\exp\left(-\frac{\Vert\my-m_t\my_1\Vert}{2\sigma_t^2}\right)\mrd\my_1}
{\int_{\Rbb^{d_\mcY}}\frac{1}{\sigma_t^{d_\mcY}(2\pi)^{d_\mcY/2}}p_0(\my_1|\mx)\exp\left(-\frac{\Vert\my-m_t\my_1\Vert}{2\sigma_t^2}\right)\mrd\my_1},
$$
$$
\frac{\partial_{y_i}f_1(\my|\mx)}{f_1(\my|\mx)} = \frac{1}{\sigma_t}\cdot\frac{
-\int_{\Rbb^{d_\mcY}}\left(\frac{y_i-m_ty_{1,i}}{\sigma_t}\right)\cdot\frac{1}{\sigma_t^{d_\mcY}(2\pi)^{d_\mcY/2}}p_0(\my_1|\mx)\exp\left(-\frac{\Vert\my-m_t\my_1\Vert}{2\sigma_t^2}\right)\mrd\my_1}
{\int_{\Rbb^{d_\mcY}}\frac{1}{\sigma_t^{d_\mcY}(2\pi)^{d_\mcY/2}}p_0(\my_1|\mx)\exp\left(-\frac{\Vert\my-m_t\my_1\Vert}{2\sigma_t^2}\right)\mrd\my_1},
$$
$$
\frac{\partial_{y_i}f_2(\my|\mx)}{f_1(\my|\mx)} = -\frac{1}{\sigma_t} \cdot \frac{
-\int_{\Rbb^{d_\mcY}} \left(\mathbf{I}_{\{i=1\}} - \frac{y_1-m_t y_{1,1}}{\sigma_t}\cdot\frac{y_i-m_t y_{1,i}}{\sigma_t}\right)\cdot\frac{1}{\sigma_t^{d_\mcY}(2\pi)^{d_\mcY/2}}p_0(\my_1|\mx)\exp\left(-\frac{\Vert\my-m_t \my_1\Vert}{2\sigma_t^2}\right)\mrd\my_1}
{\int_{\Rbb^{d_\mcY}}\frac{1}{\sigma_t^{d_\mcY}(2\pi)^{d_\mcY/2}}p_0(\my_1|\mx)\exp\left(-\frac{\Vert\my-m_t\my_1\Vert}{2\sigma_t^2}\right)\mrd\my_1}.
$$
The integral terms in the three equations above can be expressed in the following unified form:
\begin{equation}\label{eq: eq_unified}
\frac{
\int_{\Rbb^{d_\mcY}}\prod_{i=1}^{d_\mcY}\left(\frac{y_i-m_t y_{1,i}}{\sigma_t}\right)^{\alpha_i}\frac{1}{\sigma_t^{d_\mcY}(2\pi)^{d_\mcY/2}}p_0(\my_1|\mx)\exp\left(-\frac{\Vert\my-m_t\my_1\Vert}{2\sigma_t^2}\right)\mrd\my_1}
{\int_{\Rbb^{d_\mcY}}\frac{1}{\sigma_t^{d_\mcY}(2\pi)^{d_\mcY/2}}p_0(\my_1|\mx)\exp\left(-\frac{\Vert\my-m_t\my_1\Vert}{2\sigma_t^2}\right)\mrd\my_1}
\end{equation}
with $\sum_{i=1}^{d_\mcY}\alpha_i\leq 2$.
According to Lemma \ref{lem: integral_clipping}, for any $0 < \epsilon < e^{-1}$, there exists a constant $C > 0$ such that
$$
\begin{aligned}
&\Bigg|
\int_{\Rbb^{d_\mcY}}\prod_{i=1}^{d_\mcY}\left(\frac{y_i-m_t y_{1,i}}{\sigma_t}\right)^{\alpha_i}\frac{1}{\sigma_t^{d_\mcY}(2\pi)^{d_\mcY/2}}p_0(\my_1|\mx)\exp\left(-\frac{\Vert\my-m_t\my_1\Vert}{2\sigma_t^2}\right)\mrd\my_1 \\ & ~~~~~~~~~ -  \int_{A_{\my}}\prod_{i=1}^{d_\mcY}\left(\frac{y_i-m_t y_{1,i}}{\sigma_t}\right)^{\alpha_i}\frac{1}{\sigma_t^{d_\mcY}(2\pi)^{d_\mcY/2}}p_0(\my_1|\mx)\exp\left(-\frac{\Vert\my-m_t\my_1\Vert}{2\sigma_t^2}\right)\mrd\my_1
\Bigg| \lesssim \epsilon,
\end{aligned}
$$
where $A_{\my} = \prod_{i=1}^{d_\mcY}a_{i,\my}$ with $a_{i,\my} = \bigg[\frac{y_i - C\sigma_t\sqrt{\log\epsilon^{-1}}}{m_t}, \frac{y_i + C\sigma_t\sqrt{\log\epsilon^{-1}}}{m_t}\bigg].$ Therefore, when $p_t(\my|\mx)\gtrsim \epsilon$, we have
\begin{equation}\label{eq: eq_unified_bound}
\begin{aligned}
|\eqref{eq: eq_unified}| &\lesssim \frac{
\int_{A_{\my}}\prod_{i=1}^{d_\mcY}\left|\frac{y_i-m_t y_{1,i}}{\sigma_t}\right|^{\alpha_i}\frac{1}{\sigma_t^{d_\mcY}(2\pi)^{d_\mcY/2}}p_0(\my_1|\mx)\exp\left(-\frac{\Vert\my-m_t\my_1\Vert}{2\sigma_t^2}\right)\mrd\my_1}
{\int_{A_{\my}}\frac{1}{\sigma_t^{d_\mcY}(2\pi)^{d_\mcY/2}}p_0(\my_1|\mx)\exp\left(-\frac{\Vert\my-m_t\my_1\Vert}{2\sigma_t^2}\right)\mrd\my_1} + \mathcal{O}(1)\\
&\lesssim \max_{\my_1\in A_{\my}}\left[\prod_{i=1}^{d_\mcY}\left|\frac{y_i-m_t y_{1,i}}{\sigma_t}\right|^{\alpha_i}\right] + \mathcal{O}(1)  \\
&\lesssim \left(C^2\log\epsilon^{-1}\right)^{\frac{\sum_{i=1}^{d_\mcY}\alpha_i}{2}} + \mathcal{O}(1) \\
&\lesssim \left(\log\epsilon^{-1}\right)^{\frac{\sum_{i=1}^{d_\mcY}\alpha_i}{2}},
\end{aligned}
\end{equation}
which implies that
$$
\left| \frac{f_2(\my|\mx)}{f_1(\my|\mx)} \right| \lesssim \sqrt{\log\epsilon^{-1}},~
\left| \frac{\partial_{y_i}f_1(\my|\mx)}{f_1(\my|\mx)} \right| \lesssim \frac{\sqrt{\log\epsilon^{-1}}}{\sigma_t},~
\left| \frac{\partial_{y_i}f_2(\my|\mx)}{f_1(\my|\mx)} \right| \lesssim \frac{\log\epsilon^{-1}}{\sigma_t}.
$$
Then, we obtain
$$
\Vert\nabla\log p_t(\my|\mx)\Vert \lesssim \frac{\sqrt{\log\epsilon^{-1}}}{\sigma_t}, ~ \Vert\partial_{y_i}\nabla\log p_t(\my|\mx)\Vert \lesssim \frac{\log\epsilon^{-1}}{\sigma_t^2}.
$$
By Lemma \ref{lem: bound_for_density}, $p_t(\my|\mx) \gtrsim \exp\left(-\frac{d_\mcY(\Vert\my\Vert_{\infty}-m_t)_{+}^2}{\sigma_t^2}\right)$.
Replacing $\epsilon$ with $\exp\left(-\frac{d_\mcY(\Vert\my\Vert_{\infty}-m_t)_{+}^2}{\sigma_t^2}\right)$, we have
$$
\Vert \nabla\log p_t(\my|\mx) \Vert \lesssim \frac{1}{\sigma_t}\cdot\left(\frac{(\Vert\my\Vert_{\infty} - m_t)_{+}}{\sigma_t} \vee 1\right)
$$
and
$$
\Vert\partial_{y_i}\nabla\log p_t(\my|\mx)\Vert \lesssim \frac{1}{\sigma_t^2}\cdot \left(\frac{(\Vert\my\Vert_{\infty} - m_t)_{+}^2}{\sigma_t^2} \vee 1\right), ~~~ 1\leq i\leq d_\mcY.
$$

Finally, we prove \eqref{eq: derivative_t_score}. By a simple calculation, we have
$$
\partial_t[\nabla\log p_t(\my|\mx)]_1  = \left(\partial_t\frac{1}{\sigma_t}\right) \cdot \frac{f_2(\my|\mx)}{f_1(\my|\mx)} - \frac{1}{\sigma_t} \cdot \frac{\partial_t f_1(\my|\mx)}{f_1(\my|\mx)} \cdot \frac{f_2(\my|\mx)}{f_1(\my|\mx)} + \frac{1}{\sigma_t} \cdot \frac{\partial_t f_2(\my|\mx)}{f_1(\my|\mx)}.
$$
We calculate these three terms separately.
$$
\left(\partial_t\frac{1}{\sigma_t}\right) \cdot \frac{f_2(\my|\mx)}{f_1(\my|\mx)} = -\frac{\partial_t\sigma_t}{\sigma_t}\cdot[\nabla\log p_t(\my|\mx)]_1,
$$

$$
\frac{\partial_t f_1(\my|\mx)}{f_1(\my|\mx)} = 
\frac{
\int_{\Rbb^{d_\mcY}}\frac{
\partial_t\sigma_t\left(\sigma_t^{-3}\Vert\my - m_t\my_1\Vert^2 - d_\mcY\sigma_t^{-1}\right) -\sigma_t^{-2} \langle \my - m_t\my_1, \my_1 \rangle \partial_t m_t}
{\sigma_t^{d_\mcY}(2\pi)^{d_\mcY/2}}
p_0(\my_1|\mx)\exp\left(-\frac{\Vert\my-m_t\my_1\Vert^2}{2\sigma_t^2}\right)\mrd\my_1}
{\int_{\Rbb^{d_\mcY}}\frac{1}{\sigma_t^{d_\mcY}(2\pi)^{d_\mcY/2}}p_0(\my_1|\mx)\exp\left(-\frac{\Vert\my-m_t\my_1\Vert^2}{2\sigma_t^2}\right)\mrd\my_1},
$$

$$
\begin{aligned}
\frac{\partial_t f_2(\my|\mx)}{f_1(\my|\mx)} &= \partial_t\sigma_t \cdot \frac{
\int_{\Rbb^{d_\mcY}}\frac{1}{\sigma_t^{d_\mcY}(2\pi)^{d_\mcY/2}}p_0(\my_1|\mx)\exp\left(-\frac{\Vert\my-m_t\my_1\Vert^2}{2\sigma_t^2}\right)\frac{(y_1-m_t y_{1,1})}{\sigma_t}\cdot\left(\frac{1+d_\mcY}{\sigma_t} - \frac{\Vert\my - m_t\my_1\Vert^2}{\sigma_t^3} 
\right)\mrd\my_1}
{\int_{\Rbb^{d_\mcY}}\frac{1}{\sigma_t^{d_\mcY}(2\pi)^{d_\mcY/2}}p_0(\my_1|\mx)\exp\left(-\frac{\Vert\my-m_t\my_1\Vert}{2\sigma_t^2}\right)\mrd\my_1} \\
& + \partial_t m_t \cdot \frac{\int_{\Rbb^{d_\mcY}}\frac{1}{\sigma_t^{d_\mcY}(2\pi)^{d_\mcY/2}}p_0(\my_1|\mx)\exp\left(-\frac{\Vert\my-m_t\my_1\Vert^2}{2\sigma_t^2}\right)\left(\frac{y_{1,1}}{\sigma_t} - \frac{y_1-m_t y_{1,1}}{\sigma_t} \cdot \frac{\langle \my - m_t\my_1, \my_1\rangle}{\sigma_t^2}\right)\mrd\my_1}{\int\frac{1}{\sigma_t^{d_\mcY}(2\pi)^{d_\mcY/2}}p_0(\my_1|\mx)\exp\left(-\frac{\Vert\my-m_t\my_1\Vert}{2\sigma_t^2}\right)\mrd\my_1}
\end{aligned}
$$
According to \eqref{eq: eq_unified_bound}, we have
$$
\left|\left(\partial_t\frac{1}{\sigma_t}\right) \cdot \frac{f_2(\my|\mx)}{f_1(\my|\mx)}\right| \lesssim \frac{|\partial_t\sigma_t|}{\sigma_t^2}\cdot\left(\frac{(\Vert\my\Vert_{\infty} - m_t)_{+}}{\sigma_t} \vee 1\right),
$$
$$
\left|\frac{1}{\sigma_t} \cdot \frac{\partial_t f_1(\my|\mx)}{f_1(\my|\mx)} \cdot \frac{f_2(\my|\mx)}{f_1(\my|\mx)}\right| \lesssim 
\frac{|\partial_t\sigma_t| + |\partial_t m_t|}{\sigma_t^2}\cdot \left(\frac{(\Vert\my\Vert_{\infty} - m_t)_{+}^2}{\sigma_t^2} \vee 1\right)^{\frac{3}{2}},
$$
$$
\left|\frac{1}{\sigma_t} \cdot \frac{\partial_t f_2(\my|\mx)}{f_1(\my|\mx)}\right| \lesssim \frac{|\partial_t\sigma_t| + |\partial_t m_t|}{\sigma_t^2}\cdot \left(\frac{(\Vert\my\Vert_{\infty} - m_t)_{+}^2}{\sigma_t^2} \vee 1\right)^{\frac{3}{2}}.
$$
Combining the above three inequalities, we finally obtain
$$
\Vert\partial_t\nabla\log p_t(\my|\mx)\Vert \lesssim \frac{|\partial_t\sigma_t| + |\partial_t m_t|}{\sigma_t^2}\cdot\left(\frac{(\Vert\my\Vert_{\infty} - m_t)_{+}^2}{\sigma_t^2}\vee 1\right)^{\frac{3}{2}}.
$$
The proof is complete.
\end{proof}

\begin{lemma}[Error Bounds for Clipping]\label{lem: bound_for_clipping} Let $0 < \epsilon, \epsilon_1 < 1$. For all $0 < t \leq 1$, there exists a constant $C > 0$ such that 
\begin{equation}\label{eq: clipping_bound_1}
\int_{\Vert\my\Vert_{\infty} > m_t + C\sigma_t\sqrt{\log\epsilon^{-1}}} p_t(\my|\mx)\Vert\nabla\log p_t(\my|\mx)\Vert^2\mrd\my \lesssim \frac{\epsilon}{\sigma_t},
\end{equation}

\begin{equation}\label{eq: clipping_bound_2}
\int_{\Vert\my\Vert_{\infty} > m_t + C\sigma_t\sqrt{\log\epsilon^{-1}}} p_t(\my|\mx)\mrd \my \lesssim \sigma_t\epsilon.
\end{equation}
Moreover, for $\Vert\my\Vert_{\infty} \leq m_t + C\sigma_t\sqrt{\log\epsilon^{-1}}$, it holds that
\begin{equation}\label{eq: clipping_bound_3}
\int_{\Vert\my\Vert_{\infty}\leq m_t + C\sigma_t\sqrt{\log\epsilon^{-1}}} p_t(\my|\mx)\mathbf{I}_{\{p_t(\my|\mx)\leq\epsilon_1\}}\Vert\nabla\log p_t(\my|\mx)\Vert^2\mrd\my \lesssim \frac{\epsilon_1}{\sigma_t^2}(\log\epsilon^{-1})^{\frac{d_\mcY+2}{2}},
\end{equation}

\begin{equation}\label{eq: clipping_bound_4}
\int_{\Vert\my\Vert_{\infty}\leq m_t + C\sigma_t\sqrt{\log\epsilon^{-1}}} p_t(\my|\mx)\mathbf{I}_{\{p_t(\my|\mx)\leq\epsilon_1\}}\mrd\my \lesssim \epsilon_1 (\log\epsilon^{-1})^{\frac{d_\mcY}{2}}.
\end{equation}
\end{lemma}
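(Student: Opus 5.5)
The plan is to reduce all four bounds to the two-sided density estimate of Lemma \ref{lem: bound_for_density}, the score bound \eqref{eq: bound_score} of Lemma \ref{lem: derivatives_boundness}, and Gaussian tail integrals. Throughout, set $r(\my):=(\Vert\my\Vert_\infty-m_t)_+/\sigma_t$. On the region $\Vert\my\Vert_\infty>m_t+C\sigma_t\sqrt{\log\epsilon^{-1}}$ we have $r(\my)> C\sqrt{\log\epsilon^{-1}}\ge 1$. In this region the upper bound $p_t(\my|\mx)\lesssim \exp(-r^2/2)$ holds. The bound \eqref{eq: bound_score} gives $\Vert\nabla\log p_t\Vert^2\lesssim r^2/\sigma_t^2$.

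For \eqref{eq: clipping_bound_2}, I will use a shell decomposition. Write the region as a union over the face $i$ attaining $\Vert\my\Vert_\infty$ and substitute $u=(|y_i|-m_t)/\sigma_t$, so that $\mrd y_i=\sigma_t\,\mrd u$. The remaining coordinates range over $|y_j|\le m_t+\sigma_t u$. I will not bound their measure crudely by $(2(m_t+\sigma_t u))^{d_\mcY-1}$. Instead I integrate the genuine density $p_t$ in those coordinates, using the per-coordinate factorization from the proof of Lemma \ref{lem: bound_for_density}: each coordinate factor integrates to at most $\mcO(1)$. This leaves $\sigma_t\int_{C\sqrt{\log\epsilon^{-1}}}^\infty (1+u)^{d_\mcY-1}e^{-u^2/2}\,\mrd u$. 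Choosing $C$ large (say $C\ge 2$ plus a margin absorbing the polynomial factor), this is $\lesssim \sigma_t\epsilon$. For \eqref{eq: clipping_bound_1}, the same computation goes through with the extra factor $r^2/\sigma_t^2=u^2/\sigma_t^2$. It yields $\sigma_t^{-1}\int u^{2}(1+u)^{d_\mcY-1}e^{-u^2/2}\,\mrd u\lesssim \epsilon/\sigma_t$, again after enlarging $C$.

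For \eqref{eq: clipping_bound_4}, the integrand is at most $\epsilon_1$ pointwise, so the bound is $\epsilon_1$ times the Lebesgue measure of $\{\Vert\my\Vert_\infty\le m_t+C\sigma_t\sqrt{\log\epsilon^{-1}}\}$. Since $m_t\le1$ and $\sigma_t\le1$, this measure is $\lesssim (\log\epsilon^{-1})^{d_\mcY/2}$. For \eqref{eq: clipping_bound_3}, the score bound on this region gives $\Vert\nabla\log p_t\Vert^2\lesssim \sigma_t^{-2}(r^2\vee1)\lesssim \sigma_t^{-2}\log\epsilon^{-1}$. Multiplying by the previous estimate gives $\epsilon_1\sigma_t^{-2}(\log\epsilon^{-1})^{(d_\mcY+2)/2}$.

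The main obstacle is the bookkeeping in the first two bounds. The measure of the outer shell grows like $(m_t+\sigma_t u)^{d_\mcY-1}$, and I need to make sure this growth, together with the normalizations $1/\sigma_t^{d_\mcY}$ or $1/m_t^{d_\mcY}$ hidden in $p_t$, does not destroy the clean factor $\sigma_t$. I will first try the coordinatewise factorization of $p_t$ into one-dimensional Gaussian convolutions, each bounded by $\min\{1/m_t,1/\sigma_t\}$ times a tail. If that fails, the fallback is the coarser route of absorbing polynomial factors into $e^{-u^2/4}$ by increasing $C$, accepting a constant depending on $d_\mcY$.
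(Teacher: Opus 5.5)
Your proposal is correct and is essentially the paper's proof. The ingredients match: the pointwise bounds from Lemma \ref{lem: bound_for_density} and \eqref{eq: bound_score}; reduction to the face where $\Vert\my\Vert_\infty$ is attained, with the substitution $u=(|y_i|-m_t)/\sigma_t$ whose Jacobian supplies the factor $\sigma_t$; a Gaussian tail bound with $C\ge 2$; and sup-times-volume for \eqref{eq: clipping_bound_3}--\eqref{eq: clipping_bound_4}. The obstacle you anticipate does not arise. Because $m_t^2+\sigma_t^2=1$, the constant in the upper bound of Lemma \ref{lem: bound_for_density} is uniform in $t$, namely $\min\{m_t^{-d_\mcY},\sigma_t^{-d_\mcY}\}\le 2^{d_\mcY/2}$. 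So the crude Lebesgue-measure bound $(m_t+\sigma_t u)^{d_\mcY-1}\le(1+u)^{d_\mcY-1}$ already suffices; it produces exactly the integral you wrote and is what the paper uses, so the coordinatewise factorization detour is unnecessary.
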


\begin{proof}
We first prove \eqref{eq: clipping_bound_1}. For $\Vert\my\Vert_\infty \geq m_t + C\sigma_t\sqrt{\log\epsilon^{-1}}$, according to Lemma \ref{lem: bound_for_density} and Lemma \ref{lem: derivatives_boundness}, we have
$$
p_t(\my|\mx)\Vert\nabla\log p_t(\my|\mx)\Vert^2 \lesssim \frac{1}{\sigma_t^2}\exp\left(-\frac{(\Vert\my\Vert_\infty - m_t)^2}{2\sigma_t^2}\right)  \frac{(\Vert\my\Vert_\infty - m_t)^2}{\sigma_t^2} .
$$
Without loss of generality, we set $|y_1| = \Vert\my\Vert_\infty$. Then, we have
$$
\begin{aligned}
&\int_{\Vert\my\Vert_{\infty} > m_t + C\sigma_t\sqrt{\log\epsilon^{-1}}} p_t(\my|\mx)\Vert\nabla\log p_t(\my|\mx)\Vert^2\mrd\my \\
\lesssim &\int_{|y_1| > m_t + C\sigma_t\sqrt{\log\epsilon^{-1}}}\int_{|y_2|\leq |y_1|}\cdots\int_{|y_{d_\mcY}|\leq |y_1|}\frac{1}{\sigma_t^2}\exp\left(-\frac{(|y_1| - m_t)^2}{2\sigma_t^2}\right) \frac{(|y_1| - m_t)^2}{\sigma_t^2} \mrd y_2\cdots\mrd y_{d_\mcY}\mrd y_1\\
\lesssim &\int_{|y_1| > m_t + C\sigma_t\sqrt{\log\epsilon^{-1}}}\frac{1}{\sigma_t^2}\exp\left(-\frac{(|y_1| - m_t)^2}{2\sigma_t^2}\right) \frac{(|y_1| - m_t)^2}{\sigma_t^2} \cdot |y_1|^{d_\mcY-1}\mrd y_1\\
\lesssim &\int_{y_1 > m_t + C\sigma_t\sqrt{\log\epsilon^{-1}}}\frac{1}{\sigma_t^2}\exp\left(-\frac{(y_1 - m_t)^2}{2\sigma_t^2}\right)  \frac{(y_1 - m_t)^2}{\sigma_t^2}\cdot y_1^{d_\mcY-1}\mrd y_1\\
\lesssim &\int_{C\sqrt{\log\epsilon^{-1}}}^{\infty}\frac{1}{\sigma_t}\exp\left(-\frac{r^2}{2}\right) r^2 (m_t + r\sigma_t)^{d_\mcY-1}\mathrm{d}r\\
\lesssim & \int_{C\sqrt{\log\epsilon^{-1}}}^{\infty}\frac{1}{\sigma_t}\exp\left(-\frac{r^2}{2}\right)r^{d_\mcY+1}\mrd r  \\
\lesssim & \frac{1}{\sigma_t}\epsilon^{\frac{C^2}{2}}(\log\epsilon^{-1})^{\frac{d_\mcY+1}{2}},
\end{aligned}
$$
where we let $r = \frac{y_1 - m_t}{\sigma_t}$. Taking $C \geq 2$ and use $\epsilon(\log \epsilon^{-1})^{\frac{d_\mcY+1}{2}} = \mcO(1)$, it holds that
$$
\int_{\Vert\my\Vert_{\infty} > m_t + C\sigma_t\sqrt{\log\epsilon^{-1}}} p_t(\my|\mx)\Vert\nabla\log p_t(\my|\mx)\Vert^2\mrd \my 
\lesssim \frac{\epsilon}{\sigma_t}.
$$
Additionally,
\eqref{eq: clipping_bound_2} can be derived in the same manner,  that is,
$$
\begin{aligned}
\int_{\Vert\my\Vert_{\infty} > m_t + C\sigma_t\sqrt{\log\epsilon^{-1}}} p_t(\my|\mx)\mrd\my 
& \lesssim  \int_{C\sqrt{\log\epsilon^{-1}}}^{\infty}\sigma_t\exp\left(-\frac{r^2}{2}\right)\cdot r^{d_\mcY-1}\mrd r \\
& \lesssim \sigma_t\epsilon.
\end{aligned}
$$

Now, we consider the second part of this lemma. For $\Vert\my\Vert_\infty \leq m_t + C\sigma_t\sqrt{\log\epsilon^{-1}}$, by Lemma \ref{lem: bound_for_density} and Lemma \ref{lem: derivatives_boundness},  we have
$$
p_t(\my|\mx)\mathbf{I}_{\{p_t(\my|\mx)\leq\epsilon_1\}}\Vert\nabla\log p_t(\my|\mx)\Vert^2 \lesssim \epsilon_1\cdot\frac{\log\epsilon^{-1}}{\sigma_t^2}.
$$
Therefore, we have
$$
\begin{aligned}
\int_{\Vert\my\Vert_{\infty}\leq m_t + C\sigma_t\sqrt{\log\epsilon^{-1}}} p_t(\my|\mx)\mathbf{I}_{\{p_t(\my|\mx)\leq\epsilon_1\}}\Vert\nabla\log p_t(\my|\mx)\Vert^2\mrd\my &\lesssim \left(m_t + C\sigma_t\sqrt{\log\epsilon^{-1}}\right)^{d_\mcY} \cdot \frac{\epsilon_1\log\epsilon^{-1}}{\sigma_t^2}\\
&\lesssim \frac{\epsilon_1}{\sigma_t^2}(\log\epsilon^{-1})^{\frac{d_\mcY+2}{2}}.
\end{aligned}
$$
In the same manner, we also have
$$
\begin{aligned}
\int_{\Vert\my\Vert_{\infty}\leq m_t + C\sigma_t\sqrt{\log\epsilon^{-1}}} p_t(\my|\mx)\mathbf{I}_{\{p_t(\my|\mx)\leq\epsilon\}}\mrd \my &\lesssim \left(m_t + C\sigma_t\sqrt{\log\epsilon^{-1}}\right)^{d_\mcY} \cdot \epsilon_1 \\ 
&\lesssim \epsilon_1 (\log\epsilon^{-1})^{\frac{d_\mcY}{2}}.
\end{aligned}
$$
The proof is complete.
\end{proof}
\subsection{Auxiliary Lemmas on ReLU Network Approximation}\label{sec: app}
In this section, we summarize existing results and fundamental tools for function approximation using neural networks. See \cite{oko2023diffusion,fu2024unveil} for more details.

\subsubsection{Construction of a large ReLU network}

\begin{lemma}[Neural Network Concatenation]\label{lem: concatenation}
For a series of ReLU networks $\mb_1: \Rbb^{d_1}\rightarrow\Rbb^{d_2}$, $\mb_2:\Rbb^{d_2}\rightarrow\Rbb^{d_3}$, $\cdots$, $\mb_k:\Rbb^{d_k}\rightarrow\Rbb^{d_{k+1}}$ with $\mb_i\in\mathrm{NN}(L_i,M_i,J_i,\kappa_i)~(i=1,2,\cdots,k)$, there exists a neural network $\mb\in\mathrm{NN}(L,M,J,\kappa)$ satisfying $\mb(\mx) = \mb_k\circ\mb_{k-1}\circ\cdots\circ\mb_1(\mx)$ for all $\mx\in\Rbb^{d_1}$, with
$$
L = \sum_{i=1}^{k}L_i, ~~ M\leq 2\sum_{i=1}^{k}M_i, ~~ J\leq 2\sum_{i=1}^{k}J_i, ~~\text{and} ~ \kappa\leq\max_{1\leq i \leq k}\kappa_i.
$$

\end{lemma}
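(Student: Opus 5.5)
The plan is an explicit layer-wise construction. Write each $\mb_i$ as an alternating composition of affine maps and coordinatewise ReLUs:
\[
\mb_i=A^{(i)}_{L_i}\circ\mathrm{ReLU}\circ A^{(i)}_{L_i-1}\circ\cdots\circ\mathrm{ReLU}\circ A^{(i)}_1 .
\]
Naively composing $\mb_{i+1}\circ\mb_i$ would merge the output affine map $A^{(i)}_{L_i}$ with the input affine map $A^{(i+1)}_1$. That merge is the step to avoid: the product matrix can have entries of size up to $M\kappa^2$, which would break the bound $\kappa\le\max_i\kappa_i$, and its number of nonzeros is not controlled by $J_i+J_{i+1}$.

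Instead I will insert an identity realized through ReLU, using $z=\mathrm{ReLU}(z)-\mathrm{ReLU}(-z)$.

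\begin{enumerate}
\item Replace the last layer $A^{(i)}_{L_i}(h)=Wh+b$ by the stacked map $h\mapsto\big(Wh+b,\,-Wh-b\big)$. This doubles the output width of that layer from $d_{i+1}$ to $2d_{i+1}$ and at most doubles its nonzero weights and biases. No entry grows in magnitude.
\item Apply ReLU to this stacked output.
\item Fold the recombination $(u,v)\mapsto u-v$ into the first affine layer of $\mb_{i+1}$. If $A^{(i+1)}_1(z)=W'z+b'$, use $(u,v)\mapsto W'u-W'v+b'$. Its entries are $\pm$ entries of $W'$, so they stay bounded by $\kappa_{i+1}$, and its sparsity at most doubles.
\end{enumerate}

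With this construction the network realizes exactly $\mb_{i+1}\circ\mb_i$ on all inputs, since $\mathrm{ReLU}(s)-\mathrm{ReLU}(-s)=s$ for every real $s$. Doing this at each of the $k-1$ junctions produces a valid element of the NN class.

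Next comes the bookkeeping, done by induction on $k$. The layer count follows the paper's convention: $L_i$ counts affine maps, and no new affine map is introduced, because one ReLU is inserted between two existing affine maps. So $L=\sum_i L_i$.

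For width, every hidden dimension is either an original hidden dimension or a doubled junction dimension $2d_{i+1}\le 2M_i$. Hence the maximal width is at most $2\max_i M_i\le 2\sum_i M_i$.

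For sparsity, each junction's two modified layers at most double their nonzero counts, and all other layers are unchanged. So $J\le 2\sum_i J_i$. The weight bound is $\max_i\kappa_i$, since only sign flips and copies of existing entries occur.

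The only point needing care is the fold in step 3, where the recombination must be absorbed into $\mb_{i+1}$'s first layer rather than added as a separate layer. This is what keeps the depth exactly $\sum_i L_i$ and avoids any product of weight matrices. Once that is arranged, the remaining verification is routine.
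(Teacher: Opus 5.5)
Your construction is the standard sparse-concatenation trick based on $\mathrm{ReLU}(s)-\mathrm{ReLU}(-s)=s$; the paper states this lemma without proof and imports it from the literature, where this same device is used. Your depth, width and weight-bound accounting is correct. There is, however, a gap in the sparsity count.

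\textbf{The gap.} Your sentence ``each junction's two modified layers at most double their nonzero counts, and all other layers are unchanged'' assumes that the layers modified at different junctions are distinct. This fails when an interior network $\mathbf{b}_i$, $1<i<k$, has depth $L_i=1$, which the lemma as stated allows. Such a network is a single affine map $z\mapsto Wz+b$. That one layer is both the first layer of $\mathbf{b}_i$, which receives the fold from junction $i-1$, and its last layer, which is doubled for junction $i$. Your recipe then produces
\[
(u,v)\mapsto\bigl(Wu-Wv+b,\;-Wu+Wv-b\bigr),
\]
which has $4\Vert W\Vert_0+2\Vert b\Vert_0$ nonzero parameters, up to $4J_i$. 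So your argument does not establish $J\le 2\sum_i J_i$ in this case.

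\textbf{The fix.} In that case split $W=W^+-W^-$ and $b=b^+-b^-$ into entrywise positive and negative parts, and use instead
\[
(u,v)\mapsto\bigl(W^+u+W^-v+b^+,\;W^-u+W^+v+b^-\bigr).
\]
The incoming $u=\mathrm{ReLU}(z)$ and $v=\mathrm{ReLU}(-z)$ are nonnegative, so both outputs are entrywise nonnegative. Hence the following ReLU acts as the identity on them. Their difference is $W(u-v)+b=Wz+b$, and the folded first layer of $\mathbf{b}_{i+1}$ recombines it exactly as in your step~3. The supports of $W^+$ and $W^-$ partition that of $W$, so this layer has exactly $2\Vert W\Vert_0+\Vert b\Vert_0\le 2J_i$ nonzeros. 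Its entries are bounded by $\kappa_i$, and its input and output widths $2d_i,2d_{i+1}$ are at most $2M_i$. With this modification, or an explicit assumption $L_i\ge 2$ for interior $i$, your proof gives all four bounds.
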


\begin{lemma}[Identity Function]\label{lem: identity_function}
Given $d\in\mathbb{N}_{+}$ and $L\geq 2$, there exists a neural network $\mb_{\mathrm{Id}, L} \in \mathrm{NN}(L,M,J,\kappa)$ that realizes $d$-dimensional identity function $\mb_{\mathrm{Id},L}(\mx) = \mx$, $\mathbf{x}\in\Rbb^{d}$. Here 
$$
M = 2d, ~~ J = 2dL, ~~ \kappa = 1.
$$
\end{lemma}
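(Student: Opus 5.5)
This is a pure construction. I would take $L=2$ as the base case and obtain general $L\geq 2$ by padding with identity layers.

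\textbf{Base case $L=2$.} I use the decomposition $x=\mathrm{ReLU}(x)-\mathrm{ReLU}(-x)$, applied coordinatewise. Concretely, choose
\[
\mathbf{W}_1=\begin{pmatrix}\mathbf{I}_d\\ -\mathbf{I}_d\end{pmatrix}\in\Rbb^{2d\times d},\qquad \mathbf{b}_1=\m0,\qquad \mathbf{W}_2=\begin{pmatrix}\mathbf{I}_d & -\mathbf{I}_d\end{pmatrix},\qquad \mathbf{b}_2=\m0 .
\]
The network in Definition \ref{def: relufnns} then computes $\mathbf{W}_2\mathrm{ReLU}(\mathbf{W}_1\mx)=\mathrm{ReLU}(\mx)-\mathrm{ReLU}(-\mx)=\mx$. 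Its parameters are:
\begin{itemize}
\item width $M=\max\{d,2d,d\}=2d$;
\item nonzero entries $2d+2d=4d=2dL$;
\item entries of absolute value at most $1$, so $\kappa=1$.
\end{itemize}

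\textbf{General $L\geq 2$.} I keep the hidden representation in the doubled form $(\mathrm{ReLU}(\mx),\mathrm{ReLU}(-\mx))\in\Rbb^{2d}_{\geq 0}$. Since ReLU fixes nonnegative vectors, I insert $L-2$ intermediate layers with $\mathbf{W}_i=\mathbf{I}_{2d}$ and $\mathbf{b}_i=\m0$. These propagate the representation unchanged, and the final layer $\begin{pmatrix}\mathbf{I}_d & -\mathbf{I}_d\end{pmatrix}$ recovers $\mx$.

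The count is then:
\begin{itemize}
\item width stays $2d$;
\item nonzeros are $2d$ (first layer) $+\,2d(L-2)$ (middle layers) $+\,2d$ (last layer) $=2dL$;
\item weights are all in $\{0,\pm1\}$, so $\kappa=1$.
\end{itemize}

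\textbf{Main obstacle.} The only real point to check is that the sparsity count matches exactly $J=2dL$. This amounts to bookkeeping that the first and last layers each contribute $2d$ nonzeros and each of the $L-2$ middle layers contributes $2d$. I would also confirm that Definition \ref{def: relufnns} counts the first affine map as layer $1$, so that the depth is exactly $L$.
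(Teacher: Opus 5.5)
Your construction is correct and is the standard one. The paper gives no proof of its own but imports this lemma from \cite{oko2023diffusion}, where the identity is likewise realized as $\mathrm{ReLU}(\mx)-\mathrm{ReLU}(-\mx)$ with $L-2$ intermediate $\mathbf{I}_{2d}$ layers. Your bookkeeping also matches Definition~\ref{def: relufnns}, which composes exactly $L$ affine maps: this gives width $2d$, exactly $2d+2d(L-2)+2d=2dL$ nonzero entries, and $\kappa=1$.
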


\begin{lemma}[Neural Network Parallelization]\label{lem: parallelization}
For a series of ReLU networks $\mb_i: \Rbb^{d_i}\rightarrow\Rbb^{d_i^{\prime}}$ with $\mb_i\in\mathrm{NN}(L_i,M_i,J_i,\kappa_i)~(i=1,2,\cdots,k)$, there exists a neural network $\mb\in\mathrm{NN}(L,M,J,\kappa)$ satisfying $\mb(\mx) = [\mb_1^{\top}(\mx_1), \mb_2^{\top}(\mx_2),\cdots,\mb_{k}^{\top}(\mx_k)]:\rightarrow \Rbb^{d_1+d_2+\cdots+d_k}\rightarrow\Rbb^{d_1^{\prime} + d_2^{\prime} + \cdots + d_k^{\prime}}$ for all $\mx = (\mx_1^{\top},\mx_2^{\top},\cdots,\mx_k^{\top})^{\top}\in\Rbb^{d_1+d_2+\cdots+d_k}$($\mx_i$ can be shared), with
$$
\begin{aligned}
L = L, ~~ M\leq 2\sum_{i=1}^{k}M_i, ~~ J\leq 2\sum_{i=1}^{k}J_i, ~~\text{and} ~ \kappa\leq\max_{1\leq i \leq k}\kappa_i ~~ (\text{when} ~ L = L_i ~ \text{holds  for all} ~ i), \\
L = \max_{1\leq i\leq k} L_i, ~~ M \leq 2\sum_{i=1}^{k} M_i, ~~ J \leq 2\sum_{i=1}^{k} (J_i + Ld_i^{\prime}),~~ \text{and} ~ \kappa\leq \max{\{\max_{1\leq i\leq k}\kappa_i, 1\}} ~ (\text{otherwise}). 
\end{aligned}
$$    
Moreover, for $\mx_1 = \mx_2 = \cdots = \mx_k = \mx\in\Rbb^d$ and $d_1^{\prime} = d_2^{\prime} = \cdots = d_k^{\prime} = d^{\prime}$, there exists a neural network $\mb_{\mathrm{sum}}\in\mathrm{NN}(L,M,J,\kappa)$ that realizes $\mb_{\mathrm{sum}}(\mx) = \sum_{i=1}^{k}\mb_i(\mx)$ with
$$
L = \max_{1\leq i\leq k} L_i + 1, ~~ M \leq 4\sum_{i=1}^{k} M_i, ~~ J \leq 4\sum_{i=1}^{k} (J_i + Ld_i^{\prime}) + 2M,~~ \text{and} ~ \kappa\leq \max{\{\max_{1\leq i\leq k}\kappa_i, 1\}}.
$$
\end{lemma}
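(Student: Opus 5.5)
The statement is Lemma \ref{lem: parallelization}, on network parallelization. I will prove it by explicit block-diagonal construction, working first in the equal-depth case $L_i=L$. Write $\mb_i=(\mathbf{W}^{(i)}_{L}\mathrm{ReLU}(\cdot)+\mathbf{b}^{(i)}_L)\circ\cdots\circ(\mathbf{W}^{(i)}_1\cdot+\mathbf{b}^{(i)}_1)$. I define $\mb$ layer by layer as follows.
\begin{itemize}
\item The first-layer matrix is $\mathbf{W}_1=\mathrm{diag}(\mathbf{W}^{(1)}_1,\dots,\mathbf{W}^{(k)}_1)$, acting on the stacked input $(\mx_1^\top,\dots,\mx_k^\top)^\top$. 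When inputs are shared, so that $\mx_i=\mx$, I instead stack the matrices vertically, $[\mathbf{W}^{(1)\top}_1,\dots,\mathbf{W}^{(k)\top}_1]^\top$.
\item Every later layer uses $\mathbf{W}_\ell=\mathrm{diag}(\mathbf{W}^{(1)}_\ell,\dots,\mathbf{W}^{(k)}_\ell)$ and concatenated biases.
\end{itemize}
ReLU acts coordinatewise, so the blocks never interact and the output is exactly $[\mb_1^\top,\dots,\mb_k^\top]$. The depth is unchanged. The width is the sum of the widths, which is at most $\sum M_i\le 2\sum M_i$. The nonzero count is the sum of the nonzero counts, which is at most $\sum J_i$. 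The weights are untouched, so the bound is $\max_i\kappa_i$.

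For unequal depths, I set $L=\max_i L_i$ and pad each shallower network $\mb_i$ to depth $L$. I do this by composing its output with the identity network $\mb_{\mathrm{Id},L-L_i}$ of Lemma \ref{lem: identity_function}, applied to $\mathbb{R}^{d_i'}$. That lemma needs depth at least $2$, so the case $L-L_i=1$ needs care. There I would write the identity as $\mathrm{ReLU}(\mx)-\mathrm{ReLU}(-\mx)$ and merge it into the last affine layer of $\mb_i$. This doubles that layer's width and adds at most $2d_i'$ nonzeros.

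By Lemma \ref{lem: concatenation}, the padded network has depth $L$. Its width is at most $2(M_i+2d_i')$, which I absorb as $\lesssim 2M_i$ since $M_i\ge d_i'$. Its nonzero count is at most $2(J_i+2d_i'(L-L_i))\le 2(J_i+Ld_i')$ up to the stated constant. Its weight bound is $\max\{\kappa_i,1\}$, because the identity has $\kappa=1$.

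Applying the equal-depth construction to the padded networks gives the second line of bounds. The main obstacle is exactly this bookkeeping: the identity padding must fit within the factor $2$ and the $Ld_i'$ term. If the constants do not match exactly, I would state the padding cost as $J_i+2Ld_i'$ and note that it only changes absolute constants.

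For the summation network $\mb_{\mathrm{sum}}$, I first form the parallel network with a shared input. Its output is $(\mb_1(\mx),\dots,\mb_k(\mx))\in\mathbb{R}^{kd'}$. Then I append one linear layer $[\mathbf{I}_{d'},\dots,\mathbf{I}_{d'}]$, which has entries in $\{0,1\}$ and $kd'$ nonzeros. To preserve the ReLU-network format, I insert the $\mathrm{ReLU}(z)-\mathrm{ReLU}(-z)$ split before this layer. That costs one extra layer, doubles the width (giving $4\sum M_i$), and adds at most $2M$ nonzeros. The total is $L=\max L_i+1$, $M\le 4\sum M_i$, $J\le 4\sum(J_i+Ld_i')+2M$, and $\kappa\le\max\{\max_i\kappa_i,1\}$.
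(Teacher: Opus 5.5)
The paper gives no proof of this lemma (it is imported from \cite{oko2023diffusion,fu2024unveil}), so your block-diagonal construction is the natural argument. Your equal-depth case is correct, and it even gives $M\le\sum_i M_i$ and $J\le\sum_i J_i$.

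The gap is in the unequal-depth case, and through it in the bounds for $\mb_{\mathrm{sum}}$. When $L-L_i\ge 2$ you pad using Lemma \ref{lem: identity_function} and Lemma \ref{lem: concatenation} as black boxes. The concatenation bound then charges a factor $2$ on top of the identity network's own width $2d_i'$ and nonzero count $2d_i'(L-L_i)$. So you only get width $2(M_i+2d_i')$, which may be $6M_i$, and $2J_i+4d_i'(L-L_i)$ nonzeros, which exceeds $2(J_i+Ld_i')$ as soon as $L-L_i>L/2$. Writing ``$\lesssim 2M_i$'' and ``up to the stated constant'' therefore does not establish $M\le 2\sum_i M_i$ or $J\le 2\sum_i(J_i+Ld_i')$. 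The loss also propagates to $M\le 4\sum_i M_i$ for the sum network. What you prove is the lemma up to absolute constants. That suffices for every use in the paper, where network sizes only enter through $\mcO(\cdot)$, but it is not the statement as written. There is also a small accounting slip: in your $L-L_i=1$ step, duplicating the last affine layer adds that layer's nonzeros, not only $2d_i'$, although the total $2J_i+2d_i'$ is still within budget.

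The repair is to use the trick you reserved for $L-L_i=1$ for every amount of padding. Write $\mathbf{z}$ for the output of $\mb_i$, and proceed as follows.
\begin{enumerate}
\item Replace the last layer of $\mb_i$ by $[\mathbf{W}^{(i)\top}_{L_i},-\mathbf{W}^{(i)\top}_{L_i}]^{\top}\mathrm{ReLU}(\cdot)+[\mathbf{b}^{(i)\top}_{L_i},-\mathbf{b}^{(i)\top}_{L_i}]^{\top}$.
\item Pass the result through $L-L_i-1$ layers $\mathbf{I}_{2d_i'}\mathrm{ReLU}(\cdot)$, which leave $(\mathrm{ReLU}(\mathbf{z}),\mathrm{ReLU}(-\mathbf{z}))$ unchanged.
\item Finish with $[\mathbf{I}_{d_i'},-\mathbf{I}_{d_i'}]\mathrm{ReLU}(\cdot)$.
\end{enumerate}
The padded network has depth exactly $L$ and width $\max\{M_i,2d_i'\}\le 2M_i$, since $d_i'\le M_i$. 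It has at most $2J_i+2d_i'(L-L_i)\le 2(J_i+Ld_i')$ nonzeros, and all new weights lie in $\{0,\pm1\}$. Block-diagonal stacking then yields exactly the second line of bounds.

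For $\mb_{\mathrm{sum}}$, apply the same duplication to the last layer of this parallel network and finish with $[\mathbf{I}_{d'},\dots,\mathbf{I}_{d'},-\mathbf{I}_{d'},\dots,-\mathbf{I}_{d'}]\mathrm{ReLU}(\cdot)$. The width is $\max\{M_{\mathrm{par}},2kd'\}\le 2\sum_i M_i$. The nonzero count is at most $2J_{\mathrm{par}}+2kd'\le 4\sum_i(J_i+Ld_i')+2M$. The depth is $\max_i L_i+1$ and $\kappa\le\max\{\max_i\kappa_i,1\}$, which gives the stated result with room to spare in the width.
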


\subsubsection{Approximation of basic functions with ReLU network}

\begin{lemma}[Approximating the Multiple Products]\label{lem: product}
Let $d\geq 2$, $C\geq 1$. For any $\epsilon > 0$, there exists a neural network $\mathrm{b}_{\mathrm{prod}}\in\mathrm{NN}(L,M,J,\kappa)$ with $L = \mathcal{O}(\log d(\log d + \log\epsilon^{-1} + d\log C))$, $M = 48d$, $J = \mathcal{O}(d(\log d + \log\epsilon^{-1} + d\log C)$, $\kappa = C^d$ such that
$$
\left|\mathrm{b}_{\mathrm{prod}}(x_1^{\prime},x_2^{\prime},\cdots,x_d^{\prime}) - \prod_{i=1}^d x_i\right| \leq \epsilon + dC^{d-1}\epsilon_0,
$$
for all $\mx\in[-C,C]^d$ and $\mx^{\prime}\in\Rbb^d$ with $\Vert\mathbf{x} - \mx^{\prime}\Vert_{\infty} \leq \epsilon_0$. Moreover, $|\mathrm{b}_{\mathrm{prod}}(\mx^{\prime})|\leq C^d$ for all $\mx^{\prime}\in\Rbb^d$, and $\mathrm{b}_{\mathrm{prod}}(x_1^{\prime},x_2^{\prime},\cdots,x_d^{\prime}) = 0$ if at least one of $x_i^{\prime}$ is 0.
\end{lemma}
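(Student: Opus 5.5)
The statement to prove is Lemma \ref{lem: product}: a ReLU network that approximates the $d$-fold product $\prod_{i=1}^d x_i$ on $[-C,C]^d$. I would build it by the standard route. First approximate a single product of two numbers, then chain those approximations in a balanced binary tree of depth $\lceil\log_2 d\rceil$.

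\textbf{Step 1: the square function.} On $[0,1]$ I approximate $x^2$ by Yarotsky's construction, $x - \sum_{s=1}^{m} g_s(x)/4^s$, where $g_s$ is the $s$-fold composition of the hat function $g(x)=2\mathrm{ReLU}(x)-4\mathrm{ReLU}(x-1/2)+2\mathrm{ReLU}(x-1)$. This network has depth and size $\mathcal{O}(m)$, width $\mathcal{O}(1)$, weights bounded by a constant, and error at most $4^{-m-1}$.

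\textbf{Step 2: the two-factor product.} For $|a|,|b|\le K$ I use the polarization identity
\[ ab = K^2\big(((a+b)/2K)^2 - ((a-b)/2K)^2\big), \]
with $|u|=\mathrm{ReLU}(u)+\mathrm{ReLU}(-u)$ reducing each square to the interval $[0,1]$. To keep outputs in range I clip every input and output to $[-K,K]$ using the two-ReLU map $\mathrm{b}_{\mathrm{clip}}$. This gives an error at most $\eta$ with depth $\mathcal{O}(\log(K^2/\eta))$. The scaling by $K^2$ explains the weight bound $\kappa = C^d$. An exact zero output when a factor is zero can be enforced by the usual trick of taking $\min$/$\max$ with $|a|K$ and $|b|K$; this costs two extra ReLU layers.

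\textbf{Step 3: the product tree.} I pad $d$ up to a power of two by inserting constants $1$, and arrange the pairwise products in a binary tree. The partial products at level $j$ are bounded by $C^{2^j}\le C^d$, so $K=C^d$ suffices at every node. Errors propagate additively with Lipschitz factors $C^{\cdot}$. Choosing the per-node accuracy $\eta=\epsilon/(dC^d)$ makes the total error at most $\epsilon$ and gives node depth $\mathcal{O}(\log\epsilon^{-1}+d\log C+\log d)$. Multiplying by the $\log d$ levels gives the stated $L$. Each level contains at most $d$ nodes of constant width, so $M=48d$ is achieved by running all nodes in parallel (Lemma \ref{lem: parallelization}) and stacking the levels (Lemma \ref{lem: concatenation}). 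The count of nonzero parameters $J$ follows the same way.

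\textbf{Step 4: input perturbation.} For an input $\mx'$ with $\|\mx-\mx'\|_\infty\le\epsilon_0$, I first clip $\mx'$ to $[-C,C]^d$, which does not increase the distance to $\mx$. By the mean value theorem the exact product changes by at most $dC^{d-1}\epsilon_0$. Adding the network error $\epsilon$ gives the stated bound, and the final clip yields $|\mathrm{b}_{\mathrm{prod}}|\le C^d$.

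\textbf{Main obstacle.} I expect the delicate part to be the bookkeeping that keeps $J$ linear in $d$ and the weights bounded by $C^d$, rather than the depth, while also preserving exact zeros. If the polarization construction breaks exact zeros, I would fall back on the clip-by-$|a|K$ device from Step 2.
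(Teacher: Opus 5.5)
The paper gives no proof of this lemma. It is listed among auxiliary results imported from \cite{oko2023diffusion,fu2024unveil}. The construction behind it is the standard one: a sawtooth approximation of a square, a two-factor multiplier, and a binary tree of depth $\lceil\log_2 d\rceil$, as in \cite{schmidt2020nonparametric}. Your plan follows that route, and several parts are fine as they stand: the depth count, the $\mathcal{O}(d)$ nodes each of constant width with $\mathcal{O}(\log d+d\log C+\log\epsilon^{-1})$ parameters, and the clip-then-telescope argument for the $dC^{d-1}\epsilon_0$ term.

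\textbf{The weight bound fails as written.} With $K=C^d$ at every node, your polarization identity multiplies the output by $K^2=C^{2d}$, so your network has $\kappa=C^{2d}$, not $C^d$. In addition, the sawtooth $g$ uses weights $2$ and $4$, which already exceed $\kappa=C^d$ whenever $C^d<4$ (e.g.\ $C=1$).

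\textbf{Repair.} Rescale only once; this is also where $\kappa=C^d$ comes from.
\begin{enumerate}
\item Set $u_i=\mathrm{b}_{\mathrm{clip}}(x_i',-C,C)/C\in[-1,1]$.
\item Run the whole tree on $[-1,1]$, using $ab=((a+b)/2)^2-((a-b)/2)^2$ with no per-node scaling, and clip every node output to $[-1,1]$.
\item All true partial products lie in $[-1,1]$, so a node's error is at most $\eta$ plus the errors of its two children. Summing over the at most $2d$ nodes gives total error at most $2d\eta$; take $\eta=\epsilon/(2dC^d)$.
\item Multiply by $C^d$ only in the final affine layer.
\item Use a sawtooth whose weights are all at most $1$, as in \cite{schmidt2020nonparametric}. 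Alternatively, split a weight $w$ over $\lceil w\rceil$ parallel unit-weight neurons.
\end{enumerate}
If you prefer node-wise scaling, the asymmetric identity $ab=K_aK_b[((a/K_a+b/K_b)/2)^2-((a/K_a-b/K_b)/2)^2]$ also works, with $K_aK_b=C^{k_a+k_b}\le C^d$.

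\textbf{Exact zeros need no fallback.} If $b=0$ then $|a+b|=|a-b|$, so the two approximate squares are evaluated at the same point and cancel exactly. Clipping fixes $0$, so the zero propagates to the root without any extra $\min/\max$ device.
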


\begin{remark}
We note that some of $x_i$, $x_j$ $(i\neq j)$ can be shared. For $\prod_{i=1}^{I}x_i^{u_i}$ with $u_i\in\mathbb{N}_{+} (i=1,2,\cdots,I)$ and $\sum_{i=1}^{I}u_i = d$, there exists a neural network satisfying the same bounds as above.    
\end{remark}

\begin{lemma}[Approximating the Reciprocal Function]\label{lem: reciprocal}
For any $0 < \epsilon < 1$, there exists a neural network $\mathrm{b}_{\mathrm{rec}}\in\mathrm{NN}(L,M,J,\kappa)$ with $L = \mathcal{O}(\log^2\epsilon^{-1})$, $M = \mathcal{O}(\log^3\epsilon^{-1})$, $J = \mathcal{O}(\log^4\epsilon^{-1})$ and $\kappa = \mathcal{O}(\epsilon^{-2})$ such that
$$
\left|\mathrm{b}_{\mathrm{rec}}(x^{\prime}) - \frac{1}{x}\right| \leq \epsilon + \frac{|x^{\prime} - x|}{\epsilon^2},
$$
for all $x \in [\epsilon, \epsilon^{-1}]$ and $x^{\prime}\in\mathbb{R}$.
\end{lemma}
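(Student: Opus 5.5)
This is the standard reciprocal approximation of \cite{oko2023diffusion}; I would reproduce their construction. The plan has three steps: reduce to a bounded interval by a change of variables, approximate $1/x$ there by a polynomial, and realise that polynomial with ReLU product blocks.

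\textbf{Step 1: reduction to a bounded interval.} First clip the input, setting $\bar{x} = \min\{\max\{x^{\prime},\epsilon\},\epsilon^{-1}\}$. This costs two ReLU layers. Because clipping is $1$-Lipschitz and $x\in[\epsilon,\epsilon^{-1}]$, we have $|\bar{x}-x|\le|x^{\prime}-x|$.

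It then suffices to build a network $g$ with $|g(\bar{x})-1/\bar{x}|\le\epsilon$ on $[\epsilon,\epsilon^{-1}]$. Given such a $g$, the triangle inequality and the mean value bound $|1/\bar{x}-1/x|\le |\bar{x}-x|/\epsilon^2$ give exactly the claimed estimate $\epsilon+|x^{\prime}-x|/\epsilon^2$.

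\textbf{Step 2: polynomial approximation via dyadic pieces.} The key difficulty is the large dynamic range of $[\epsilon,\epsilon^{-1}]$, which a single polynomial cannot handle efficiently. I would split it into dyadic pieces $I_k=[2^{k},2^{k+1}]$ for $|k|\lesssim\log\epsilon^{-1}$. On each piece, write $1/x=2^{-k}\cdot 1/(2^{-k}x)$ with $u=2^{-k}x\in[1,2]$.

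For the factor $1/u$, use the geometric series $1/u=\sum_{j\ge0}(1-u/2)^j/2$. The ratio satisfies $|1-u/2|\le 1/2$, so truncating at $J=\mathcal{O}(\log\epsilon^{-1})$ terms gives error $\le\epsilon^{2}$.

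To glue the pieces, use a ReLU partition of unity $\{\phi_k\}$ subordinate to slightly enlarged dyadic intervals. Each $\phi_k$ is piecewise linear with $\mathcal{O}(1)$ neurons and slopes of order $2^{-k}$, hence weights at most $\epsilon^{-1}$.

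\textbf{Step 3: ReLU implementation and resource count.} Realise the powers $(1-u/2)^j$ for $j\le J$ with Lemma~\ref{lem: product}, using $C=1$ and product accuracy $\epsilon^{2}/J$. Each such product costs depth $\mathcal{O}(\log J\cdot\log\epsilon^{-1})$. Summing the $J$ terms uses Lemma~\ref{lem: parallelization}.

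The multiplications by $\phi_k$ and by $2^{-k}$ are one further product block, with $C\lesssim\epsilon^{-1}$ accounting for the weight bound. Summing over the $\mathcal{O}(\log\epsilon^{-1})$ pieces then gives:
\begin{itemize}
\item depth $\mathcal{O}(\log^2\epsilon^{-1})$;
\item width $\mathcal{O}(\log\epsilon^{-1})$ pieces $\times$ $J$ terms $\times$ $\mathcal{O}(\log\epsilon^{-1})$ padding, i.e.\ $\mathcal{O}(\log^3\epsilon^{-1})$;
\item sparsity $\mathcal{O}(\log^4\epsilon^{-1})$, from the same product times an extra $\mathcal{O}(\log\epsilon^{-1})$ factor for depth, including the identity padding from Lemma~\ref{lem: identity_function};
\item weights $\mathcal{O}(\epsilon^{-2})$, coming from the scalings $2^{-k}$ and the slopes of $\phi_k$.
\end{itemize}

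\textbf{Main obstacle.} The hardest part is the error bookkeeping. Product-network errors get amplified by the factor $2^{-k}\le\epsilon^{-1}$, so each component accuracy must be set to roughly $\epsilon^{3}$ for the total error to stay at most $\epsilon$. The point to check is that tightening to $\epsilon^{3}$ only changes the constants inside $\log\epsilon^{-1}$, so the stated $L$, $M$ and $J$ bounds remain valid.
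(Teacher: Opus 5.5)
Your proposal is correct. The paper gives no proof of this lemma and simply imports it from \cite{oko2023diffusion}, and your construction (input clipping with the $|x^{\prime}-x|/\epsilon^{2}$ Lipschitz bound, dyadic rescaling with a truncated geometric series, a ReLU partition of unity, and product blocks from Lemma~\ref{lem: product}) is essentially the standard argument behind that citation, with resource counts consistent with the stated $L$, $M$, $J$, $\kappa$: the $\epsilon^{3}$ accuracy and $\kappa=\mathcal{O}(\epsilon^{-2})$ both come from folding $2^{-k}\lesssim\epsilon^{-1}$ into a two-input product block with $C\lesssim\epsilon^{-1}$, and out-of-range inputs $1-u/2$ on inactive pieces are harmless because the product network's output stays bounded and vanishes exactly when $\phi_k=0$.
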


\begin{lemma}[Approximating the Square Root Function]\label{lem: square_root}
For any $0 < \epsilon < 1$, there exists a neural network $\mathrm{b}_{\mathrm{root}}\in\mathrm{NN}(L,M,J,\kappa)$ with $L = \mathcal{O}(\log^2\epsilon^{-1})$, $M = \mathcal{O}(\log^3\epsilon^{-1})$, $J = \mathcal{O}(\log^4\epsilon^{-1})$ and $\kappa = \mathcal{O}(\epsilon^{-1})$ such that
$$
|\mathrm{b}_{\mathrm{root}}(x^{\prime}) - \sqrt{x}| \leq \epsilon + \frac{x^{\prime} - x}{\sqrt{\epsilon}},
$$
for all $x\in[\epsilon, \epsilon^{-1}]$ and $x^{\prime}\in\Rbb$.
\end{lemma}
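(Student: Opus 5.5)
The approach is to realize $\sqrt{x}$ on $[\epsilon,\epsilon^{-1}]$ by a ReLU network that stays Lipschitz with constant $\lesssim \epsilon^{-1/2}$ on the whole real line. The perturbation term then follows from that Lipschitz bound together with the approximation accuracy on the interval.

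First, I would treat dyadic pieces separately. Cover $[\epsilon,\epsilon^{-1}]$ by $N=\mathcal{O}(\log\epsilon^{-1})$ intervals $I_k=[2^{k},2^{k+1}]$. On each $I_k$ I rescale to $[1,2]$ via $\sqrt{x}=2^{k/2}\sqrt{x2^{-k}}$. On $[1,2]$, $\sqrt{\cdot}$ is analytic, so a polynomial of degree $\mathcal{O}(\log\epsilon^{-1})$ (Taylor expansion around $3/2$, or Chebyshev interpolation) approximates it to accuracy $\epsilon^{2}$. Each monomial is realized with Lemma \ref{lem: product}, using $C=\mathcal{O}(1)$ and $d=\mathcal{O}(\log\epsilon^{-1})$. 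Summing the monomials with Lemma \ref{lem: parallelization} gives a piece of depth $\mathcal{O}(\log^2\epsilon^{-1})$ and width $\mathcal{O}(\log^2\epsilon^{-1})$. The rescaling factor $2^{k/2}\le\epsilon^{-1/2}$ enters only as a weight, so the weight bound is $\kappa=\mathcal{O}(\epsilon^{-1})$.

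Second, I would glue the pieces with a piecewise-linear partition of unity $\phi_k(x)$ built from ReLUs and supported near $I_k$. I multiply each $\phi_k$ by its local approximant using Lemma \ref{lem: product} with $d=2$, and sum over $k$ with Lemma \ref{lem: parallelization}. The outer input is first clipped to $[\epsilon,\epsilon^{-1}]$ with $\mathrm{b}_{\mathrm{clip}}$. Running $N$ pieces in parallel multiplies width and sparsity by $\log\epsilon^{-1}$. This gives $L=\mathcal{O}(\log^2\epsilon^{-1})$, $M=\mathcal{O}(\log^3\epsilon^{-1})$ and $J=\mathcal{O}(\log^4\epsilon^{-1})$.

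Third, for the error at $x\in[\epsilon,\epsilon^{-1}]$ with arbitrary $x'$, I split it as
$$
|\mathrm{b}_{\mathrm{root}}(x')-\sqrt{x}|\le|\mathrm{b}_{\mathrm{root}}(x')-\mathrm{b}_{\mathrm{root}}(x)|+|\mathrm{b}_{\mathrm{root}}(x)-\sqrt{x}|.
$$
The second term is at most $\epsilon$ by the first two steps. For the first term, I need the global Lipschitz constant of the network to be $\lesssim\epsilon^{-1/2}$. After clipping, the argument lies in $[\epsilon,\epsilon^{-1}]$, where $|(\sqrt{\cdot})'|\le\epsilon^{-1/2}/2$, and the network derivative tracks this up to the approximation error. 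I read the stated bound as $|x'-x|/\sqrt{\epsilon}$.

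The main obstacle is controlling the Lipschitz constant of the ReLU network itself, not just its uniform error: product networks can have large slopes. To handle this, I would add a final step that replaces the assembled network by its piecewise-linear interpolant on a grid fine enough for accuracy $\epsilon$. That interpolant is monotone and has slope at most $\epsilon^{-1/2}$, and it is realizable with the same size bounds.
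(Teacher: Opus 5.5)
The paper does not prove this lemma. It quotes it, along with Lemmas \ref{lem: product} and \ref{lem: reciprocal}, from \cite{oko2023diffusion,fu2024unveil}, so your argument has to stand on its own.

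\textbf{Steps 1--2.} These work and give a network of the stated size that is $\epsilon$-accurate on $[\epsilon,\epsilon^{-1}]$, with two corrections. First, Lemma \ref{lem: product} has $\kappa=C^d$. With $d\asymp\log\epsilon^{-1}$ you must center and scale the local variable into $[-1,1]$ and use $C=1$, not merely $C=\mathcal{O}(1)$; otherwise $\kappa$ becomes a power of $\epsilon^{-1}$ larger than one. Second, each local polynomial must be accurate on all of $\mathrm{supp}\,\phi_k$, not only on $I_k$.

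\textbf{The gap is in Step 3}, which is the only source of the $|x'-x|/\sqrt{\epsilon}$ term. Your claim that a piecewise-linear interpolant ``is realizable with the same size bounds'' is unjustified, and false for the obvious realization. On $[\epsilon^{-1}/2,\epsilon^{-1}]$ we have $|(\sqrt{\cdot})''|\geq\epsilon^{3/2}/4$. So any continuous piecewise-linear function within $\epsilon$ of $\sqrt{x}$ there has pieces of length $\mathcal{O}(\epsilon^{-1/4})$, hence $\gtrsim\epsilon^{-3/4}$ pieces. The generic realization $\sum_i c_i\,\mathrm{ReLU}(x-g_i)$ of an interpolant on such a grid costs one neuron per node. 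That makes width and sparsity polynomial in $\epsilon^{-1}$, not $\mathcal{O}(\log^3\epsilon^{-1})$ and $\mathcal{O}(\log^4\epsilon^{-1})$. Deep ReLU networks can have that many pieces (yours already does), but only in a rigidly structured way, as in the sawtooth construction for $x^2$. An interpolant on a grid of your choosing is not of that form, and interpolating an existing network cannot be assembled from Lemmas \ref{lem: concatenation}--\ref{lem: parallelization}. The interpolant's slope and monotonicity are therefore beside the point: you never obtain a network of the required size that has them.

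\textbf{How to close it.} You need stability under input perturbations, not a global derivative bound, and the building blocks already supply it.
\begin{itemize}
\item Clipping is $1$-Lipschitz, so $|\mathrm{b}_{\mathrm{clip}}(x',\epsilon,\epsilon^{-1})-x|\le|x'-x|$ for $x\in[\epsilon,\epsilon^{-1}]$.
\item Lemma \ref{lem: product} holds for every $\mx'\in\Rbb^d$ with $\Vert\mx-\mx'\Vert_\infty\le\epsilon_0$. It is therefore itself a Lipschitz-type bound, $|\mathrm{b}_{\mathrm{prod}}(\mx')-\prod_i x_i|\le\epsilon+dC^{d-1}\Vert\mx-\mx'\Vert_\infty$, around points of $[-C,C]^d$.
\end{itemize}
Propagate $|x'-x|$ through your construction with these bounds, as the paper chains input errors and the $\epsilon_{\mathrm{prod}}$'s in Lemma \ref{lem: approximate_f}. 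Clip each local approximant into a fixed range such as $[0,2]$.

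The order of operations matters. Multiply $\phi_k$, whose slope is $\lesssim 2^{-k}$, by the \emph{unscaled} approximant of $\sqrt{x2^{-k}}$, which has values $\mathcal{O}(1)$ and perturbation factor $\mathcal{O}(2^{-k})$. Use $C=2$ for this product, and only afterwards apply the weight $2^{k/2}$. Then:
\begin{itemize}
\item piece $k$ contributes $\mathcal{O}(2^{-k/2})|x'-x|$ plus $2^{k/2}$ times its intrinsic error;
\item pieces with $\phi_k=0$ at both points contribute exactly $0$, since $\mathrm{b}_{\mathrm{prod}}$ vanishes when an input is $0$;
\item the geometric sum over $k$, with $2^k\gtrsim\epsilon$, gives $\mathcal{O}(\epsilon^{-1/2})|x'-x|$;
\item intrinsic errors of order $\epsilon^{3/2}/\log\epsilon^{-1}$ per block keep all sizes within budget.
\end{itemize}
If you instead multiply by the already-scaled approximant with $C\asymp\epsilon^{-1/2}$, the factor $dC^{d-1}$ meets the slope $2^{-k}\le\epsilon^{-1}$ of $\phi_k$, and you only get $\epsilon^{-3/2}|x'-x|$.

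The repaired argument gives the bound with $|x'-x|/\sqrt{\epsilon}$ up to an absolute constant, which is the form in which the paper uses the lemma. Your reading of the missing absolute value as a typo is right.
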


\subsubsection{Clipping and switching functions}
\begin{lemma}[Clipping Function]
\label{lem: clipping}
For any $\mathbf{a}$, $\mathbf{b}\in\Rbb^d$ with $a_i \leq b_i$ 
$(i=1,2,\cdots,d)$, there exists a clipping function $\mathrm{b}_{\mathrm{clip}}(\mx, \mathbf{a},\mathbf{b})\in\mathrm{NN}(L,M,J,\kappa)$ with
$$
L = 2, ~ M = 2d, ~ J = 7d, ~ \kappa = \left(\max_{1\leq i \leq d}\max\{|a_i|, |b_i|\}\right) \vee 1, 
$$
such that
$$
\mathrm{b}_{\mathrm{clip}}(\mx, \mathbf{a}, \mathbf{b})_i = \min\{b_i, \max\{x_i, a_i\}\} ~~ (i=1,2,\cdots,d).
$$
When $a_i=c_{min}$ and $b_i=c_{max}$ for all $i$, 
we sometimes denote $\mathrm{b}_{\mathrm{clip}}(\mx,\mathbf{a},\mathbf{b})$ as $\mathrm{b}_{\mathrm{clip}}(\mx,c_{min},c_{max})$ using scalar values $c_{min}$ and $c_{max}$.
\end{lemma}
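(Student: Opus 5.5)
The clipping function in Lemma \ref{lem: clipping} admits an explicit construction, and I would build it coordinatewise. The key identity is that for scalars $a\le b$,
$$
\min\{b,\max\{x,a\}\} = \mathrm{ReLU}(x-a) - \mathrm{ReLU}(x-b) + a .
$$
To verify it, check the three regimes. If $x\le a$, both ReLU terms vanish and the right-hand side equals $a$. If $a<x\le b$, only the first term survives, giving $x-a+a=x$. If $x>b$, the two ReLU terms combine to $(x-a)-(x-b)=b-a$, so the right-hand side equals $b$.

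This identity is exactly a two-layer network in the format of Definition \ref{def: relufnns}.

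\textbf{Coordinatewise construction.} For each coordinate $i$:
\begin{itemize}
\item The first affine layer $\mathbf{W}_1\mx+\mathbf{b}_1$ produces the two neurons $x_i-a_i$ and $x_i-b_i$. This uses weights $\pm1$ on $x_i$ and biases $-a_i,-b_i$.
\item After the ReLU, the second affine layer forms $1\cdot h_{i,1}-1\cdot h_{i,2}+a_i$.
\end{itemize}
Stacking the $d$ coordinates in parallel, as in Lemma \ref{lem: parallelization} but with block-diagonal weights written down directly, gives depth $L=2$.

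\textbf{Width.} The hidden layer has $2d$ neurons, so the hidden width is $2d$. The input and output layers have width $d$, which is at most $2d$, so $M=2d$.

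\textbf{Sparsity.} Counting nonzero entries:
\begin{itemize}
\item $\mathbf{W}_1$ has $2d$ nonzero entries.
\item $\mathbf{b}_1$ has at most $2d$ nonzero entries.
\item $\mathbf{W}_2$ has $2d$ nonzero entries.
\item $\mathbf{b}_2$ has at most $d$ nonzero entries.
\end{itemize}
The total is at most $7d$, which gives $J=7d$.

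\textbf{Weight bound.} Every weight is $\pm1$, and every bias lies in $\{-a_i,-b_i,a_i\}$. Hence $\kappa=\max_i\max\{|a_i|,|b_i|\}\vee 1$.

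The only point needing care is that the paper's parameter bookkeeping is respected. Specifically, the input is the vector on which the clip acts (here $\mx$ itself), and $\kappa$ bounds biases as well as weights. There is no genuine obstacle beyond confirming the case analysis of the identity above.
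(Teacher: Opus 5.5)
Your proof is correct and is the same construction the paper relies on: the coordinatewise identity $\min\{b_i,\max\{x_i,a_i\}\}=\mathrm{ReLU}(x_i-a_i)-\mathrm{ReLU}(x_i-b_i)+a_i$, giving $M=2d$, $J\le 2d+2d+2d+d=7d$ and $\kappa=\max_i\max\{|a_i|,|b_i|\}\vee 1$. The paper does not reprove this lemma but imports it from \cite{oko2023diffusion,fu2024unveil}. One cosmetic slip: both first-layer weights on $x_i$ are $+1$, and the $\pm1$ signs appear only in $\mathbf{W}_2$, which changes none of your counts.
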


\begin{lemma}[Switching Function]\label{lem: switching}
Let $t_1 < t_2 < s_1 < s_2$, and $f(t,\mx)$ be a scalar-valued function (for a vector-valued function, we just apply this coordinate-wise). Assume that $|\phi_1(t,\mx) - f(t,\mx)| \leq \epsilon$ on $[t_1, s_1]$ and $|\phi_2(t,\mx) - f(t,\mx)| \leq \epsilon$ on $[t_2, s_2]$. Then, there exist two neural networks $\mathrm{b}_{\mathrm{switch},1}(t, t_2, s_1)$ and $\mathrm{b}_{\mathrm{switch},2}(t, t_2, s_1)\in\mathrm{NN}(L,M,J,\kappa)$ with 
$$
L = 3, ~~ M = 2, ~~ S = 8, ~ \text{and}~ \kappa = \max\{s_1, (s_1 - t_2)^{-1}\}
$$
such that
$$
|\mathrm{b}_{\mathrm{switch},1}(t,t_2,s_1)\phi_1(t,\mx) + \mathrm{b}_{\mathrm{switch},2}(t,t_2,s_1)\phi_2(t,\mx) - f(t,\mx)| \leq \epsilon
$$
holds for any $t\in[t_1, s_2]$, where
$$
\mathrm{b}_{\mathrm{switch},1}(t,t_2,s_1) = \frac{1}{s_1 - t_2}\mathrm{ReLU}\left(s_1 - \mathrm{b}_{\mathrm{clip}}(t,t_2,s_1)\right),
$$
$$
\mathrm{b}_{\mathrm{switch},2}(t,t_2,s_1) = \frac{1}{s_1 - t_2}\mathrm{ReLU}\left( \mathrm{b}_{\mathrm{clip}}(t,t_2,s_1)-t_2\right).
$$
\end{lemma}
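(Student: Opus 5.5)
The plan is to verify the construction directly. I would first check that the two proposed functions form a partition of unity in $t$ that is supported on the right time intervals. Then I would split $t\in[t_1,s_2]$ into three cases and apply the triangle inequality.

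First, the algebra of the switches. By Lemma \ref{lem: clipping}, $c(t):=\mathrm{b}_{\mathrm{clip}}(t,t_2,s_1)=\min\{s_1,\max\{t,t_2\}\}$ lies in $[t_2,s_1]$. Hence both arguments $s_1-c(t)$ and $c(t)-t_2$ are nonnegative, so the ReLUs act as the identity on them. This gives
\[
\mathrm{b}_{\mathrm{switch},1}(t,t_2,s_1)=\frac{s_1-c(t)}{s_1-t_2},\qquad \mathrm{b}_{\mathrm{switch},2}(t,t_2,s_1)=\frac{c(t)-t_2}{s_1-t_2}.
\]
Both switches take values in $[0,1]$ and they sum to $1$ for every $t$. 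Moreover, $\mathrm{b}_{\mathrm{switch},2}=0$ for $t\le t_2$ and $\mathrm{b}_{\mathrm{switch},1}=0$ for $t\ge s_1$.

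Second, the case analysis. Write the combined output as $\mathrm{b}_{\mathrm{switch},1}\phi_1+\mathrm{b}_{\mathrm{switch},2}\phi_2$.
\begin{itemize}
\item For $t\in[t_1,t_2]$, the output equals $\phi_1$. Since $[t_1,t_2]\subset[t_1,s_1]$, the error is at most $\epsilon$.
\item For $t\in[s_1,s_2]$, the output equals $\phi_2$. Since $[s_1,s_2]\subset[t_2,s_2]$, the error is at most $\epsilon$.
\item For $t\in[t_2,s_1]$, both hypotheses hold. Using that the weights sum to $1$, the error equals $|\mathrm{b}_{\mathrm{switch},1}(\phi_1-f)+\mathrm{b}_{\mathrm{switch},2}(\phi_2-f)|$. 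Since the weights are nonnegative, this is at most $(\mathrm{b}_{\mathrm{switch},1}+\mathrm{b}_{\mathrm{switch},2})\,\epsilon=\epsilon$.
\end{itemize}
For a vector-valued $f$, the same argument applies coordinatewise.

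Third, the architecture count. I would realize $c(t)$ with the two-layer clipping network of Lemma \ref{lem: clipping}. On top of it I would add one affine layer producing $s_1-c$ (respectively $c-t_2$), then a ReLU, then multiplication by $(s_1-t_2)^{-1}$, which is absorbed into the final affine map. This gives depth $3$, width $2$, and $O(1)$ nonzero parameters, of which one should be able to count $8$. The parameter magnitudes are the biases $t_2,s_1$, the clipping bounds, and the scale $(s_1-t_2)^{-1}$, which gives $\kappa=\max\{s_1,(s_1-t_2)^{-1}\}$, assuming $0\le t_2$ as in the intended application.

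The only mildly delicate step is this bookkeeping. One has to make sure the clipping layer and the extra ReLU merge into three layers rather than four; this works because the outer ReLU is redundant and can be fused with the affine map. I would handle it by writing the weight matrices explicitly. The approximation part itself is just the convexity argument above.
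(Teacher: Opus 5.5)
Your argument is correct. The paper gives no proof of its own for this lemma (it is imported from \cite{oko2023diffusion,fu2024unveil}), and your partition-of-unity plus three-case convexity check is the elementary verification behind it.

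One small bookkeeping correction: depth $3$ comes from merging the clipping network's output affine map with $c\mapsto s_1-c$ while keeping the outer ReLU. For example, $W_1=(1,1)^\top$, $b_1=(-t_2,-s_1)^\top$, $W_2=(-1,1)$, $b_2=s_1-t_2$, $W_3=(s_1-t_2)^{-1}$ has exactly $8$ nonzero entries. Dropping the redundant ReLU would give depth $2$, not $3$.
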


\bibliographystyle{spbasic}
\bibliography{main}

\end{document}